\titleformat{\chapter}[display]
{\normalfont\huge\bfseries}{\chaptertitlename\ \thechapter}{20pt}{\Huge}
\titlespacing*{\chapter}{0pt}{-50pt}{40pt}
\titlespacing*{\section}{0pt}{3.5ex plus 1ex minus .2ex}{2.3ex plus .2ex}
\titlespacing*{\subsection}{0pt}{3.25ex plus 1ex minus .2ex}{1.5ex plus .2ex}
 \newtheorem{thm}{Theorem}
 \newtheorem{prop}{Proposition}
 \newtheorem{defn}{Definition}
 \newtheorem{exam}{Example}
 \newtheorem{alg}{Algorithm}
 \newtheorem{proof}{Proof} 
 \newcommand{\A}{\mathcal{A}}
 \newcommand{\C}{\mathcal{C}}
 \newcommand{\E}{\mathcal{E}}
 \newcommand{\G}{\mathcal{G}}
 \newcommand{\J}{\mathcal{J}}
 \newcommand{\U}{\mathcal{U}}
\begin{document}
\author{ Carlos Leandro\\
    SolidReturn Lda, Almeirim, Portugal \and Departamento de Matem\'{a}tica,\\ Instituto Superior de Engenharia de Lisboa,  Portugal\\ miguel.melro.leandro@gmail.com
 }

\title{Categorical semiotics: Foundations for Knowledge Integration}

\date{\today}
\frontmatter

\maketitle


\chapter*{Preface}
The integration of knowledge extracted from diverse models, whether described by domain experts or generated by machine learning algorithms, has historically been challenged by the absence of a suitable framework for specifying and integrating structures, learning processes, data transformations, and data models or rules. In this work, we extend algebraic specification methods to address these challenges within such a framework.

In our work, we tackle the challenging task of developing a comprehensive framework for defining and analyzing deep learning architectures. We believe that previous efforts have fallen short by failing to establish a clear connection between the constraints a model must adhere to and its actual implementation. 

Our methodology employs graphical structures that resemble Ehresmann's sketches \cite{RosickyAdamek94}, interpreted within a universe of fuzzy sets. This approach offers a unified theory that elegantly encompasses both deterministic and non-deterministic neural network designs. 

This approach leverages the inherent capabilities of sketches to integrate both deterministic and non-deterministic data structures. By adopting this strategy, we aim to capitalize on the suitability of graphical languages—commonly used in category theory and specifically employed for defining sketches—for reasoning about complex problems, structural description, and task specification and decomposition.

The use of graphical structures in our extended algebraic specification methods offers several advantages. Firstly, it provides a visual representation that facilitates understanding and communication among domain experts and researchers from various disciplines. Secondly, it enables the seamless integration of data and knowledge from different sources, thereby promoting interoperability and collaboration. Lastly, the flexibility inherent in our approach allows for the incorporation of machine learning algorithms, enabling the synthesis of new insights and models from diverse data sets.

Furthermore, we highlight how this theory naturally incorporates fundamental concepts from computer science and automata theory. Our extended algebraic specification framework, grounded in graphical structures akin to Ehresmann's sketches, offers a promising solution for integrating knowledge across disparate models and domains. By bridging the gap between domain-specific expertise and machine-generated insights, we pave the way for more comprehensive, collaborative, and effective approaches to knowledge integration and modeling.

----------------------------------------------------------------------

\noindent\textbf{Keywords:} $\Omega$-sets, formal specification, syntax, ML-algebras, monoidal logic, basic logic, divisible logic, fuzzy logic, graphic language, deduction, Bayes inference, analytic grammars, link grammars, ontology, multi-diagrams, libraries of components, refinement, parser graph, limit as multi-morphism, Ehresmann's sketches, accessible categories, specification system, logic semiotic, diferencial semiotic, temporal semiotic, concept description, similarity relation, consistent knowledge units, $\lambda$-consistent knowledge units, fuzzy computability, Turing computable, semiotic integration, knowledge integration, consequence relation.

\newpage
\tableofcontents

\mainmatter
\chapter*{Introduction}

A model can be conceptualized as a system of sets with associated relations that impose constraints on the set system. When considering a class of models that share similar structures, along with the structure-preserving maps between them, we enter the realm of category theory. For example, a relational database schema can be seen as a specification of a class of systems of sets. In this context, the category of models is defined by all possible database states and the transformations between these states. The relational database schema provides constraints that govern the state of the database.

At the heart of an information system lies a collection of databases and sets of data transformations, often represented as workflows. From a modern perspective, a database serves as an internal model of a fragment of the real world. Meanwhile, the transformations offer diverse methods for constructing views of this fragment and integrating its various aspects. A crucial step in designing an effective information system is to specify the universe and its views in abstract, formalized terms suitable for semantic refinement. This specification should be adaptable for low-level system specifications and capable of accommodating improvements through the introduction of new knowledge about the data stored in the system over its lifetime. Such a data structure specification is referred to as semantic modeling.

Semantic modeling aims to condense information and process descriptions into a comprehensible format suitable for communication between database tools or designs, such as between data mining processes and data analysts. Graphic languages are often the preferred choice for this purpose, and significant effort has been devoted to the development of graphic denotational systems. The history of graphic notations spans various scientific and engineering disciplines, resulting in a rich tapestry of visual modeling languages and methods. Examples include ER diagrams and their numerous variants, OOA\&D schemas in myriad versions, and UML, which itself encompasses a variety of notations.

Our objective is to elucidate the fundamental semantic foundations of graphic languages and present an integrated framework that allows for the consistent approach and integration of different languages and their semantics. By doing so, we aim to bridge the gap between theoretical models and practical applications, facilitating more effective communication, collaboration, and system design in the realm of semantic modeling and information systems.

A well-designed graphic language should be formalizable, sufficiently expressive to capture all the intricacies of the real world, and suitable for semantic refinement. We are particularly interested in using the same language to model both deterministic and nondeterministic structures, such as data structures and models generated using machine learning algorithms. In our view, Category theory offers the most expressive and formal approach to deterministic graphic specification.

Category theory generalizes the use of graphic language to specify structures and properties through diagrams. These categorical techniques provide powerful tools for formal specification, structuring, model construction, and formal verification across a wide range of systems, as evidenced by numerous papers. Data specification requires a finite, effective, and comprehensive presentation of complete structures. This type of methodology was explored in Category theory for algebraic specification by Ehresmann. He developed sketches as a specification methodology for mathematical structures and presented them as an alternative to the string-based specification employed in mathematical logic.

The functional semantics of sketches is sound in the informal sense, preserving by definition the structure given in the sketch. The analogy to the semantics of traditional model theory is close enough that sketches and their models fit the definition of an "institution" (see \cite{BurstallGoguen83}), which is an abstract notion of a logical system comprising syntactic and semantic components. The soundness of semantics appears trivial when contrasted with the inductive proof of soundness that occurs in string-based logic, as the semantics functor is not defined recursively.

Sketch specifications enjoy a unique combination of rigor, expressiveness, and comprehensibility. They can be used for data modeling, process modeling, and metadata modeling, providing a unified specification framework for system modeling. However, the sketch structure requires us to adopt a global perspective of the system, making it impossible to decompose a specification problem into subproblems. This poses challenges in specifying a large system, especially when considering the interaction between subsystems or components, which is a common practice in engineering.

To illustrate this problem further, consider the specification of workflows (for more details, see \cite{EhrigHeckelBaldanCorradini06}). A workflow describes a business process in terms of tasks and shared resources. Such descriptions are essential, for example, when interoperability of workflows from different organizations is a concern, such as when integrating applications from various enterprises over the Internet \cite{EhrigHeckelBaldanCorradini06}. A workflow is a net \cite{vanDerAalst98}, sometimes a Petri net, that satisfies certain structural constraints and corresponding soundness conditions.

Typically, the methodology used to specify workflows involves an associated library of components. An interorganizational workflow is modeled as a set of such workflow nets \cite{vanDerAalst99}, connected through additional places for asynchronous communication and synchronization requirements on transitions. The difficulty of applying sketches to this type of task stems from the way composition is defined in the category of sets.

We aim to interpret a workflow as an arrow, where the gluing of different workflows must also be interpreted as an arrow and must always be defined. The net structure defined by a workflow is called an oriented multi-graph since its components are relations linking families defined by sets of entities or data structures.

To achieve our goal, we extend the syntax of sketches to multi-graphs and define their models on a class of fuzzy relations (see \cite{Johnstone02}). In this extended framework, multi-graph nodes are interpreted as relations. To formalize this extension, we begin by defining a library as the syntactic structure of admissible configurations using components from that library. This approach is grounded in the notion of a component, which is used to define relationships between two families of entities and can be organized in a hierarchy of complexity. If a component is not atomic, it is defined by the composition of other components. We view the set of admissible configurations as the graphic language defined by the library structure.

A model of a library is a map from the library multi-graph to the structure associated with the class of relations defined using multi-valued logic. An interpretation for an admissible configuration is defined for each library model. It is the limit of the multi-graph, in the category of $\Omega$-sets, resulting from applying a library model to an admissible multi-graph. This framework appears to be suitable for the definition and study of graphic-based logics, especially when interpreting one of its nodes as the set of truth values.

We employ libraries as a means to define the lexicon of the language used in describing a domain. However, the category defined by library models and natural transformations is not an accessible category (see \cite{RosickyAdamek94}). It cannot be axiomatized by a basic theory in first-order logic. This limitation means that classical Ehresmann sketches do not have sufficient expressive power to specify the category of library models defined in multi-valued logic with more than three truth values.

To formalize linguistic structures, Chomsky proposed in \cite{chomsky57} that a language can be viewed as a set of grammatically correct sentences that are possible within that language. The objective of defining a language is thus to explicitly characterize this set of grammatical sentences through the use of a formal grammar. The two primary categories of grammar are \emph{generative grammars}, which consist of rules for generating elements of a language, and \emph{analytic grammars}, which consist of rules for analyzing a structure to determine its membership in the language. In our approach to defining a graphic language based on libraries of components, we utilize a multi-graph as the analytic grammar of the language.

It's important to note that a generative grammar does not necessarily correspond to the algorithm used to parse the generated language. In contrast, an analytic grammar aligns more directly with the structure and semantics of a parser designed for the language. Examples of analytic grammar formalisms include top-down parsing languages (TDPL) \cite{Birman70}, link grammars \cite{TemperleySleator91}, and parsing expression grammars \cite{Ford04}.

Our extension to the syntax of sketches is grounded in the principles of Link grammars, a syntactic theory introduced by Davy Temperley and Daniel Sleator in \cite{TemperleySleator91}. Unlike traditional tree-like hierarchical structures, Link grammars establish relations between pairs of words, resembling the dependency grammars pioneered by Lucien Tesni\`{e}re in the 1960s \cite{Tesniere59}. Designed primarily for linguistic applications, Link grammar serves as an analytic grammar that derives syntactic structures by examining the positional relationships between pairs of words.

The model developed by Temperley and Sleator for English showcases the system's comprehensive coverage of various linguistic phenomena in the language. In this system, a sentence is deemed grammatically correct if it is possible to establish links between all words based on the predefined link requirements for each word as specified in the lexicon. In this context, each link represents a syntactic relation between words. It has been demonstrated in \cite{TemperleySleator91} that link grammars possess the expressive power equivalent to that of context-free grammars.

We introduce the concept of a \textit{sign system} as an extension of the Ehresmann sketch notion. Goguen provides a formal definition for a closely related concept in \cite{Goguen99}. In our formulation, a sign system is defined by a library comprised of a multi-graph, a set of commutative multi-diagrams, a collection of limit cones, and a set of colimit cocones. Within this framework, both the limit and the colimit of a multi-diagram are understood as relations established within the context of fuzzy set theory. 

We define a \textit{semiotic system} as a pair comprising a sign system and one of its associated models. Such semiotic systems can be interpreted as institutions in the sense described by Goguen \cite{BurstallGoguen83}.

Our primary objective is to establish a mathematically rigorous theory of semiotics grounded in the Ehresmann sketch notion. Our approach seeks to refine and internalize the formalizations introduced by the \textit{Vienna Circle} in their \textit{International Encyclopedia of Unified Science}. To achieve this, we delineate the broader field of \textit{Semiotics} into three distinct branches:

\begin{itemize}
    \item \textit{Semantics}: Concerned with the relationship between signs and the entities or concepts they represent.
    \item \textit{Syntactics}: Focuses on the interrelations among signs within formal structures.
    \item \textit{Pragmatics}: Examines the impact of signs on the processes or contexts in which they are used.
\end{itemize}

By partitioning semiotics into these three branches, we aim to provide a comprehensive and structured framework that captures the intricate relationships between signs and their broader implications.

Signs manifest as integral components within sign systems. Many signs are intricate constructs, amalgamated from simpler, foundational signs. In this light, signs can be perceived as structures that are defined using other signs. Sign systems serve to systematically organize and represent the relationships between these signs. Assigning a sign to an entity functions as a mechanism to name or attribute properties to that entity; for instance, two entities labeled with identical signs are deemed identical. We define a model of a sign system as a coherent process for labeling entities within a predefined universe while preserving the relational attributes between signs.

Drawing inspiration from Peirce's approach to \textit{Semiotics}, we correlate entities from the universe of discourse with specific signs. This is achieved by labeling certain objects or morphisms within a topos with signs stipulated in a library specification, using a library model. If the object classifier within the topos possesses a monoidal logic structure, and its operators are annotated with signs from a particular library, we can leverage this library to articulate monoidal graphical logics. Within the context of a semiotic system, a relation is perceived as a configuration that, when interpreted, corresponds to a multi-morphism targeted towards an object annotated with an associated monoidal logic structure.

This approach facilitates the expansion of traditional logical concepts to graphical logics that are associated with a semiotic system. Graphic relations can be employed to formulate queries within the semiotic framework. A response to such queries is a fiber over its source, derived from its interpretation, defined by all the "points" transformed into truths by the semiotic-associated ML-algebra. In this scenario, a "set" of points aligns with a graphic relation if each point, as per the relation's interpretation, corresponds to a truth within the semiotic monoidal logic. Given that monoidal logics can be perceived as fuzzy or multi-valued logics, the logics definable within this framework essentially emerge as fuzzy graphical logics. This extension enriches the logical framework underpinning Ehresmann sketches and facilitates the fuzzification of the notion of relation evaluation, a critical consideration for practical applications.

Everyday activities generate a wealth of information, which is often stored across various databases within information systems. A query to an information system can be perceived as a lens through which the data stored in the system is viewed, typically presented as a dataset. Frequently, this information serves as a foundation for deriving new insights about reality. Given the vast volume of data stored, this transformation process needs to be automated, and this is a primary objective of fields within Artificial Intelligence, such as Machine Learning, as discussed in \cite{MichalskiMitchellCarbonell86}.

Within the framework of fuzzy set theory, a dataset can be represented by a relation. The interpretation of a word within a logical semiotic is termed a model for the dataset if the relation aligns with the multi-diagram limit. If we conceptualize a dataset as a method for encoding data, its model can be viewed as a representation of knowledge in a graphical language associated with a specific logical semiotic. This relationship between data and knowledge becomes valuable when we define the notion of a structure being $\lambda$-consistent with a relation, where $\lambda$ quantifies the degree of similarity between the interpretation of a word and the concept encoded within the dataset. In this context, a dataset being $\lambda$-consistent with a diagram encapsulates the idea of approximation.

If the data contained within a dataset is $\lambda$-consistent with a set of diagrams, we refer to the semiotic defined by this set of diagrams as a theory that is $\lambda$-consistent with the data. Naturally, this concept can be extended to databases. The knowledge encapsulated within a database can be encoded within a semiotic, and the quality of this knowledge is determined by how well its model provides accurate approximations to the data associated with the tables that can be generated from a database state. 

Given that different human specialists or machine learning algorithms often articulate the extracted knowledge using varied languages, the challenge of knowledge integration essentially becomes a problem of semiotic integration. The objective of this integration process is to construct a unified semiotic that harnesses all available knowledge while maintaining optimal performance. We present methodologies for merging multiple distinct theories. However, this process sometimes also necessitates the integration of languages and the associated logics, a task that is streamlined through our approach.

\textbf{Overview:}

In chapter \ref{monoidal logics}, we initiated our discussion by outlining a partially-ordered monoidal structure for the set of truth-values utilized in defining our graphic logics. The language underpinning this logic is rooted in potential circuit configurations, facilitated by libraries of components—a structure detailed in chapter \ref{specifying libraries}. To provide a framework for modeling these libraries, we employed a class of relations defined between sets and assessed within a multi-valued logic. This approach is elaborated upon in chapter \ref{multi-morphisms}. To facilitate this, we introduced a definition for the composition of relations that is compatible with circuit assembly. In this context, composition should be perceived as a comprehensive operator within the class of relations, relaxing the strictures of diagram equality as evaluated in a multi-valued logic. This allows us to present a generalized version of the concept of commutative diagrams, as explored in chapter \ref{multidiagrams}. Further, in chapter \ref{bayesian inference}, we define a version of Bayesian inference tailored for fuzzy logics.

In chapter \ref{modeling libraries}, we conceptualize the language defined by libraries as a collection of circuits that are amenable to the plug-in operation. Here, each word represents a string of component labels or signs and establishes a relation between a set of input requirements and a set of output structures, both identified through sets of signs. We delve deeper into how libraries are modeled using the class of relations in chapter \ref{modeling libraries}. The descriptive power of languages formulated using libraries enables the definition of structures that are not attainable through basic first-order theory. This limitation is discussed in chapter \ref{descritive power}, where we highlight that the category defined using library models is inaccessible. To leverage the expressive power of Ehresmann sketches, we enriched the structure of a library with a framework resembling an Ehresmann sketch. This specification tool, termed a sign system or specification system, is discussed in chapter \ref{Sings Semiotics}. We utilized multi-diagrams, respecting library constraints, to achieve this enrichment, with limits and colimits interpreted as multi-morphisms. This concept is further elaborated upon in chapter \ref{multidiagrams}, focusing on the evaluation of diagram commutativity within a multi-valued logic. A specification system equipped with a fixed model is termed a semiotic system. We conclude this chapter with an illustrative example.

In chapter \ref{logics}, we employ sign systems to specify fuzzy logics, which are semiotics endowed with unique structures. To achieve this, we impose interpretations on some of the sign system signs, enabling the interpretation of words as evaluations of relations within a monoidal logic. When certain signs are interpreted as operators of ML-algebras, the library is termed a logic library, and the associated language is referred to as a logic language. We formalize these concepts and delineate conditions under which a diagram defines a relation and an equation.

However, certain mathematical problems necessitate more expressive languages than those defined using libraries. To enhance the expressive capabilities of libraries, we introduce Lagrangian syntactic operators. An illustrative example is provided in chapter \ref{synopt}, which enables the definition of Differential Semiotics.

In chapter \ref{Concept description}, we elucidate the process of evaluating relations within a semiotic and use this framework to define the concept of the level of consistency of a relation. This notion is extended to relations that are $\lambda$-consistent with words in a semiotic. We emphasize that a diagram defining a relation can be viewed as a query to the semiotic. In this context, a $\lambda$-answer is a structure that is $\lambda$-consistent with the query. These concepts are employed in chapter \ref{fuzzy computability} to define bottom and upper presentations for a structure $A$ within a semiotic: the bottom presentation represents the most detailed structure in $A$ encoded in the semiotic language, while the upper presentation encapsulates $A$ and is coded in the fixed language. These notions can be perceived as two approximations to the concept, aligning with Pawlak's spirit in the standard version of rough set theory. They establish interior and closure operators for structures, enabling the formulation of a formal topology. This topology is instrumental in defining an inference system for word evaluations within a semiotic system, grounded in the properties of ML-algebras.

Chapter \ref{integration} is devoted to the integration of semiotics and models for concepts. The objective is to construct a unified system that harnesses all available knowledge, thereby enhancing concept description by amalgamating diverse descriptions of a concept expressed in different languages. For specification purposes, a string-based modal logic is introduced in Chapter \ref{reasoning}, where propositional variables are interpreted as diagrams defined within a semiotic. This serves as a meta-language for reasoning about models of concepts and knowledge.

\chapter{Monoidal logics}\label{monoidal logics}

The ubiquity of fuzziness in practical problems underscores its importance as a rule rather than an exception. Extensive research has been conducted on fuzzy sets and their associated fuzzy logics. In this context, we are particularly interested in exploring the potential of extending the data specification paradigm using Ehresmann sketches to accommodate fuzzy scenarios. The motivation for this paper stems from the introduction to $\Omega$-Categories provided in \cite{TholenClementinoHofmann03} and the concept of $\Omega$-Sets presented in \cite{Valverde85}.

Ulrich H\"{o}hle pioneered Monoidal Logic in 1995 with the aim of establishing a unified framework for several first-order non-classical logics. These include Linear logic, Intuitionistic logic, and Lukasiewicz logic. A Monoidal Logic can be understood as a Full Lambek calculus augmented with exchange and weakening properties. We hypothesize that problems related to the specification of structures, particularly those involving libraries of components, can be formulated within a framework grounded in set theory enriched with a monoidal logic. Such an approach offers a promising avenue for addressing the complexities and nuances inherent in fuzzy systems, thereby paving the way for more robust and flexible data specification methodologies.

Recall that an algebra $(\Omega, \otimes, \leq, 1)$ is a partially-ordered monoid if $(\Omega, \otimes, 1)$ forms a monoid and $\leq$ is a partial order on $\Omega$ such that the operator $\otimes$ is monotone increasing, meaning:
\[
x \leq x' \text{ and } y \leq y' \text{ imply } x \otimes y \leq x' \otimes y'.
\]

An algebra $(\Omega, \otimes, \setminus, /, \leq, 1)$ is termed a resituated partially-ordered monoid if $(\Omega, \otimes, \leq, 1)$ is a partially-ordered monoid. Furthermore, the following condition must hold for all \( x, y, z \in \Omega \):
\[
x \otimes y \leq z \Leftrightarrow y \leq x \setminus z \Leftrightarrow x \leq z / y.
\]
This condition is referred to as the \emph{law of residuation}, where \( / \) and \( \setminus \) denote the right and left residuals of \( \otimes \), respectively.

Any residuated partially-ordered monoid \( \Omega \) for which \( (\Omega, \leq) \) forms a lattice and \( (\Omega, \otimes) \) possesses a unit is termed a \emph{residuated lattice}. To be more precise, an algebra
\[
(\Omega, \vee, \wedge, \otimes, \setminus, /, 1)
\]
is considered a residuated lattice if it satisfies the following conditions:
\begin{enumerate}
  \item \( (\Omega, \otimes, 1) \) is a monoid, where \( \setminus \) and \( / \) serve as the right and left residuals of \( \otimes \), respectively, and
  \item \( (\Omega, \vee, \wedge) \) forms a lattice.
\end{enumerate}
When \( \otimes \) is commutative, the structure is referred to as a \emph{commutative residuated lattice}. In any commutative residuated lattice, the equation \( x \setminus y = y/x \) holds for all \( x, y \). In such instances, we employ the symbol \( \Rightarrow \) and represent \( x \Rightarrow y \) instead of \( x \setminus y \) (or \( y/x \)). Furthermore, the commutative residuated lattice is denoted as \( (\Omega, \vee, \wedge, \otimes, \Rightarrow, 1) \).

\begin{defn}
A \emph{ML-algebra} is defined as a bounded commutative residuated lattice, where \(1 = \top\). Formally, it is a system \((\Omega, \otimes, \Rightarrow, \vee, \wedge, \perp, \top)\) that satisfies the following conditions:
\begin{enumerate}
  \item \((\Omega, \otimes, \top)\) forms a commutative monoid,
  \item \(x \otimes \top = x\) for every \(x \in \Omega\),
  \item \((\Omega, \vee, \wedge, \perp, \top)\) is a bounded lattice, and
  \item The residuation property holds:
  \[
  \text{For all } x, y, z \in \Omega, \quad x \leq y \Rightarrow z \text{ if and only if } x \otimes y \leq z.
  \]
\end{enumerate}
\end{defn}

In this paper, we assume that the ML-algebra \(\Omega\) is non-trivial, meaning \(\top \neq \perp\).

An equivalent structure to an ML-algebra is presented in \cite{TholenClementinoHofmann03} as a commutative and unital quantale, where \(\Omega\) is a complete lattice equipped with a symmetric and associative tensor product \(\otimes\), having unit \(\top\) and a right adjoint \(\Rightarrow\). Considering \(\Omega\) as a thin category, \(\Omega\) is termed a \emph{symmetric monoidal-closed} category.

Logics that have refinements of ML-algebras as their models are termed \emph{monoidal logics}. In many-valued logics, such as fuzzy logics, \(\otimes\) serves as the standard truth degree function for the conjunction connective. Given that the operator \(\otimes\) is monotone and has a right adjoint, the following propositions can be derived:

\begin{prop}
For an ML-algebra, the following properties hold:
\begin{enumerate}
  \item If \(y \leq z\), then \(x \otimes y \leq x \otimes z\),
  \item \(x \leq y\) if and only if \((x \Rightarrow y) = \top\), and
  \item \(x \Rightarrow z = \bigvee\{y : x \otimes y \leq z\}\).
\end{enumerate}
\end{prop}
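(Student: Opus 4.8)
The plan is to derive all three statements from the residuation law (condition 4 of the definition), together with commutativity of $\otimes$ and the unit law $x \otimes \top = x$. Residuation asserts that for all $a,b,c$ we have $a \leq b \Rightarrow c$ if and only if $a \otimes b \leq c$; reading this as an adjunction between the map $a \mapsto a \otimes b$ and the map $c \mapsto b \Rightarrow c$ is what renders each part nearly mechanical. For part (1), I would instantiate residuation on the trivial inequality $x \otimes z \leq x \otimes z$: rewriting $x \otimes z$ as $z \otimes x$ by commutativity, residuation yields $z \leq x \Rightarrow (x \otimes z)$. Chaining with the hypothesis $y \leq z$ by transitivity gives $y \leq x \Rightarrow (x \otimes z)$, and applying residuation in the reverse direction produces $y \otimes x \leq x \otimes z$, i.e. $x \otimes y \leq x \otimes z$ after one further use of commutativity.

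For part (2), the key move is to specialize residuation at $a = \top$, obtaining that $\top \leq x \Rightarrow y$ holds if and only if $\top \otimes x \leq y$. The unit law (with commutativity) simplifies $\top \otimes x$ to $x$, so $\top \leq x \Rightarrow y$ is equivalent to $x \leq y$. Since $\top$ is the greatest element, $x \Rightarrow y \leq \top$ is automatic, whence $\top \leq x \Rightarrow y$ is the same as $x \Rightarrow y = \top$; combining the two equivalences gives the stated biconditional.

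For part (3), let $S = \{\, y : x \otimes y \leq z \,\}$. I would show that $x \Rightarrow z$ is the \emph{greatest} element of $S$, from which $x \Rightarrow z = \bigvee S$ follows. Membership $x \Rightarrow z \in S$ comes from instantiating residuation at $a = x \Rightarrow z$: the trivial inequality $x \Rightarrow z \leq x \Rightarrow z$ yields $(x \Rightarrow z) \otimes x \leq z$, that is $x \otimes (x \Rightarrow z) \leq z$. That $x \Rightarrow z$ is an upper bound of $S$ is the converse application: for any $y \in S$ we have $y \otimes x \leq z$, so residuation gives $y \leq x \Rightarrow z$. Being simultaneously a member and an upper bound, $x \Rightarrow z$ is the maximum of $S$, hence its join.

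The arguments are routine once the adjunction is isolated; the only place demanding genuine care is part (3), where I would avoid silently assuming that arbitrary suprema exist (the definition only posits a bounded lattice, not a complete one) and instead observe that the supremum is realised by the concrete maximal element $x \Rightarrow z$. A secondary subtlety running through all three parts is keeping the commutativity rewrites explicit, since the residuation law as stated is aligned with a fixed factor and each invocation must be matched to the correct side of $\otimes$.
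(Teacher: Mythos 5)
Your proposal is correct, and it follows exactly the route the paper intends: the paper gives no explicit proof, remarking only that these properties ``can be derived'' from the monotonicity of $\otimes$ and its right adjoint, and your arguments are precisely the standard unfolding of that residuation/adjunction. If anything, your proof is slightly more self-contained than the paper's hint, since you derive monotonicity (part 1) from the residuation law rather than assuming it, and you correctly handle the subtlety in part 3 that the definition posits only a bounded (not complete) lattice, so the supremum must be exhibited as an attained maximum.
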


Additionally, we have:

\begin{prop}\cite{CastroKlawonn95}\label{prop:implic}
For any ML-algebra, the following equalities hold for all \(x, y, z \in \Omega\):
\begin{enumerate}
  \item \(x \otimes (x \Rightarrow y) \leq x \wedge y\), and
  \item \((x \Rightarrow y) \otimes (y \Rightarrow z) \leq x \Rightarrow z\).
\end{enumerate}
\end{prop}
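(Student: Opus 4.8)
The plan is to derive both inequalities purely from the residuation property of condition (4), together with the monotonicity of $\otimes$ and the unit law $x \otimes \top = x$. The single fact underlying everything is the ``modus ponens'' inequality $x \otimes (x \Rightarrow y) \leq y$, so I would establish that first. It follows at once from residuation: applying the equivalence $a \leq b \Rightarrow c \Leftrightarrow a \otimes b \leq c$ to the trivially true statement $(x \Rightarrow y) \leq x \Rightarrow y$ yields $(x \Rightarrow y) \otimes x \leq y$, and commutativity of $\otimes$ rewrites this as $x \otimes (x \Rightarrow y) \leq y$.

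For part (1) it then remains only to show $x \otimes (x \Rightarrow y) \leq x$, after which $x \otimes (x \Rightarrow y) \leq x \wedge y$ follows from the defining property of $\wedge$ as the greatest lower bound. This second bound is immediate: since $\top$ is the top element we have $x \Rightarrow y \leq \top$, so monotonicity of $\otimes$ together with the unit law give $x \otimes (x \Rightarrow y) \leq x \otimes \top = x$. Combining the two bounds completes this part.

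For part (2) I would use residuation in the forward direction: to prove $(x \Rightarrow y) \otimes (y \Rightarrow z) \leq x \Rightarrow z$ it suffices, by the equivalence above with $b = x$ and $c = z$, to prove $\bigl[(x \Rightarrow y) \otimes (y \Rightarrow z)\bigr] \otimes x \leq z$. Rearranging by associativity and commutativity as $(y \Rightarrow z) \otimes \bigl[(x \Rightarrow y) \otimes x\bigr]$ and applying the modus ponens inequality to the bracketed term bounds it above by $(y \Rightarrow z) \otimes y$ via monotonicity; a second application of modus ponens bounds this by $z$. Residuation then converts the established inequality $\bigl[(x \Rightarrow y) \otimes (y \Rightarrow z)\bigr] \otimes x \leq z$ back into the target.

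There is no genuine obstacle here, only bookkeeping. The one subtlety worth flagging is keeping the two uses of residuation straight: in part (2) one applies it forward to peel off the $x$ and reduce to a tensor inequality, and then the monotone reductions must be chained in the correct order so that each $\Rightarrow$ is cancelled against its matching antecedent before the next step is taken.
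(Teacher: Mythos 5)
Your proof is correct. The paper itself gives no proof of this proposition --- it is quoted from the cited reference \cite{CastroKlawonn95} --- so there is nothing internal to compare against; your argument (establish modus ponens $x \otimes (x \Rightarrow y) \leq y$ from residuation applied to $x \Rightarrow y \leq x \Rightarrow y$, combine with $x \otimes (x \Rightarrow y) \leq x \otimes \top = x$ for part (1), and peel off $x$ by residuation plus two chained modus ponens steps for part (2)) is the standard, complete derivation from exactly the axioms the paper lists, and every step you use (monotonicity of $\otimes$, the unit law, the glb property of $\wedge$) is available in the paper's definition of an ML-algebra and its first proposition.
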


Every non-trivial Heyting algebra, denoted by \(\Omega\), where \(\otimes = \wedge\) and \(\top\) is the top element, serves as an example of an ML-algebra. Specifically, the two-element chain \(2 = \{ \text{false} < \text{true} \}\) equipped with the monoidal structure defined by "and" and "true" is an instance of an ML-algebra.

The complete real half-line, denoted by \(P = [0, \infty]\), when endowed with the categorical structure induced by the relation \(\leq\), admits several intriguing monoidal structures. If \(\otimes = \wedge = \max\), then it forms a Heyting algebra. Another viable choice for \(\otimes\) is \(+\). Notably, in this case, the right adjoint \(\Rightarrow\) is represented by the truncated minus operation: \(x \Rightarrow y = \max\{v - u, 0\}\).

\begin{exam}[t-norm based fuzzy logic]
A t-norm, denoted by \(\otimes\), is a function used to establish an ML-algebra structure on the real unit interval \([0,1]\). By considering the unit interval as the set of truth values, we can formulate a monoidal logic using a t-norm. The residuum of \(\otimes\) is defined as the operation \(x \Rightarrow y = \max\{z | x \otimes z \leq y\}\). In addition to the aforementioned, other truth functions are deemed essential in fuzzy logic, such as the \emph{weak conjunction} \(x \cdot y = \min(x,y)\) and \emph{weak disjunction} \(x + y = \max(x,y)\). In the subsequent discussion, we focus on interpreting a fuzzy logic using an ML-algebra \(([0,1], \otimes, \Rightarrow, \vee, \wedge, 0, 1)\), where \(\otimes\) represents a continuous t-norm and \(\Rightarrow\) symbolizes its residuum. The lattice structures are further defined by \(x \vee y = \min(x,y)\) and \(x \wedge y = \max(x+y, 1)\).

The following are important examples of fuzzy logics interpreted within ML-algebras defined by specific continuous t-norms:
\begin{itemize}
  \item \textbf{{\L}ukasiewicz} logic, which is defined using the t-norm \(x\otimes y=\max(0,x+y-1)\) and its residuum:
  \[
  x\Rightarrow y=\min(1,1-x+y)
  \]
  
  \item \textbf{G\"{o}del logic}, defined by the t-norm \(x\otimes y=\min(x,y)\) and its residuum:
  \[
  x\Rightarrow y= \left\{
                                      \begin{array}{cl}
                                        1 & \text{if } x\leq y \\
                                        y & \text{otherwise}\\
                                      \end{array}
                                    \right.
  \]
  
  \item \textbf{Product logic}, defined with the t-norm \(x\otimes y=x \cdot y\) and its residuum:
  \[
  x\Rightarrow y= \left\{
                                      \begin{array}{cl}
                                        1 & \text{if } x\leq y \\
                                        \frac{y}{x} & \text{otherwise}\\
                                      \end{array}
                                    \right.
  \]
\end{itemize}
\end{exam}

Particularly crucial to this study are the fundamental logics, which correspond to instances of ML-algebras that are divisible, meaning they satisfy the property:
\[
x \otimes (x \Rightarrow y) = x \wedge y.
\]
Examples of such logics include classic Boolean logic and fuzzy logics like Product, G\"{o}del, and {\L}ukasiewicz. It's worth noting that most of the examples presented subsequently are constructed using Product logics with the natural order on the interval \([0,1]\).

For the subsequent discussion, we define:
\[
a \Leftrightarrow b := (a \Rightarrow b) \otimes (b \Rightarrow a) \quad \text{and} \quad \neg a := a \Rightarrow \bot.
\]

Consider a finite family of ML-algebras given by
\[
(\Omega_i, \otimes_i, \Rightarrow_i, \vee_i, \wedge_i, \perp_i, \top_i)_{i \in I}.
\]
The product of these ML-algebras is defined as the ML-algebra
\[
\left(\Pi_{i \in I} \Omega_i, \otimes, \Rightarrow, \vee, \wedge, \perp, \top\right)
\]
where:
\begin{enumerate}
  \item \(\Pi_{i \in I} \Omega_i\) denotes the Cartesian product of the sets of truth values,
  \item \((\lambda_1, \lambda_2, \ldots, \lambda_n) \otimes (\alpha_1, \alpha_2, \ldots, \alpha_n) = (\lambda_1 \otimes \alpha_1, \lambda_2 \otimes \alpha_2, \ldots, \lambda_n \otimes \alpha_n)\),
  \item \((\lambda_1, \lambda_2, \ldots, \lambda_n) \Rightarrow (\alpha_1, \alpha_2, \ldots, \alpha_n) = (\lambda_1 \Rightarrow \alpha_1, \lambda_2 \Rightarrow \alpha_2, \ldots, \lambda_n \Rightarrow \alpha_n)\),
  \item \((\lambda_1, \lambda_2, \ldots, \lambda_n) \vee (\alpha_1, \alpha_2, \ldots, \alpha_n) = (\lambda_1 \vee \alpha_1, \lambda_2 \vee \alpha_2, \ldots, \lambda_n \vee \alpha_n)\),
  \item \((\lambda_1, \lambda_2, \ldots, \lambda_n) \wedge (\alpha_1, \alpha_2, \ldots, \alpha_n) = (\lambda_1 \wedge \alpha_1, \lambda_2 \wedge \alpha_2, \ldots, \lambda_n \wedge \alpha_n)\),
  \item \(\bot = (\bot_1, \bot_2, \ldots, \bot_n)\), and
  \item \(\top = (\top_1, \top_2, \ldots, \top_n)\).
\end{enumerate}
This composite structure gives rise to two types of morphisms: the projections
\[
\pi_j: \Pi_{i \in I} \Omega_i \rightarrow \Omega_j,
\]
and the upper interpretations
\[
\begin{array}{cccl}
  \top_j: & \Omega_j & \rightarrow & \Pi_{i \in I} \Omega_i \\
          & \alpha_j & \mapsto & (\bot_1, \bot_2, \ldots, \alpha_j, \ldots, \bot_n)
\end{array}
\]
It's important to note that the upper interpretation serves as the right inverse to the projection:
\[
\pi_j \circ \top_j = \text{id}_{\Omega_j}.
\]
We will utilize this structure as a means for integrating ML-logics.

\chapter{Multi-morphisms}\label{multi-morphisms}

A remarkable result in Category Theory, as presented by M. Makkai in \cite{ParMakkai89}, reveals that the arrow specification language is fully expressive. This implies that any construction with a formal semantic meaning can also be articulated using the arrow language. Furthermore, when the fundamental objects of interest are described by arrows, it often transpires that many derived objects can be naturally constructed through arrows \cite{PissenDiskinKadish99}. To define the universe we intend to explore, it is both necessary and sufficient to delineate our understanding of morphisms between the objects within this universe.

Our semantic modeling universe should essentially mirror \( Set \) but possess the capability to represent soft structures defined through monoidal logics. Let \( \Omega \) be a set equipped with an ML-algebraic structure \( (\Omega, \otimes, \Rightarrow, \vee, \wedge, \perp, \top) \). We define our universe as \( Set(\Omega) \), which consists of \( \Omega \)-sets. An \( \Omega \)-set, denoted by \( \alpha: A \), refers to a set \( A \) equipped with an \( \Omega \)-valued map
\[
[\cdot=\cdot]: A \times A \rightarrow \Omega,
\]
that is symmetric and transitive. This symmetry and transitivity are expressed by the conditions
\[
[a=b] = [b=a] \quad \text{and} \quad [a=b] \otimes [b=c] \leq [a=c],
\]
for all \( a, b, c \in A \). Such a structure is termed a \emph{similarity} in \( A \). Greek letters will be employed to denote \( \Omega \)-sets. We use the notation \( \alpha: A \) to signify an \( \Omega \)-set characterized by the set \( A \) and the similarity \( [\cdot=\cdot]_\alpha \). This can be interpreted as a relation evaluated in \( \Omega \) or a distribution in \( A \times A \). The diagonal of this fuzzy relation is instrumental in defining fuzzy sets with support \( A \). For each \( \Omega \)-set \( \alpha: A \) and \( a \in A \), we define
\[
[a]_\alpha = [a=a]_\alpha,
\]
and term it the \emph{extent} of \( a \). Consequently, \( [\cdot]_\alpha: A \rightarrow \Omega \) serves as a representation for the fuzzy set \( \alpha \) encoded through the similarity \( [\cdot=\cdot]_\alpha \). An element \( a \) is deemed \emph{globally} present in \( \alpha: A \) if \( [a]_\alpha = \top \).

Note that every set \( A \) naturally possesses an \( \Omega \)-set structure defined by the equality \( = \) in \( A \). That is, the similarity is defined as:
\[
[a=b]_A = \left\{
        \begin{array}{cc}
          \top & \text{if } a=b \\
          \bot & \text{if } a\neq b \\
        \end{array}
      \right..
\]
The crisp similarity \( [a=b]_A \), characterized by the equality in \( A \), is denoted as \( 1_A \).

Entities within an \( \Omega \)-set \( \alpha: A \) are defined by a set of attributes \( (A_i)_{i \in I} \) if \( A = \Pi_{i\in I} A_i \). Given \( \bar{x} \in \Pi_{i\in I} A_i \), many values associated with some of these attributes are often "non-observable" or unknown. Consequently, we differentiate between two types of attributes: \emph{observable attributes} and \emph{non-observable attributes}. Let \( (A_i)_{i\in L} \) represent a set of observable attributes in \( A \), where \( L \subseteq I \). We define an observable \( \Omega \)-set of \( \alpha:A = \Pi_{i\in I} A_i \) as the \( \Omega \)-set \( \beta:B = \Pi_{i\in L} A_i \) such that
\[
[\bar{a}=\bar{b}]_\beta = \bigvee_{\substack{\bar{x}=(\bar{c},\bar{a}), \\ \bar{y}=(\bar{d},\bar{b}) \in A}} [\bar{x}=\bar{y}]_\alpha.
\]

\begin{defn}[Observable description]
If \( \alpha:A \) is an \( \Omega \)-set with a set of observable attributes \( (A_i)_{i\in L} \), we refer to any \( \bar{a} \in \Pi_{i\in L} A_i \) as an \emph{observable description} for an entity in \( \alpha \).
\end{defn}

We define a multi-morphism in \( Set(\Omega) \) as a tracking morphism between \( \Omega \)-sets \( \alpha:A \) and \( \beta:B \) as a map
\[
f: A \times B \rightarrow \Omega
\]
(usually referred to as an \( \Omega \)-map or an \( \Omega \)-matrix in \cite{TholenClementinoHofmann03}). If \( f \) is a multi-morphism from \( \alpha:A \) to \( \beta:B \) in \( Set(\Omega) \), we denote this by \( f: A \rightharpoonup B \), indicating \( A \) as the source and \( B \) as the target of \( f \). For observable descriptions \( \bar{a} \) and \( \bar{b} \) representing entities in \( \alpha:A \) and \( \beta:B \), respectively, we define
\[
f(\bar{a},\bar{b}) = \bigvee_{\substack{\bar{x}=(\bar{c},\bar{a}), \\ \bar{y}=(\bar{d},\bar{b})}} f(\bar{x},\bar{y}).
\]
The complete partial order on the ML-algebra \( \Omega \) induces a complete partial order on the set of multi-morphisms. Given two multi-morphisms \( f,g: A \times B \rightarrow \Omega \) between \( \Omega \)-sets \( \alpha:A \) and \( \beta:B \) in \( Set(\Omega) \), we write \( f \leq g \) if \( f(a,b) \leq g(a,b) \) for every \( (a,b) \in A \times B \).

Graphically, a multi-morphism
\[
f: A_0 \times A_1 \times A_2 \rightharpoonup A_3 \times A_4 \times A_5
\]
is represented in Fig. \ref{multiarrow} by a multi-arrow.
\begin{figure}[h]
 \[
 \small
\xymatrix @=5pt {
&&&*+[o][F-]{f}\ar `r[rd][rd]\ar `r[rrd][rrd]\ar `r[rrrd][rrrd]&&&\\
 A_0\ar `u[urrr][urrr]&A_1\ar `u[urr][urr]& A_2\ar `u[ur][ur]&&A_3&A_4&A_5
 }
\]
\caption{Multi-arrow representation of a multi-morphism.}\label{multiarrow}
\end{figure}
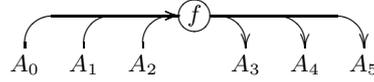
Here, the sources of the multi-arrow are \( A_0, A_1, \) and \( A_2 \), while the targets are \( A_3, A_4, \) and \( A_5 \).

We classify multi-morphisms that preserve entity evaluation in \( \Omega \) as follows:

\begin{defn}[Total multi-morphism]\label{total}
A multi-morphism \( f: A \rightharpoonup B \) is \emph{total} in \( \alpha:A \) if 
\[
[a]_\alpha = \bigvee_{b} f(a,b),
\]
for every \( a \in A \).
\end{defn}

\begin{defn}[Faithful multi-morphism]\label{faithful}
A multi-morphism \( f: A \rightharpoonup B \) is \emph{faithful} in \( \beta:B \) if 
\[
[b]_\beta = \bigvee_{a} f(a,b),
\]
for every \( b \in B \).
\end{defn}

Note that for every \( \Omega \)-set \( \alpha: \prod_{i \in I} A_i \), we can use the similarity diagonal to define a multi-morphism by selecting a set of source sets \( (A_s)_{s \in S} \) and a set of target sets \( (A_t)_{t \in T} \) with disjoint indexes, i.e., \( S \cap T = \emptyset \). This multi-morphism is given by a map
\[
g(\bar{x},\bar{y},\bar{z}) = \bigvee_{\bar{y} \in I \setminus (S \cup T)} [\bar{x},\bar{y},\bar{z}]_{\prod_{i \in I} A_i},
\]
for every \( \bar{x} \in \prod_{s \in S} A_s \) and \( \bar{z} \in \prod_{t \in T} A_t \). This defines
\[
g: \prod_{s \in S} A_s \rightharpoonup \prod_{t \in T} A_t.
\]

Composition of multi-morphisms is defined as matrix multiplication. Given two multi-morphisms 
\[
f: A \rightharpoonup B \quad \text{and} \quad g: B \rightharpoonup C,
\]
their composition, denoted as
\[ 
f \otimes g: A \rightharpoonup C,
\]
is defined by
\[
(f \otimes g)(a,c) = \bigvee_{b} (f(a,b) \otimes g(b,c)).
\]
It is noteworthy that if \( f \) and \( g \) are both total and faithful, then \( f \otimes g \) is also total and faithful. The identity for an \( \Omega \)-set \( \alpha: A \) in this composition is the multi-morphism
\[ 
1_A = [\cdot = \cdot]_\alpha: A \rightharpoonup A,
\]
defined by the equality in \( A \). This is because for \( f: A \rightharpoonup B \), we have \( 1_A \otimes f = f \otimes 1_B \).

\begin{prop}\label{prop:comprestriction} 
In \( Set(\Omega) \), let \( f: A \rightharpoonup A \) be a multi-morphism such that \( 1_A \leq f \). If in the ML-algebra \( \Omega \), for every truth value \( \alpha \), \( \alpha \otimes \alpha \leq \alpha \), then
\[ 
1_A \leq f \otimes f \leq f.
\] 
Moreover, when the logic has more than two truth values, i.e., if \( |\Omega| > 2 \), we have
\[ 
f \otimes f = 1_A \quad \text{iff} \quad f = 1_A.
\]
\end{prop}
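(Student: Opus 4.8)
The plan is to split the statement into three independent pieces — the lower bound $1_A\le f\otimes f$, the upper bound $f\otimes f\le f$, and the equivalence — and to be explicit about which hypotheses actually do work, since the reflexivity assumption $1_A\le f$ carries almost all the weight.

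First I would unpack $1_A\le f$. Because $1_A(a,a)=\top$ and $\top$ is the greatest element, this forces $f(a,a)=\top$ for all $a\in A$. I would also record, rather than silently skip, the status of the condition $\alpha\otimes\alpha\le\alpha$: in any ML-algebra it is automatic, since $\alpha\otimes\alpha\le\alpha\otimes\top=\alpha$ by monotonicity of $\otimes$ and $\alpha\le\top$. So it may be cited freely but contributes no strength beyond the ML-algebra axioms, and in particular I should not expect it to carry the proof of the upper bound.

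For the lower bound I would in fact prove the sharper $f\le f\otimes f$ (whence $1_A\le f\le f\otimes f$): for any $a,b$, choosing the index $c=a$ and using $f(a,a)=\top$ gives
\[
(f\otimes f)(a,b)=\bigvee_c f(a,c)\otimes f(c,b)\ \ge\ f(a,a)\otimes f(a,b)=\top\otimes f(a,b)=f(a,b).
\]
This uses only reflexivity and the unit law and is routine. The equivalence then follows with no extra work: $f=1_A$ gives $f\otimes f=1_A\otimes 1_A=1_A$ by the identity law, and conversely $f\otimes f=1_A$ together with $f\le f\otimes f$ yields $f\le 1_A\le f$, hence $f=1_A$. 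I would flag that this argument needs neither $\alpha\otimes\alpha\le\alpha$ nor $|\Omega|>2$, so the cardinality restriction seems inessential for the equivalence as stated.

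The hard part will be the upper bound $f\otimes f\le f$, and I expect this to be the genuine obstacle. Expanding, it is equivalent to $f(a,c)\otimes f(c,b)\le f(a,b)$ for all $a,b,c$ — precisely transitivity of $f$. The diagonal indices $c\in\{a,b\}$ only reproduce $f(a,b)$ via the unit law and yield nothing; the real content is bounding the cross terms with $c\notin\{a,b\}$, and this does not follow from reflexivity plus $\alpha\otimes\alpha\le\alpha$. A three-point witness makes the gap concrete: over the G\"{o}del algebra take $f(i,i)=\top$, $f(1,2)=f(2,3)=\tfrac12$, and all remaining off-diagonal entries $\bot$; then $(f\otimes f)(1,3)=\tfrac12>\bot=f(1,3)$. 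Consequently I would obtain $f\otimes f\le f$ only by adding transitivity of $f$ explicitly (equivalently, by reading $f$ as a similarity or reflexive $\Omega$-preorder rather than an arbitrary reflexive multi-morphism); under that reading the bound is immediate and, combined with $f\le f\otimes f$, sharpens to $f\otimes f=f$. I would make this dependence overt rather than smuggle transitivity in, since it — not the idempotency condition — is what the upper bound actually rests on.
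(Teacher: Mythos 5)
Your proposal cannot be matched against a proof in the paper, because Proposition \ref{prop:comprestriction} is asserted there without one; so the right standard is correctness, and on that standard your analysis holds up. The positive parts are proved correctly and economically: reflexivity $1_A\le f$ forces $f(a,a)=\top$, the choice $c=a$ in $(f\otimes f)(a,b)=\bigvee_c f(a,c)\otimes f(c,b)$ gives the sharper $f\le f\otimes f$, and the equivalence $f\otimes f=1_A\Leftrightarrow f=1_A$ follows from $f\le f\otimes f$ together with $1_A\otimes 1_A=1_A$ (using $\bot\otimes x=\bot$, which holds since $\bot\otimes x\le\bot\otimes\top=\bot$). You are also right that both side hypotheses are inert: $|\Omega|>2$ is never used, and in the paper's ML-algebras the condition $\alpha\otimes\alpha\le\alpha$ is automatic, since the definition requires $1=\top$, whence $\alpha\otimes\alpha\le\alpha\otimes\top=\alpha$ by monotonicity. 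So the ``moreover'' clause holds for every nontrivial $\Omega$, two-valued included.

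Your identification of the upper bound as the genuine obstruction is also correct, and your counterexample is airtight: over the G\"{o}del algebra with $A=\{1,2,3\}$, $f(i,i)=\top$, $f(1,2)=f(2,3)=\tfrac12$ and all other entries $\bot$, one gets
\[
(f\otimes f)(1,3)\ \ge\ f(1,2)\otimes f(2,3)\ =\ \tfrac12\ >\ \bot\ =\ f(1,3),
\]
so $f\otimes f\le f$, which is termwise exactly transitivity $f(a,c)\otimes f(c,b)\le f(a,b)$, fails for an arbitrary reflexive multi-morphism: the proposition's first display is false as literally stated. Your repair --- adding transitivity, i.e., reading $f$ as a reflexive similarity-type relation, under which the chain collapses to $1_A\le f\otimes f=f$ --- is the minimal fix and is consistent with the one place the paper invokes the proposition: the non-splittable idempotent in the accessibility discussion is $f=\left[\begin{smallmatrix}\top&\alpha\\ \alpha&\top\end{smallmatrix}\right]$, which is reflexive, symmetric and transitive, hence idempotent exactly as your sharpened conclusion predicts. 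In short: no gap on your side; the gap you flag is in the statement itself, and your proposal both isolates it and proves everything that is actually provable from the stated hypotheses.
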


The set of multi-morphisms defined between \( \Omega \)-sets \( \alpha: A \) and \( \beta: B \) is denoted by \( Set(\Omega)|A,B| \). Moreover, every map \( f: A \rightarrow B \) in \( Set \) induces a multi-morphism with source \( A \) and target \( B \), given by
\[
\chi_f: A \times B \rightarrow \Omega, \quad \text{where} \quad \chi_f(a,b) =
\left\{
  \begin{array}{ll}
    \top, & \text{if } f(a) = b, \\
    \bot, & \text{if } f(a) \neq b. \\
  \end{array}
\right.
\]
In this context, the hom-set \( Set[A,B] \), consisting of morphisms between \( A \) and \( B \) in \( Set \), defines a subset of \( Set(\Omega)|A,B| \). For simplicity, in the subsequent discussions, we will use the notation \( f: A \rightarrow B \) instead of \( f: A \rightharpoonup B \) to denote the multi-morphism induced by a map. 

Thus, \( f: A \rightarrow B \) defines a total multi-morphism from \( \alpha: A \), characterized by the equality similarity, and is a faithful multi-morphism to \( \beta: B \), characterized by the similarity
\[
[a=b] =
\left\{
  \begin{array}{ll}
    \top, & \text{if } a = b \text{ and } a \in f(A), \\
    \bot, & \text{otherwise.} \\
  \end{array}
\right.
\]

The formula for multi-morphism composition simplifies considerably when one of the multi-morphisms is a set-map. For maps \( f: A \rightarrow B \) and \( g: B \rightarrow C \), and multi-morphisms \( r: A \rightharpoonup B \) and \( s: B \rightharpoonup C \), we have
\[
(f \otimes s)(a,c) = s(f(a), c), \quad \text{and} \quad (r \otimes g)(a,c) = \bigvee_{b \in g^{-1}(c)} r(a,b).
\]
The composition operator for multi-morphisms can be extended to cases where the multi-morphisms are not composable in the traditional sense. Let
\[
f: A \rightharpoonup X \times W \quad \text{and} \quad g: B \times X \rightharpoonup C,
\]
then we define
\[ 
f \otimes g: A \times B \rightharpoonup W \times C,
\]
as
\[
(f \otimes g)(a,b,w,c) = \bigvee_{x} (f(a,x,w) \otimes g(b,x,c)).
\]
In particular, if \( f: A \rightharpoonup B \) and \( g: C \rightharpoonup D \) with \( B \neq C \), then 
\[ 
f \otimes g: A \times C \rightharpoonup B \times D,
\]
is given by 
\[
(f \otimes g)(a,c,b,d) = f(a,b) \otimes g(c,d).
\]
This formulation captures the independence between entities in \( B \) and \( C \), leading us to the following definition:

\begin{defn}[Independence]
Two multi-morphisms \( f \) and \( g \) are termed independent if 
\[ 
f \otimes g = g \otimes f.
\]
\end{defn}

\begin{exam}[Keys in a Relational Database]\label{def:indexproduct}
The relational model for database management is grounded in predicate logic and set theory. It operates on the foundational assumption that data is represented as mathematical \(n\)-ary relations, where an \(n\)-ary relation is a subset of the Cartesian product of \(n\) domains. Traditionally, reasoning about such data is conducted using either two-valued or three-valued logic, with operations being performed via relational calculus or relational algebra.

The relational model offers database designers the ability to craft a coherent and logical representation of information. This consistency is achieved by incorporating declared constraints into the database design, commonly referred to as the logical schema.

A weight table \( R \) in a database, defined using attributes \( (A_i)_I \), can be represented as a map in an ML-algebra:
\[
R: \prod_{i \in I} A_i \rightarrow \Omega.
\]
In this context, a weight table is a \(\Omega\)-set \( \alpha: \prod_{i \in I} A_i \).

Each weight table \( R: A \times B \rightarrow \Omega \) can be described as the multi-morphism \( R: A \rightharpoonup B \) and can be decomposed into two weight tables:
\[
D_0: A \times K \rightarrow \Omega
\]
\[
D_1: K \times B \rightarrow \Omega
\]
such that
\[
R = D_0 \otimes D_1.
\]
Here, we refer to \( K \) as a set of \emph{keys} between \( D_0 \) and \( D_1 \), denoting their joint as
\[
D_0 \otimes_K D_1.
\]

In a more general setting, if \( K_1, K_2, \ldots, K_n \) are sets of keys between \( D_0 \) and \( D_1, D_2, \ldots, D_n \) respectively, we denote their joint product as
\[
D = D_0 \otimes_{K_1, K_2, \ldots, K_n} (D_1 \otimes \ldots \otimes D_n),
\]
which can also be represented as
\[
D = (((D_0 \otimes_{K_1} D_1) \otimes_{K_2} D_2) \otimes_{K_3} \ldots) \otimes_{K_n} D_n,
\]
or simply as
\[
D = D_0 \otimes_{K_1} D_1 \otimes_{K_2} D_2 \otimes_{K_3} \ldots \otimes_{K_n} D_n.
\]
When the family \( (K_i) \) of keys is defined by the same set \( K \), the joint product is termed the \( K \)-indexed product of \( D_0, D_1, \ldots, D_n \), denoted as
\[
D = D_0 \otimes_{K} D_1 \otimes_{K} D_2 \otimes_{K} \ldots \otimes_{K} D_n.
\]
In this scenario, \( D \) is referred to as the \( K \)-\emph{indexed product} of \( D_0, D_1, D_2, \ldots, D_n \).
\end{exam}

Given the significance of multi-morphism composition in this work, it is essential to formally define what is meant by multi-morphism composition:
\begin{defn}[Multi-morphism Composition]\label{def:composition}
Let \( f \) and \( g \) be multi-morphisms defined by \(\Omega\)-maps
\[
f: \prod_{i \in I(f)} A_i \rightarrow \Omega \quad \text{and} \quad g: \prod_{j \in I(g)} B_j \rightarrow \Omega
\]
where for every \( i \in I(f) \) and \( j \in I(g) \), \( i = j \) if and only if \( A_i = B_j \). Without specifying source and target sets for \( f \) and \( g \), we define
\[
(f \otimes g)(\bar{x}, \bar{y}) = f(\bar{x}) \otimes g(\bar{y}),
\]
for every \(\bar{x} \in \prod_{i \in I(f)} A_i\) and \(\bar{y} \in \prod_{j \in I(g)} B_j\). However, when we select source sets \( S(f) \subset I(f) \) and \( S(g) \subset I(g) \), as well as target sets \( T(f) \subset I(f) \) and \( T(g) \subset I(g) \) such that \( S(f) \cap T(f) = \emptyset \) and \( S(g) \cap T(g) = \emptyset \), we define
\[
(f \otimes g)(\bar{x}, \bar{y}) = \bigvee_{\bar{z} \in \prod_{i \in T(f) \cap S(g)} A_i} f(\bar{x}, \bar{z}) \otimes g(\bar{z}, \bar{y}),
\]
for every \(\bar{x} \in \prod_{i \in S(f)} A_i \times \prod_{j \in S(g) \setminus T(f)} B_j\) and \(\bar{y} \in \prod_{i \in T(f) \setminus S(g)} A_i \times \prod_{j \in T(g)} B_j\).
\end{defn}

\begin{figure}[h]
\[
\small
\xymatrix @=7pt {
&&&&&*+[o][F-]{g}\ar `r[rdd][rdd] &\\
&&*+[o][F-]{f}\ar `r[rrd][rrd]\ar `r[rrrd][rrrd] &&&&\\
 A_0\ar `u[urr][urr]\ar `d[drrrr][drrrr]&A_1\ar `u[ur][ur]\ar `d[drrr][drrr]& &A_2\ar `u[uurr][uurr]\ar `d[dr][dr]&A_3\ar `u[uur][uur]&A_4&A_5\\
 &&&&*++[o][F-]{f\otimes g}\ar `r[ru][ru]\ar `r[rru][rru]&&\\
 }
\]
\caption{Multi-morphism composition.}\label{multimorphism composition}
\end{figure}
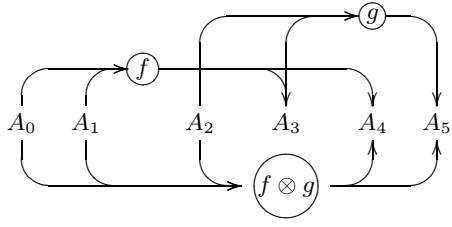

The transpose \( f^\circ: B \rightharpoonup A \) of a multi-morphism \( f: A \rightharpoonup B \) is defined as \( f^\circ(b,a) = f(a,b) \). It is evident that
\[
(\cdot)^\circ: \text{Set}(\Omega)|A,B| \rightarrow \text{Set}(\Omega)|B,A|
\]
is order-preserving, and
\[
[\cdot = \cdot]^\circ = [\cdot = \cdot], \quad (f \otimes g)^\circ = g^\circ \otimes f^\circ, \quad \text{and} \quad {(f^\circ)}^\circ = f.
\]
Thus, if \( f \) is a multi-morphism with a set of sources \( \mathcal{A} \) and a set of targets \( \mathcal{B} \), then \( \mathcal{B} \) serves as the set of sources for \( f^\circ \) and \( \mathcal{A} \) as its set of targets.
\begin{figure}[h]
\[
\small
\xymatrix @=7pt {
&&&*+[o][F-]{f}\ar `r[rd][rd]\ar `r[rrd][rrd]\ar `r[rrrd][rrrd]&&&\\
 A_0\ar `u[urrr][urrr]&A_1\ar `u[urr][urr]& A_2\ar `u[ur][ur]&&A_3\ar `d[dl][dl]&A_4\ar `d[dll][dll]&A_5\ar `d[dlll][dlll]\\
 &&&*+[o][F-]{f^\circ}\ar `l[lu][lu]\ar `l[llu][llu]\ar `l[lllu][lllu]&&&
 }
\]
\caption{Transpose.}\label{multimorphism:transpose}
\end{figure}
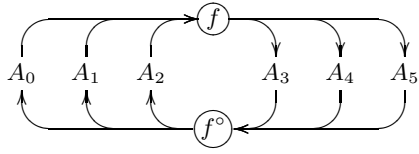

We categorize multi-morphisms based on their ability to preserve the distribution of truth values in their domain or codomain.

\begin{defn}\label{def:isomorphism}
A multi-morphism \( f: A \rightharpoonup B \) is termed an \emph{epimorphism} between \( \alpha: A \) and \( \beta: B \) if
\[ f^\circ \otimes \alpha \otimes f = \beta. \]
Similarly, it is designated a \emph{monomorphism} between \( \alpha: A \) and \( \beta: B \) when
\[ \alpha = f \otimes \beta \otimes f^\circ. \]
When both conditions hold, \( f \) is referred to as an \emph{isomorphism} between \( \Omega \)-objects \( \alpha: A \) and \( \beta: B \).
\end{defn}

For every set-map \( f: A \rightarrow B \), the following relationships hold:
\[
[\cdot = \cdot]_A = 1_A \leq f \otimes f^\circ \quad \text{and} \quad f^\circ \otimes f \leq 1_B = [\cdot = \cdot]_B,
\]
indicating that \( f \) is left adjoint to \( f^\circ \), denoted by \( f \dashv f^\circ \). If \( f: A \rightharpoonup B \) is a multi-morphism satisfying \( 1_A = f \otimes f^\circ \) and \( f^\circ \otimes f = 1_B \), then the multi-morphism \( f \) is termed \emph{orthogonal}.

In a general context, consider multi-morphisms \( f: A \rightharpoonup B \) and \( g: B \rightharpoonup A \). We define \( f \) as the \emph{left adjoint} to \( g \) for \( \alpha: A \) and \( \beta: B \) if the following conditions hold:
\[
[\cdot = \cdot]_\alpha \leq f \otimes g \quad \text{and} \quad g \otimes f \leq [\cdot = \cdot]_\beta.
\]

The tensor product in \( \Omega \) can be naturally extended to \( \Omega \)-sets. For \( \Omega \)-sets \( \alpha: A \) and \( \beta: B \), we define \( \alpha \otimes \beta \) as the \( \Omega \)-set associated with the Cartesian product \( A \times B \) in \( \text{Set}(\Omega) \). This is equipped with the following similarity relation:
\[
[(a_1, b_1) = (a_2, b_2)]_{\alpha \otimes \beta} = [a_1 = a_2]_\alpha \otimes [b_1 = b_2]_\beta.
\]

For each \( \Omega \)-set \( \alpha: A \), the functor
\[
\alpha \otimes \cdot: \text{Set}(\Omega) \rightarrow \text{Set}(\Omega),
\]
possesses an adjoint known as the hom functor \( (\cdot)^\alpha: \text{Set}(\Omega) \rightarrow \text{Set}(\Omega) \). This is defined by \( \beta^\alpha = \text{Set}(\Omega)[\alpha, \beta] \), with the similarity relation given by
\[
[f = g]_{\beta^\alpha} = \bigwedge_{a \in A} \bigwedge_{b \in B} (f(a, b) \Leftrightarrow g(a, b)).
\]

For any \( f \in \text{Set}(\Omega)[\beta, \gamma] \), we have
\[
f^\alpha = \text{Set}(\Omega)[\alpha, f]: \text{Set}(\Omega)[\alpha, \beta] \rightarrow \text{Set}(\Omega)[\alpha, \gamma],
\]
such that \( (f^\alpha)(g) = g \otimes f \).

Being monoidal-closed, \( \Omega \) possesses a natural structure as an \( \Omega \)-set, given by
\[
[x=y]_\Omega = (x \Leftrightarrow y) = (x \Rightarrow y) \otimes (y \Rightarrow x).
\]

Given the similarity relations \( [\cdot=\cdot]_\alpha:A\rightharpoonup A \) and \( [\cdot=\cdot]_\beta:B\rightharpoonup B \), if the sets \( A \) and \( B \) are distinct, we can apply the composition definition to obtain the similarity relation
\[ [\cdot=\cdot]_\alpha\otimes[\cdot=\cdot]_\beta:A\times B\rightharpoonup A\times B, \]
defined by
\[ ([\cdot=\cdot]_\alpha\otimes[\cdot=\cdot]_\beta)(a_1,b_1,a_2,b_2) = [a_1=a_2]_\alpha\otimes[b_1=b_2]_\beta. \]
This relation defines the \( \Omega \)-object \( \alpha\otimes\beta:A\times B \). More generally, we define:

\begin{defn}[Product of \( \Omega \)-sets]\label{ProdSimil}
Given \( \Omega \)-sets \( \alpha:A \) and \( \beta:B \), we define the product \( \alpha\otimes\beta \) as the \( \Omega \)-set \( \alpha\otimes\beta:A\times B \) equipped with the similarity relation
\[ [\cdot=\cdot]_{\alpha\otimes\beta}:A\times B\times A\times B\rightharpoonup \Omega, \]
such that
\[ [(a_1,b_1)=(a_2,b_2)]_{\alpha\otimes\beta} = [a_1=a_2]_{\alpha}\otimes[b_1=b_2]_{\beta}. \]
\end{defn}

Given the transitivity imposed on the definition of similarity, we have
\[ [\cdot=\cdot]_{\alpha\otimes\beta} = [\cdot=\cdot]_\alpha\otimes[\cdot=\cdot]_\beta \leq [\cdot=\cdot]_\alpha. \]

\chapter{Bayesian Inference in Basic Logic}\label{bayesian inference}

The definition presented for multi-morphism composition \( \otimes \) is compatible with Bayes' theorem when the \( Set(\Omega) \) logic is considered a basic logic.

\begin{prop}[Bayes' Rule]
Let \( \Omega \) be a divisible ML-algebra. Given a faithful and total multi-morphism \( f:A\rightharpoonup B \), and observable descriptions \( a \) and \( b \) of entities in \( \alpha:A \) and \( \beta:B \), respectively, the equations
\begin{enumerate}
  \item \( [a]_\alpha\otimes f(\beta|a) = f(a,\_) \), and
  \item \( [b]_\beta\otimes f(\alpha|b) = f(\_,b) \),
\end{enumerate}
have solutions, and they define \( \Omega \)-maps \( f(\alpha|b):A\rightarrow \Omega \) and \( f(\beta|a):B\rightarrow \Omega \). These are given by
\[ f(\beta|a) = [a]_\alpha\Rightarrow f(a,\_) \]
and
\[ f(\alpha|b) = [b]_\beta\Rightarrow f(\_,b) \].
\end{prop}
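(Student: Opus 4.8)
The plan is to verify that the proposed formulas are genuine solutions by reducing each functional equation to a pointwise identity in $\Omega$ and then invoking divisibility together with the boundedness supplied by totality and faithfulness. First I would fix an observable description $a$ and read equation (1) pointwise: since $[a]_\alpha \in \Omega$ does not depend on $b$, solving $[a]_\alpha \otimes f(\beta|a) = f(a,\_)$ as maps $B \to \Omega$ amounts to solving, for each $b$, the single equation $[a]_\alpha \otimes g(b) = f(a,b)$ in $\Omega$. The natural candidate is the residual $g(b) = [a]_\alpha \Rightarrow f(a,b)$, which by the residuation property is the largest $x$ satisfying $[a]_\alpha \otimes x \leq f(a,b)$; this already defines a map $B \to \Omega$, so the only point in question is whether it attains equality rather than mere inequality.

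The key computation is then $[a]_\alpha \otimes \big([a]_\alpha \Rightarrow f(a,b)\big) = [a]_\alpha \wedge f(a,b)$, which holds precisely because $\Omega$ is divisible. To collapse the meet to $f(a,b)$, I would use totality of $f$: by Definition \ref{total}, $[a]_\alpha = \bigvee_b f(a,b)$, whence $f(a,b) \leq [a]_\alpha$ for every $b$ and therefore $[a]_\alpha \wedge f(a,b) = f(a,b)$. Combining the two gives $[a]_\alpha \otimes \big([a]_\alpha \Rightarrow f(a,b)\big) = f(a,b)$, establishing equation (1) with the stated formula. Equation (2) is handled by the mirror-image argument: reading it pointwise in $a$, applying divisibility to $[b]_\beta \otimes \big([b]_\beta \Rightarrow f(a,b)\big) = [b]_\beta \wedge f(a,b)$, and then using faithfulness (Definition \ref{faithful}), $[b]_\beta = \bigvee_a f(a,b)$, to secure $f(a,b) \leq [b]_\beta$ and hence the collapse of the meet.

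The main obstacle is exactly the passage from inequality to equality. The residual $c \Rightarrow d$ only guarantees $c \otimes (c \Rightarrow d) \leq c \wedge d \leq d$ in an arbitrary ML-algebra (Proposition \ref{prop:implic}), so without extra hypotheses the proposed maps would solve only the inequality $[a]_\alpha \otimes f(\beta|a) \leq f(a,\_)$. Both the divisibility of $\Omega$, used to turn $c \otimes (c \Rightarrow d)$ into the meet $c \wedge d$, and the domination bounds $f(a,b) \leq [a]_\alpha$ and $f(a,b) \leq [b]_\beta$ coming from totality and faithfulness, used to turn that meet back into $f(a,b)$, are indispensable, and I would make explicit where each hypothesis enters. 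I would not claim uniqueness; the point is only that these residuals furnish the canonical (largest) solutions and that they are bona fide $\Omega$-maps, the latter being immediate since $\Rightarrow$ is a total operation on $\Omega$.
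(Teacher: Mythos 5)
Your proof is correct and is essentially the paper's own argument: divisibility turns $[a]_\alpha \otimes \bigl([a]_\alpha \Rightarrow f(a,\_)\bigr)$ into $[a]_\alpha \wedge f(a,\_)$, and the domination bound $f(a,b) \leq [a]_\alpha$ collapses that meet to $f(a,\_)$, with the mirror argument for the second equation. If anything, you attribute the hypotheses more carefully than the paper does: its proof invokes ``faithfulness'' to get $[a]_\alpha = \bigvee_c f(a,c)$, but as you correctly observe this is the totality condition (Definition \ref{total}), while faithfulness (Definition \ref{faithful}) is what equation (2) requires.
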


\begin{proof}
In a divisible ML-algebra \( \Omega \), we have \( x\otimes(x\Rightarrow y) = x\wedge y \). Since \( f \) is faithful, \( [a]_\alpha = \bigvee_c f(a,c) \geq f(a,c) \), then \( [a]_\alpha \geq f(a,\_) \). Because \( [a]_\alpha \wedge f(a,\_) = f(a,\_) \), we have
\[
[a]_\alpha\otimes([a]_\alpha\Rightarrow f(a,\_)) = [a]_\alpha\wedge f(a,\_) = f(a,\_).
\]
Similarly, we can use the same strategy to prove \( f(\alpha|b) = [b]_\beta\Rightarrow f(\_,b) \).
\end{proof}

We interpret the \( \Omega \)-map \( f(\beta|a) \) as a classifier in \( B \), defined by relation \( f \), for an entity described by \( a \) using the basic monoidal logic \( \Omega \).

Applying the principles of Bayesian inference in the multi-morphism context: For faithful and total multi-morphisms \( f:A\rightharpoonup B \) and \( g:B\rightharpoonup C \), and \( \Omega \)-sets \( \alpha:A \), \( \beta:B \), \( \gamma:D \), we have
\begin{center}
\begin{tabular}{rcl}
  \( [a]_\alpha\otimes (f\otimes g)(\gamma|a)(c) \) & = & \( (f\otimes g)(a,c) \) \\
   & = & \( \bigvee_b f(a,b)\otimes g(b,c) \)\\
   & = & \( \bigvee_b [a]_\alpha\otimes f(\beta|a)(b)\otimes g(b,c) \), \\
\end{tabular}
then
\[
(f\otimes g)(\gamma|a)(c) = [a]_\alpha\Rightarrow \left([a]_\alpha\otimes\bigvee_b  f(\beta|a)(b)\otimes g(b,c)\right),
\]
i.e.,
\[
(f\otimes g)(\gamma|a) = \bigvee_b f(\beta|a)(b)\otimes g(b,\_),
\]
since in a divisible ML-algebra \( \Omega \), we have \( x\Rightarrow(x\otimes y) = x\wedge y \).
\end{center}

When \( f:A\rightharpoonup C \) and \( g:B\rightharpoonup D \) are independent, we have
\[
(f\otimes g)(\gamma\otimes \delta|a,b)(c,d) = f(\gamma|a)(c) \otimes g(\delta|b)(d).
\]
Naturally, if \( C=D \), we write \( (f\otimes g)(\delta|a,b)(d) \) for \( (f\otimes g)(\delta\otimes \delta|a,b)(d,d) \). In this scenario, we interpret the classifier \( (f\otimes g)(\delta|a,b) \) as the combination of two classifiers: \( f(\delta|a) \) and \( g(\delta|b) \), which define entities of \( \delta:D \) described by \( a \) and \( b \).

\begin{exam}[Binding]
Drugs are typically small organic molecules that achieve their desired activity by binding to a specific target site on a receptor. The initial step in the discovery of a new drug often involves identifying and isolating the receptor to which it should bind. Subsequently, numerous small molecules are tested for their ability to bind to this target site. This process leaves researchers facing the challenge of discerning the distinguishing factors between active (binding) compounds and inactive (non-binding) ones. Such determinations are crucial as they inform the design of new compounds that not only bind effectively but also possess other essential properties required for a drug, including solubility, oral absorption, absence of side effects, appropriate duration of action, and toxicity, among others.

DuPont Pharmaceuticals contributed a dataset to the KDD Cup 2001, comprising 1,909 compounds evaluated for their ability to bind to a specific site on thrombin, a pivotal receptor involved in blood clotting. Out of these compounds, 42 were found to be active (i.e., they bind effectively), while the remaining compounds were deemed inactive. Each compound is represented by a feature vector, which includes an observable class value (A for active, I for inactive) and 139,351 binary features detailing the molecule's three-dimensional properties. Notably, the specific definitions of these individual binary features were not provided. They are essentially non-observable descriptions of each compound; researchers were unaware of the exact meaning of each bit, but it was understood that these bits were generated consistently across all 1,909 compounds. 

It is widely acknowledged that biological activity, particularly receptor binding affinity, is closely associated with various structural and physical properties of small organic molecules. The challenge set forth by DuPont Pharmaceuticals for the KDD Cup 2001 was to ascertain which of these three-dimensional properties are critical in determining the binding affinity in this context. The ultimate goal was to develop an accurate predictive model capable of determining the class value (active or inactive) of new compounds based on their structural features.

Let \( S \) denote the set of available compounds. Suppose that the process of classifying a compound in the laboratory results in the proposition "Compound \( a \) is active," modeled within the framework of fuzzy logic, denoted by \( \Omega = [0,1] \). A classification of each compound as either active or inactive can be represented as a multi-morphism in \( Set([0,1]) \):
\[
c: S \rightharpoonup \{A, I\}
\]
Here, \( c(a, I) \) and \( c(a, A) \) define the truth values corresponding to the propositions "Compound \( a \) is inactive" and "Compound \( a \) is active," respectively, within the interval \( \Omega = [0,1] \).

Each compound in \( S \) is characterized by a set of observable three-dimensional properties, which are determined through laboratory processes and subsequently encoded in the dataset. The similarity between compounds must be represented by an \( \Omega \)-set, denoted as \( \alpha: S \). 

In the context of \( \alpha: S \), if \( \bar{x} \) represents an observable three-dimensional structure of a compound, it can be regarded as a generalization of the compound's structure. A description \( x \) defines a class of compounds within \( \alpha: S \). We can then define the truth value of the proposition "Compounds that satisfy the description \( \overline{x} \) are active" as:
\[
c(\beta | \overline{x})(A)
\]
Here, the \( \Omega \)-set \( \beta: \{A, I\} \) encodes the similarity between the states of a compound being "active" or "inactive".

\end{exam}

In this example, the optimal description \( \bar{x} \) for an active compound in \( S \) can be viewed as the description that maximizes \( c(\beta | \bar{x})(A) \). However, this notion highlights the necessity of having a systematic approach to encode observable descriptions and the establishment of a framework for selecting the best observable description. In the following chapters, we introduce a methodology to describe entities using a graphical language. The vocabulary of this language consists of terms designed to encode relationships between the observable characteristics of entities within a multi-valued logic fr


\chapter{Multi-diagrams}\label{multidiagrams}

A \emph{multi-diagram} in \( Set(\Omega) \) is a multi-graph homomorphism \( D: \mathcal{G} \rightarrow Set(\Omega) \) defined by mapping the multi-graph vertices to \( \Omega \)-sets and multi-arrows to multi-morphisms in \( Set(\Omega) \).

Formally, if the multi-graph \( \mathcal{G} \) is defined using nodes \( (v_i)_{i \in L} \) and by a family of multi-arrows \( (a_{IJ}) \), where the multi-arrow \( a_{IJ} \) has a source
\[
\{ v_i : i \in I \subset L \},
\]
and a target
\[
\{ v_j : j \in J \subset L \},
\]
a multi-graph homomorphism \( D: \mathcal{G} \rightarrow Set(\Omega) \) transforms every node \( v_i \) into an \( \Omega \)-set \( D(v_i) \) and each multi-arrow 
\[
a_{IJ}: \{ v_i : i \in I \subset L \} \rightharpoonup \{ v_j : j \in J \subset L \},
\]
into a multi-morphism 
\[
D(a_{IJ}): \prod_{i \in I} D(v_i) \rightharpoonup \prod_{j \in J} D(v_j).
\]

The standard definition of limit in \( Set \) for a diagram can be extended to multi-diagrams in \( Set(\Omega) \). For this purpose, we consider the category \( Set \) as the topos \( Set(\{false, true\}) \), where \( \{false, true\} \) defines a two-element chain with the monoidal structure given by the logical operators "and" and "true". Recall that the limit for a diagram or multi-diagram \( D \) is defined as a \( \{false, true\} \)-set, denoted by \( Lim \; D \), which is a subobject of a Cartesian product defined by the diagram vertices (see \cite{maclane71} or \cite{Borceux94}). We utilize this definition of limit extension to multi-diagrams in \( Set(\Omega) \). Since the Cartesian product of \( \Omega \)-sets \( (\alpha_i: A_i) \) was defined in \ref{ProdSimil} as the \( \Omega \)-set \( \otimes_i \alpha_i: \prod_i A_i \) given by
\[
[\cdot = \cdot]_{\otimes_i \alpha_i} = \bigotimes_i [\cdot = \cdot]_{\alpha_i},
\]
we define
\begin{defn}[Limit of a multi-diagram]\label{lim}
Let \( D: \mathcal{G} \rightarrow Set(\Omega) \) be a multi-diagram where \( \mathcal{G} \) has vertices \( (v_i)_{i \in L} \). Its limit \( Lim \; D \) is a subobject of the multi-diagram vertices' Cartesian product:
\[
Lim \; D \leq \prod_{i \in L} M(v_i)
\]
given by
\[
Lim \; D: \prod_{i \in L} M(v_i) \rightarrow \Omega
\]
such that
\[
(Lim \; D)(\bar{x}_1, a_{i}, \bar{x}_2, a_{j}, \bar{x}_3) = [(\bar{x}_1, a_{i}, \bar{x}_2, a_{j}, \bar{x}_3)]_{\prod_{i \in L} M(v_i)} \otimes \bigotimes_{f: v_i \rightharpoonup v_j \in \mathcal{G}} D(f)(a_{i}, a_{j}).
\]
\end{defn}

We view a limit as the result of applying the pattern used in the definition of each multi-morphism to the Cartesian product of its vertices. This definition satisfies the usual universal property when the object classifier used in \( Set(\Omega) \) is the two-element chain, \( 2 = \{false, true\} \), with the monoidal structure given by "and" and "true". In other words, this definition coincides with the classical one in the context of classical logic.

It is noteworthy that we can interpret the limit for a multi-diagram \( D \) as a multi-morphism by selecting a set of source \( \Omega \)-sets and a set of targets. The canonical multi-morphism associated with a multi-diagram \( D \) has a source \( s(D) \) that is the union of the sources used to define the diagram multi-morphisms and a target \( t(D) \) that is the union of the targets of \( D \)'s multi-arrows.

\begin{exam}
By definition, the multi-diagram \( D \) presented below is represented as:
\begin{figure}[h]
\[
\small
\xymatrix @=7pt {
&&&*+[o][F-]{f}\ar `r[rd][rd]\ar `r[rrd][rrd]\ar `r[rrrd][rrrd]&&&\\
 A_0\ar `u[urrr][urrr]&A_1\ar `u[urr][urr]\ar `d[drr][drr]& A_2\ar `d[dr][dr]\ar[r]&*+[o][F-]{h}\ar[r] &A_3&A_4&A_5\\
 &&&*+[o][F-]{g}\ar `r[rru][rru]\ar `r[rrru][rrru]&&&
 }
\]
\caption{Multi-diagram.}\label{multidiagram1}
\end{figure}
The limit of this multi-diagram \( D \) is the \( \Omega \)-map
\[
Lim\;D:A_0\times A_1\times A_2\times A_3\times A_4\times A_5\rightarrow \Omega
\]
defined as:
\[
(Lim\;D)(a_0,a_1,a_2,a_3,a_4,a_5)=[a_0,a_1,a_2,a_3,a_4,a_5]\otimes f(a_0,a_1,a_3,a_4,a_5)\otimes g(a_1,a_2,a_4,a_5)\otimes h(a_2,a_3).
\]
The functional representation of the multi-diagram limit is depicted as:
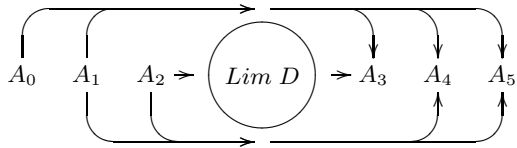
\begin{figure}[h]
\[
\small
\xymatrix @=7pt {
&&&\ar `r[rd][rd]\ar `r[rrd][rrd]\ar `r[rrrd][rrrd]&&&\\
 A_0\ar `u[urrr][urrr]&A_1\ar `u[urr][urr]\ar `d[drr][drr]& A_2\ar `d[dr][dr]\ar[r]&*++++[o][F-]{Lim\;D}\ar[r]&A_3&A_4&A_5\\
 &&&\ar `r[rru][rru]\ar `r[rrru][rrru]&&&
 }
\]
\caption{Multi-diagram limit functionality.}\label{multidiagram2}
\end{figure}
\end{exam}

The limit of a multi-diagram condenses the entire diagram into a single multi-morphism by internalizing all interconnections, thereby representing the multi-diagram as a unified entity.

In this context, the equalizer of a parallel pair of multi-morphisms \( R, S: X \rightharpoonup Y \) is defined as:
\[
Lim(R=S): X \times Y \rightarrow \Omega
\]
where
\[
Lim(R=S)(x,y) = [x,y] \otimes R(x,y) \otimes S(x,y).
\]

Similarly, the pullback of multi-morphisms \( R: X \rightharpoonup U \) and \( S: Y \rightharpoonup U \) is represented by the multi-morphism:
\[
Lim(R\otimes_U S): X \times U \times Y \rightarrow \Omega
\]
where
\[
Lim(R\otimes_U S)(x,u,y) = [x,u,y] \otimes R(x,u) \otimes S(y,u).
\]

For a discrete multi-diagram \( D \), its limit is denoted by \( \Pi_v D(v) \), given by:
\[
\Pi_v D(v)(\bar{x}) = [\bar{x}]_{\bigotimes_i D(i)},
\]
where, for \( \bar{x} = (x_1, x_2, \ldots, x_n) \), \( \Pi_v D(v)(\bar{x}) = [x_1, x_2, \ldots, x_n] \).

The provided definition for the limit streamlines the proof of the following proposition:

\begin{prop}[Existence of Limit in \( Set(\Omega) \)]
Every multi-diagram \( D: \G \rightarrow Set(\Omega) \) has a limit. In other words, there exists a multi-morphism \( f \leq \prod_{v \in \G} D(v) \) such that \( Lim\;D = f \).
\end{prop}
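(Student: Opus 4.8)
The plan is to observe that Definition \ref{lim} already supplies $Lim\;D$ by an explicit closed formula, so the existence assertion reduces to two verifications: that this formula legitimately defines a multi-morphism valued in $\Omega$, and that the resulting multi-morphism satisfies the stated bound $Lim\;D \leq \prod_{v \in \G} D(v)$. No universal property need be re-derived here, since the object is given constructively; the whole content of the proposition is that the constructed object is a genuine subobject of the product of the vertices.

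First I would confirm well-definedness. The value $(Lim\;D)(\bar{x})$ is obtained by tensoring the extent $[\bar{x}]_{\prod_{i \in L} D(v_i)}$ with the family of arrow-contributions $D(f)(a_i,a_j)$, each of which is an element of $\Omega$. Because $\Omega$ is a commutative, unital, complete quantale, the product $\bigotimes_{f} D(f)(a_i,a_j)$ is a well-defined element of $\Omega$, and hence $\bar{x} \mapsto (Lim\;D)(\bar{x})$ is a genuine $\Omega$-map $\prod_{i \in L} D(v_i) \rightarrow \Omega$, i.e. a multi-morphism whose source and target are the union of the sources, respectively the targets, of the arrows of $\G$.

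The core step is the bound. Recalling that the product $\Omega$-set satisfies $(\prod_{v} D(v))(\bar{x}) = [\bar{x}]_{\bigotimes_i D(v_i)}$, and that the limit formula reads $(Lim\;D)(\bar{x}) = [\bar{x}]_{\bigotimes_i D(v_i)} \otimes \bigotimes_{f} D(f)(a_i,a_j)$, I would use that each factor satisfies $D(f)(a_i,a_j) \leq \top$ in $\Omega$. Since $\otimes$ is monotone and has $\top$ as unit, it follows that $\bigotimes_{f} D(f)(a_i,a_j) \leq \top$, whence $(Lim\;D)(\bar{x}) \leq [\bar{x}]_{\bigotimes_i D(v_i)} \otimes \top = (\prod_{v} D(v))(\bar{x})$ for every $\bar{x}$. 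By the pointwise order on multi-morphisms this yields $Lim\;D \leq \prod_{v \in \G} D(v)$, so taking $f := Lim\;D$ completes the argument.

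The only genuine subtlety, and hence the expected main obstacle, is the handling of the tensor $\bigotimes_{f}$ when the multi-graph $\G$ carries infinitely many arrows: this is precisely where completeness of $\Omega$ as a quantale is needed to guarantee both that the infinitary tensor is defined and that the monotonicity estimate passes to the limit. For finite $\G$ the argument is entirely elementary, relying only on the unit law $x \otimes \top = x$ and the monotonicity of $\otimes$ established for ML-algebras.
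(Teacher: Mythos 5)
Your proposal is correct and matches the paper's intent exactly: the paper offers no separate argument, remarking only that Definition \ref{lim} ``streamlines the proof,'' i.e.\ the explicit formula itself is the limit, and the only thing to check is the bound $Lim\;D \leq \prod_{v}D(v)$, which you obtain just as intended from $D(f)(a_i,a_j)\leq\top$, monotonicity of $\otimes$, and the unit law $x\otimes\top=x$. Your closing remark about infinitary tensors is a reasonable extra caution rather than a deviation, since the paper's identification of $\Omega$ with a commutative unital quantale supplies the needed completeness.
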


Even more intriguing is the reverse implication that can be demonstrated for fundamental logics:

\begin{prop}
If \( \Omega \) is a divisible ML-algebra, then for every \( \Omega \)-map
\[
g: A_0 \times A_1 \times \ldots \times A_n \rightarrow \Omega
\]
and \( \Omega \)-map \( (\alpha_i: A_i) \) such that
\[
g(x_1, x_2, \ldots, x_n) \leq [x_1, x_2, \ldots, x_n]
\]
there exists a multi-diagram \( D: \G \rightarrow Set(\Omega) \) such that
\[
Lim\;D = g.
\]
\end{prop}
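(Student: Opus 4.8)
The plan is to realize $g$ as the limit of a multi-diagram built on a single multi-arrow, exploiting divisibility to undo the extent factor that the limit formula always prepends. First I would set up the graph $\mathcal{G}$ with vertices $v_0, v_1, \ldots, v_n$ together with one multi-arrow $f$ whose source is $\{v_0\}$ and whose target is $\{v_1, \ldots, v_n\}$, so that $f$ touches every vertex and the sets of source and target indices are disjoint and exhaust $\{0,\ldots,n\}$. I define $D$ on vertices by $D(v_i) = \alpha_i : A_i$ and on the arrow by the $\Omega$-map
\[
D(f)(\bar{x}) = [\bar{x}] \Rightarrow g(\bar{x}),
\]
where $[\bar{x}] = \bigotimes_{i} [x_i]_{\alpha_i}$ is the extent of the tuple $\bar{x}=(x_0,\ldots,x_n)$ in the product $\Omega$-set $\prod_i \alpha_i$. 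Since any $\Omega$-map is an admissible multi-morphism, $D$ is a genuine multi-diagram over $\mathcal{G}$.

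Next I would simply evaluate the limit via Definition \ref{lim}. Because $f$ is the only arrow, the formula collapses to
\[
Lim\;D(\bar{x}) = [\bar{x}] \otimes D(f)(\bar{x}) = [\bar{x}] \otimes \big([\bar{x}] \Rightarrow g(\bar{x})\big).
\]
Here divisibility is the crucial ingredient: in a divisible ML-algebra one has $x \otimes (x \Rightarrow y) = x \wedge y$, so the right-hand side equals $[\bar{x}] \wedge g(\bar{x})$. The hypothesis $g(\bar{x}) \leq [\bar{x}]$ then forces $[\bar{x}] \wedge g(\bar{x}) = g(\bar{x})$, giving $Lim\;D = g$ as required.

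Two observations explain why the ingredients are exactly right and where the (mild) care is needed. The divisibility axiom is indispensable: without the identity $x \otimes (x \Rightarrow y) = x \wedge y$ there is no way to cancel the mandatory extent factor $[\bar{x}]$. The side condition $g(\bar{x}) \leq [\bar{x}]$ is likewise a genuine necessity rather than a technicality, because every limit satisfies $Lim\;D(\bar{x}) = [\bar{x}] \otimes (\cdots) \leq [\bar{x}] \otimes \top = [\bar{x}]$ by monotonicity of $\otimes$; thus the stated inequality characterizes precisely the image of the limit operator, and the proposition asserts that this image is saturated for divisible logics. The only point demanding attention is bookkeeping: one must check that the chosen source/target split of $f$ is immaterial to the limit value (the formula depends on the full tuple, not on the partition) and that the vertex-contributed extent coincides with the $[\bar{x}]$ appearing in the hypothesis. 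I do not expect either to present a real obstacle; the entire content of the argument is the one-line divisibility computation above.
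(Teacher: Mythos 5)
Your proof is correct and takes essentially the same approach as the paper: the paper's own argument also constructs a single multi-arrow diagram $f: A_0 \rightharpoonup A_1 \times \ldots \times A_n$ interpreted as $f(\bar{x}) = [\bar{x}] \Rightarrow g(\bar{x})$ and verifies that its limit is $g$. Your write-up additionally supplies the explicit divisibility computation $[\bar{x}] \otimes ([\bar{x}] \Rightarrow g(\bar{x})) = [\bar{x}] \wedge g(\bar{x}) = g(\bar{x})$ that the paper leaves implicit.
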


This proposition can be proven by demonstrating that the multi-diagram
\[
f: A_0 \rightharpoonup A_1 \times \ldots \times A_n
\]
where
\[
f(x_1, x_2, \ldots, x_n) = [x_1, x_2, \ldots, x_n] \Rightarrow g(x_1, x_2, \ldots, x_n)
\]
has \( g \) as its limit.


A multi-diagram can be perceived as a mechanism to depict dependencies among classes of entities. When the limit of a diagram \( D \) results in an \( \Omega \)-object \( \alpha \), we aim to interpret the diagram as a representation or model for \( \alpha \). However, for such a relationship to be meaningful, we require a language to formalize the structure of the diagram. Developing this language is one of the objectives of this research.

Let \( D \) be a multi-diagram with vertices \( (v_i) \), where the arrows are represented by faithful and total multi-morphisms, and let \( a_i \) be an observable description of an entity in \( D(v_i) \). The limit in \( Set(\Omega) \) of \( D \), where \( \Omega \) is a divisible ML-algebra, defines the classifier
\[
(Lim\;D)(D(v_1)|a_2,\ldots,a_n)
\]
such that
\[
[a_2,\ldots,a_n] \otimes (Lim\;D)(D(v_1)|a_2,\ldots,a_n) = \bigotimes_{f:v_i\rightharpoonup v_j\in \G} [a_i] \otimes D(f)(D(v_{j})|a_{i})(a_j),
\]
or equivalently,
\[
(Lim\;D)(D(v_1)|a_2,\ldots,a_n) = [a_2,\ldots,a_n] \Rightarrow \bigotimes_{f:v_i\rightharpoonup v_j\in \G} [a_i] \otimes D(f)(D(v_{j})|a_{i})(a_j),
\]
which can be interpreted as the amalgamation of classifiers associated through diagram \( D \) to predict \( D(v_1) \). This expression simplifies when \( D \) does not contain multi-arrows with source \( v_1 \), leading to
\[
(Lim\;D)(D(v_1)|a_2,\ldots,a_n) = \bigotimes_{f:v_i\rightharpoonup v_j\in \G} D(f)(D(v_{j})|a_{i})(a_j).
\]

The limit of a diagram can be perceived as a generalization of multi-morphism composition, encompassing a chain of composable multi-morphisms. This interpretation paves the way for defining a semantics for circuits, particularly when we postulate a dependency between the execution of circuit components. This perspective is instrumental in extending the conventional notion of a commutative diagram to fuzzy structures. To do so, we designate a subset of vertices, denoted as \( s(D) \), within a given diagram \( D \) as the set of \emph{sources} for \( D \).

\begin{defn}[Commutativity of Multi-diagrams]
Let \( D \) be a multi-diagram with \( s(D) \) as its set of sources. If \( V \) represents the cartesian product defined by all the vertices of \( D \) not in \( s(D) \), then the multi-diagram \( D \) is said to be commutative for \( s(D) \) if
\[
\bigvee_{\bar{n} \in V} (Lim\;D)(\bar{s},\bar{n}) = \bigvee_{\bar{n} \in V} (\prod_i\;D(i))(\bar{s},\bar{n}),
\]
for every \( \bar{s} \in \prod_{i \in s(D)} D(i) \). It is \(\lambda\)-commutative if 
\[
\left( \bigvee_{\bar{n} \in V} (Lim\;D)(\bar{s},\bar{n}) \Leftrightarrow \bigvee_{\bar{n} \in V} (\prod_i\;D(i))(\bar{s},\bar{n}) \right) \geq \lambda,
\]
for every \( \bar{s} \in \prod_{i \in s(D)} D(i) \).
\end{defn}

In simpler terms, a multi-diagram is deemed commutative if the multi-morphism defined by its limit, with the selected sources, is total.

\begin{exam} Consider \( Set([0,1]) \) defined by the product logic, \( \mathds{R} \) as the set of real numbers, and \( \oplus \) as a relation given by the multi-morphism \( \oplus: \mathds{R} \times \mathds{R} \rightharpoonup \mathds{R} \), represented by the Gaussian function:
\[
\oplus(x,y,z) = e^{-\frac{(z-x-y)^2}{2}}.
\]
The diagram \( D \), shown in Figure \ref{equation1}, with sources \( \alpha_0: \mathds{R} \) and \( \alpha_1: \mathds{R} \),
\begin{figure}[h]
\[
\small
\xymatrix @=7pt {
&&&&\\
_{\alpha_0:\mathds{R}}\ar[r]\ar `d[ddrr]`r[rru][drr] &*+[o][F-]{+}\ar `r[rrd][rrd] &&&\\
&_{\alpha_1:\mathds{R}}\ar[u]\ar[r] &*+[o][F-]{+} \ar [r]&*+[o][F-]{_{=}}\\
&&&&\\
}
\]
\caption{Multi-diagram \( D \) representing \( \alpha_0 + \alpha_1 = \alpha_1 + \alpha_0 \).}\label{equation1}
\end{figure}
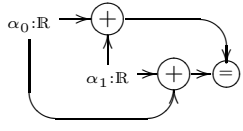
where \( = \) signifies equality in \( \mathds{R} \), is commutative for every \( x_0, x_1 \in \mathds{R} \), provided we have the densities in \( \alpha_0 \) and \( \alpha_1 \) given by
\[
\alpha_0(x,y) = e^{-\frac{(x-x_0)^2}{2} - \frac{(y-x_0)^2}{2}} \quad \text{and} \quad \alpha_1(x,y) = e^{-\frac{(x-x_1)^2}{2} - \frac{(y-x_1)^2}{2}}.
\]
Utilizing the definition of the multi-diagram limit, we find
\[
\begin{array}{rcl}
  (Lim\;D)(x,y,w) & = & \oplus(x,y,w) \otimes \oplus(y,x,w) \otimes [x,y,w] \\
                  & = & \oplus(x,y,w) \otimes \oplus(y,x,w) \otimes [x] \otimes [y] \otimes [w] \\
                  & = & e^{-\frac{(w-x-y)^2}{2}} \cdot e^{-\frac{(w-y-x)^2}{2}} \cdot e^{-\frac{(x-x_0)^2}{2}-\frac{(x-x_0)^2}{2}} \cdot e^{-\frac{(y-x_1)^2}{2}-\frac{(y-x_1)^2}{2}} \cdot 1 \\
                  & = & e^{-(w-x-y)^2-(x-x_0)^2-(y-x_1)^2}.
\end{array}
\]

Then, since \( e^{-(w-x-y)^2} \leq 1 \), we can infer that \( e^{-(w-x-y)^2-(x-x_0)^2-(y-x_1)^2} \leq e^{-(x-x_0)^2-(y-x_1)^2} \). Thus,
\[
\begin{array}{rcl}
  \bigvee_w (Lim\;D)(x,y,w) & = & e^{-(x-x_0)^2-(y-x_1)^2} \\
                             & = & [x]_{\alpha_0} \otimes [y]_{\alpha_1} \\
                             & = & \bigvee_w \alpha_0(x,x) \otimes \alpha_1(y,y) \otimes =(w,w).
\end{array}
\]
This proves the commutativity for diagram \( D \) when its sources have fixed distributions. The diagram \( D' \), shown in Figure \ref{equation2},
\begin{figure}[h]
\[
\small
\xymatrix @=7pt {
 _{0_{1_{\mathds{R}}}:\mathds{R}}\ar[r]&*+[o][F-]{+}\ar `r[rd][rd]&\\
  &&_{\alpha_0:\mathds{R}}\ar `l[lu][lu]\\
}
\]
\caption{Multi-diagram \( D' \) representing \( 0_{1_{\mathds{R}}} + \alpha_0 = \alpha_0 \).}\label{equation2}
\end{figure}
with its source as the \( [0,1] \)-set \( \alpha_0:\mathds{R} \), defined by the distribution
\[
\alpha_0(x,y) = e^{-\frac{(x-x_0)^2}{2}-\frac{(y-x_0)^2}{2}},
\]
is also commutative when the \( [0,1] \)-set \( 0_{\mathds{R}}:\mathds{R} \) is defined by the distribution
\[
0_\mathds{R}(x,y) = \lambda \cdot e^{-\frac{x^2}{2}-\frac{y^2}{2}},
\]
where the parameter \( \lambda \) is a truth value selected from \( [0,1[ \). We can then calculate
\[
\begin{array}{rcl}
  (Lim\;D')(x,y,x) & = & \oplus(x,y,x) \otimes [x,y,x] \\
                  & = & \oplus(x,y,x) \otimes [x] \otimes [y] \otimes [x] \\
                  & = & e^{-\frac{(x-x-y)^2}{2}} \cdot e^{-\frac{(x-x_0)^2}{2}-\frac{(x-x_0)^2}{2}} \cdot \lambda \cdot e^{-\frac{y^2}{2}-\frac{y^2}{2}} \cdot e^{-\frac{(x-x_0)^2}{2}-\frac{(x-x_0)^2}{2}} \\
                  & = & \lambda \cdot e^{-(x-x-y)^2-2(x-x_0)^2-y^2}.
\end{array}
\]

and
\[
\begin{array}{rcl}
  \bigvee_y (Lim\;D')(x,y,x) & = & \lambda \cdot e^{-(x-x-0)^2-2(x-x_0)^2-0^2} \\
                           & = & \lambda \cdot e^{-2(x-x_0)^2} \\
                           & = & \lambda \otimes [x] \otimes [x] \\
                           & = & \bigvee_y \alpha_0(x,x) \otimes 0_{\mathds{R}}(y,y) \otimes \alpha_0(x,x).
\end{array}
\]

However, if we modify the interpretation of \( \oplus \) in diagram \( D' \) using the new distribution
\[
\oplus(x,y,z) = \lambda' \cdot e^{-\frac{(z-x-y)^2}{2}},
\]
depending on a parameter \( \lambda' \in [0,1[ \), we obtain
\[
(Lim\;D')(x,y,x) = \lambda' \cdot \lambda \cdot e^{-(x-x-y)^2-2(x-x_0)^2-y^2},
\]
thus
\[
\bigvee_v (Lim\;D')(x,y,x) = \lambda' \cdot \bigvee_y \alpha_0(x,x) \otimes 0_{\mathds{R}}(y,y) \otimes \alpha_0(x,x).
\]

Then, since we are working in a multiplicative logic, we have
\[
\left( \bigvee_v (Lim\;D')(x,y,x) \Leftrightarrow \bigvee_v (\alpha_0(x,x) \otimes 0_{\mathds{R}}(y,y) \otimes \alpha_0(x,x) \right) \geq \lambda',
\]
which means that \( D' \) is \( \lambda' \)-commutative.

\end{exam}

Naturally, if a diagram is \( \lambda \)-commutative, it is also \( \lambda' \)-commutative when \( \lambda' < \lambda \). When, for every \( \lambda > \bot \), the diagram isn't \( \lambda \)-commutative, it is called a non-commutative diagram.

However, when we view a multi-diagram as a way to specify architectural connectors, in the sense of \cite{GibbonsBackhouse03}, we may want to interpret diagrams by collapsing the joint execution of its components, generalizing the notion of parallel composition. This can be achieved through the symmetry between the operators \( \otimes \) and \( \vee \) in classical logic. We define:

\begin{defn}[Colimit of multi-diagrams]\label{colim}
Given a multi-diagram \( D \) in \( Set(\Omega) \) with vertices \( (v_i) \), the colimit is defined by the multi-morphism
\[
coLim\; D \leq \prod_i M(v_i)
\]
i.e.,
\[
coLim\; D: \prod_i M(v_i) \rightarrow \Omega
\]
given by
\[
(coLim\;D)(\bar{x}_1, a_{i}, \bar{x}_2, a_{j}, \bar{x}_3) = [\bar{x}_1, a_{i}, \bar{x}_2, a_{j}, \bar{x}_3]_{\prod_i M(v_i)} \otimes \bigvee_{f:v_i \rightharpoonup v_j \in \G} D(f)(a_{i}, a_{j}).
\]
\end{defn}

This definition allows for the formalization of knowledge integration. Colimits capture a generalized notion of parallel composition of components in which the designer makes explicit the interconnections used between components. We can see this operation as a generalization of the notion of superimposition as defined in \cite{Bosch99}.

The colimit for a multi-diagram \( D \) can be used to define a multi-morphism by selecting a set of sources and a set of targets. The canonical multi-morphism associated with a multi-diagram \( D \), using the colimit, has its source \( s(D) \) as the union of the sources used to define the diagram's multi-morphisms and its target \( t(D) \) as the union of the targets of \( D \)'s multi-morphisms.

\begin{exam}
By definition, the multi-diagram \( D \), presented in Fig. \ref{multidiagram1}, has a colimit given by the \(\Omega\)-map
\[
coLim\;D: A_0 \times A_1 \times A_2 \times A_3 \times A_4 \times A_5 \rightarrow \Omega
\]
defined as
\[
(coLim\;D)(a_0, a_1, a_2, a_3, a_4, a_5) = [a_0, a_1, a_2, a_3, a_4, a_5] \otimes (f(a_0, a_1, a_3, a_4, a_5) \vee g(a_1, a_2, a_4, a_5) \vee h(a_2, a_3)).
\]
\end{exam}

In this context, the coequalizer of a parallel pair of multi-morphisms \( R, S: X \rightharpoonup Y \) is defined by the multi-morphism \( coLim(R=S): X \times Y \rightarrow \Omega \) given by
\[
coLim(R=S)(x, y) = [x, y]_{X \times Y} \otimes (R(x, y) \vee S(x, y)).
\]
And the pushout of \( R: X \rightharpoonup U \) and \( S: Y \rightharpoonup U \) is the multi-morphism
\[
coLim(R \oplus_U S): X \times U \times Y \rightarrow \Omega
\]
defined as
\[
coLim(R \oplus_U S)(x, u, y) = [x, y]_{X \times Y} \otimes (R(x, u) \vee S(y, u)).
\]

When \( D \) is a discrete diagram, the colimit coincides with the limit of \( D \), and in this case, we write
\[
\coprod_v D(v) = \prod_v D(v).
\]

Naturally,
\begin{prop}[Existence of Colimit in \( Set(\Omega) \)]
Every multi-diagram \( D: \mathcal{G} \rightarrow Set(\Omega) \) has a colimit.
\end{prop}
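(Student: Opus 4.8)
The plan is to observe that, exactly as for the existence of limits, the colimit is supplied \emph{explicitly} by the formula in Definition \ref{colim} rather than characterized through a universal property. Consequently, proving existence reduces to checking that this formula always produces a well-defined multi-morphism lying below the Cartesian product of the vertices. There is no separate universal property to establish, so the argument is purely a well-definedness verification.

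First I would recall that an ML-algebra $\Omega$ is, as noted immediately after the definition of ML-algebra, equivalent to a commutative unital quantale, so $(\Omega, \vee, \wedge)$ is a \emph{complete} lattice. Hence, for any pair of vertices $v_i, v_j$, the join
\[
\bigvee_{f: v_i \rightharpoonup v_j \in \G} D(f)(a_i, a_j)
\]
exists in $\Omega$ regardless of how many multi-arrows $\G$ places between the two vertices. This is the single point at which completeness of $\Omega$ is genuinely invoked, and it is the only step requiring care. The remaining ingredient, the factor $[\bar{x}_1, a_i, \bar{x}_2, a_j, \bar{x}_3]_{\prod_i M(v_i)}$, is the finite $\otimes$-product of the similarity factors of the component $\Omega$-sets by Definition \ref{ProdSimil}, hence a well-defined element of $\Omega$. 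Therefore the formula assigns to each tuple $\bar{x} \in \prod_i M(v_i)$ a single value $(coLim\; D)(\bar{x}) \in \Omega$, yielding a genuine $\Omega$-map $\prod_i M(v_i) \rightarrow \Omega$.

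Next I would verify the subobject condition $coLim\; D \leq \prod_i M(v_i)$. Since every element of $\Omega$ is bounded above by $\top$, we have $\bigvee_{f} D(f)(a_i, a_j) \leq \top$; combining this with the monotonicity of $\otimes$ and the unit law $x \otimes \top = x$ gives
\[
(coLim\; D)(\bar{x}) = [\bar{x}]_{\prod_i M(v_i)} \otimes \bigvee_{f} D(f)(a_i, a_j) \leq [\bar{x}]_{\prod_i M(v_i)} \otimes \top = [\bar{x}]_{\prod_i M(v_i)}.
\]
Thus $coLim\; D$ is indeed a multi-morphism below the product, which completes the construction.

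I do not expect any genuine obstacle: the entire content of the proposition is that the defining expression is always meaningful, and this is guaranteed by the completeness of the underlying lattice $\Omega$. The proof mirrors the limit case step for step, with $\bigvee$ replacing the $\otimes$-aggregation over arrows; unlike the classical setting, no verification of a universal property is needed, which is precisely why the explicit definition \emph{streamlines} the argument.
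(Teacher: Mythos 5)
Your proposal is correct and follows essentially the same route as the paper, which offers no separate argument beyond the remark that the explicit formula in Definition \ref{colim} supplies the colimit directly (mirroring the limit case, where the paper notes the definition ``streamlines the proof''). Your two verification steps---that the joins exist because $\Omega$ is a complete lattice qua commutative unital quantale, and that $coLim\;D \leq \prod_i M(v_i)$ via monotonicity of $\otimes$ and the unit law---are precisely the well-definedness content the paper leaves implicit.
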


Given that \( Set(\Omega) \) has both limits and colimits for multi-diagrams, we utilize it as the "Universe of Discourse" in the subsequent chapters. This allows us to construct models for structures specified by diagrams within the monoidal logic described in \( \Omega \).

\begin{exam}[Genome]
The genomes of numerous organisms, including the human genome, have now been completely sequenced. Consequently, interest within the field of bioinformatics is gradually shifting from sequencing to understanding the genes encoded within the sequence. Genes are responsible for coding proteins, which tend to localize in various cellular compartments and interact with one another to perform essential functions. 

A dataset presented at the KDD Cup 2001 comprises a plethora of details about genes from a specific organism. The Data Analysis Challenge proposed two primary tasks: predicting the functions and localizations of the proteins encoded by these genes. It's worth noting that a single gene or protein may have multiple functions and localizations. Other relevant information for predicting function and localization includes the gene/protein class, the observable characteristics (phenotype) of individuals with a mutation in the gene (and consequently in the protein), and the other proteins known to interact with each protein.

The dependencies associated with this problem can be represented by the multi-diagram \( D \) shown in Figure \ref{genomeattrib}.

\begin{figure}[h]
\[
\small
\xymatrix @=8pt{
\text{Class} & & \text{Phenotype} \\
& \text{Gene} \ar@_{->}[lu]\ar@_{->}[ru]\ar@_{->}[ld]\ar@_{->}[rd]\ar@_{->}[r] & \text{Gene} \times \text{Interaction.type} \\
\text{Function} & & \text{Localization}
}
\]
\caption{Dependencies between attributes.}\label{genomeattrib}
\end{figure}
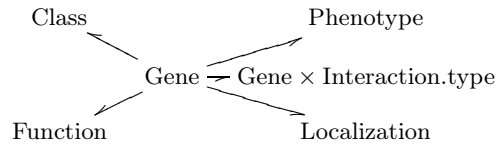

The diagram limit defines a morphism that characterizes the involved entities as:
\[
\text{Gene} \times \text{Class} \times \text{Phenotype} \times \text{Gene} \times \text{Interaction.type} \times \text{Function} \times \text{Localization}
\]
\[
\begin{array}{c}
    \downarrow \\
    \Omega \\
\end{array}
\]
This map can be conceptualized as a dataset where we can compute \(\alpha: \text{Function} \times \text{Localization}\), describing the similarity between pairs in \(\text{Functions} \times \text{Localization}\). Given a description \(\bar{x}\) for a gene, the \(\Omega\)-map
\[
(Lim\;D)(\alpha|\bar{x})(f,l)
\]
reflects the truth value in \(\Omega\) of the proposition "the class of genes characterized by \( \bar{x} \) have function \(f\) and localization \(l\)".

\end{exam}
In this context, a multi-diagram can be aggregated into a relation via its limit. In the following chapters, we will address the inverse problem: defining a multi-diagram whose limit provides an "approximation" to a given multi-morphism. By this, we mean the ability to graphically express fuzzy relations between attributes aggregated in a multi-morphism that encodes a dataset.

\chapter{Specifying libraries of components}\label{specifying libraries}

In Computer Science, a formal grammar is an abstract structure that describes a formal language. Formal grammars are typically classified into two main categories: generative and analytic.

Generative grammars are the most well-known type. They consist of a set of rules by which all possible strings in the language can be generated. This is achieved by successively rewriting strings starting from a designated start symbol. In contrast, an analytic grammar is a set of rules that takes an arbitrary string as input and successively analyzes or reduces the input string to yield a final boolean result, indicating whether or not the input string belongs to the language described by the grammar.

The languages employed in this work are expressed using a type of generative grammar where words are configurations defined using components selected from a library. Each component is associated with a set of requirements, and a configuration is considered valid in the language if all requirements for the used components are satisfied. The adoption of this type of structure and the definition of its semantics were motivated by the domain of Architectural Connectors. This domain has emerged as a powerful tool for describing the overall organization of systems in terms of components and their interactions \cite{LopesFiadeiro97} \cite{KasmanBassClements98} \cite{WolfPerry92}. According to \cite{GarlanAllen97}, an architectural connector can be characterized by a set of roles and a glue specification. The roles of a connector type can be instantiated with specific components of the system under construction, resulting in an overall system structure comprising components and connector instances that establish interactions between the components.

Let $Chains$ be the forgetful functor from the category of totally ordered sets and their homomorphisms to $Set$, the category of all sets. If we interpret a set $\Sigma$ as a set of symbols or signs, objects in the comma category $(Chains\downarrow \Sigma)$ can be viewed as words defined by strings over the \emph{alphabet} $\Sigma$. A set of signs $\Sigma$ equipped with a partial order $\leq$ is referred to as an \emph{ontology}. Given signs $\lambda_0$ and $\lambda_1$ in an ontology $(\Sigma,\leq)$ such that $\lambda_0\leq\lambda_1$, $\lambda_1$ is termed a \emph{generalization} of $\lambda_0$, and $\lambda_0$ is termed a \emph{particularization} of $\lambda_1$.

Given $w\in (Chains\downarrow \Sigma)$, we denote $w:|w|\rightarrow \Sigma$, where $|w|$ represents the chain used for the indexation of $w$. This notation is interpreted as an ordered sequence of symbols from $\Sigma$.

An ontology $(\Sigma,\leq)$ is termed a \emph{bipolarized ontology} if it possesses a nilpotent operator $(\_)^+:\Sigma\rightarrow\Sigma$ such that $\Sigma=\Sigma_I\cup\Sigma_O$, where $(\Sigma_I)^+=\Sigma_O$. This operator preserves the ontology structure, meaning that for any signs $\lambda_0$ and $\lambda_1$, if $\lambda_0\leq\lambda_1$, then $\lambda_0^+\leq\lambda_1^+$. Here, the set $\Sigma_I$ is referred to as the set of \emph{input symbols} for $\Sigma$, while $\Sigma_O$ is termed the set of \emph{output symbols}. If a symbol $\lambda$ belongs to $\Sigma_I$, its dual, denoted by $\lambda^+$, is an output symbol. We denote a bipolarized ontology as $(\Sigma^+,\leq)$.

Using the concept of lifting, we define, for every word $w\in (Chains\downarrow \Sigma^+)$, the following words:
\begin{enumerate}
  \item $o(w)\in (Chains\downarrow\Sigma_O)$, which is defined by all output symbols present in $w$, and
  \item $i(w)\in (Chains\downarrow \Sigma_I)$, which is defined by all input symbols present in $w$.
\end{enumerate}

\begin{figure}[h]
\[
\xymatrix{\ar@{}[dr]|(.3)\lrcorner {|i(w)|}\ar[r]^{i(w)} \ar[d]_{\subseteq}& \Sigma_I \ar[d]^{\subseteq} \\
          |w|\ar[r]^{w} & \Sigma\ }\quad\quad\quad
\xymatrix{\ar@{}[dr]|(.3)\lrcorner {|o(w)|}\ar[r]^{o(w)} \ar[d]_{\subseteq}& \Sigma_O \ar[d]^{\subseteq} \\
          |w|\ar[r]^{w} & \Sigma\ }
\]
\caption{Pullbacks used to select input and output signs from word $w$.}\label{inoutsign}
\end{figure}
Given a word $w$, we define $\Sigma(w)$ as the set of symbols used in $w$, denoted as $\Sigma(w)\subset\Sigma^+$. For a bipolarized ontology, let $w$ and $w'$ be two words. We can define $w\otimes w'$ as a substring resulting from the concatenation of $w$ and $w'$, which can be inductively described by the following algorithm:

\begin{alg}\label{op:StringGluing}
    \textbf{Algorithm: String Gluing (Input: $w,w'\in \Sigma^+$, Output: $w_i.w'_i$)}
    \begin{enumerate}
        \item Initialize $w_0=w$ and $w'_0=w'$.
        \item Let $\lambda$ be the first output symbol in $w_i$ that has its dual, $\lambda^+$, in $w'_i$, or one of its generalizations:
            \begin{enumerate}
                \item Generate $w_{i+1}$ by removing the first occurrence of $\lambda$ from $w_i$.
                \item Generate $w'_{i+1}$ by removing the first occurrence of $\lambda^+$ or one of its generalizations from $w'_i$.
            \end{enumerate}
        \item Repeat step 2 as long as there exist signs in $\Sigma(w_{i+1})$ that have a dual or a generalization in $\Sigma(w_{i+1})$.
    \end{enumerate}
\end{alg}

In this context, the word $w\otimes w'$ can be understood as the result of the ordered elimination of output symbols in $w$ and input symbols in $w'$ that are linked by duality.

From the definition of the operator $\otimes$, we can establish the following properties:
\begin{prop}
For every pair of words $w$, $w'$, and $w''$ in a bipolarized ontology $(\Sigma,\leq)$, the following properties hold:
    \begin{enumerate}
        \item Associativity: $w\otimes  (w'\otimes  w'') = (w\otimes   w')\otimes  w''$;
        \item Symmetry under Certain Conditions: If $w'$ (and $w$) does not have the dual, nor any of its generalizations, of signs from $w$ (and $w'$, respectively), then $w'\otimes w = w\otimes  w'$;
        \item Identity: $w\otimes \bot=\bot \otimes w  = w$.
    \end{enumerate}
\end{prop}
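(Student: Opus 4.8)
The plan is to read the gluing $w\otimes w'$ defined by Algorithm~\ref{op:StringGluing} as a \emph{cancellation process}: each iteration deletes one output symbol $\lambda$ from the left word and one matching input symbol (a dual $\lambda^+$, or a generalization of it) from the right word, and the value $w\otimes w' = w_k.w'_k$ is the concatenation of the two residual words once no eligible pair remains. I would establish the three items in order of increasing difficulty: identity, conditional symmetry, and finally associativity.

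For the identity~(3), note that $\bot$ is the empty word, so $\Sigma(\bot)=\emptyset$. With $w'=\bot$ the test in step~2 can never succeed, since no output symbol of $w$ has its dual present in $\bot$; the loop halts at $i=0$ and returns $w_0.w'_0 = w.\bot = w$. The case $\bot\otimes w$ is identical, using that $\bot$ is a two-sided unit for concatenation. For conditional symmetry~(2), the hypothesis says exactly that $w'$ carries no dual (nor generalization of a dual) of a sign of $w$, and symmetrically for $w$ against $w'$; hence the matching test never fires for either $w\otimes w'$ or $w'\otimes w$, and both collapse to plain concatenations $w.w'$ and $w'.w$. It then remains to argue that these two concatenations denote the same object. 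Here I would invoke that, sharing no interacting signs, $w$ and $w'$ compose \emph{independently} in the sense used earlier for multi-morphisms, so that under the identification of a word with the input/output interface it presents, the order of two non-interacting blocks is immaterial; making this identification precise is the only delicate point of this item.

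Associativity~(1) is the substantive claim and the main obstacle. The difficulty is that Algorithm~\ref{op:StringGluing} is procedural—it cancels the \emph{first} eligible output against the \emph{first} eligible input—so a priori the surviving symbols might depend both on the order of cancellations and, since a generalization can absorb a more specific dual, on how the three words are grouped. My strategy is to trade the procedure for an order-free characterization and then show both groupings realize it. Concretely I would: (i) prove that gluing is \emph{confluent}, i.e. performing the admissible cancellations in any order yields the same residual pair, via a diamond argument on pairs of independent cancellations together with an exchange argument that reorders the monotone (generalization) matches; and (ii) observe that the directional matches occurring in $(w\otimes w')\otimes w''$ and in $w\otimes(w'\otimes w'')$ are drawn from the same collection of left-to-right pairings among $w, w', w''$ (namely $w\to w'$, $w\to w''$, and $w'\to w''$), so that each grouping is merely one admissible cancellation order of the combined three-word system and, by~(i), the two agree.

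A representative calculation—tracking, say, the number of surviving $\lambda$-outputs of $w$ across both groupings and finding the common value $\max(p-q-t,0)$ for multiplicities $p,q,t$—confirms the pattern and indicates that the survivor counts are grouping-invariant. The case demanding most care in~(i)--(ii) is when an output of $w$ is eligible for an input in $w'$ and, once $w'$ is exhausted, for a generalization in $w''$; I would check that the surviving multiset is unchanged under either choice, which is precisely where the order-preservation of $(\_)^+$ and the duality between $\Sigma_I$ and $\Sigma_O$ enter. I would finally verify that the relative order of the survivors, and not merely their multiplicities, is preserved, so that the asserted equality holds at the level of words and not just of the underlying multisets of signs.
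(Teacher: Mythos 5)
Your treatment of items (2) and (3) is sound: with no eligible pair, the loop never fires and the gluing degenerates to concatenation, so identity is immediate, and you are right that the residue of item (2) is the identification of $w.w'$ with $w'.w$, which are distinct objects of $(Chains\downarrow\Sigma^+)$ and can only be "equal'' up to the evident block-permuting isomorphism. That caveat, which you flag honestly, is a defect of the statement rather than of your argument.

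The genuine gap is step (i) of your associativity plan: the gluing of Algorithm~\ref{op:StringGluing} is \emph{not} confluent, and the verification you defer to the end --- that the surviving multiset is unchanged when a generalization can absorb a more specific dual --- fails. Take input signs $a<b$ in $\Sigma_I$ (hence $a^+<b^+$ in $\Sigma_O$) and let $w=a^+$, $w'=b^+$, $w''=b$. For $(w\otimes w')\otimes w''$: no link forms inside $w\otimes w'=a^+.b^+$ (the right word has no input symbols), and when gluing with $w''$ the algorithm selects the \emph{first} output symbol with an eligible match, namely $a^+$, whose dual $a$ has the generalization $b$ available; so $a^+$ consumes $b$, the residual $b^+$ finds nothing, and the result is $b^+$. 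For $w\otimes(w'\otimes w'')$: first $b^+$ consumes its exact dual $b$, giving $\bot$, and then $w\otimes\bot=a^+$ by item (3). Thus $(w\otimes w')\otimes w''=b^+\neq a^+=w\otimes(w'\otimes w'')$. The two cancellations "$a^+$ takes $b$'' and "$b^+$ takes $b$'' overlap on the single occurrence of $b$, so this is a critical pair with no diamond: one resolution strands $b^+$, the other strands $a^+$, and no exchange argument can reconcile them because the cancellations are not independent. Your item (ii) is therefore true but useless --- both groupings do draw from the same pairings $w\to w'$, $w\to w''$, $w'\to w''$, yet different greedy orders of those pairings yield different residuals. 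Note also that repairing the rule to prefer exact duals does not help (the same example still gives $b^+$ versus $a^+$, since no exact dual of $a^+$ is ever present). For context: the paper states this proposition with no proof at all, and your attempt in effect uncovers why --- as literally defined by the greedy first-occurrence algorithm, $\otimes$ is not associative in the presence of generalization matching; associativity would have to be built into a reformulated, order-independent (e.g.\ globally maximal-matching) definition of the gluing, or the ontology restricted so that only exact duals can link.
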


In our pursuit of developing a framework for library specification, we assume that the requirements for processes' inputs and outputs are codified using signs from a polarized ontology $(\Sigma^+,\leq)$. Consequently, \emph{the universe of libraries}, where component requirements are specified over the polarized ontology $\Sigma^+$, can be conceptualized as the comma category:
\[(Chains \downarrow(Chains \downarrow \Sigma^+)).\]
In this context, a library can be viewed as a list of components, each specified using words defined over $\Sigma^+$.

A \emph{library specification} is represented by a map $L:|L|\rightarrow (Chains\downarrow \Sigma^+)$, where each node in the chain $|L|$ is termed a \emph{component label} or simply a \emph{sign} within the library. For a given component label $r\in |L|$, we can interpret $L(r)=w:|w|\rightarrow \Sigma^+$ as specifying the component's input requirements, denoted by $i(w)$, and its output requirements, denoted by $o(w)$. Consequently, a library can be visualized as an oriented multi-graph. Here, multi-arrows represent selections of objects in $(Chains\downarrow \Sigma^+)$, and nodes correspond to objects from $(Chains\downarrow \Sigma_I)$.

Consider $L\in(Chains \downarrow(Chains \downarrow \Sigma^+))$. If $L(r)=w$, the label $r$ is understood as establishing a dependence between families of nodes $i(w)$ and $o^+(w)$. In this context, we denote this relationship as
\[r:i(w)\rightarrow o^+(w),\] or succinctly as $r\in L$. This notation defines a multi-arrow in the multi-graph $\G(L)$ associated with the library $L$.

A homomorphism between libraries is characterized as a morphism within $(Chains \downarrow(Chains \downarrow \Sigma^+))$. For any morphism $f:L_0\rightarrow L_1$ between libraries $L_0$ and $L_1$, there exists an associated multi-graph homomorphism $\G(f):\G(L_0)\rightarrow \G(L_1)$. This homomorphism establishes a correspondence between signs and a one-to-one mapping between component labels in $L_0$ and $L_1$, while preserving the component requirements.

Naturally, an order relation can be established between libraries. We denote $L_0\leq L_1$ if for every $r\in |L_0|$, we also have $r\in |L_1|$. That is, every component present in $L_0$ is also found in $L_1$. In such a scenario, $L_0$ is referred to as a \emph{sublibrary} of $L_1$, and the corresponding homomorphism $f:L_0\rightarrow L_1$ is termed the \emph{library inclusion}.

We define $L^\ast$ as the \emph{free monoid} on $L$ with respect to the operator $\otimes$. Formally, for a given library $L\in (Chains \downarrow(Chains \downarrow \Sigma^+))$, $L^\ast$ is characterized as the $\otimes$-closure of $L$. In other words, it is the smallest library in $(Chains \downarrow (Chains\downarrow \Sigma^+))$ satisfying the following conditions:
\begin{enumerate}
  \item Every word generated using signs from $L$ is a label in $L^\ast$.
  \item The empty word defines a label for a component with null requirements, denoted as $\perp:\perp\rightarrow\perp$.
  \item $L$ is a sublibrary of $L^\ast$.
  \item For any $s\in L^\ast$ such that $s=r_1\otimes r_2\otimes \ldots\otimes r_n$, we have
  \[L^\ast(s)=L(r_1)\otimes L(r_2)\otimes\ldots\otimes L(r_n).\]
\end{enumerate}
It's worth noting that the empty word $\bot$ serves as a label in $L^\ast$. Given that $L$ is a sublibrary of $L^\ast$, the requirements associated with a label in $L^\ast$ can be interpreted as the prerequisites for the component connections defined by the label. Hence, a word in $L^\ast$ can be conceptualized as a circuit constructed through the interconnection of components from $L$.

A library serves as a formal system that allows us to stratify circuits into different levels of abstraction. The abstraction level of a circuit is determined by the number of refinement steps required to derive an equivalent circuit using only atomic components. To elucidate the concept of circuit refinement, consider an ontology $(\Sigma^+,\leq)$ where the order defined for signs can be extended to words. We denote $\lambda_0\ldots\lambda_n\leq\lambda_0'\ldots\lambda_n'$ if and only if $\lambda_i\leq\lambda'_i$ in $(\Sigma^+,\leq)$. When two words from the ontology satisfy $w\leq w'$, we say that $w'$ is a \emph{generalization} of $w$ within the ontology.

Circuit refinement hinges on the concept of \emph{semantics for a library} denoted by $L$. It comprises a pair of equivalence relations $(\equiv_l,\equiv_w)$, where $\equiv_l$ is defined for labels in $L^\ast$, and $\equiv_w$ is defined for words in $\Sigma^\ast$. Specifically, the relations are defined such that:
\[
\text{If } l_0\equiv_l l_1 \text{ then } L^\ast(l_0)\equiv_w L^\ast(l_1).
\]
In the context of $(L^\ast,\equiv_l,\equiv_w)$, a label $s$ is termed a \emph{decomposable component} if there exist words $s_0$ and $s_1$ such that:
\[
s\equiv_l s_0\otimes s_1
\]
Labels that cannot be decomposed are referred to as \emph{atomic components}. Thus, if a label in $L^\ast$ is atomic, it corresponds to a label in library $L$.

A \emph{normal form} representation for a label \( s \in L^\ast \) is a sequence of atomic components
\[
(r_0, r_1, r_2, \ldots, r_n)
\]
such that
\[
s \equiv_l r_0 \otimes r_1 \otimes r_2 \otimes \ldots \otimes r_n.
\]

Given a library \( L \), we define the \emph{library of atomic components} for \( (L^\ast, \equiv_l, \equiv_w) \) as the library
\[
L_{at} \leq L
\]
where \( s \in L_{at} \) if and only if \( s \) is atomic within \( (L^\ast, \equiv_l, \equiv_w) \).

Our objective is to describe structures, such as Architectural connectors, using a graphical language that outlines the global organization of complex structures based on simpler ones. In this context, each component is associated with a graphical representation, and the circuit specification emerges from the interconnection between components that adhere to a set of rules and a glue specification \cite{GarlanAllen97}. The crux of our approach lies in providing a general framework that endows circuits with explicit semantic meaning. To formalize this concept, we associate a multi-graph \( \G(L) \) with a library \( L \), where the nodes are symbols from \( \Sigma_I \), and the multi-edges represent components. Each component \( s \) is characterized by its input \( i(L(s)) \) and output \( o(L(s)) \).

We denote the comma category
\[
\G^\ast(L) = (Mgraph \downarrow \G(L))
\]
where the objects are homomorphisms defined between a multi-graph and the multi-graph \( \G(L) \).

For a library \( L \) with a polarized ontology \( (\Sigma^+, \leq) \), any diagram \( D \in \G^\ast(L) \) can be represented as a library
\[
L(D) \in (Chains \downarrow (Chains \downarrow \Sigma^+)),
\]
where the component labels are multi-arcs derived from \( D \). Each multi-arc must be associated with a word defined by concatenating, into a single word, the labels of the source vertices and the dual labels of the target vertices of the multi-arc. Given a diagram \( D \in \G^\ast(L) \) defined by:
\begin{enumerate}
  \item the source map \( i: |D| \rightarrow (Chains \downarrow \Sigma_I) \), and
  \item the target map \( o: |D| \rightarrow (Chains \downarrow \Sigma_I) \),
\end{enumerate}
the library \( L(D) \) has two associated words defined using symbols from \( \Sigma_I \): its input requisites \( i(D) \) and its output structures \( o(D) \), where:
\begin{enumerate}
  \item \( i(D) \) is the word obtained by concatenating labels belonging to vertices without input multi-arcs, and
  \item \( o(D) \) is the word defined by concatenating the dual labels of vertices without output multi-arcs.
\end{enumerate}

In the category \( \G^\ast(L) \), for every pair of diagrams \( D \) and \( D' \), we define the diagram \( D \otimes D' \) by gluing together vertices with identical labels from \( o(D) \) and \( i(D') \), while treating others as distinct. The order used for gluing vertices must respect the order determined by the chain of symbols used to define the words \( o(D) \) and \( i(D') \).

When \( \otimes \) is restricted to pairs of diagrams \( D \) and \( D' \) such that \( i(D) = o(D') \), we use this operator as a "composition" between relations specified using multi-graphs. With it, we define a category with objects as words from \( (Chains \downarrow \Sigma_I) \) and morphisms as diagrams from \( \G^\ast(L) \). Given a diagram \( D \), the condition \( i(D) = w \) and \( o(D) = w' \) is denoted by \( D: w \rightharpoonup w' \). For diagrams \( D \) and \( D' \) from \( \G^\ast(L) \), if \( D' \) is a subobject of \( D \), denoted by \( D' \leq D \), then there exists an epimorphism in \( \G^\ast(L) \) from \( D' \) to \( D \). Equivalently, there is a decomposition \( D = D'' \otimes D' \otimes D''' \).

A diagram \( D \) is said to be \emph{decomposable} if there exist two non-null subobjects \( D' \) and \( D'' \) such that \( D = D' \otimes D'' \). If a diagram is not decomposable, it is termed \emph{atomic}. Let \( \G_{at}^\ast(L) \) denote the class of atomic diagrams in \( \G^\ast(L) \). We can view atomic diagrams as fundamental building blocks for generating more complex diagrams.

A functor \( F: \G^\ast(L) \rightarrow \G^\ast(L) \), where \( L \) has a semantic interpretation, is termed a \emph{diagram refinement} if:

\begin{enumerate}
    \item \( F(D \otimes D') \equiv_l F(D) \otimes F(D') \): This means that the refinement of a composite diagram is semantically equivalent to the refinement of its constituent parts.
    \item \( F(D) \equiv_l D \): This indicates that the refinement of a diagram is semantically equivalent to the original diagram itself.
    \item \( F(D) = D \) if and only if \( D \in \G_{at}^\ast(L) \): This implies that atomic elements cannot be further simplified.
\end{enumerate}

A refinement can be understood as a rewriting rule that enables the unpacking of encapsulated subdiagrams. If a diagram is a fixed point of the refinement function, we say that it is in \emph{normal form}. Given that diagrams are finite structures, for every diagram \( D \), we can find at least one representation of \( D \) in normal form within a finite number of steps.

A diagram refinement \( F: \G^\ast(L) \rightarrow \G^\ast(L) \) possesses the notable property of defining a partial order on \( \G^\ast(L) \), denoted by \( \leq_F \). Specifically, for any two diagrams \( D \) and \( D' \) in \( \G^\ast(L) \), we say \( D \leq_F D' \) if \( F(D') = D \). In other words, \( D \) is a refinement of \( D' \) via \( F \). In this context, \( D' \) is referred to as a \emph{generalization} of \( D \).

\begin{exam}[Signatures as libraries]\label{ex:signature}
A signature can be represented through a library of components, where each component symbolizes a function. The arity of each function is encoded within the component's requirements. Formally, following \cite{ParMakkai89}, a signature \( \Sigma = (S, T, \text{ar}) \), with type symbols from \( S \), is comprised of:

\begin{itemize}
    \item A finite set \( T \) of function symbols (or operators) \( f, g, \ldots \).
    \item For each function symbol \( f \), an arity \( \text{ar}(f) = (<a_i>_I, b) \) is defined, consisting of a chain of input type symbols and one output type symbol. In this context, we write \( i(f) = <a_i>_I \) and \( o(f) = <b> \).
\end{itemize}

We can organize the set \( T \) of function symbols by using these symbols as labels for components, with their requirements encoded over the polarized alphabet \( S^+ \) generated from \( S \). The set \( S^+ \) is constructed by adding a new dual symbol \( a^+ \) for each type symbol \( a \) in \( S \). The library associated with the signature \( \Sigma \) is denoted by \( L(\Sigma) \).

A constant of type \( a \) is a function symbol in a signature with arity \( (<>, a) \), meaning it has no inputs and \( a \) as its output. It is common practice to utilize a countably infinite set of variables for each type present in the signature. These variables are represented in a diagram by using inputs on components without associated links, thereby defining the set of diagram sources.

\end{exam}

Below, we present examples of libraries associated with models generated by machine learning algorithms as discussed in \cite{MichalskiMitchellCarbonell86}. These examples serve to illustrate fuzzy structures:

\begin{exam}[Binary Library \L\(_B(S,C) \)]\label{binarylibrary}
Binary libraries are defined using a set of component labels \( C \) and a set of type signs \( S \). A binary library \L\(_B(S,C) \) presupposes the existence of:

\begin{itemize}
    \item A sign \( l \in S \), which can be interpreted as the set \( \Omega \) of truth values in \( \text{Set}(\Omega) \).
    \item A constant \( \top: l^+ \) in \( C \), interpreted as 'true'.
\end{itemize}

In the set \( S \), we must have components of type \( =_s : ssl^+ \), one for each symbol \( s \in S \). These are interpreted as similarities \([\cdot = \cdot]\) corresponding to the interpretation of \( s \). Additionally, components of type \( c : s^+ \) must be defined, where \( c \in C \) and \( s \in S \). These are interpreted as constants selected based on the interpretation of \( s \).

Given that datasets or tables can be encoded using these primitives, binary libraries are also referred to as 'dataset libraries'.
\end{exam}

\begin{exam}[Linear Library \L\(_L(S,C) \)]\label{linearlibrary}
Linear libraries, denoted as \L\(_L(S,C) \), extend binary libraries by incorporating additional components:

\begin{itemize}
    \item Similarity components \( =_s : ssl^+ \) for each symbol \( s \in S \), interpreted as indicating equality.
    \item Components \( \geq_s : ssl^+ \) for each symbol \( s \in S \), which can be interpreted as specifying a total order.
    \item Constant components \( c : s^+ \), where \( c \in C \) and \( s \in S \).
\end{itemize}

Linear libraries are particularly associated with processes for the discretization of continuous domains. For this reason, they are also referred to as 'grid libraries'.
\end{exam}

\begin{exam}[Additive Library \L\(_A(S,C) \)]\label{addlibrary}
Additive libraries, denoted as \L\(_A(S,C) \), are extensions of linear libraries and are defined as follows:

\begin{itemize}
    \item Equality components \( =_s : ssl^+ \) for each symbol \( s \in S \).
    \item Order components \( \geq_s : ssl^+ \) for each symbol \( s \in S \).
    \item Addition components \( +_s : sss^+ \) for each symbol \( s \in S \), interpretable as addition.
    \item Constant components \( b : s^+ \), where \( b \in C \) and \( s \in S \).
\end{itemize}
\end{exam}

\begin{exam}[Multiplicative Library \L\(_M(S,C) \)]\label{multlibary}
Multiplicative libraries, denoted as \L\(_M(S,C) \), are extensions of additive libraries and are defined as follows:

\begin{itemize}
    \item Equality components \( =_s : ssl^+ \) for each symbol \( s \in S \).
    \item Order components \( \geq_s : ssl^+ \) for each symbol \( s \in S \).
    \item Addition components \( +_s : sss^+ \) for each symbol \( s \in S \).
    \item Multiplication components \( \times_s : sss^+ \) for each symbol \( s \in S \), interpretable as multiplication.
    \item Constant components \( b : s^+ \), where \( b \in C \) and \( s \in S \).
\end{itemize}
\end{exam}

\chapter{Modeling Libraries and Graphic Languages}\label{modeling libraries}

Given the structural compatibility between libraries and multi-graphs, it is natural to consider the soundness of library semantics in \( Set(\Omega) \) as equivalent to the preservation of library structure.

A model for a library \( (L^\ast, \equiv_l, \equiv_w) \) in \( Set(\Omega) \) is a multi-graph homomorphism \( M \) from the \emph{library parser graph} \( \G(L^\ast) \) to \( Set(\Omega) \), denoted as
\[ M: \G(L^\ast) \rightarrow Set(\Omega) \]
such that:
\begin{enumerate}
    \item Equivalent components are interpreted as the same multi-morphism, i.e., \( M(r) = M(r') \) if \( r \equiv_l r' \).
    \item Component gluing translates to multi-morphism composition, i.e.,
    \[ M(r \otimes r') = M(r) \otimes M(r'); \]
    \item It preserves component requirements and truth value distribution, i.e., if \( r: w \rightharpoonup w' \), then
    \[ M(r): M(w) \rightharpoonup M(w') \text{ and } M(r)^\circ \otimes M(w) \otimes M(r) = M(w'); \]
    \item It maintains the sign's ontological structure, i.e., if sign \( l \) is a generalization of sign \( l' \) (i.e., if \( l' \leq l \)), then \( M(l') \leq M(l) \);
    \item Words are mapped to chains of \( \Omega \)-set products defined in \ref{ProdSimil}, i.e., if \( w = s_1 s_2 \ldots s_n \), then \( M(w) = \Pi_i M(s_i) \);
    \item Equivalent words are mapped to the same \( \Omega \)-set, as defined in \ref{def:isomorphism}, i.e., if \( w \equiv_w w' \), then \( M(w) = M(w') \).
\end{enumerate}

In other words, a model transforms multi-arcs into multi-morphisms while preserving their structure and the semantics induced through relations \( \equiv_l \) and \( \equiv_w \). Property (3) stipulates the preservation of truth value distribution by component interpretation. The class of models for a library \( (L^\ast, \equiv_l, \equiv_w) \) is used subsequently in the definition of a category of models given by
\[ \text{Mod}(L^\ast, \equiv_l, \equiv_w). \]

A model for \( L^\ast \) can be defined by lifting the interpretation of atomic components to circuits. To achieve this, it is important to note that since a model preserves component gluing, it can be defined by fixing interpretations for its atomic components. This is expressed by the following completion principle:

\begin{prop}[Universal Property]
Let \( L_{at} \) be the sublibrary defined by atomic components in \( (L^\ast, \equiv_l, \equiv_w) \). Every multi-graph homomorphism
\[ M: \G(L_{at}) \rightarrow Set(\Omega), \]
defines a unique model
\[ M^\ast: \G(L^\ast) \rightarrow Set(\Omega), \]
for \( (L^\ast, \equiv_l, \equiv_w) \), such that
\[ M^\ast \circ i = M, \]
where \( i \) is the homomorphism defined by library inclusion.
\end{prop}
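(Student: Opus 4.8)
The plan is to exploit that $L^\ast$ is the $\otimes$-closure of $L$ together with the existence of atomic normal forms, so that the value of any prospective model on a composite label is completely forced by its values on atoms. This simultaneously dictates the definition of $M^\ast$ and yields uniqueness; the real work is to check that the forced definition is well defined and satisfies all six model axioms.

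First I would fix the action on objects. The nodes of $\G(L^\ast)$ are signs of $\Sigma_I$, to which $M$ already assigns $\Omega$-sets, so I set $M^\ast$ equal to $M$ on signs and extend to a word $w = s_1 s_2 \cdots s_n$ by $M^\ast(w) = \prod_i M(s_i)$, as demanded by axiom (5) and Definition \ref{ProdSimil}. For a label $s \in L^\ast$ I choose a normal form $(r_0, r_1, \ldots, r_n)$ with each $r_i$ atomic — which exists in finitely many refinement steps — and define
\[
M^\ast(s) = M(r_0) \otimes M(r_1) \otimes \cdots \otimes M(r_n).
\]
Since each $r_i$ is atomic it lies in $L_{at}$, so this expression only uses the given data $M$. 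Uniqueness is then immediate: any model $N$ extending $M$ must satisfy $N(s) = N(r_0 \otimes \cdots \otimes r_n)$ by axiom (1) (as $s \equiv_l r_0 \otimes \cdots \otimes r_n$), hence $N(s) = N(r_0) \otimes \cdots \otimes N(r_n) = M(r_0) \otimes \cdots \otimes M(r_n)$ by axiom (2) and $N|_{L_{at}} = M$. Thus $N = M^\ast$.

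The main obstacle is \emph{well-definedness}: different normal forms of the same label must yield equal multi-morphisms. I would argue that any two atomic normal forms of $s$ are related by the monoid laws governing $\otimes$ on labels — associativity and the conditional symmetry of independent factors established in the Proposition following Algorithm \ref{op:StringGluing} — because the refinement functor $F$ reduces every label to normal form confluently (its defining properties $F(D \otimes D') \equiv_l F(D) \otimes F(D')$ and $F(D) \equiv_l D$, with atoms as the only fixed points, force all reductions to agree up to rearrangement of independent atoms). The key point is that composition $\otimes$ in $Set(\Omega)$ satisfies exactly these same laws: it is associative, admits the crisp similarities as identities, and commutes precisely on independent multi-morphisms (the definition of Independence). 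Hence every rearrangement relating two normal forms of $s$ is mirrored by an equality of the corresponding composites in $Set(\Omega)$, so $M^\ast(s)$ is independent of the chosen normal form. Establishing that $\equiv_l$, restricted to composites of atoms, is generated by precisely these monoid coincidences — and introduces no further identifications that $M$ could violate — is the delicate step, and would rest on the semantics $(\equiv_l, \equiv_w)$ being a congruence for $\otimes$.

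It then remains to verify the model axioms. Axiom (1) is exactly the well-definedness just proved, since $\equiv_l$-equivalent labels share a common normal-form class. Axiom (2) follows because concatenating a normal form of $r$ with one of $r'$ produces a normal form of $r \otimes r'$ (using $F(D \otimes D') \equiv_l F(D) \otimes F(D')$), whence $M^\ast(r \otimes r') = M^\ast(r) \otimes M^\ast(r')$ by associativity of $\otimes$ in $Set(\Omega)$. For axiom (3) I would propagate requirement- and truth-value-preservation from atoms to composites: if each atom satisfies $M(r_i)^\circ \otimes M(w_i) \otimes M(r_i) = M(w_i')$, then the identity $(f \otimes g)^\circ = g^\circ \otimes f^\circ$ together with preservation of totality and faithfulness under $\otimes$ gives the same identity for the composite, the matching of sources and targets along glued signs being guaranteed by the gluing rule defining $\otimes$ on labels. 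Axiom (4) holds because $M^\ast$ agrees with $M$ on signs and $M$ already respects the ontology order. Axioms (5) and (6) hold by the definition of $M^\ast$ on words and by well-definedness transported through $\equiv_w$ (recall $l_0 \equiv_l l_1$ implies $L^\ast(l_0) \equiv_w L^\ast(l_1)$). Collecting these, $M^\ast$ is the unique model extending $M$.
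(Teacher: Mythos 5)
Your construction coincides with the paper's own proof, which consists entirely of the definition \( M^*(r) = \bigotimes_i M(r_i) \) for a chosen atomic normal form \( (r_0, r_1, \ldots, r_n) \) of \( r \) — exactly your second step. The uniqueness argument, the well-definedness check across different normal forms, and the verification of the six model axioms that you supply are all omitted by the paper, so your proposal is the same approach carried out in more detail rather than a different route.
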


The proof of this result is constructed by defining \( M^*(r) = \bigotimes_i M(r_i) \) if the circuit \( r \) has a normal form given by a sequence
\[
(r_0, r_1, r_2, \ldots, r_n),
\]
i.e., the \( r_i \)'s are atomic and
\[
r \equiv_l r_0 \otimes r_1 \otimes r_2 \otimes \ldots \otimes r_n.
\]

For every component label \( r \in L \), we refer to a \emph{realization} for \( r \) through \( M \) in \( \text{Mod}(L^\ast, \equiv_l, \equiv_w) \) or a \emph{Chu representation} of \( r \) as an \emph{epi multi-morphism}
\[
M(r): M(i(r)) \rightharpoonup M(o(r)) \quad \text{such that} \quad M(r)^\circ \otimes M(i(r)) \otimes M(r) = M(o(r)).
\]
In this case, if \( r = r' \otimes r'' \) in \( L^\ast \), then \( M(r) \) can be decomposed in \( Set(\Omega) \) as
\[
M(r') \otimes M(r'').
\]

Since library refinement preserves semantics, it is idempotent with respect to library models. Given a model \( M \in \text{Mod}(L^\ast, \equiv_l, \equiv_w) \) and if \( F: L^\ast \rightarrow L^\ast \) is a refinement in the library \( (L^\ast, \equiv_l, \equiv_w) \), we have
\[
M( F^n(D) ) = M(D), \quad \text{for every configuration } D \in \G(L^\ast).
\]
This property can be used to characterize refinement as follows:

\begin{prop}
A library homomorphism \( F: L^\ast \rightarrow L^\ast \) having atomic components as fixed points is a refinement in \( (L^\ast, \equiv_l, \equiv_w) \) if and only if for every model \( M \in \text{Mod}(L^\ast, \equiv_l, \equiv_w) \), we have
\[
M \circ F^n = M \quad \text{for each natural number } n.
\]
\end{prop}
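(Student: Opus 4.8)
The plan is to prove the biconditional by separating the structural content of ``refinement'' from its semantic content. Being a library homomorphism that respects the gluing operator $\otimes$ and whose fixed points are exactly the atomic components supplies conditions (1) and (3) in the definition of a diagram refinement; what remains, and what the model condition should capture, is condition (2), namely that $F(D)\equiv_l D$ for every configuration $D\in\G(L^\ast)$. So I would reduce the statement to the equivalence
\[
\big(\forall D:\ F(D)\equiv_l D\big)\iff\big(\forall M\in\text{Mod}(L^\ast,\equiv_l,\equiv_w)\ \forall n:\ M\circ F^n=M\big).
\]

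For the forward implication I would invoke the idempotency already recorded above. If $F$ is a refinement then $F(D)\equiv_l D$ by condition (2); since $F$ sends configurations to configurations, applying (2) to $F(D)$ and using transitivity of $\equiv_l$ gives $F^n(D)\equiv_l D$ by induction on $n$. Because every model interprets $\equiv_l$-equivalent labels by the same multi-morphism (property (1) of a model), we get $M(F^n(D))=M(D)$ for all $M$ and all $n$, i.e. $M\circ F^n=M$.

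The substance is the converse. Assuming $M\circ F^n=M$ for all models and all $n$, take $n=1$ to obtain $M(F(D))=M(D)$ for every model $M$ and every $D$. To conclude $F(D)\equiv_l D$ I need a completeness (separation) property for the semantics: if $M(r)=M(r')$ for all models $M$, then $r\equiv_l r'$. I would establish this by exhibiting a single \emph{generic} model $M_0$ that faithfully records normal forms. By the Universal Property, any interpretation of the atomic sublibrary $\G(L_{at})$ into $Set(\Omega)$ extends uniquely to a model, and on a circuit with normal form $(r_0,\dots,r_n)$ the model evaluates to $\bigotimes_i M_0(r_i)$. The task is to choose $\Omega$ and the atomic interpretations ``independently enough'' (for instance over a free ML-algebra, or over a product of sufficiently many small models using the product ML-algebra described earlier) so that the composite $\bigotimes_i M_0(r_i)$ determines the normal-form sequence up to exactly the identifications permitted by $\equiv_l$ and $\equiv_w$. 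Granting such an $M_0$, the equality $M_0(F(D))=M_0(D)$ forces the normal forms of $F(D)$ and $D$ to agree up to $\equiv_l$, hence $F(D)\equiv_l D$, which is condition (2); combined with the structural conditions from the hypothesis, this makes $F$ a refinement.

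The main obstacle is precisely the construction and faithfulness of the generic model $M_0$: this is a soundness-and-completeness result asserting that semantic equality under the whole class of models coincides with $\equiv_l$, neither coarser nor finer. The delicate points are ensuring that the chosen atomic interpretations genuinely satisfy all the model axioms (requirement preservation in the sense $M(r)^\circ\otimes M(w)\otimes M(r)=M(w')$, the ontological monotonicity $l'\leq l\Rightarrow M(l')\leq M(l)$, and the mapping of words to product $\Omega$-sets), and that the separating $\Omega$ is rich enough to distinguish every pair of $\equiv_l$-inequivalent configurations while collapsing exactly the equivalent ones. Once separation is in hand, the remainder is bookkeeping with the transitivity of $\equiv_l$ and property (1) of models.
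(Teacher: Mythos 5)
Your forward implication is sound, and it is exactly the argument the paper itself records in the paragraph preceding the proposition: condition (2) of a refinement, plus property (1) of a model (models identify $\equiv_l$-equivalent labels), iterated by induction and transitivity, yields $M\circ F^n=M$. You should be aware, however, that this remark is the \emph{only} argument the paper gives -- it asserts the converse characterization with no proof at all -- so the converse is where your proposal has to carry all the weight, and that is where it has a genuine gap. (Your reading of the hypothesis as ``the fixed points are exactly the atomic components'' is the right one; otherwise $F=\mathrm{id}$ on a library with a decomposable component would be a counterexample. That is a matter of interpretation, not a flaw.)

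The gap is the separation lemma you yourself flag: from $M(F(D))=M(D)$ for every model $M$ you need $F(D)\equiv_l D$, and you only postulate a ``generic'' model $M_0$ whose existence you never establish. Worse, the construction you sketch is not admissible for this statement: $\text{Mod}(L^\ast,\equiv_l,\equiv_w)$ is by definition the class of multi-graph homomorphisms into $Set(\Omega)$ for the one ambient ML-algebra $\Omega$, so you are not free to interpret the library ``over a free ML-algebra'' or over a product $\prod_i\Omega_i$ -- such interpretations land in $Set(\Omega')$ for a different algebra and are not among the models quantified over. With $\Omega$ fixed and $(\equiv_l,\equiv_w)$ arbitrary data subject only to the compatibility condition ($l_0\equiv_l l_1$ implies $L^\ast(l_0)\equiv_w L^\ast(l_1)$), there is no reason the model class separates inequivalent labels: any candidate $M_0$ must additionally satisfy the restrictive model axioms, in particular the epi condition $M(r)^\circ\otimes M(i(r))\otimes M(r)=M(o(r))$ and the ontology monotonicity, and if $\equiv_l$ happens to be strictly finer than the common kernel of all such interpretations, the converse is simply false as stated. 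So what you call ``bookkeeping plus separation'' is in fact a soundness-and-completeness theorem that fails at the stated level of generality; no argument along your lines can close it without either adding a hypothesis guaranteeing a separating family of models in $Set(\Omega)$ or redefining $\equiv_l$ as the semantic equivalence induced by the models, neither of which the paper does.
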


In this context, for every component \( r \) which is a refinement of \( r' \) by \( F \), i.e., \( r \leq_F r' \), we have, for every library model \( M \), that \( M(r') = M(r) \).


A library can be perceived as an analytic grammar, and we can leverage them to characterize languages. We define the \emph{graphic language} associated with a library \( L \) as the set of valid finite configurations using components indexed by \( L \). A configuration of components \( D \) is \emph{valid or allowed} in \( L \) if
\[
D \in \G^\ast(L),
\]
i.e., if \( D \) is a multi-graph homomorphism
\[
D: \G \rightarrow \G(L).
\]
Formally, given a library \( L \in (Chains \downarrow (Chains \downarrow \Sigma^+)) \), a graphic word \( D \) defined by \( L \) is a finite configuration
\[
D \in (Mgraph \downarrow \G(L)).
\]
In this context, a word in the language is a multi-graph homomorphism where the multi-arrows are library components. Since the homomorphism \( D \) has \( \G(L) \) as its codomain, it satisfies library constraints and is referred to as the \emph{parsing} of the word \( D \).

The \emph{graphic language defined by} library \( L \) is denoted by \( Lang(L) \) and is defined as the comma category
\[
\G^\ast(L) = (Mgraph \downarrow \G(L))
\]
of allowed configurations in \( L \). Given an allowed configuration \( D: \G \rightarrow \G(L) \), we refer to \( \G \) as the \emph{configuration shape}, and \( D(\G) \) is termed a \emph{word} or \emph{diagram} in the language defined by \( L \).

Given a configuration \( D \in Lang(L) \) and a model \( M \in Mod(L^\ast, \equiv_l, \equiv_w) \), we define
\begin{enumerate}
  \item \( i(M(D)) = M(i(D)) \) and
  \item \( o(M(D)) = M(o(D)) \),
\end{enumerate}
where \( M(i(D)) \) and \( M(o(D)) \) denote \( \Omega \)-sets in \( Set(\Omega) \) used to impart meaning to the multi-diagram input and output vertices.

We can represent the structure of word interpretation as a multi-morphism using limits.

\begin{defn}[Limits as Multi-morphisms]\label{def:word interpret}
Given a model \( M \in Mod(L^\ast, \equiv_l, \equiv_w) \), the \emph{interpretation for a configuration} \( D: \G \rightarrow \G(L) \) through \( M \) is denoted as \( \text{Lim } MD \) and is represented by the multi-morphism
\[ 
M(i(D)) \rightharpoonup M(o(D)).
\]
In this context, we use \( M(D) \) to denote the multi-morphism \( \text{Lim } MD \).
\end{defn}

Naturally, we also define

\begin{defn}[Co-limits as Multi-morphisms]\label{def:colim interpret}
Given a model \( M \in Mod(L^\ast, \equiv_l, \equiv_w) \) and a diagram \( D: \G \rightarrow \G(L) \), its co-limit \( \text{coLim } MD \) can be viewed as a multi-morphism
\[ 
\text{coLim } MD : M(i(D)) \rightharpoonup M(o(D)).
\]
\end{defn}

By definition, a model for a library preserves component decomposition, which can be extended to multi-diagrams when interpreted within a basic logic.

\begin{prop}
Let \( \Omega \) be a basic logic. If the multi-diagram \( D \) is a word in the language defined by the library \( L \) and can be obtained by gluing diagrams \( D_1 \) and \( D_2 \), i.e., \( D = D_1 \otimes D_2 \), then the interpretation of the word \( D \) results from composing the interpretations of \( D_1 \) and \( D_2 \), given by:
\[ 
M(D) = [\cdot = \cdot]_{\otimes H} \Rightarrow (M(D_1) \otimes M(D_2)),
\]
where \( H = M(i(D_2) \cap o(D_1)) \) represents the set of \( \Omega \)-sets that are sources for diagram \( D_2 \) and targets for \( D_1 \).
\end{prop}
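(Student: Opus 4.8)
The plan is to unfold both sides of the claimed equation through the definition of the limit of a multi-diagram (Definition \ref{lim}), the product-similarity formula (Definition \ref{ProdSimil}), and the gluing operation on $\G^\ast(L)$, and then to reduce the whole identity to a single algebraic fact about divisible ML-algebras. First I would fix notation for the gluing: writing $V_1,V_2$ for the vertex sets and $A_1,A_2$ for the multi-arrow sets of $D_1,D_2$, the construction of $D=D_1\otimes D_2$ identifies exactly the vertices carrying labels in $o(D_1)\cap i(D_2)$ — this is the shared family underlying $H$ — and keeps all other vertices distinct, while adjoining no new arrows, so the arrow set of $D$ is the disjoint union of $A_1$ and $A_2$ and the vertex set is $V_1\cup V_2$ with $H$ counted once.

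Next I would expand the three interpretations. Using Definition \ref{lim} together with the diagonal identity $[\bar{x}]_{\prod_i M(v_i)}=\bigotimes_i [x_i]_{M(v_i)}$ coming from Definition \ref{ProdSimil}, each limit splits as a tensor of vertex extents and arrow values:
\[
M(D_k)=\bigotimes_{v\in V_k}[\cdot]_{M(v)}\ \otimes\ \bigotimes_{f\in A_k}MD_k(f)\qquad(k=1,2),
\]
\[
M(D)=\bigotimes_{v\in V_1\cup V_2}[\cdot]_{M(v)}\ \otimes\ \bigotimes_{f\in A_1\cup A_2}MD(f).
\]

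The crux is then to compute $M(D_1)\otimes M(D_2)$. Because the variables indexed by $H$ are free in both factors, the relevant operation is the non-summing case of Definition \ref{def:composition}, namely $(f\otimes g)(\bar{x},\bar{y})=f(\bar{x})\otimes g(\bar{y})$ with the $H$-coordinates identified. Since the extents $[\cdot]_{M(v)}$ for $v\in H$ occur once in $M(D_1)$ and once in $M(D_2)$, they appear twice in the product while every arrow value appears exactly once, yielding the structural identity
\[
M(D_1)\otimes M(D_2)=\Big(\bigotimes_{v\in H}[\cdot]_{M(v)}\Big)\otimes M(D)=[\cdot=\cdot]_{\otimes H}\otimes M(D),
\]
the first factor being the diagonal of the similarity on $\otimes H$. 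This bookkeeping — checking that arrows are not duplicated while precisely the $H$-extents are, and that the free-variable conventions of the limit match those of $\otimes$ — is the step I expect to be the main obstacle, since it rests entirely on the combinatorics of the gluing operation.

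Finally I would invoke divisibility. As $H$ sits inside the vertex set of $D$, the extent $[\cdot=\cdot]_{\otimes H}$ is a tensor factor of $M(D)$, so $M(D)=[\cdot=\cdot]_{\otimes H}\otimes(\text{rest})\leq[\cdot=\cdot]_{\otimes H}$ pointwise (the rest being $\leq\top$), exactly as in the remark that a product similarity is dominated by each of its factors; hence $[\cdot=\cdot]_{\otimes H}\wedge M(D)=M(D)$. Applying the divisible-ML-algebra identity $x\Rightarrow(x\otimes y)=x\wedge y$ with $x=[\cdot=\cdot]_{\otimes H}$ and $y=M(D)$, and substituting the structural identity, gives
\[
[\cdot=\cdot]_{\otimes H}\Rightarrow\big(M(D_1)\otimes M(D_2)\big)=[\cdot=\cdot]_{\otimes H}\Rightarrow\big([\cdot=\cdot]_{\otimes H}\otimes M(D)\big)=[\cdot=\cdot]_{\otimes H}\wedge M(D)=M(D),
\]
which is the asserted formula. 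I would emphasize that divisibility is used essentially: it is what allows the residual $[\cdot=\cdot]_{\otimes H}\Rightarrow(-)$ to cancel the extra tensor factor of $[\cdot=\cdot]_{\otimes H}$ produced by identifying the glued vertices, so the hypothesis that $\Omega$ is a basic logic cannot be dropped.
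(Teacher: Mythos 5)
Your reconstruction is essentially the intended argument: the paper states this proposition without any proof (the only text following it is a remark about composing epi multi-morphisms), and your route --- unfold $Lim$ via Definition \ref{lim}, split the extent of the product $\Omega$-set into a tensor of vertex extents via Definition \ref{ProdSimil}, observe that gluing duplicates exactly the $H$-extents while every arrow value appears once, then cancel the duplicated factor with the residual --- is precisely the reasoning pattern the paper itself uses in the Bayesian inference chapter, where the identity $x \Rightarrow (x \otimes y) = x \wedge y$ for divisible ML-algebras is invoked in the same way. Your resolution of the typing ambiguity in $M(D_1) \otimes M(D_2)$ (taking the non-summing tensor with the $H$-coordinates identified, rather than the composition that takes $\bigvee$ over the shared coordinates) is also the only reading under which both sides of the stated equation are functions of the same variables, so that choice is correct rather than incidental.

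One caveat you should make explicit: the algebraic identity you invoke, $x \Rightarrow (x \otimes y) = x \wedge y$, is asserted by the paper but is not actually valid in every divisible (basic) ML-algebra. In G\"{o}del logic, taking $x = y = \frac{1}{2}$ gives $x \Rightarrow (x \otimes y) = \top$ while $x \wedge y = \frac{1}{2}$; even in product logic the identity fails at $x = \bot$. What divisibility really yields is $x \otimes \bigl(x \Rightarrow (x \otimes y)\bigr) = x \otimes y$, which determines $x \Rightarrow (x \otimes y)$ only up to cancellation by $x$. Consequently your final step --- and the proposition itself --- holds, for instance, in product logic at points where the $H$-extent is nonzero, but fails at idempotent truth values below $\top$ (G\"{o}del logic is listed by the paper as a basic logic, so this is inside the stated hypotheses). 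This is a defect inherited from the paper's own toolbox rather than a flaw you introduced, and no proof could close it; but since you stress that divisibility is ``used essentially,'' it is worth stating exactly what is used and where it can break.
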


Note that if \( M(D_1)^\circ \otimes \alpha \otimes M(D_1) = \beta \) and \( M(D_2)^\circ \otimes \beta \otimes M(D_2) = \gamma \), then 
\[ 
M(D_1 \otimes D_2)^\circ \otimes \alpha \otimes M(D_1 \otimes D_2) = M(D_2)^\circ \otimes M(D_1)^\circ \otimes \alpha \otimes M(D_1) \otimes M(D_2) = \gamma.
\]
Within a basic logic \( \Omega \), this strategy can be extended to the colimit of configurations:
\[ 
\text{colim } M(D_1 \otimes D_2) = [\cdot = \cdot]_{\otimes H} \Rightarrow (\text{colim } M(D_1) \otimes \text{colim } M(D_2)),
\]
where \( H = M(i(D_2) \cap o(D_1)) \).

\chapter{Library descriptive power}\label{descritive power}

Let's now define the structure for the category of models for a library, \( Mod(L^\ast,\equiv_l,\equiv_w) \). The objects in this category are models 
\[ M:\G(L^\ast)\rightarrow Set(\Omega), \]
and the morphisms are natural transformations. In this context, taking \( D = \G(L^\ast) \) as the graphic library structure, a natural transformation from model \( M_1 \) to model \( M_2 \) is represented by a pair of epi multi-morphisms \((f,g):M_1\Rightarrow M_2\), satisfying the condition
\[ f \otimes M_2(D) = M_1(D) \otimes g. \]
By the definition of an epi multi-morphism, it follows that 
\[ f^\circ \otimes M_1(i(D)) \otimes f = M_2(i(D)) \]
and 
\[ g^\circ \otimes M_1(o(D)) \otimes g = M_2(o(D)). \]

\begin{figure}[h]
\[
\xymatrix{
M_1(i(D)) \ar@_{->}[r]_f \ar@_{->}[d]_{M_1(D)} & M_2(i(D)) \ar@_{->}[d]_{M_2(D)} \\
M_1(o(D)) \ar@_{->}[r]_g & M_2(o(D))
}
\]
\caption{Natural transformation \((f,g)\).}\label{naturaltransf}
\end{figure}

We use the composition of multi-morphisms to define the composition of natural transformations. Given two natural transformations \((f_1,g_1):M_1\Rightarrow M_2\) and \((f_2,g_2):M_2\Rightarrow M_3\), we define
\[
(f_1,g_1) \otimes (f_2,g_2) = (f_1 \otimes f_2, g_1 \otimes g_2).
\]
A model \( M \) has the identity in \( Mod(L^\ast,\equiv_l,\equiv_w) \) given by the natural transformation 
\[ (1_{i(M(D))}, 1_{o(M(D))}), \]
where both epi multi-morphisms are defined using the identity relation in \( \Omega \).

The usual limits and colimits in \( Mod(L^\ast,\equiv_l,\equiv_w) \) are computed based on those made in the category of \( \Omega \)-sets and epi multi-morphisms, denoted as \( epi\text{-}Set(\Omega) \). In this context, the product (in the usual sense) exists for two \( \Omega \)-sets \( \alpha:A \) and \( \beta:B \) if
\[
\bigvee_{a,a'}\alpha(a,a') = \top \text{ and } \bigvee_{b,b'}\beta(b,b') = \top,
\]
and it is defined by the object \( \alpha\otimes\beta:A\times B \) along with the usual projections \( \pi_A:A\times A\rightarrow A \) and \( \pi_B:B\times B\rightarrow B \) in \( Set \). This product is codified as a function in \( Set(\Omega) \). Notably, for instance, \( \pi_A \) is an epi multi-morphism. This is evident since \( \pi_A^\circ\otimes(\alpha\otimes\beta)\otimes \pi_A \) defines the multi-diagram shown in Figure \ref{product}.

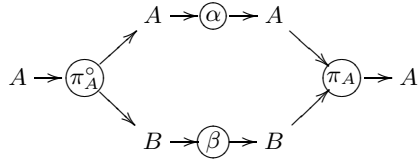
\begin{figure}[h]
\[
\small
\xymatrix @=10pt {
&& A \ar[r] & *+[o][F-]{\alpha} \ar[r] & A \ar[dr] && \\
A \ar[r] & *+[o][F-]{\pi_A^\circ} \ar[ur] \ar[dr] &&&& *+[o][F-]{\pi_A} \ar[r] & A \\
&& B \ar[r] & *+[o][F-]{\beta} \ar[r] & B \ar[ur] &&
}
\]
\caption{Multi-morphism \( \pi_A^\circ\otimes(\alpha\otimes\beta)\otimes \pi_A \).}\label{product}
\end{figure}

By composition, we have
\begin{align*}
&\bigvee_{b'}\bigvee_{b''}\bigvee_{a'}\bigvee_{a''} \pi_A^\circ(a,b',a')\otimes \alpha(a',a'')\otimes \beta(b',b'')\otimes\pi_A(a'',b'',a''') \\
&= \bigvee_{b'}\bigvee_{a'}\pi_A^\circ(a,b',a') \otimes\bigvee_{a''}\bigvee_{b''}(\alpha(a',a'')\otimes\beta(b',b'')\otimes\pi_A(a'',b'',a''') ) \\
&= \bigvee_{b'}\pi_A^\circ(a,b',a) \otimes\bigvee_{b''}(\alpha(a,a''')\otimes\beta(b',b'')\otimes\pi_A(a''',b'',a''') ) \\
&= \bigvee_{b'}\bigvee_{b''}(\alpha(a,a''')\otimes\beta(b',b'')) \\
&= \alpha(a,a''')\otimes\bigvee_{b'}\bigvee_{b''}\beta(b',b'') \\
&= \alpha(a,a''').
\end{align*}

The category \( epi\text{-}Set(\Omega) \) does not have an initial object. However, for an \( \Omega \)-object \( \alpha:A \) with a unique factorization \( \alpha = f^\circ\otimes f \), the multi-morphism \( f:\emptyset\rightharpoonup A \) is the only epi multi-morphism, as \( f^\circ\otimes f = \alpha \). On the other hand, there is only one epi multi-morphism to \( \emptyset:\emptyset \), which is given by the empty relation \( \emptyset:A\rightharpoonup \emptyset \), since \( \emptyset^\circ\otimes\alpha\otimes\emptyset = \emptyset \).

Given a pair of epi multi-morphisms \( f,g:A\rightarrow B \) from \( \alpha:A \) to \( \beta:B \), the equalizer for \( f \) and \( g \) exists and is given by \( \gamma:A \) if and only if
\[
(f\otimes g)^\circ\otimes\gamma\otimes (f\otimes g) = \beta.
\]
For every pair of epi multi-morphisms \( f \) and \( g \), there exists a coequalizer, which is given by \( \gamma:B \) such that
\[
\gamma = (f\otimes g)^\circ\otimes\alpha\otimes (f\otimes g).
\]
Furthermore, every family \( (f_i:\alpha_i\rightharpoonup \beta) \) of epi multi-morphisms has wild pushouts given by the product \( \otimes_i\alpha_i:\prod_iA_i \) and its projections \( \pi_j:\prod_iA_i\rightarrow A_j \). This is evident from the relations \( \pi_j^\circ\otimes_i\alpha_i\pi_j = \alpha_j \) and \( f_j^\circ\otimes\pi_j^\circ\otimes_i\alpha_i\pi_j\otimes f_j = f_j^\circ\otimes\alpha_j\otimes f_j = \beta \).

Since \( epi\text{-}Set(\Omega) \) possesses coequalizers and wild pushouts, it also has connected colimits (see \cite{Borceux94}). By the definition of natural transformation in \( Mod(L^\ast,\equiv_l,\equiv_w) \), we have:

\begin{prop}
The category \(Mod(L^\ast,\equiv_l,\equiv_w)\) possesses connected colimits in the conventional sense.
\end{prop}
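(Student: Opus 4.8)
The plan is to reduce the statement to the colimit structure of $epi\text{-}Set(\Omega)$, which the preceding discussion has already shown to possess coequalizers and wild pushouts, and therefore all connected colimits. The category $Mod(L^\ast,\equiv_l,\equiv_w)$ is built over $epi\text{-}Set(\Omega)$: writing $D = \G(L^\ast)$, each model $M$ determines the pair of $\Omega$-sets $(M(i(D)), M(o(D)))$ together with its structural multi-morphism $M(D): M(i(D)) \rightharpoonup M(o(D))$, and a morphism in $Mod$ is exactly a commuting square $(f,g)$ of epi multi-morphisms with $f \otimes M_2(D) = M_1(D) \otimes g$. I would therefore treat $Mod$ as a category of structured arrows over $epi\text{-}Set(\Omega)$ and transport connected colimits along the forgetful assignment $U: M \mapsto (M(i(D)), M(o(D)))$, rather than attempting a naive node-by-node construction (which would fail, since connected colimits do not commute with the tensor $\otimes$ of $\Omega$-sets used to interpret words).

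Given a connected diagram $(M_k)_{k \in \mathcal{K}}$ in $Mod$ with transition maps $(f_{kl}, g_{kl})$, I would first form the connected colimits $I = \mathrm{colim}_k\, M_k(i(D))$ and $O = \mathrm{colim}_k\, M_k(o(D))$ in $epi\text{-}Set(\Omega)$, with cocone legs $\sigma_k$ and $\tau_k$. Using the commuting squares $f_{kl} \otimes M_l(D) = M_k(D) \otimes g_{kl}$, the family $\{M_k(D) \otimes \tau_k\}_k$ is compatible along the diagram, so by the universal property of $I$ there is an induced structural multi-morphism $M_\infty: I \rightharpoonup O$ satisfying $\sigma_k \otimes M_\infty = M_k(D) \otimes \tau_k$ for every $k$. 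I would then verify that $(I, O, M_\infty)$ underlies a genuine model by reconstructing its values on atomic components (invoking the universal property of $L^\ast$ established earlier) and checking the model axioms survive the colimit — in particular the epi/Chu condition $M_\infty(r)^\circ \otimes M_\infty(w) \otimes M_\infty(r) = M_\infty(w')$ and the compatibility with component gluing $M_\infty(r\otimes r') = M_\infty(r) \otimes M_\infty(r')$ — using that the legs $\sigma_k, \tau_k$ are epi multi-morphisms and that each $M_k$ already satisfies these identities, so that the defining equalities transfer through the cocone.

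Finally I would confirm the universal property: for any cocone $(h_k, j_k): M_k \Rightarrow N$ in $Mod$ I would use the universal properties of $I$ and $O$ to obtain the unique mediating pair $(h, j)$, and check that the square relating $M_\infty$ and $N(D)$ commutes, which follows again by uniqueness of the induced map out of the colimit $I$.

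The step I expect to be the main obstacle is the induction of the structural multi-morphism $M_\infty$ together with the verification that it assembles into a bona fide model, rather than a mere pair of $\Omega$-sets with a multi-morphism between them. This is exactly where connectedness of $\mathcal{K}$ is indispensable, mirroring the reason $epi\text{-}Set(\Omega)$ admits only connected (and not arbitrary) colimits: the absence of an initial object means a disconnected or empty indexing shape offers no coherent way to glue the structural maps, whereas over a connected shape every two components of the diagram are linked by a zigzag of transition morphisms, and this zigzag is precisely what forces the value of $M_\infty$ to be simultaneously well defined and unique. I would present this coherence argument as the technical heart of the proof and relegate the remaining axiom checks to routine diagram chases using the identities $(f \otimes g)^\circ = g^\circ \otimes f^\circ$ and the epi conditions for $\sigma_k, \tau_k$.
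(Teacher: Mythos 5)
Your proposal takes essentially the same route as the paper: the paper likewise reduces the statement to the fact that \(epi\text{-}Set(\Omega)\) has coequalizers and wild pushouts (hence connected colimits, citing Borceux), and then transfers these to \(Mod(L^\ast,\equiv_l,\equiv_w)\) through the definition of natural transformations as commuting squares \((f,g)\) of epi multi-morphisms relating \(M(i(D))\) and \(M(o(D))\). Your explicit construction of the induced structural multi-morphism \(M_\infty\), the verification of the model axioms, and the coherence argument via connectedness simply spell out in detail what the paper compresses into a one-line appeal to that definition.
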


Given that \(epi\text{-}Set(\Omega)\) and \(Mod(L^\ast,\equiv_l,\equiv_w)\) have connected colimits, they also have directed colimits. Specifically, a colimit exists for diagrams \(D:(I,\leq)\rightarrow epi\text{-}Set(\Omega)\) where \( (I,\leq) \) is a poset. If the vertices are models, then its colimit is a model (refer to the definition in \cite{RosickyAdamek94}).

Following \cite{RosickyAdamek94}, a category is said to be accessible if it has directed colimits and possesses a set \( \mathcal{A} \) of presentable objects, such that every object is a direct colimit of objects from \( \mathcal{A} \). Accessible categories can be characterized by the following proposition:

\begin{prop}\cite{RosickyAdamek94}
Every small category with split idempotents is accessible.
\end{prop}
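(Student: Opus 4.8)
The plan is to produce a single regular cardinal $\lambda$ for which the given small category $\K$ is $\lambda$-accessible. Recall that this asks for three things: that $\K$ admit $\lambda$-filtered (equivalently $\lambda$-directed) colimits, that there be a \emph{set} $\mathcal{A}$ of $\lambda$-presentable objects, and that every object be a $\lambda$-filtered colimit of objects from $\mathcal{A}$. Since $\K$ is small, the third clause is free once the first two hold: I may take $\mathcal{A}=\mathrm{Ob}\,\K$, because each object is then (trivially) its own colimit over the terminal diagram, which is $\lambda$-filtered. So it suffices to choose $\lambda$ and establish (i) existence of $\lambda$-filtered colimits in $\K$ and (ii) $\lambda$-presentability of every object. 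I would fix $\lambda=|\mathrm{mor}\,\K|^{+}$: being a successor cardinal it is regular, and it strictly exceeds the cardinality of every hom-set and of $\mathrm{Ob}\,\K$.

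The heart of the argument, and the step I expect to be the main obstacle, is (i). Let $D\colon\mathcal I\to\K$ be a $\lambda$-filtered diagram. I would prove the key fact that, because $\K$ has fewer than $\lambda$ morphisms, $D$ is \emph{essentially constant up to a split idempotent}: its colimit is obtained by splitting a suitable idempotent on one of the vertices $D(i_0)$. The cleanest route is through the Yoneda embedding $Y\colon\K\to\widehat{\K}=[\K^{op},\mathbf{Set}]$. In the cocomplete presheaf category the colimit $F=\operatorname{colim}_i YD(i)$ exists, and $Y$ is full and faithful; thus it is enough to show that $F$ is representable, say $F\cong YC$, for then $C$ together with the transported cocone is the colimit in $\K$. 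Evaluating at an object, $F(A)=\operatorname{colim}_i\K(A,D(i))$ is a $\lambda$-filtered colimit of sets each of size $<\lambda$. I would argue that the identity $1_F\in\operatorname{Hom}(F,F)=\lim_i\operatorname{colim}_k\K(D(i),D(k))$ factors through a representable $YD(i_0)$ — equivalently, that there is a cocone $D\Rightarrow\Delta D(i_0)$ whose components agree cofinally with the structure maps. This exhibits $F$ as a retract of $YD(i_0)$, hence as the splitting of an idempotent $e=Y(\varepsilon)$ with $\varepsilon\colon D(i_0)\to D(i_0)$ idempotent in $\K$. This is exactly where the hypothesis is used: since $\K$ has split idempotents, $\varepsilon$ splits through an object $C\in\K$, and (splittings being absolute, so preserved by $Y$) $YC\cong F$.

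The genuine difficulty is the factorization/retraction claim. The obstruction is that the limit $\lim_i$ ranges over the possibly large index category $\mathcal I$, so one cannot directly invoke the commutation of $\lambda$-small limits with $\lambda$-filtered colimits. The point I would exploit is that, although $\mathcal I$ is large, the composite $D$ takes fewer than $\lambda$ distinct values, both on objects and on morphisms; grouping indices by their value in $\K$ and using $\lambda$-filteredness (every family of $<\lambda$ objects admits a cocone, and every family of $<\lambda$ parallel arrows is jointly coequalized by a single arrow of $\mathcal I$) lets me select one vertex $D(i_0)$ and assemble the required cocone. Making this counting precise is the technical core of the proof.

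Granting (i), clause (ii) is then routine, and I would dispatch it using the shape of the colimit just found. Write the colimit cocone as $c_i\colon D(i)\to C$ with $c_{i_0}$ equal to the retraction $r\colon D(i_0)\to C$ of the splitting, and let $s$ be its section. For any $X$, the comparison map $\operatorname{colim}_i\K(X,D(i))\to\K(X,C)$ is surjective because any $g\colon X\to C$ satisfies $g=r\,(s\,g)=c_{i_0}\,(s\,g)$, and it is injective by the standard $\lambda$-filtered identification of morphisms, valid since the hom-sets have size $<\lambda$. Hence $\K(X,-)$ preserves $\lambda$-filtered colimits, i.e.\ every object is $\lambda$-presentable. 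Taking $\mathcal{A}=\mathrm{Ob}\,\K$ completes the verification that $\K$ is $\lambda$-accessible, hence accessible. I would close by noting that the hypothesis is also necessary, since every accessible category is known to be Cauchy complete; thus having split idempotents is precisely the right condition for a small category to be accessible.
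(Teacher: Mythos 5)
First, a remark on the comparison itself: the paper does not prove this proposition at all --- it is imported verbatim from Ad\'amek--Rosick\'y \cite{RosickyAdamek94} --- so your attempt can only be measured against the standard proof in that reference. Your architecture is exactly that standard one: take $\lambda$ regular with $\lambda>|\mathrm{mor}\,\K|$ (modulo the harmless convention that $\lambda$ must be infinite, so take $\lambda=\aleph_0$ when $\K$ is finite), reduce the existence of $\lambda$-filtered colimits to showing that $F=\operatorname{colim}_i YD(i)$ is a retract of some representable $YD(i_0)$, split the resulting idempotent $\varepsilon$ in $\K$ (splittings are absolute colimits, so $Y$ carries the splitting in $\K$ to a splitting of $Y(\varepsilon)$, whence $F$ is representable), and then note that $\lambda$-presentability of every object comes essentially for free, since colimits of presheaves are computed pointwise. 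All of that, including your closing observation that split idempotents are also necessary, is correct.

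The gap is that the one step carrying all the mathematical content --- the retraction claim --- is never proved, and the counting sketch you offer in its place would not succeed as stated. Two concrete problems. First, ``grouping indices by their value in $\K$'' produces classes that may each contain $\lambda$ or more objects of $\mathcal{I}$, and $\lambda$-filteredness only supplies cocones for subcategories of $\mathcal{I}$ with fewer than $\lambda$ morphisms, so no cocone over such a class (let alone over all of $\mathcal{I}$) is obtained this way; similarly, joint coequalization of $<\lambda$ parallel arrows never touches the actual difficulty. Second, and more fundamentally, the difficulty is one of \emph{direction}: a cocone $D\Rightarrow\Delta D(i_0)$ requires morphisms of $\K$ pointing back into $D(i_0)$, coherently for every object of $\mathcal{I}$, whereas filteredness only ever lets you move forward in $\mathcal{I}$. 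Bridging this requires a stabilization argument that is absent from your proposal; for instance, one first shows there is an arrow $u_0\colon i_0\to j_0$ of $\mathcal{I}$ whose value is \emph{recurrent}, i.e.\ for every $v\colon j_0\to k$ there is $w\colon k\to l$ with $D(wvu_0)=D(u_0)$ --- otherwise a transfinite recursion of length $\lambda$, using $\lambda$-filteredness to pass limit stages, manufactures $\lambda$ pairwise distinct morphisms with source $D(i_0)$, contradicting $|\mathrm{mor}\,\K|<\lambda$. Recurrence forces every object reached by the recurrence to equal the single object $C=D(j_0)$, and a further (genuinely delicate) pigeonhole and recursion along the resulting cofinal data is what finally produces the idempotent $\varepsilon\in\K(C,C)$ and the cocone. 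Until an argument of this kind is supplied, the Yoneda reformulation has only renamed the problem: ``$1_F$ factors through $YD(i_0)$'' \emph{is} the statement to be proved, not a tool for proving it.
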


Here, a category has split idempotents if for every morphism \( f:A\rightarrow A \) with \( f.f=f \), there exists a factorization \( f=i.p \) where \( p.i=id_A \).

\begin{exam}
If \( \Omega \) is an ML-algebra with at least three logic values \( \bot<\lambda<\top \), the multi-morphism 
\[
f=\left[
     \begin{array}{cc}
       \top   & \alpha \\
       \alpha & \top \\
     \end{array}
   \right]
\]
is an epi multi-morphism. While it is idempotent, it is non-splitable. This is because for every factorization \( f=i\otimes p \), \( p\otimes i \neq id \) as per Proposition \ref{prop:comprestriction}.
\end{exam}

The phenomena illustrated by this example are the norm in \( epi\text{-}Set(\Omega) \) and \( Mod(L^\ast,\equiv_l,\equiv_w) \). Given that most idempotents are not split idempotents, we have:

\begin{prop}
Let \( \Omega \) be an ML-algebra with more than two logic values. Then, the categories of models of libraries \( Mod(L^\ast,\equiv_l,\equiv_w) \) are not accessible.
\end{prop}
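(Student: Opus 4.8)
The plan is to argue by contradiction, leaning on the standard fact that every accessible category is idempotent-complete (Cauchy-complete): in an accessible category every idempotent splits. This follows from the representation $\mathcal{K}\simeq\mathrm{Ind}_\lambda(\mathcal{A})$ of an accessible category as the free $\lambda$-filtered cocompletion of a small category, together with closure of such cocompletions under retracts; I would cite it from \cite{RosickyAdamek94}. (Note that the proposition quoted in the text, that a \emph{small} category with split idempotents is accessible, does not apply directly, since $Mod(L^\ast,\equiv_l,\equiv_w)$ is a large category; the relevant implication here is the necessary condition accessibility $\Rightarrow$ idempotent-completeness.) Thus it suffices to exhibit one model $M$ carrying an idempotent endomorphism that does not split.

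First I would isolate the non-split idempotent of the preceding example as a piece of structure in $epi\text{-}Set(\Omega)$. On the two-point set $A=\{0,1\}$ take the $\Omega$-set $\gamma:A$ whose similarity is $\gamma=\left[\begin{smallmatrix}\top&\alpha\\\alpha&\top\end{smallmatrix}\right]$ with $\bot<\alpha<\top$, a choice available precisely because $|\Omega|>2$; symmetry is immediate and transitivity holds since $\alpha\otimes\alpha\le\alpha\le\top$. Reading this similarity itself as a multi-morphism $f:=\gamma$, one checks it is idempotent ($f\otimes f=f$, because the diagonal entry is $\top\vee(\alpha\otimes\alpha)=\top$ and the off-diagonal entry is $\alpha\vee\alpha=\alpha$) and an epimorphism of $\gamma:A$ (since $f^\circ\otimes\gamma\otimes f=\gamma\otimes\gamma\otimes\gamma=\gamma$), yet it is distinct from the crisp identity $1_A$. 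By Proposition \ref{prop:comprestriction}, since $1_A\le f$ and $\alpha\otimes\alpha\le\alpha$, any attempted splitting $f=i\otimes p$ with $p\otimes i=1_A$ would force $f\otimes f=1_A$ and hence $f=1_A$, contradicting $\alpha<\top$; so $f$ does not split in $epi\text{-}Set(\Omega)$.

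Next I would transfer this to $Mod(L^\ast,\equiv_l,\equiv_w)$. Choose a (dataset) library $L$ with a single type $s$ and a model $M$ with $M(s)=\gamma:A$ and $M(=_s)=\gamma$, arranged so that $M(i(D))=M(o(D))=\gamma:A$ and $M(D)$ is $\gamma$ (or the crisp identity) on this fragment. Then the pair $(f,f)=(\gamma,\gamma)$ satisfies the naturality condition $f\otimes M(D)=M(D)\otimes f$ trivially, so it is an endomorphism $M\Rightarrow M$; it is idempotent because composition of natural transformations is computed componentwise, $(f,f)\otimes(f,f)=(f\otimes f,f\otimes f)=(f,f)$. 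Because the identity of $M$ is $(1_{i(M(D))},1_{o(M(D))})$ with crisp identities and composition is the componentwise $\otimes$, the assignment $(f,g)\mapsto f$ is a functor $Mod(L^\ast,\equiv_l,\equiv_w)\to epi\text{-}Set(\Omega)$ preserving identities and composition. Consequently a splitting of $(f,f)$ in $Mod$ would descend to a splitting of $f=\gamma$ in $epi\text{-}Set(\Omega)$, which we have just excluded. Hence $(f,f)$ does not split, $Mod(L^\ast,\equiv_l,\equiv_w)$ is not idempotent-complete, and therefore not accessible.

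The main obstacle I anticipate is the realization step: one must produce an honest model $M$ of an actual library $(L^\ast,\equiv_l,\equiv_w)$ in which $\gamma:A$ occurs as $M(i(D))$ and $(f,f)$ is a genuine natural endomorphism, checking all six model axioms (in particular that $f$ is an \emph{epi} multi-morphism, that $f^\circ\otimes\gamma\otimes f=\gamma$, and that the naturality square closes) for a library with a nonempty type realized by at least two globally present elements. A secondary point requiring care is the reading of the statement: the construction establishes non-accessibility for every library admitting such a two-element interpretation — the generic case — while degenerate libraries whose types are all interpreted by at most one point collapse to $\mathbf{Set}$ and are excluded exactly by the hypothesis $|\Omega|>2$, which is what guarantees the value $\alpha$ strictly between $\bot$ and $\top$.
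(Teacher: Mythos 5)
Your route is essentially the paper's own: the same $2\times 2$ idempotent $f=\left[\begin{smallmatrix}\top&\alpha\\ \alpha&\top\end{smallmatrix}\right]$, the same endpoint (a non-split idempotent forces non-accessibility). In two respects you are in fact more careful than the paper: (i) you invoke the implication that is actually needed — accessible $\Rightarrow$ idempotents split — and correctly observe that the proposition quoted in the text (small $+$ split idempotents $\Rightarrow$ accessible) points the wrong way; and (ii) you make the transfer into $Mod(L^\ast,\equiv_l,\equiv_w)$ explicit, via the endomorphism $(f,f)$ of a one-type model and the componentwise projection functor back to $epi\text{-}Set(\Omega)$, where the paper merely asserts that "the phenomena illustrated by this example are the norm." Your closing scope remark (which libraries the argument actually covers) also flags a real imprecision in the statement, though your claim that degenerate libraries are "excluded by $|\Omega|>2$" is not right — that hypothesis supplies the middle truth value $\alpha$, nothing more.

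There is, however, a genuine gap at the crux: your proof that $f$ does not split is invalid. A splitting consists of $u:A\rightharpoonup B$, $v:B\rightharpoonup A$ with $u\otimes v=f$ and $v\otimes u=1_B$, the crisp identity of the retract $B$. From these equations the only thing you can derive by composition is $f\otimes f=u\otimes(v\otimes u)\otimes v=u\otimes v=f$, which is exactly the idempotency you started with; nothing whatsoever forces $f\otimes f=1_A$, so Proposition \ref{prop:comprestriction} is never triggered and no contradiction appears. (The paper's own one-line appeal to Proposition \ref{prop:comprestriction} has the same defect, but it at least asserts the correct statement to be proved, namely $v\otimes u\neq id$ for every factorization; your intermediate step is a non sequitur.) What is needed is a direct argument that no such pair $(u,v)$ exists. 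When $\top$ is $\vee$-irreducible in $\Omega$ (e.g.\ $\Omega$ a chain such as $[0,1]$) this can be done: each diagonal equation $(v\otimes u)(b,b)=\top$ is a binary join over $a\in A$, hence some term equals $\top$, and since $x\otimes y=\top$ forces $x=y=\top$ in an ML-algebra, every $b\in B$ is crisply attached to some $a_b\in A$; the off-diagonal equations $(v\otimes u)(b,b')=\bot$ then make $b\mapsto a_b$ injective, so $|B|\leq 2$, and the cases $|B|=0,1,2$ each yield $f(0,1)\in\{\bot,\top\}$, contradicting $\bot<\alpha<\top$. For ML-algebras in which $\top$ is a join of strictly smaller elements (e.g.\ a Boolean algebra with two atoms) even this argument needs further refinement, so the proposition in its stated generality requires still more work. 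Until this non-splitting lemma is actually proved, your proof — like the paper's — is incomplete at its central step.
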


This implies that we cannot employ Ehresmann sketches to specify the category of models of a library \cite{RosickyAdamek94}. In other words, model categories cannot be axiomatized by basic theories in first-order logic.

\chapter{Sign Systems and Semiotics}\label{Signs Semiotics}

Let's now determine the fundamental structure required in a library to define useful fuzzy structures.

\begin{exam}[Signatures as Libraries]
Let \( \Sigma \) be a signature. As illustrated in Example \ref{ex:signature}, a signature can be viewed as a library. The set \( T(\Sigma) \) of terms defined by \( \Sigma \) is provided, in \cite{ParMakkai89}, as the least set that satisfies:
\begin{enumerate}
  \item Each variable \( x \) belongs to \( T(\Sigma) \), and if it has type \( b \), we denote it as \( x:b \).
  \item If \( f \in \Sigma \) with \( ar(f)=(<a_i>_I,b) \) and \( <t_i>_I \) is a list of terms from \( T(\Sigma) \) such that \( t_i:a_i \) for every \( i\in I \), then \( f(t_i)_I \) is in \( T(\Sigma) \). This term has type \( b \), and in this case, we denote it as \( f(t_i)_I:b \).
\end{enumerate}
\end{exam}

A term is considered closed if it does not contain any variables. We can establish the isomorphism
\[ T(\Sigma) \cong Lang(L(\Sigma ,S)) \]
if and only if we impose the existence of:
\begin{enumerate}
  \item Special symbols \( c \) and \( v \) in \( S \), as described in Example \ref{ex:signature};
  \item Diagonal components \( \lhd \) in \( |L(\Sigma ,S)| \) such that \( i(\lhd)=a \) and \( o(\lhd)=<a>_I \) in \( L(\Sigma) \). We require one such component for each alphabet symbol \( a \) in \( |L(\Sigma ,S)| \) and each "class" of chain equivalences \( I \). These components represent dependencies in the term structure, which arise from using the same variable multiple times in the term definition.
\end{enumerate}

If \( t \) is a term in \( T(\Sigma) \), then it can be represented as a multi-graph homomorphism
\[ t:\G\rightarrow \G(L(\Sigma,S)). \]
This graph homomorphism is typically referred to as the parsing graph for \( t \). If the variables involved in the definition of a term are distinct, then the diagram \( t \) has the shape of a tree. In Figure \ref{terms}, we present the allowed configurations defining terms, such as
\[ f(x:b,g(y:c,z:d):e):a \]
and
\[ f(x:b,g(y:c,x:b):e):a. \]

\begin{figure}[h]
\[
\small
\xymatrix @=10pt {
\ar[rrd]_b&  & & \\
\ar[rd]_c&   &*+[o][F-]{f}\ar[rr]_a  & &\\
&*+[o][F-]{g}\ar[ru]_e&  & \\
\ar[ru]_d&   & &}
\quad\quad
\xymatrix @=10pt {
\\
\ar[rrd]^c&&   &*+[o][F-]{f}\ar[rr]_a  & &\\
\ar[r]_b&*+[o][F-]{\lhd}\ar[rru]^b\ar[r]_b&*+[o][F-]{g}\ar[ru]_e&  & \\
}
\]
\caption{Multi-morphism $f(x:b,g(y:c,z:d):e):a$ and
$f(x:b,g(y:c,x:b):e):a$.}\label{terms}
\end{figure}
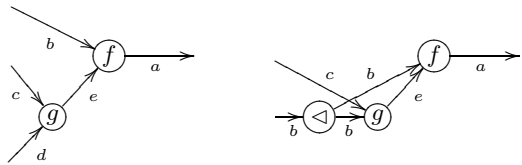

Let \( \Sigma \) be a signature equipped with a special collection of functional symbols denoted by \( =_a \), one for each data type symbol \( a \) used in the signature. Here, we define the arity of \( =_a \) as
\[ ar(=_a)=(<a,a>,l). \]
In the arity of \( =_a \), the symbol \( l \) is to be understood as an identifier for the set of truth-values. This symbol \( l \) should be associated with two constant operators \( T \) and \( F \), both with arity \( (<>,l) \), which are used to identify the true and false values within the associated logical framework. For practical reasons, when dealing with two terms of type \( a \), \( t:a \) and \( s:a \), instead of expressing the term \( =_a(t,s)=T \), we employ the conventional infix notation and write \( t=_a s \), referring to it as an equation of type \( a \). In a signature with these characteristics, we refer to every functional symbol \( f \) with arity of the form
\[ ar(f)=(<a_i>_I,l) \]
as a relational symbol. In this context, the symbol \( =_a \) is relational, and a signature \( \Sigma \) possessing this sort of structure is said to have a logical framework.

A formula \( f \) within a signature \( \Sigma \) with a logical structure is a term that is represented by a multi-graph homomorphism. The output of this homomorphism is a relation, denoted as
\[ o(f)=<l> \].

The example mentioned above necessitates the inclusion of a special symbol "$l$" in \( \Sigma \), along with the presence of specific component labels "=", and "$\lhd$", each having predefined interpretations. Such requirements are commonly observed in various other examples. For these types of labels, we will impose restrictions on the language model structures by assigning interpretations to certain signs and structures that are definable within the library. We refer to these structured systems defined by multi-graphs as specification systems.

\begin{defn}
A \emph{specification system} \( S \), utilizing a library \( L \) (or a \emph{sign system} \( S \) using \( L \)), is defined as a structure \( S = (L,\E,\U,co\U) \) where:
\begin{enumerate}
  \item \( \E \subset \text{Lang}(L) \) is a set of \textbf{finite diagrams}, interpreted as a total multi-morphism (refer to Definition \ref{total}),
  \item \( \U \) is a set of tuples \( (f,D,i(D),o(D)) \) where \( f \) is a component and \( D \) is a \textbf{finite configuration}. Here, \( f \) is interpreted as the multi-morphism defined by the limit of \( D \), with \( i(D) \) and \( o(D) \) serving as the input and output vertices, respectively (refer to Definition \ref{def:word interpret}), and
  \item \( co\U \) is a set of tuples \( (f,D,i(D),o(D)) \) where \( f \) is a component and \( D \) is a \textbf{finite configuration}. In this case, \( f \) is interpreted as the multi-morphism defined by the colimit of \( D \) from sign \( i(D) \) to \( o(D) \) (refer to Definition \ref{def:colim interpret}).
\end{enumerate}
\end{defn}

Within a specification system \( S = (L,\E,\U,co\U) \), while the set \( \E \) defines the structural properties to be preserved by its models, the sets \( \U \) and \( co\U \) impose constraints on the structure for sign interpretations.

Expanding the definition of a model from small Ehresmann sketches to interpretations of sign systems in \( Set(\Omega) \), we introduce:

\begin{defn}[Model for a specification system]\label{Modelspec}
A model \( M \) in \( Mod(L^\ast,\equiv_l,\equiv_w) \), for a library \( L \), is a model for the specification system \( S = (L,\E,\U,co\U) \) if:
\begin{enumerate}
  \item For every diagram \( D \in \E \), \( M(D) \) is a total multi-morphism.
  \item For every tuple \( (f,D,i(D),o(D)) \in \U \), \( M(f) \) is the multi-morphism defined by \( Lim\;MD \) from \( M(i(D)) \) to \( M(o(D)) \).
  \item For every tuple \( (f,D,i(D),o(D)) \in co\U \), \( M(f) \) is the multi-morphism defined by \( coLim\;MD \) from \( M(i(D)) \) to \( M(o(D)) \).
\end{enumerate}
\end{defn}

The category defined by models for a specification system in \( Set(\Omega) \), along with natural transformations between interpretations, is denoted by \( Mod(S) \). We refer to \( (\E,\U,co\U) \) as \emph{the sketch structure of} \( S \). By definition, the category \( Mod(S) \) is a full subcategory of \( Mod(L^\ast,\equiv_l,\equiv_w) \). Additionally, since the category of library models does not have split idempotent relations evaluated in \( \Omega \):

\begin{prop}
Let \( \Omega \) be a non-trivial ML-algebra with more than two truth values. Given a specification system \( S \), the category \( Mod(S) \) is not an accessible category.
\end{prop}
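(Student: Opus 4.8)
The plan is to argue exactly as in the preceding proposition asserting that \( Mod(L^\ast,\equiv_l,\equiv_w) \) is not accessible, reducing non-accessibility to the presence of a non-split idempotent. The tool is the standard fact from \cite{RosickyAdamek94} that every accessible category is Cauchy (idempotent) complete: in an accessible category every idempotent endomorphism \( e = e\otimes e \) admits a splitting \( e = \iota\otimes\pi \) with \( \pi\otimes\iota = \mathrm{id} \). Hence it suffices to exhibit a single model \( M\in Mod(S) \) and an idempotent natural transformation \( e:M\Rightarrow M \) that does not split \emph{inside} \( Mod(S) \). Because \( Mod(S) \) is a full subcategory of \( Mod(L^\ast,\equiv_l,\equiv_w) \), the morphisms and their composition are inherited verbatim, so the task is to install the non-split idempotent while keeping both its carrier model and any retract it would force compatible with the sketch structure \( (\E,\U,co\U) \).

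For the idempotent I would reuse the witness from the earlier example. Since \( |\Omega|>2 \), choose \( \alpha \) with \( \bot<\alpha<\top \) and let \( A=\{0,1\} \) carry the similarity \( f=\big[\begin{smallmatrix}\top&\alpha\\\alpha&\top\end{smallmatrix}\big] \). As noted there, \( (A,f) \) is a genuine \( \Omega \)-object with \( 1_A\leq f \), \( f \) is an epi multi-morphism, \( f\otimes f=f \), and by Proposition \ref{prop:comprestriction} (using \( |\Omega|>2 \)) it is non-split: no factorization \( f=i\otimes p \) satisfies \( p\otimes i=1_A \). Starting from any model \( M_0\in Mod(S) \), I would realize \( (A,f) \) on an input node whose interpretation is left free by the constraints and build the natural endomorphism \( (\phi,\psi):M\Rightarrow M \), whose input component \( \phi \) acts as \( f \) on that node's copy and as the identity on every other input vertex, choosing the output component \( \psi \) so that the single naturality identity \( \phi\otimes M(\G(L^\ast)) = M(\G(L^\ast))\otimes\psi \) holds. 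Idempotency \( (\phi,\psi)\otimes(\phi,\psi)=(\phi,\psi) \) is then componentwise and reduces to \( f\otimes f=f \) and \( 1\otimes 1=1 \).

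The non-splitting of \( (\phi,\psi) \) in \( Mod(S) \) is the easy half: a splitting would be a model \( M'\in Mod(S) \) together with natural transformations \( (\pi_0,\pi_1):M\Rightarrow M' \) and \( (\iota_0,\iota_1):M'\Rightarrow M \) satisfying \( (\pi_0,\pi_1)\otimes(\iota_0,\iota_1)=(\phi,\psi) \) and \( (\iota_0,\iota_1)\otimes(\pi_0,\pi_1)=(1,1) \). Since composition of natural transformations is componentwise, the input components give \( \pi_0\otimes\iota_0=\phi \) and \( \iota_0\otimes\pi_0=1 \); restricting to the node carrying \( f \) yields multi-morphisms \( p,i \) in \( epi\text{-}Set(\Omega) \) with \( p\otimes i=f \) and \( i\otimes p=1 \), i.e. a splitting of the idempotent \( f \), contradicting Proposition \ref{prop:comprestriction}. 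Thus \( (\phi,\psi) \) is a non-split idempotent, \( Mod(S) \) fails to be idempotent complete, and therefore is not accessible. I expect the main obstacle to be the middle step: one must verify that \( (A,f) \) can be installed in a model of \( S \) \emph{without} violating any totality requirement from \( \E \) or any limit/colimit requirement from \( \U,co\U \), since such constraints can force crispness on certain nodes. The robust fix is to place \( f \) on a data/input node that is not a distinguished vertex of any diagram in \( \E\cup\U\cup co\U \) (or to pass to a disjoint-union–style extension of \( M_0 \) that leaves every constrained interpretation untouched while adjoining the two-point factor), so that all the morphisms above split as (node part) \( \otimes \) (rest) and the node part alone yields the forbidden splitting of \( f \); \( \psi \) is then chosen to satisfy the lone global naturality square.
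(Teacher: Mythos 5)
Your proposal follows essentially the same route as the paper: the paper's own justification is exactly that \( Mod(S) \) is a full subcategory of \( Mod(L^\ast,\equiv_l,\equiv_w) \), so the non-split idempotent exhibited there (the matrix \( f \) with off-diagonal entry \( \alpha \), non-splitable by Proposition \ref{prop:comprestriction}) obstructs accessibility of \( Mod(S) \) as well. If anything you are more careful than the paper: you invoke the correct direction of the splitting fact (accessible \( \Rightarrow \) idempotent complete, rather than the converse the paper cites) and you explicitly address installing \( f \) in a model compatible with \( (\E,\U,co\U) \), a step the paper passes over in silence.
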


Since the category \( Set \) is a full subcategory of \( Set(\Omega) \), each map can be viewed as a relation. Moreover, due to our ability to encode commutative diagrams using total morphisms and to represent limit cones using the limit structure, as defined in \ref{lim}, we straightforwardly obtain:
\begin{prop}
Every model for an Ehresmann Limit sketch in \( Set \), defined using finite diagrams, can be specified by sign systems in \( Set(\Omega) \) for every non-trivial ML-algebra \( \Omega \).
\end{prop}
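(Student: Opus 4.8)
The plan is to argue by an explicit translation: from a limit sketch $E=(\G_E,\mathcal{D},\mathcal{L})$ — with $\G_E$ its underlying graph, $\mathcal{D}$ a set of finite commutative diagrams, and $\mathcal{L}$ a set of finite limit cones — I would build a sign system $S_E=(L,\E,\U,co\U)$ and show that every $Set$-model $F$ of $E$, transported along the full embedding $Set\hookrightarrow Set(\Omega)$, is a model of $S_E$ in the sense of Definition~\ref{Modelspec}. Concretely, each node of $\G_E$ is declared a type sign and each arrow an atomic component of $L$ (with the evident input/output requirement words), so that $\G(L)$ reproduces $\G_E$ as a multi-graph in which every multi-arrow is an ordinary arrow. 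I put each commutative diagram of $\mathcal{D}$ into $\E$ (to be interpreted as a total multi-morphism, Definition~\ref{total}), each limit cone of $\mathcal{L}$ into $\U$ as a tuple $(f,D,i(D),o(D))$ with $f$ the cone apex and $D$ the cone base, and I take $co\U=\emptyset$. Since all diagrams involved are finite, $\E$ and $\U$ are well-defined and the existence results for limits in $Set(\Omega)$ apply.

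Next I would transport a given $Set$-model $F$ of $E$ into a candidate model $M$ of $S_E$. On objects set $M(v)=1_{F(v)}$, the crisp $\Omega$-set on $F(v)$; on each atomic arrow $a$ set $M(a)=\chi_{F(a)}$, the two-valued multi-morphism induced by the set-map $F(a)$. The earlier universal property (lifting atomic interpretations to circuits) then extends $M$ uniquely to a library model $M^\ast$ on $\G(L^\ast)$, and the facts recorded in the text — that each $\chi_f$ is total, faithful, and an epimorphism in the sense of Definition~\ref{def:isomorphism} — guarantee that $M^\ast$ satisfies the structural clauses (1)--(6) required of a library model. It remains to verify clauses (1)--(3) of Definition~\ref{Modelspec}: that $M^\ast(D)$ is total for $D\in\E$ and that $M^\ast(f)=\mathrm{Lim}\,M^\ast D$ for the cones in $\U$.

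The technical heart — and the step I expect to be the main obstacle — is showing that on crisp data all the $\Omega$-valued machinery collapses to ordinary Boolean computation. The key observation is that the two-element subset $\{\bot,\top\}\subseteq\Omega$ is closed under $\otimes$ and under arbitrary joins and that, on it, these operations coincide with the meet and join of the two-element chain $2=\{\mathrm{false}<\mathrm{true}\}$: indeed $\top\otimes x=x$ by unitality, while $\bot\otimes x=\bot$ because $\otimes$ preserves joins in each argument (so $\bot=(\bigvee\emptyset)\otimes x=\bigvee\emptyset=\bot$), and $\bigvee$ of a family of crisp values is $\top$ exactly when some member is $\top$. Because the embedding feeds only crisp similarities and crisp $\chi_f$'s into the composition formula (Definition~\ref{def:composition}) and the limit formula (Definition~\ref{lim}), every $\bigvee\cdots\otimes\cdots$ appearing there evaluates in $2$, so composition becomes relational composition of graphs of functions and $\mathrm{Lim}\,M^\ast D$ becomes exactly the subset of the product of the $F(v)$ cut out by the diagram equations — that is, the classical $Set$-limit of $F$ restricted to $D$. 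This is precisely the soundness remark that Definition~\ref{lim} coincides with the classical one in the context of classical logic.

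With this reduction in hand the remaining verifications are routine. Totality of $M^\ast(D)$ for $D\in\mathcal{D}$ is, by the definition of commutativity of multi-diagrams, the statement that the limit cut-out agrees with the product after joining out the interior vertices; on crisp data this is exactly the classical commutativity of $F(D)$ in $Set$, which holds because $F$ is a sketch model. Likewise, for a cone in $\mathcal{L}$, the apex $F(f)$ is the genuine $Set$-limit of the base $F(D)$, and by the previous paragraph $\mathrm{Lim}\,M^\ast D$ computes this same set with its crisp similarity, so $M^\ast(f)=\mathrm{Lim}\,M^\ast D$ as required. Hence $M^\ast\in Mod(S_E)$, so $F$ is specified by the sign system $S_E$, and since $E$ and the non-trivial $\Omega$ were arbitrary the proposition follows. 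The only real care needed is the bookkeeping of the crisp-to-Boolean collapse and the verification that the finite, two-valued data never leaves $\{\bot,\top\}$ under the quantale operations.
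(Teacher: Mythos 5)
Your proposal is correct and follows essentially the same route as the paper: the paper's entire justification consists of the crisp embedding of \( Set \) into \( Set(\Omega) \), encoding the sketch's commutative diagrams as elements of \( \E \) interpreted as total multi-morphisms, and representing its limit cones as tuples in \( \U \) interpreted via the limit of Definition~\ref{lim}, together with the earlier remark that this limit coincides with the classical one over the two-element chain \( \{\bot,\top\} \). Your crisp-collapse argument and the explicit verification of the clauses of Definition~\ref{Modelspec} simply spell out what the paper dismisses as ``straightforwardly obtained.''
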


Consequently, every algebraic theory (as defined in \cite{RosickyAdamek94}) has a fuzzy counterpart defined in \( Set(\Omega) \).

Given that we will be working with models of specification systems in the subsequent chapters, we introduce the notion of a semiotic, which is a sign system equipped with a model. This structure aligns with the spirit of Goguen's institution \cite{BurstallGoguen83}, associating syntactic and semantic components to a language. Formally:
\begin{defn}[Semiotic system]
A \emph{semiotic system} is a pair \( (S,M) \) defined by a sign system \( S = (L,\E,\U,co\U) \) and a model \( M \in Mod(S) \).
\end{defn}
We will denote by \( Lang(S) \) the language associated with a sign system \( S = (L,\E,\U,co\U) \) or with a semiotic system \( (S,M) \).

In the realm of information systems, a system specification can be perceived as a database structure, while a semiotic defined based on this structure can be likened to a database state. Every update to the database brings about a change in its state, which consequently results in a semiotic change. This is because the change reflects alterations in the relationships between system attributes as encoded in the database tables. Therefore, an information system can be viewed as a semiotic, as it is typically defined by a database instance or state. Consequently, the information stored within the information system can be queried within the context of the associated semiotic.

Let's explore an example of a semiotic and illustrate how we can query it using limits.

\begin{exam}\cite{HandCohenAdams01}
During the IDA'01 Robot Data Challenge, a series of binary data vectors was generated by the perceptual system of a mobile robot. We hypothesize that the resulting time series comprises multiple patterns. Here, a pattern should be understood as a recurring structure in the data that appears either completely or partially more than once. However, the exact boundaries of these patterns, their number, or even their specific structures remain unknown. While some patterns may share similarities, it's possible that no two are exactly alike. The primary challenge lies in identifying these patterns and understanding their underlying structures. A supervised approach to this problem might entail learning to recognize these patterns based on known examples.

The robot dataset consists of a time series comprising 22,535 binary vectors, each of length 9. These vectors were generated as the robot executed 48 repetitions of a straightforward "approach-and-push" plan. In each trial, the robot would visually identify an object, orient itself towards it, approach it rapidly, slow down upon nearing it, and then attempt to push the object. In certain trials, the robot encountered obstacles. In one set of trials, the robot was unable to push the object and would subsequently stall and retreat. In another set, the robot would push the object until it collided with a wall, after which it would stall and retreat. In a third set of trials, the robot successfully pushed the object without any hindrances. Out of the 48 trials, two were deemed anomalous.

The sensory data from the robot was sampled at a rate of 10Hz and processed through a basic perceptual system, yielding values for nine binary variables. These variables represent the robot's state and its rudimentary perceptions of objects within its environment. The variables include: STOP, ROTATE-RIGHT, ROTATE-LEFT, MOVE-FORWARD, NEAR-OBJECT, PUSH, TOUCH, MOVE-BACKWARD, and STALL. For instance, the binary vector [0 1 0 1 1 0 1 0 0] indicates a state where the robot is rotating right, moving forward, near an object, and touching it without pushing. While most of the 512 potential states are not semantically meaningful, the robot's sensors are prone to noise, and its perceptual system can occasionally make errors.

The dataset was manually segmented into episodes. Each of the 48 episodes contains a varying number of instances of seven distinct episode types, labeled as A, B1, B2, C1, C2, D, and E. Collectively, the dataset comprises 356 instances of these episode types.

Utilizing domain knowledge, we can define a library \( L \) to characterize relations between attributes. One straightforward approach to constructing this library is by directly specifying its parsing graph \( \G(L) \). In this regard, we designate the following signs: \emph{Move}, \emph{Objects}, \emph{Path}, \emph{Node}, \emph{Episode}, \emph{Rotate}, \emph{Stalled}, and \emph{Class}. Additionally, we specify components such as \emph{pushing}, \emph{direction}, \( \text{stat}_1 \), \emph{proximity}, \emph{source}, \emph{target}, \emph{start}, \emph{end}, \( \text{direction} \), \( \text{stat}_2 \), and \emph{type}, with their constraints delineated in the graph below.

\begin{figure}[h]
    \begin{center}
    \includegraphics[width=150pt]{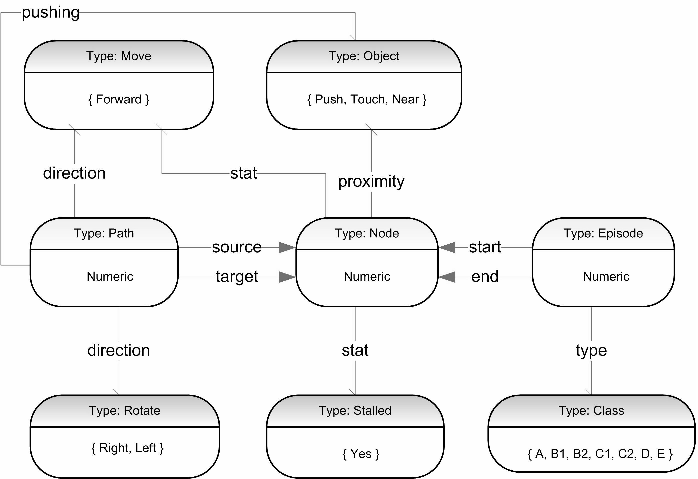}
    \includegraphics[width=150pt]{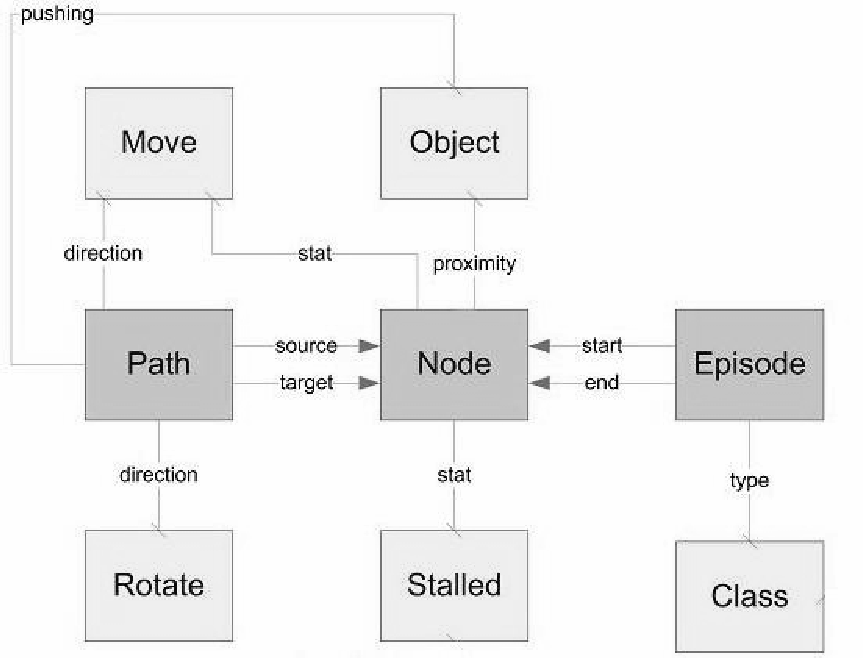}
    \end{center}
    \caption{Parsing Graph and an example query.}\label{parsinggraph}\label{query1}
\end{figure}

Assigning a model to this structure involves fixing an interpretation for each sign and component. The semiotic system defined by this structure is consistent with the available data if there exists a word in the associated graphic language that contains the provided dataset. However, the expressive limitations of the chosen language render this notion of consistency too restrictive for practical purposes. To address this, we introduce the concept of \( \lambda \)-consistency with the data for a semiotic: a specification expressed in the graphic language is \( \lambda \)-consistent with the data if there exists an interpretation that provides a "good approximation" to the given data.

For the parsing graph depicted in Figure \ref{parsinggraph}, the signs' interpretation domains come equipped with a similarity relation. The limit of the diagram in Figure \ref{query1}, where we identify the diagram sources as \( \{ \text{Move}, \text{Object}, \text{Rotate}, \text{Stalled} \} \), the target as \( \{ \text{Class} \} \), and auxiliary signs as \( \{ \text{Path}, \text{Node}, \text{Episode} \} \), serves as a "good" approximation to the dataset. This limit can be represented as an \( \Omega \)-set:
\[ \alpha: \text{Move} \times \text{Object} \times \text{Rotate} \times \text{Stalled} \times \text{Class} \]
The discrepancies between \( \alpha \) and the actual data should be viewed as information that is not semantically valid within the defined semiotic framework. Such limits provide a perspective of the data as described by the semiotic. However, the information encapsulated in this generated limit is not suitable for solving the proposed pattern detection problem using machine learning algorithms. This is because it fails to encode the temporal relations between states in the time series generated by the robot's perceptual system.

We utilized limits of admissible configurations to extract potentially valuable information from the universe modeled by the semiotic. The existence of patterns associated with the robot stalling in the first three states of each episode should be discernible in the limit for diagram \( D(a) \) in Figure \ref{query2}, using the appropriate machine learning tools.

\begin{figure}[h]
    \begin{center}
    \includegraphics[width=150pt]{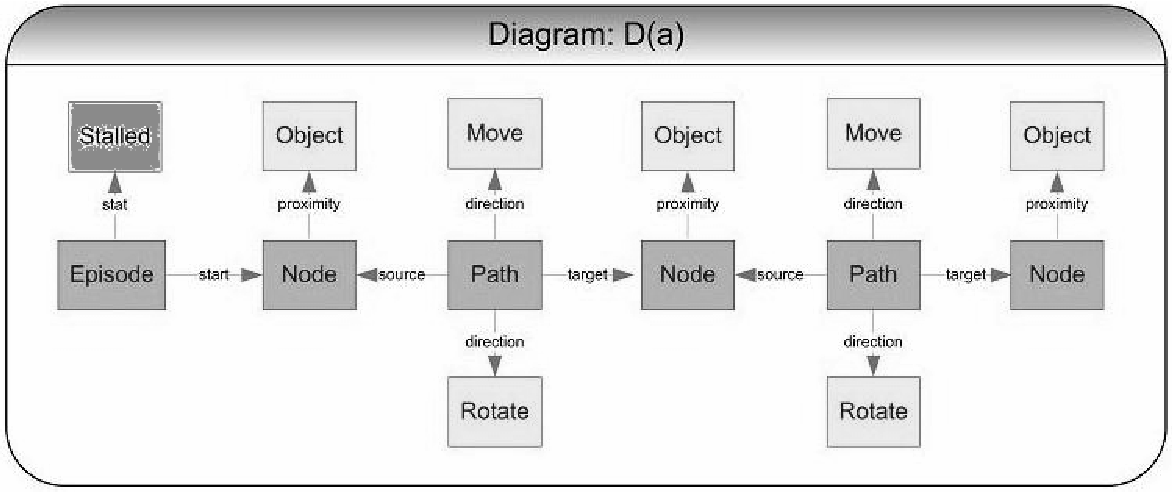}
    \includegraphics[width=170pt]{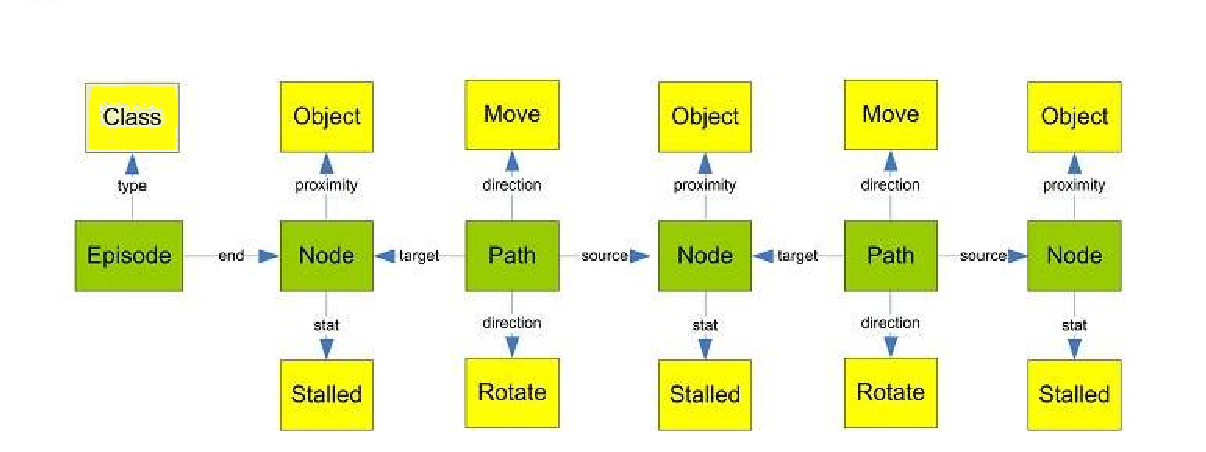}
    \end{center}
    \caption{Queries \( D(a) \) and \( D(b) \).}\label{query2}\label{query3}
\end{figure}

However, for episode classification, the last robot states appear to be more relevant. Information patterns associated with the last three states of a robot are evident in the limit for diagram \( D(b) \) from Figure \ref{query3}.

To utilize relational information available for three consecutive states in episode class prediction, we need to modify the current library. A challenge arises: the existing library structure lacks the expressive power required to describe this type of query comprehensively. Enhancing the library's expressive capacity involves adding an additional component: associating each automaton state with its respective episode. The query can then be defined by diagram \( D(c) \) in Figure \ref{query3}.

Library updates can also be implemented to constrain their interpretations. In Figure \ref{enrichsemiotic}, we enriched the sign system by imposing interpretive restrictions through the addition of two equalizers. These were employed to mandate that in a path, the source and the target must differ. This constraint can be codified by restricting the equalizer between the \emph{source} and \emph{target} components to an initial relation. If the consideration is also limited to episodes with more than one state, the limit of the diagram defined by the \emph{start} and \emph{end} components must be the initial relation. The specification below is further enriched with new components \( \text{Lim}\;D(a) \), \( \text{Lim}\;D(b) \), \( \text{Lim}\;D(c) \), and \( \text{Lim}\;D(d) \) – interpreted as the limits of the presented queries \( D(a) \), \( D(b) \), \( D(c) \), and \( D(d) \), respectively – added as signs interpreted as the source of these new components. These new signs introduce an additional layer to the sign ontology, being more general than the initially defined signs.

\begin{figure}[h]
    \begin{center}
    \includegraphics[width=150pt]{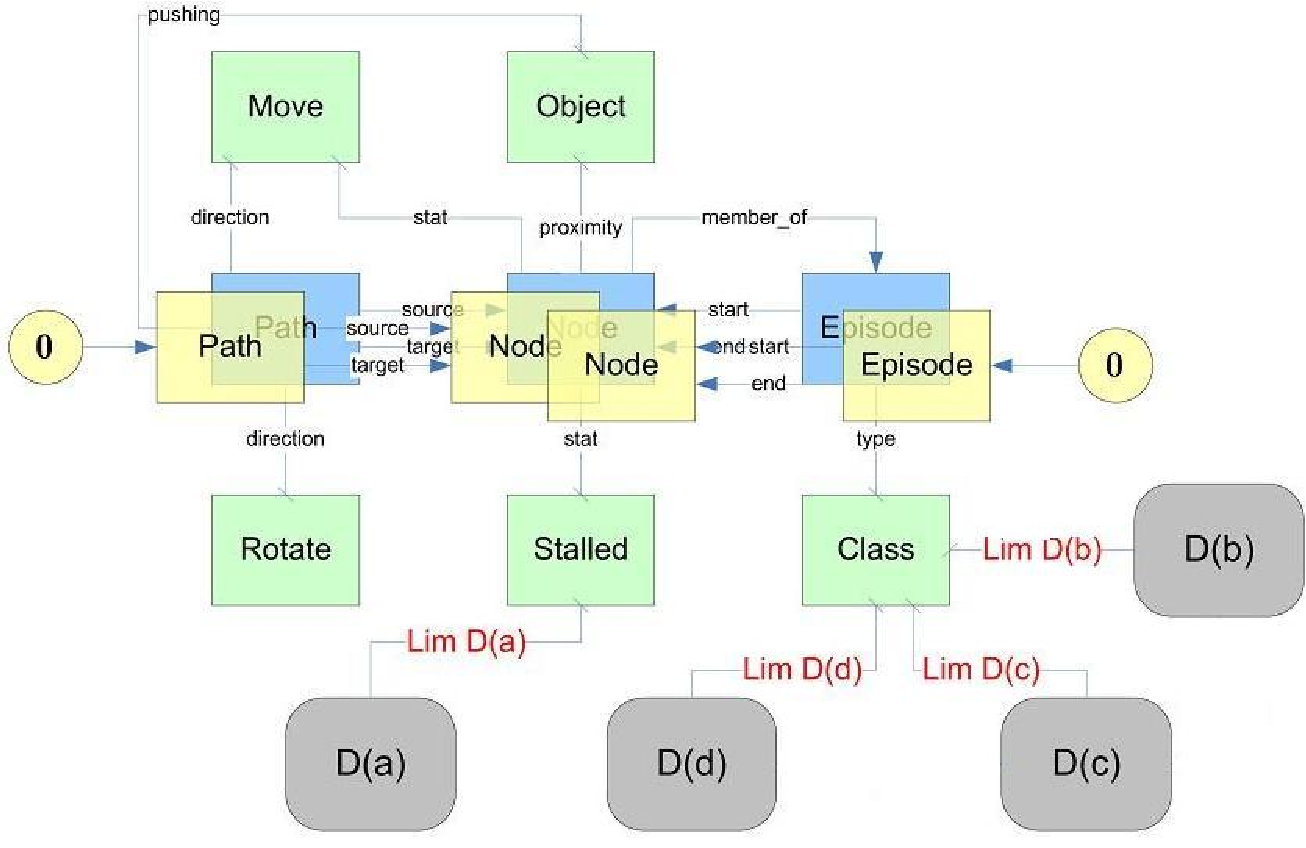}
    \end{center}
    \caption{Enriched sign system.}\label{enrichsemiotic}
\end{figure}

Note that the limit of diagram \( D(c) \), presented in Figure \ref{query3}, describes available information about signs interpreted as three consecutive states, while the limit of \( D(d) \) characterizes three consecutive states of episodes where the robot stalls.

\begin{figure}[h]
    \begin{center}
    \includegraphics[width=150pt]{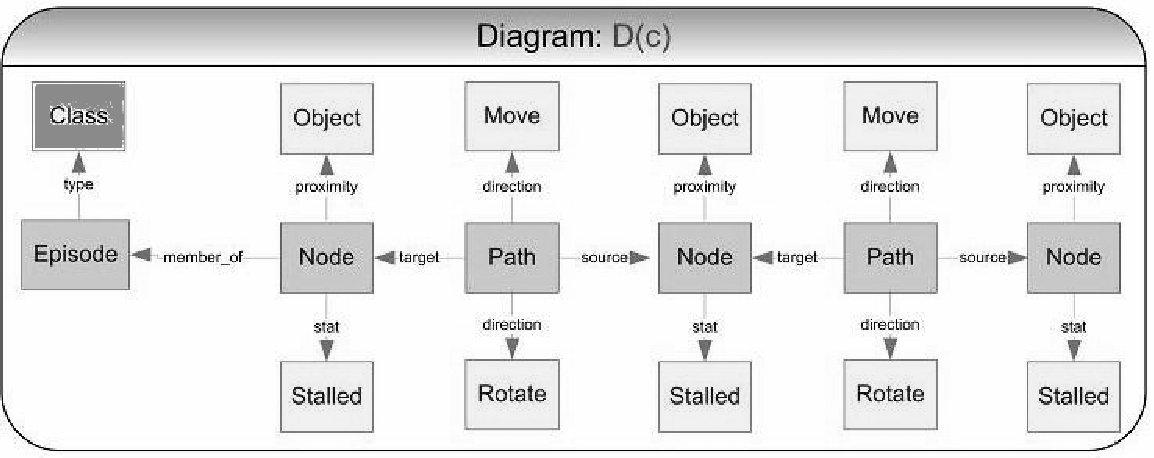}
    \includegraphics[width=170pt]{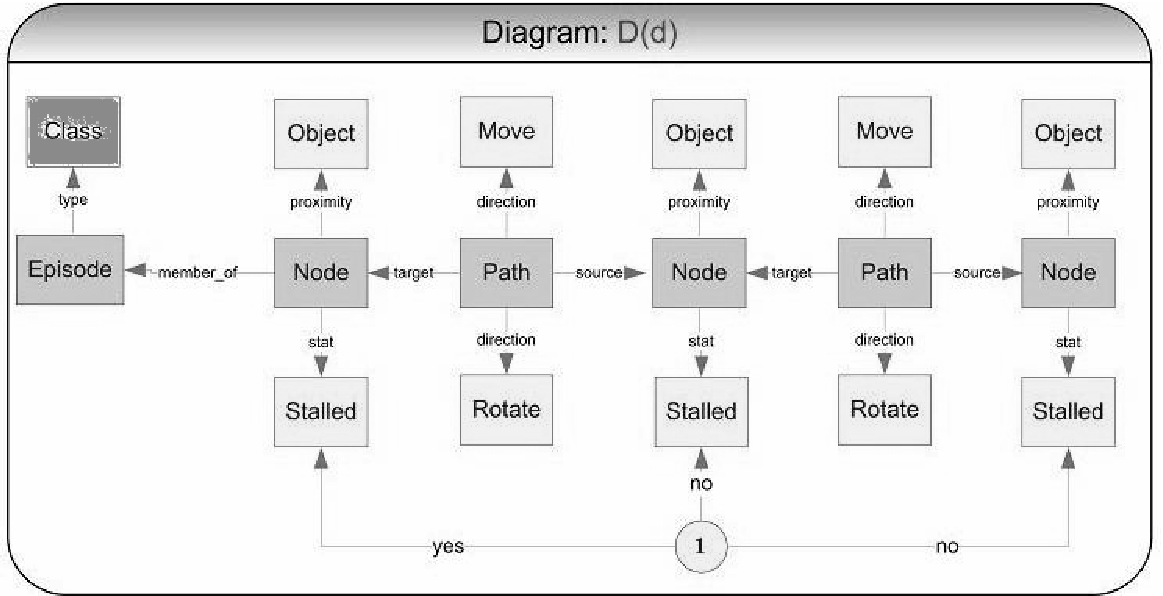}
    \end{center}
    \caption{\( D(c) \) querying three consecutive states and \( D(d) \) querying three consecutive states of episodes where the robot stalls.}\label{query4}
\end{figure}
\end{exam}

In the example provided, we utilized limits as a method to query the structure of a semiotic system. In the following chapters, we will formalize some of the concepts introduced in this example, particularly elucidating what we mean by an "approximation" to the given data.

\begin{exam}[Neural Networks \cite{MichalskiMitchellCarbonell86}]
A neural network is comprised of a network of simple processing elements, known as neurons, which can collectively exhibit complex global behavior. This behavior is determined by the connections between the processing elements and their associated parameters. Formally, a neural network can be represented as a function \( f \), defined as a composition of other functions \( g_i(x) \), which can themselves be further defined as compositions of additional functions. These dependencies within the network can be conveniently represented using a network structure, with arrows denoting the dependencies between variables. One of the most intriguing aspects of neural networks is their capacity to be parameterized for learning specific tasks. Given a particular task and a class of functions \( F \) defined using a network structure and a set of neurons, learning entails utilizing a set of observations to identify \( f^* \in F \) that optimally solves the task.

In this context, an artificial neural network can be viewed as an admissible diagram defined within a semiotic framework, which codifies every potential parametrization of each processing element. More precisely, the associated sign system describes the possible neural network structures, while a model for it represents a set of parameterizations that characterize the behavior of each processing element.

There are three major learning paradigms in neural networks, each corresponding to a distinct learning task \cite{Bishop96}. These paradigms are supervised learning, unsupervised learning, and reinforcement learning. They can be regarded as different approaches to exploring the space defined by models of admissible configurations in order to identify the optimal solution—i.e., the model that best fits the data.

Given a diagram \( D \) in a Neural Network Semiotic, the multi-morphism \( \text{Lim}\;D \) describes the functional behavior of the network \( D \) when applied to its input space. In this context, a network \( D \) has learned a concept described in a dataset if a portion of its limit \( \text{Lim}\;D \) serves as a good approximation to the dataset.

\end{exam}

Additional examples can be drawn from the realm of generic programming applications. For a comprehensive exploration of this topic, one can refer to \cite{LopesFiadeiro97}.

\chapter{Logics}\label{logics}

We can conceptualize a semiotic as a formal mechanism to specify a Universe of Discourse for a given problem. We are particularly interested in the specification of universes where entities can be characterized by propositions in monoidal logics. To achieve this, we introduce the following definition:

\begin{defn}
A semiotic system $(S,M)$ is termed a \emph{logic semiotic system} if it is characterized by the following components:
\[S=(L:|L|\rightarrow (Chains\downarrow \Sigma^+),\E,\U,co\U),\]

The system satisfies the following conditions:
\begin{enumerate}
    \item There exists a sign in $S$, denoted as the support $\Omega$, which is interpreted as an ML-algebra. This algebra has operator interpretations corresponding to the components labeled with the signs $\vee ,\wedge ,\otimes, \Rightarrow ,\bot$, and $\top$.
    
    \item For every string $w=s_0s_1\ldots s_n$ consisting of signs in $S$, there exists a label $=_w$. This label is interpreted by $M$ as the similarity measure $\bigotimes_i^n[\cdot=\cdot]_i$, where $[\cdot=\cdot]_i$ represents the similarity on the $\Omega$-set $M(s_i)$, as defined in \ref{ProdSimil}.
    
    \item For every sign $s$ in $S$ and every natural number $n$, there exists a component in $S$ labeled by $\lhd^n_s$. This component is interpreted by $M$ as a diagonal relation \[\lhd^n_{M(s)}:M(s)\rightharpoonup \prod^n_{i=1}M(s),\] given by \[\lhd^n_{M(s)}(a,a_1,a_2,\ldots,a_n)=\bigotimes_i^n[a=a_i].\]
    
    \item Analogously, for every sign $s$ in $S$ and every natural number $n$, there exists a component labeled by $\rhd^n_s$. This component is interpreted by $M$ as a codiagonal relation \[\rhd^n_{M(s)}:\prod^n_{i=1}M(s)\rightharpoonup M(s),\] and is given by \[\rhd^n_{M(s)}(a_1,a_2,\ldots,a_n,a)=\bigotimes_i^n[a_i=a].\]
    
    \item We assume the existence of a disjoint decomposition for the set of signs $\Sigma$, which is given by $\Sigma_{aux}$ and $\Sigma_{pri}$. Here, signs in $\Sigma_{aux}$ are termed \emph{auxiliary}. For every pair $(s,u)\in \Sigma_{pri}\times \Sigma_{aux}$, there exist components $r(s,u):su^+$ and $r(u,s):us^+$ in $L$, referred to as \emph{rename components}. These components are interpreted by $M$ as the identity in $M(s)$, i.e., \[M(r(s,u))=id_{M(s)} \text{ and } M(r(u,s))=id_{M(s)}.\]
\end{enumerate}
\end{defn}

In a semiotic logic, the signs $\triangleleft$ and $\triangleright$ play a crucial role in connecting similar component inputs and outputs, respectively. These signs serve as essential tools for establishing relationships between the components, ensuring a coherent flow of information within the system. Rename components, on the other hand, serve as a mechanism to codify the connections or links between the inputs and outputs of various components represented in the diagram.

Equations in this context can be elegantly represented using commutative diagrams. As illustrated in Figure \ref{identity}, we specify the property 
\[e_s + x = x + e_s = x\]
which denotes the existence of an identity element \(e_s\). This property is depicted using a commutative diagram to visually capture the relationships and operations involved.

A model \(M\) for an additive library \L\(_A(S,C)\) defines an additive operator with identity if the diagram's limit yields a total multi-morphism. This ensures that the system possesses an additive structure where the identity element interacts with the other components in a consistent and coherent manner.

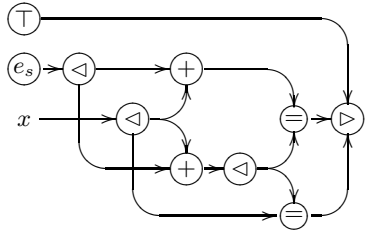
\begin{figure}[h]
\[
\small
\xymatrix @=7pt {
 *+[o][F-]{\top} \ar `r[rrrrrr][rrrrrrdd]&  & & & & & & \\
 *+[o][F-]{e_s}\ar[r]& *+[o][F-]{\lhd} \ar[rr] \ar `d[ddrr][ddrr]& &*+[o][F-]{+} \ar `r[rr][rrd]& & & & \\
 x\ar[rr]      & &*+[o][F-]{\lhd} \ar `r[ur][ur] \ar `r[dr][dr] \ar `d[ddrrr][ddrrr]& & &*+[o][F-]{=}\ar[r] &*+[o][F-]{\rhd}  \\
                & & &*+[o][F-]{+} \ar[r]& *+[o][F-]{\lhd}\ar `r[ur][ur] \ar `r[dr][dr]& & & \\
                & & & & &*+[o][F-]{=} \ar `r[ruu][ruu] & &
}
\]
\caption{Diagrams codifying $e_s+x= x+e_s= x$.}\label{identity}
\end{figure}

The diagram illustrated in Figure \ref{identity} can be represented in a string-based notation by the chain of signs:

\[
\lhd^3_sr(s,x)r(s,x)r(s,x)e_s\lhd^2_sr(s,y)r(x,s)+r(s,z)r(x,s)r(y,s)+\lhd^2_s r(s,w) r(s,w) r(z,s) r(w,s) = r(w,s)r(x,s)=\top\rhd^3_\Omega.
\]

This notation demonstrates that any diagram that defines a word in a graphical language can be codified using string-based notation with the aid of rename components.

\begin{exam}
Consider \( Set(\Omega) \) defined using the ML-algebra \(\Omega=([0,1],\otimes,\Rightarrow,\vee,\wedge,0,1)\), where \(([0,1], \otimes, \Rightarrow)\) serves as a model for product logic, and \(([0,1],\vee,\wedge,0,1)\) represents the standard complete lattice defined in \([0,1]\) by the "less than" relation. The \(\Omega\)-set, defined in \( A = \{0,1,2\} \) and characterized by the similarity relation provided in the table below:

\begin{center}
\small
\begin{tabular}{c|ccc}
  \( [\_\;=\;\_] \) & 0 & 1 & 2 \\
  \hline
  0 & 1.0 & 0.5 & 0.0 \\
  1 & 0.5 & 1 & 0.5 \\
  2 & 0.0 & 0.5 & 1.0 \\
\end{tabular}
\end{center}

\noindent and the additive relational operators depicted in the following tables:

\begin{center}
\small
\begin{tabular}{c|ccc}
  \( \_\;+\;0 \) & 0 & 1 & 2 \\
  \hline
  0 & 1.0 & 0.5 & 0.0 \\
  1 & 0.5 & 1 & 0.5 \\
  2 & 0.0 & 0.5 & 1.0 \\
\end{tabular}
\qquad
\begin{tabular}{c|ccc}
  \( \_\;+\;1 \) & 0 & 1 & 2 \\
  \hline
  0 & 0.5 & 1.0 & 0.5 \\
  1 & 0.0 & 0.5 & 1.0 \\
  2 & 1.0 & 0.5 & 0.5 \\
\end{tabular}
\qquad
\begin{tabular}{c|ccc}
  \( \_\;+\;2 \) & 0 & 1 & 2 \\
  \hline
  0 & 0.0 & 0.5 & 1.0 \\
  1 & 1.0 & 0.0 & 0.5 \\
  2 & 0.5 & 1.0 & 0.0 \\
\end{tabular}
\end{center}

\noindent provide a model for an additive library. The identity element \( e_s = 0 \) since the diagram in Figure \ref{identity} yields the following limit:

\begin{center}
\small
\begin{tabular}{c|c}
  A & \( \Omega \) \\
  \hline
  0 & 1.0=[0] \\
  1 & 1.0=[1] \\
  2 & 1.0=[2] \\
\end{tabular}.
\end{center}
\end{exam}

In a logic semiotic system \((S,M)\), the language \(Lang(L)\) is referred to as a \emph{logic language}. Logic semiotics possesses sufficient expressive power to differentiate between diagrams that define relations and those that define entities. To achieve this, we classify words as follows:

\begin{defn}(Graphic relations)
A diagram \(D \in Lang(S)\), for a logic semiotic \((S,M)\), is termed a \emph{relation} when its output \(o(D)\) is interpreted by \(M\) as the set of truth values \(\Omega\) on \(Set(\Omega)\).
\end{defn}

\begin{defn}
A relation \(D\) is designated as an \emph{equation} if the diagram \(D\) can be decomposed as
\[
D = I \otimes D_0 \otimes D_1 \otimes \lq=\lq,
\]
where \(I = \lhd^{n_1}_{s_1} \ldots \lhd^{n_m}_{s_m}\) is defined through realizations of diagonals. The diagrams \(D_0\) and \(D_1\) are diagrams that satisfy \(o(D_0) = o(D_1)\), and '=' is a diagram defined using the unique component, interpreted as a similarity relation, and satisfying \(i(\lq=\lq) = o(D_0) \cdot o(D_1)\). In this case, we simplify the notation by denoting the diagram \(D\) as \(D_0 = D_1\). It is noteworthy that the diagram \(I\) codifies the dependencies between interpretations of signs used as inputs for the diagram \(D_0 \otimes D_1 \otimes \lq=\lq\), linking together signs with similar values.
\end{defn}

In the given definition, the diagram \(I = \lhd^{n_1}_{s_1} \otimes \ldots \otimes \lhd^{n_m}_{s_m}\) encapsulates, within a graphic logic, the dependence relations defined in string-based logic by repeatedly utilizing bounded variables in a proposition.

\begin{figure}[h]
\[
\begin{picture}(65,65)(0,0)
\multiput(19,33.75)(.04435484,-.03360215){186}{\line(1,0){.04435484}}
\put(27.25,27.5){\line(0,1){41}}
\multiput(27.25,68.5)(-.05536913,-.03355705){149}{\line(-1,0){.05536913}}
\multiput(55.5,29.5)(.034911717,.033707865){623}{\line(1,0){.034911717}}
\multiput(77.25,50.5)(-.035773026,.033717105){608}{\line(-1,0){.035773026}}
\put(25.25,63.75){\line(0,-1){13}}
\multiput(25.25,50.75)(.16666667,.03358209){201}{\line(1,0){.16666667}}
\multiput(58.75,57.5)(-.17487047,.03367876){193}{\line(-1,0){.17487047}}
\multiput(25.25,42.75)(.03125,-1.3125){8}{\line(0,-1){1.3125}}
\multiput(25.5,32.25)(.21644295,.03355705){149}{\line(1,0){.21644295}}
\put(57.25,57.75){\circle*{2.06}} \put(73.25,50){\circle*{2.92}}
\put(25.75,62.75){\circle*{2.06}} \put(25.25,59.75){\circle*{2.06}}
\put(25.5,57){\circle*{2.06}} \put(25.25,54){\circle*{2.24}}
\put(25.75,40.75){\circle*{2}} \put(25.75,37.25){\circle*{1.8}}
\put(25.75,34.75){\circle*{2}} \put(20,61.25){\circle*{2.06}}
\put(19.5,58.25){\circle*{2.24}} \put(19.5,52){\circle*{2}}
\put(19.5,46.5){\circle*{2}} \put(19.5,42){\circle*{2}}
\put(23.25,47){\makebox(0,0)[cc]{$_I$}}
\put(33.75,56.25){\makebox(0,0)[cc]{$_{D_1}$}}
\put(33,37.25){\makebox(0,0)[cc]{$_{D_2}$}}
\put(65.25,49.75){\makebox(0,0)[cc]{$_=$}}
\put(76.5,44.75){\makebox(0,0)[cc]{$_{\Omega}$}}
\put(55.5,70.75){\line(0,-1){40.75}}
\multiput(25.25,43)(.19512195,-.03353659){164}{\line(1,0){.19512195}}
\put(19,63.5){\line(0,-1){29}} \put(19.5,38){\circle*{1.58}}
\put(56.75,38.25){\circle*{1.58}}
\end{picture}
\]
\caption{Sketch for a multi-morphism of type $D=I\otimes D_0\otimes D_1\otimes \lq=\lq$.}\label{identity2}
\end{figure}
The relation \(D \in Lang(S)\) is termed \emph{true} by \(M\) if the limit \(M(D \otimes \top \otimes \rhd^2_\Omega)\) is a total multi-morphism. In this context, an equation \(D\) is considered universal if the interpretation of \(D.\top.\rhd^2_\Omega\) by \(M\) is a total multi-morphism.

Given a logic semiotic system \((S,M)\), let \(Lang_R(S,M)\) be the subcategory of \(Lang(S)\) consisting of diagrams defining relations. Utilizing the operation of diagram gluing, we define the following operators for pairs of relations \(D_0, D_1 \in Lang_R(S,M)\):

\begin{enumerate}
  \item \(D_0 \otimes D_1\) corresponds to the diagram \(I \otimes D_0 \otimes D_1 \otimes \lq\otimes\lq\),
  \item \(D_0 \Rightarrow D_1\) corresponds to the diagram \(I \otimes D_0 \otimes D_1 \otimes \lq\Rightarrow\lq\),
  \item \(D_0 \wedge D_1\) corresponds to the diagram \(I \otimes D_0 \otimes D_1 \otimes \lq\wedge\lq\),
  \item \(D_0 \vee D_1\) corresponds to the diagram \(I \otimes D_0 \otimes D_1 \otimes \lq\vee\lq\),
\end{enumerate}

where \(I\) is defined through realizations of diagonals, linking together inputs that have the same meaning according to \(M\). This framework enables the definition of new relations from pairs of simpler ones, representing the lifting of the \(\Omega\) structure to diagrams in \(Lang_R(S,M)\).

In the subsequent chapters, we present some illustrative examples of logic semiotics that are pivotal for characterizing the expressive power of languages utilized by certain machine learning algorithms:

\begin{exam}[Binary semiotic \(S_B(S,C)\)]
A binary semiotic is a logic semiotic where sets \(S\) and \(C\) define a binary library \L\(_B(S,C)\) (as presented in example \ref{binarylibrary}). We refer to this type of semiotic as a dataset semiotic or table semiotic, as instantiations of relations in \(Lang_R(S_B(S,C))\) can be used to encode datasets or tables.

Binary semiotics find applications in machine learning algorithms designed to generate decision rules, such as Apriori, as described in \cite{MichalskiMitchellCarbonell86}.
\end{exam}

\begin{exam}[Linear semiotic $S_L(S,C)$]
A linear semiotic \(S_L(S,C)\) serves as an extension of a binary semiotic. It is characterized by a linear library \L\(_L(S,C)\) (as illustrated in example \ref{linearlibrary}). Within this framework, \(\geq:ssl^+\) is interpreted as a partial order in the interpretation of a sign \(s\), denoted as \(M(s)\), for all symbols \(s \in S\). This type of relation is encapsulated in the model of a linear semiotic \(S_L(S,C) = (L, \E, \U, co\U)\) if it encompasses the diagrams depicted in fig. \ref{linear}. These diagrams codify propositions represented in string-based first-order logic as follows:

\[
\forall x : x \geq x,
\]
\[
\forall x, y : (x \geq y \wedge y \geq x) \Rightarrow (x = y),
\]
\[
\forall x, y, z : (x \geq y \wedge y \geq z) \Rightarrow (x \geq z),
\]

and these propositions are "preserved" by its models.

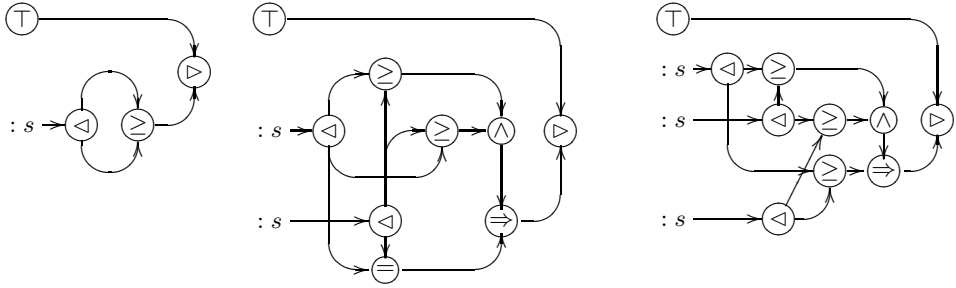
\begin{figure}[h]
\[
\small
\xymatrix @=8pt {
 *+[o][F-]{\top}\ar `r[rrrd][rrrd]&&&\\
 &&&*+[o][F-]{\rhd}&\\
 :s\ar[r]&*+[o][F-]{\lhd}\ar `u[ru]`r[rd][r] \ar `d[rd]`r[ru][r]&*+[o][F-]{\geq}\ar `r[ru][ru]&&\\
 &&&&
}
\xymatrix @=8pt{
 *+[o][F-]{\top}\ar `r[rrrrrdd][rrrrrdd]&&&&&&&\\
&&*+[o][F-]{\geq}\ar `r[rrd][rrd] &&&&&\\
:s\ar[r]&*+[o][F-]{\lhd}\ar `d[dddr][dddr] \ar `u[ur] [ur] \ar `d[drr] `r[rru] [rr]& &*+[o][F-]{\geq} \ar [r]&*+[o][F-]{\wedge} \ar [dd]&*+[o][F-]{\rhd}&&\\
&&&&&&&\\
:s\ar[rr]&&*+[o][F-]{\lhd}\ar[d] \ar[uuu] \ar `u[uur] [uur] &&*+[o][F-]{\Rightarrow} \ar `r[ruu][ruu]&&&\\
&&*+[o][F-]{=}\ar `r[rru] [rru]&&&&&\\
}
\xymatrix @=6pt{
 *+[o][F-]{\top}\ar `r[rrrrrdd][rrrrrdd]&&&&&&&\\
:s\ar[r]&*+[o][F-]{\lhd}\ar[r] \ar `d[ddrr][ddrr] &*+[o][F-]{\geq}\ar `r[rrd][rrd] &&&&&\\
:s\ar[rr]&&*+[o][F-]{\lhd}\ar[u]\ar[r] &*+[o][F-]{\geq} \ar [r]&*+[o][F-]{\wedge} \ar [d]&*+[o][F-]{\rhd}&&\\
&&&*+[o][F-]{\geq}\ar[r]&*+[o][F-]{\Rightarrow} \ar `r[ru][ru]&&&\\
:s\ar[rr]&&*+[o][F-]{\lhd} \ar[uur] \ar `r[ru] [ru]&&&&&&\\
}
\]
\caption{Diagrams codifying $\forall x:x\geq
x,\,\,\,\, \forall x,y:(x\geq y\wedge y\geq x) \Rightarrow (x=y)$ and $\forall
x,y,z:(x\geq y\wedge y\geq z) \Rightarrow (x\geq z)$.}\label{linear}
\end{figure}

 Let's consider an example on \( Set(\Omega) \) where \( \Omega \) has the structure of an ML-algebra. Here, \( ([0,1],\otimes,\Rightarrow) \) serves as a model for product logic, and \( ([0,1],\vee,\wedge,0,1) \) represents the typical complete lattice defined in \( [0,1] \). 

For the \( \Omega \)-set defined with the support \( A=\{0,1,2,3,4\} \) and the similarity relation:
\begin{center}
\small
\begin{tabular}{c|ccccc}
  \( [\_\;=\;\_] \) & 0 & 1 & 2 & 3 & 4 \\
  \hline
  0 & 1.0 & 0.5 & .25 & 0.0 & 0.0 \\
  1 & 0.5 & 1.0 & 0.5 & .25 & 0.0 \\
  2 & .25 & 0.5 & 1.0 & 0.5 & .25 \\
  3 & 0.0 & .25 & 0.5 & 1.0 & 0.5 \\
  4 & 0.0 & 0.0 & .25 & 0.5 & 1.0 \\
\end{tabular}
\end{center}
and the relational operator defined from \( A \) to \( A \) as:
\begin{center}
\small
\begin{tabular}{c|ccccc}
  \( [\_\;\geq\;\_] \) & 0 & 1 & 2 & 3 & 4 \\
  \hline
  0 & 1.0 & 0.5 & .25 & 0.0 & 0.0 \\
  1 & 1.0 & 1.0 & 0.5 & .25 & 0.0 \\
  2 & 1.0 & 1.0 & 1.0 & 0.5 & .25 \\
  3 & 1.0 & 1.0 & 1.0 & 1.0 & 0.5 \\
  4 & 1.0 & 1.0 & 1.0 & 1.0 & 1.0 \\
\end{tabular}
\end{center}

When these relations are employed for sign interpretation in the three diagrams depicted in fig. \ref{linear}, they yield the following limits, respectively:
\[
\small
\begin{tabular}{c|c}
  \( A \) & \( \Omega \) \\
  \hline
  0 & 1 \\
  1 & 1 \\
  2 & 1 \\
  3 & 1 \\
  4 & 1 \\
\end{tabular}\;\;
\begin{tabular}{c|c}
  \( A\times A \) & \( \Omega \) \\
  \hline
  (0,0) & 1 \\
  (1,0) & 1 \\
  \(\vdots\) & \(\vdots\) \\
  (3,4) & 1 \\
  (4,4) & 1 \\
\end{tabular}\;\;\text{ and }
\begin{tabular}{c|c}
  \( A\times A\times A \) & \( \Omega \) \\
  \hline
  (0,0,0) & 1 \\
  (1,0,0) & 1 \\
  \(\vdots\) & \(\vdots\) \\
  (3,4,4) & 1 \\
  (4,4,4) & 1 \\
\end{tabular}
.
\]
This confirms the commutativity of each diagram.

We refer to this type of semiotics as \textbf{grid semiotics}. They often appear in associated machine learning algorithms used to generate decision rules, such as the C4.5Rules by J.R. Quinlan, as described in \cite{MichalskiMitchellCarbonell86}.

\end{exam}

\begin{exam}[Additive semiotic $S_A(S,C)$]
An \emph{additive semiotic} \( S_A(S,C) \) is a linear semiotic \( S_L(S,C) \) such that it possesses an additive library \L\(_A(S,C) \) (as illustrated in example \ref{addlibrary}). In this context, \( (M(s), M(+:sss^+)) \) forms a monoid for every symbol \( s \in S \). Furthermore, the interpretation for \( e_s: s^+ \) corresponds to the monoid identity, denoted by \( e_s \in C \). The properties of a monoid can be effectively encoded within the semiotic structure if the model transforms each of the diagrams depicted in fig. \ref{additive} into a total multi-morphism for every sign \( s \) in the semiotic.

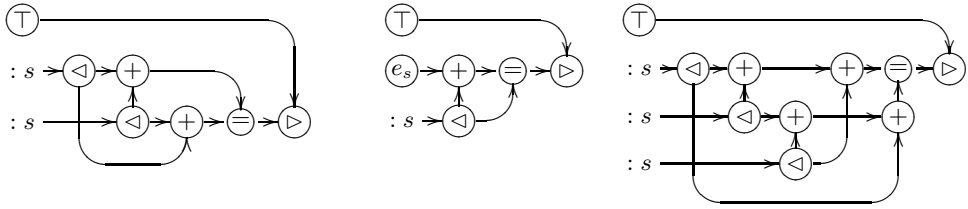
\begin{figure}[h]
\[
\small
\xymatrix @=7pt {
 *+[o][F-]{\top}\ar `r[rrrrrdd]'[rrrrrdd]&&&&&&&\\
:s\ar[r]&*+[o][F-]{\lhd}\ar[r] \ar `d[ddrr]`r[rru][drr] &*+[o][F-]{+}\ar `r[rrd][rrd] &&&&&\\
:s\ar[rr]&&*+[o][F-]{\lhd}\ar[u]\ar[r] &*+[o][F-]{+} \ar [r]&*+[o][F-]{=} \ar [r]&*+[o][F-]{\rhd}&&\\
&&&&&&&\\
}  \xymatrix @=7pt {
 *+[o][F-]{\top}\ar `r[rrrd]'[rrrd]&&&&\\
*+[o][F-]{e_s}\ar[r]&*+[o][F-]{+}\ar[r]&*+[o][F-]{=}\ar[r]&*+[o][F-]{\rhd}&\\
:s\ar[r]&*+[o][F-]{\lhd}\ar [u]\ar `r[ru][ru]&&&\\}
\xymatrix @=6pt {
 *+[o][F-]{\top}\ar `r[rrrrrrd][rrrrrrd]&&&&&&&\\
:s\ar[r]&*+[o][F-]{\lhd}\ar[r] \ar `d[dddrr]`r[rrrru][drrrr] &*+[o][F-]{+}\ar [rr] &&*+[o][F-]{+}\ar[r]&*+[o][F-]{=}\ar[r]&*+[o][F-]{\rhd}&\\
:s\ar[rr]&&*+[o][F-]{\lhd}\ar[u]\ar[r] &*+[o][F-]{+} \ar [rr]& &*+[o][F-]{+}\ar[u]&&&\\
:s\ar[rrr]&&&*+[o][F-]{\lhd}\ar[u]\ar`r[ru][ruu] &&&&\\
&&&&&&&\\
}
\]
\caption{Diagrams codifying $\forall x,y:x+y=y+x, \forall x:x+e_s=x$ and $\forall
x,y,z:(x+y)+z=x+(y+z)$.}\label{additive}
\end{figure}
\end{exam}

\begin{exam}[Multiplicative Semiotic \( S_M(S,C) \)]
An \emph{multiplicative semiotic} \( S_M(S,C) \) is an additive semiotic
\( S_L(S,C) \) defined by a multiplicative library \L\(_M(S,C) \) (as presented in example \ref{multlibary}). In this context, \( (M(s), \times) \) forms a commutative semigroup for every symbol \( s \in S \).

Within this semiotic framework, we can encode rules established through regression, akin to the rules generated by machine learning algorithms such as M5 by J.R. Quinlan, as described in \cite{Quinlan93}.
\end{exam}

The definition of a Domain of Discourse structure may impose constraints on the interpretation of signs. In a binary semiotic, we might impose sign interpretation rules defined by Horn clauses of the form:
\[
(x_1 = c_1 \wedge x_2 = c_2 \wedge x_3 = c_3) \Rightarrow y = c_4.
\]
The expressive power of a linear semiotic enables the encoding of rules like:
\[
(x_1 \leq c_1 \wedge x_2 \leq c_2 \wedge c_3 \leq x_1) \Rightarrow y \leq c_4,
\]
While in a multiplicative semiotic, sign interpretation can be confined to semiotics defined by models that satisfy regression rules such as:
\[
(x_1 \leq c_1 \wedge x_2 \leq (c_3 \times x_1 + c_2) \wedge x_3 = c_3) \Rightarrow
y \leq c_4 \times x_1 + c_5 \times x_2 + c_6 \times x_3 + c_7.
\]
This type of regression rules can be represented through diagrams, similar to the one depicted in fig. \ref{regrassion}, where frames highlight the evident subdiagrams.

\begin{figure}[h]
\[
\small
\xymatrix @=12pt {
*+[o][F-]{\top}\ar `r[rrrrrrd][rrrrrrd]&&&&\\
x_1:s\ar[r]&*+[o][F-]{\lhd}\ar[rr] \ar `d[r]`/1pt[rrd][drr] \ar `d[dddrrrrr][dddrrrrr]&&*+[F-]{_{x_1\leq c_1}}\ar `r[rrd][rrd]&&&*+[o][F-]{\rhd}&\\
x_2:s\ar[rr]&&*+[o][F-]{\lhd}\ar[r]\ar `d[ddrrrr][ddrrrr]&*+[F-]{_{c_3x_1+c_2\leq x_2}}\ar[rr]&&*+[o][F-]{\wedge}\ar[d]&\\
x_3:s\ar[rrr]&&&*+[o][F-]{\lhd}\ar[r]\ar `d[drrr][drrr]&*+[F-]{_{x_3=c_3}}\ar[r]&*+[o][F-]{\wedge}\ar[r]&*+[o][F-]{\Rightarrow}\ar[uu]\\
y:s\ar[rrrrrr]&&&&&&*+[F-]{_{y=c_4\times x_1+c_5\times x_2+c_6\times
x_3+c_7}}\ar[u]&}
\]
\caption{Diagram codifying a regression rule.}\label{regrassion}
\end{figure}
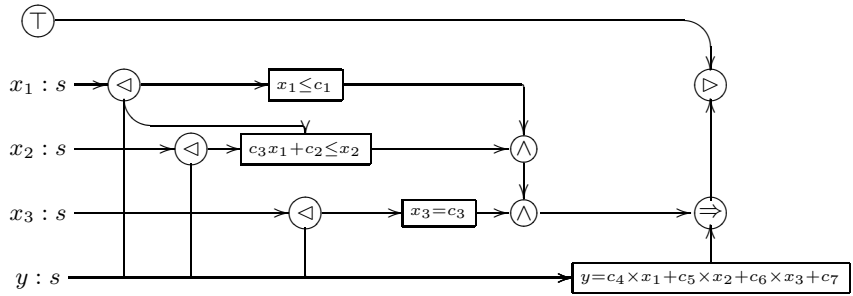

Models generated by various Data Mining and Machine Learning tools can be encoded within these sign systems. This implies that rules extracted from models produced by learning algorithms can reflect the underlying data structure. Such structures can enhance the richness of the sign system used to specify the information system. This enhancement allows for the definition of constraints on its models that are compatible with the existing data or perspectives on reality. Essentially, this suggests that the most accurate description we can have for an information system is the best generalization derived from the stored data.

\chapter{Lagrangian syntactic operators}\label{synopt}
The expressive power of our specification languages can be increased using \emph{Lagrangian syntactic operators} or \emph{sign operators}. This operator are defined at the level of sign systems signs or/and components, and must be preserved via sign systems models, allowing the generation of new signs or components based on others signs or components.

An example of this operators, with evident applicability, are the differential operators. For that we define:

\begin{defn}[Differential semiotic]
A logic semiotic system $(S,M)$ is termed a \emph{differential semiotic system} if it is a multiplicative semiotic that contains a sign $R$, which is interpreted as the support for a ring $(M(R),\times,+,0,1)$. Additionally, the system includes labels $\partial_wf$ in $S$, corresponding to components $f:i(f)\rightarrow R$ in $S$ with an output of $R$, and where $w$ is a word composed of its input symbols, subject to the following conditions:
\begin{enumerate}
    \item $w\leq i(f)$,
    \item $\partial_wf:d(f)\rightarrow R$,
    \item For the component label definitions presented below, utilizing $'\times'$ and $'+'$ in infix notation, the following conditions must be met:
    \begin{enumerate}
        \item If $f\equiv_l d_0\times d_1$, then $M(\partial_w(d_0\times d_1))=M(\partial_w(d_0)\times d_1+d_0\times \partial_w(d_1))$,
        \item If $f\equiv_l d_0+d_1$, then $M(\partial_w(d_0+d_1))=M(\partial_w(d_0)+\partial_w(d_1))$, and
        \item If $ww'\leq i(f)$, then $M(\partial_{ww'}f)=M(\partial_{w}(\partial_{w'}f))=M(\partial_{w'}(\partial_{w}f))$.
    \end{enumerate}
\end{enumerate}
\end{defn}

The component label operator $\partial$ facilitates the characterization of multi-morphisms that are otherwise impossible to represent within the associated logic semiotic. For instance, in a differential semiotic system, we can interpret a component $f:xy\rightarrow R$ as a multi-morphism that adheres to the conservative law when the following diagram is interpreted by the model as a total multi-morphism.

\begin{figure}[h]
\[
\small
\xymatrix @=7pt {
 *+[o][F-]{\top}\ar `r[rrrrd][rrrrd]&&&&\\
 &&*+[o][F-]{\partial_xf}\ar `r[rd][rd]&&*+[o][F-]{\rhd}&\\
 :xy\ar[r]&*+[o][F-]{\lhd}\ar `u[ru][ru] \ar `d[rd][rd]&&*+[o][F-]{=}\ar `r[ru][ru]&&\\
 &&*+[o][F-]{\partial_yf}\ar `r[ru][ru]&&&
}
\]
\caption{Conservative law in a diferencial semiotic.}\label{conservative}
\end{figure}
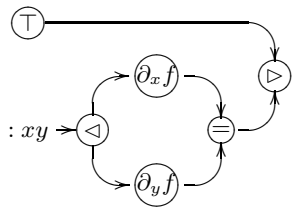

Let's consider a naive application:

\begin{exam}[Modeling traffic]
In the Lighthill-Whitham-Richards (LWR) model, as presented in \cite{WhithamLighthill55} and \cite{Richards56}, the traffic state is represented from a macroscopic viewpoint by the function $\rho(x,t)$, which denotes the density of vehicles at position $x$ and time $t$. The dynamics of the traffic are encapsulated by a conservation law expressed as
\[
\frac{\partial \rho}{\partial t} + \frac{\partial (\rho v)}{\partial x} = 0,
\]
where $v = v(x,t)$ represents the velocity of cars at $(x,t)$. The fundamental assumption of the LWR model posits that drivers adjust their speed instantaneously based on the surrounding density:
\[
v(x,t) = V(\rho(x,t)),
\]
where the function $f(\rho) = \rho V(\rho)$ serves as the "flow rate," denoting the number of vehicles per unit time. Consequently, we have
\[
\frac{\partial \rho}{\partial t} + \frac{\partial f(\rho)}{\partial x} = 0.
\]
The model is formulated for a single unidirectional road, defining a differential semiotic system with the foundational library depicted in Figure \ref{baselibrary}.
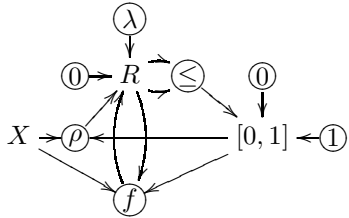
\begin{figure}[h]
\[
\xymatrix @=8pt {
\tiny
 &&*+[o][F-]{\lambda}\ar[d]&&&\\
 &*+[o][F-]{0}\ar[r]&R \ar@/^/[r] \ar@/_/[r]\ar@/^/[dd]&*+[o][F-]{\leq}\ar[dr]&*+[o][F-]{0}\ar[d]&\\
 X\ar[r]\ar[rrd]&*+[o][F-]{\rho}\ar[ru]&&&[0,1]\ar[lll]\ar[lld]&*+[o][F-]{1}\ar[l]\\
 &&*+[o][F-]{f}\ar@/^/[uu]&&&\\
 }
\]
\caption{Library for a single unidirectional road model.}\label{baselibrary}
\end{figure}
The associated sign system is characterized by the total diagrams
\[
\partial_t\rho + \partial_xf(\rho) = 0 \quad \text{and} \quad 0 \leq \rho \leq \lambda,
\]
where the latter condition defines the maximum density of the road. A model for this sign system can be conceptualized as an admissible distribution of cars on the road.

The initial library lacks the descriptive power required to specify a road network comprehensively. In Figure \ref{networklibrary}, we introduce an extension to the initial library that enables the graphical modeling of a single network.
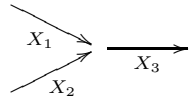
\begin{figure}[h]
\[
\xymatrix @-1pc {
\ar[rrd]_{X_1}&&&&\\
  &&\ar[rr]_{X_3}&&\\
\ar[rru]_{X_2}&&&&
}
\]
\caption{A road network defined by the junction of two incoming roads, $X_1$ and $X_2$, and one outgoing road, $X_3$, all with a single direction.}\label{network}
\end{figure}
This network is characterized by the junction of two incoming roads, $X_1$ and $X_2$, and one outgoing road, $X_3$, all with a single direction. The semiotic for this problem is defined using the extended library depicted in Figure \ref{networklibrary}.
\begin{figure}[h]
\[
\small
\xymatrix @=8pt {
 &*+[o][F-]{\lambda_1}\ar[rd]&*+[o][F-]{\lambda_2}\ar[d]&*+[o][F-]{\lambda_3}\ar[ld]&&\\
 &*+[o][F-]{0}\ar[r]&R \ar@/^/[r] \ar@/_/[r]&*+[o][F-]{\leq}\ar[dr]&*+[o][F-]{0}\ar[d]&\\
 X_1\ar[r]\ar[rdd]&*+[o][F-]{\rho_1}\ar[ru]&&&[0,1]\ar[lll]\ar[dd]\ar[ddl]&*+[o][F-]{1}\ar[l]\\
 &&&&&\\
 &*+[o][F-]{c}\ar[rrruu]\ar[rrrd]&&*+[o][F-]{\rho_2}\ar[luuu]&*+[o][F-]{\rho_3}\ar[lluuu]&\\
 X_2\ar[rrru]\ar[ru]&&&&X_3\ar[u]&
}
\]
\caption{Extended library for the presented road network.}\label{networklibrary}
\end{figure}
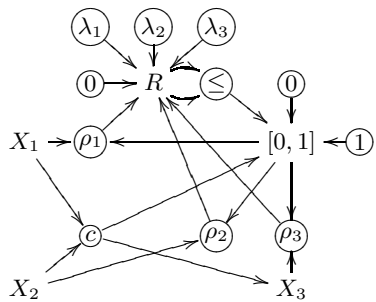
In this extended library, we have also included three "flow rate" components: 
\[
f_1 : X_1 \times R \times [0,1] \rightharpoonup R, \quad f_2 : X_2 \times R \times [0,1] \rightharpoonup R, \quad \text{and} \quad f_3 : X_3 \times R \times [0,1] \rightharpoonup R,
\]
one for each road. Additionally, we introduce initial condition constants $\rho_{1,0}, \rho_{2,0}, \rho_{3,0} : \bot \rightharpoonup R$. The restrictions of the network traffic model are described by the following conservation laws and flow restrictions for each road:

\begin{enumerate}
  \item $\partial_t\rho_1 + \partial_xf_1(\rho_1)=0$ and $0\leq \rho_1 \leq\lambda_1$,
  \item $\partial_t\rho_2 + \partial_xf_2(\rho_2)=0$ and $0\leq \rho_2 \leq\lambda_2$, and
  \item $\partial_t\rho_3 + \partial_xf_3(\rho_3)=0$ and $0\leq \rho_3 \leq\lambda_3$.
\end{enumerate}
To complete the model description, we define the mechanisms that occur at the junction. A fundamental condition is the conservation of flows:
\[
f_1(\rho_1(0,t))+f_2(\rho_2(0,t))=f_3(\rho_3(0,t)), \quad \forall t.
\]
One elementary problem we can study, which also forms the basis for a specific model, is the Riemann problem. For a Riemann problem at a junction, we consider a constant density on all three roads as the initial condition:
\begin{enumerate}
  \item $\rho_1(x,0)=\rho_{1,0}$,
  \item $\rho_2(x,0)=\rho_{2,0}$, and
  \item $\rho_3(x,0)=\rho_{3,0}$.
\end{enumerate}

Models derived from this sign system may be inherently unrealistic. For example, constants such as $\rho_1 = \lambda_1$, $\rho_2 = \lambda_2$, and $\rho_3 = 0$ can be associated with a conceivable semiotic representation. However, this model is clearly counterintuitive in most scenarios (except, of course, in the presence of a red light at the entrance of the third road). A natural approach to constructing a realistic model for a specific road junction involves incorporating rules that describe driver behavior at the junction. We can enrich the sign system by incorporating additional information obtained from models generated by machine learning algorithms for the data. This integration necessitates combining the defined semiotic with a semiotic associated with the language used to describe machine learning models, a challenge we address in the following chapter.

\end{exam}

\chapter{Concept description}\label{Concept description}
A \emph{concept description}, using attributes $(A_i)$, in the logic semiotic system $(S,M)$ is represented by an $\Omega$-map:
\[d:\prod_i A_i\rightarrow\Omega,\]
where the family $(A_i)$ forms a sequence of $\Omega$-sets. A concept description is deemed to be defined in a semiotic $(S,M)$ if the sequence $(A_i)$ is constructed using interpretations of signs within the language $Lang(S)$. For brevity, we denote $d\in M(S)$ when we aim to select a concept description within the semiotic $(S,M)$. It's important to note that $d$ defines a relation in a monoidal logic, which may not necessarily be interpretable by $M$ as a relation in $Lang(S)$. Intuitively, a concept description can be visualized as the state of knowledge about a concept at a given moment. A database specified by the sign system $S$ can encapsulate the concept $d$ if there exists a model $M\in Mod(S)$ and a diagram $D\in Lang(S)$ such that
\[M(D)=d.\]
In this scenario, we assert that the query $D$ on the information system defined by semiotic $(S,M)$ yields the answer $d$.

Given two concept descriptions \[d_0:\prod_i A_i\rightarrow\Omega \quad \text{and} \quad d_1:\prod_i A_i\rightarrow\Omega,\] we denote $d_0\leq d_1$ if $d_0(\bar{x})\leq d_1(\bar{x})$ for every $\bar{x}$ in $\Pi A_i$. It is noteworthy that every $\Omega$-set $A$ defines a concept description via the extent map $[\cdot]:A\rightarrow \Omega$. In this context, we regard every $\Omega$-set as a concept description. Within $Set(\Omega)$, every $\Omega$-set with support $\Pi A_i$ is associated with a complete lattice of concept descriptions, having a bottom element $\bot:\Pi A_i\rightarrow \Omega$ and a top element $\top:\Pi A_i\rightarrow \Omega$. Notably, the limit $M(D)$ serves as a concept description for every $D\in Lang(S)$.

While some concept descriptions can be codified semantically, others cannot. Given a pair of concept descriptions \[d_0:\Pi A_i\rightarrow\Omega \quad \text{and} \quad d_1:\Pi A_i\rightarrow\Omega,\] we define
\[\Gamma(d_0,d_1)=d_0\Leftrightarrow d_1.\]
The biimplication $\Gamma$ is a $\otimes$-\emph{similarity relation} in $\Pi A_i$ due to the following properties:
\begin{enumerate}
  \item $\Gamma(d_0,d_0)=\top$ (reflexivity),
  \item $\Gamma(d_0,d_1)=\Gamma(d_1,d_0)$ (symmetry), and
  \item $\Gamma(d_0,d_1)\otimes\Gamma(d_1,d_2)=\Gamma(d_0,d_2)$ (transitivity) (as per Proposition \ref{prop:implic}).
\end{enumerate}
When $\Omega=\{\bot,\top\}$ represents a two-valued logic, $\Gamma$ clearly emerges as an equivalence relation on $\Pi A_i$.

\begin{defn}
Given a semiotic $(S,M)$ and a concept $d$, a diagram $D$ serves as a $\lambda$-codification or a $\lambda$-description for $d$ if
\[
\Gamma(d,M(D))\geq \lambda.
\]
In this context, we also refer to $M(D)$ as an approximation to the concept $d$. Consequently, the relation $D$ acts as a hypothesis that describes the concept encapsulated by $d$, chosen from the language $Lang(S)$.
\end{defn}

This definition can be extended to concept descriptions with differing support sets. Given concept descriptions
\[d_I:\Pi_{i\in I} A_i\rightarrow\Omega \quad \text{and} \quad d_J:\Pi_{j\in J} A_j\rightarrow\Omega,\] 
such that a projection map $\pi:\Pi_{i\in I} A_i\rightarrow\Pi_{j\in J} A_j$ exists, we denote $d_J\preceq d_I$. We term $d_J$ a $\lambda$-projection of $d_I$ if
\[
\Gamma(\pi\otimes d_I, d_J)\geq \lambda.
\]

For a given concept description $d$ and a hypothesis $D$ in $Lang(S)$, the quality of $D$ as a descriptor for $d$ is quantified by:
\[[d=D]=\bigwedge_{\bar{x}}\Gamma(d,D)(\bar{x}),\]
where
\begin{enumerate}
  \item $\Gamma(d,D)(\bar{x})=(\pi\otimes M(D) \Leftrightarrow d)$ if $d\preceq M(D)$,
  \item $\Gamma(d,D)(\bar{x})=( M(D) \Leftrightarrow \pi\otimes d)$ if $M(D)\preceq d$.
\end{enumerate}

We view a model as the fuzzy responses to a query within a semiotic, which we formally define as:

\begin{defn}\label{def:lambda model}
A concept $d$ serves as a $\lambda$-model for $D$ in $Lang(S)$ if either $d\preceq M(D)$ or $M(D)\preceq d$. Additionally, considering the diagram depicted in Fig. \ref{lambdamodel}, we define a $\lambda$-model as a pullback where \[\Gamma_\lambda(d,D)=\Pi A_i,\] given that $[\lambda,\top]$ forms a chain within the lattice $\Omega$. Consequently, we denote this relationship as
\[d\models_\lambda \forall D.\]
\end{defn}

\begin{figure}[h]
\[
\xymatrix @-1pc { \ar @{} [dr] |{\lrcorner}
 \Gamma_\lambda(d,D) \ar[rr] \ar[dd]^{\subset} &&[\lambda,\top] \ar[dd]^{\subset}\\
 &&\\
 \Pi A_i \ar[rr]_{\Gamma(d,D)} &&\Omega
 }
\]
\caption{Diagram evaluation.}\label{lambdamodel}
\end{figure}

In the aforementioned pullback diagram, if
\[\Gamma_\lambda(d,D)\subseteq \Pi A_i \text{ and } \Gamma_\lambda(d,D)\neq \emptyset,\]
we express this relationship as
\[d\models_\lambda \exists D,\]
or, with more precision,
\[d\models_\lambda \forall_B D,\]
where $B=\Gamma_\lambda(d,D)$. This notation is further employed as $d\models_\lambda \forall_C D$ when $C\subseteq \Gamma_\lambda(d,D)$.

When $d\models_\top\forall D$, we simplify this to $d\models\forall D$, indicating that $d$ can be regarded as the answer to the query $D$ within the information system defined by $(S,M)$. Similarly, if $d\models_\top\exists D$, we denote it as $d\models\exists D$, suggesting that a part of $d$ is $\lambda$-consistent with the interpretation of $D$ in the semiotic $(S,M)$.

It's important to note that $\forall_B D$ might not always be interpretable as a formula within the language associated with the employed semiotic, as the language might lack the necessary expressive power to define $B$. However, if \[B=\Gamma_\lambda(d,D)=\Gamma_\top(\chi_B,D'),\] meaning if the domain $B$ can be explicitly defined using the diagram $D'$ in the language, we express this relationship as \[d\models_\lambda \forall_{D'} D.\]
Additionally, in this scenario, for every description $d$, we have
\[
d\models_\lambda \forall_{D'} D \Leftrightarrow d\models_\lambda \forall D' \Rightarrow D.
\]

Given $d_0\models_{\lambda_0} \forall_{B_0} D_0$ and $d_1\models_{\lambda_1} \forall_{B_1} D_1$, we derive:
\[
d_0\otimes d_1\models_{\lambda_0\otimes\lambda_1} \forall_{B_0\cap B_1} D_0\otimes D_1,
\]
\[
d_0\vee d_1\models_{\lambda_0\vee\lambda_1} \forall_{B_0\cap B_1} D_0\vee D_1,
\]
\[
d_0\Rightarrow d_1\models_{\lambda_0\Rightarrow\lambda_1} \forall_{B_0\cap B_1} D_0\Rightarrow D_1.
\]

Given the proposed definition, it follows that every diagram possesses a $\lambda$-model:

\begin{prop}
In a logical semiotic $(S,M)$, if $D$ is a relation defined in the language $Lang(S)$, then
\[MD\models \forall D.\]
\end{prop}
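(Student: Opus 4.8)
The plan is to reduce the statement to the reflexivity of the similarity relation $\Gamma$, unwinding the layered definitions of $\models$, $\Gamma_\lambda$, and $\Gamma$ in turn. First I would set $d := MD$, the limit interpretation $\text{Lim } MD$ of the relation $D$, which by the preceding discussion is a concept description on the support $\prod_i A_i$ determined by the vertices of $D$. Since $D$ is a relation, its output $o(D)$ is interpreted by $M$ as $\Omega$, so $MD$ is genuinely $\Omega$-valued and hence a legitimate concept description. With this choice $d = M(D)$ on the nose, and therefore the two alternative hypotheses $d \preceq M(D)$ and $M(D) \preceq d$ required by Definition \ref{def:lambda model} both hold trivially, the connecting projection $\pi$ being the identity on $\prod_i A_i$.

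Next I would evaluate $\Gamma(d, D)$ pointwise. By its definition $\Gamma(d, D)(\bar{x})$ equals $M(D)(\bar{x}) \Leftrightarrow d(\bar{x})$, and since $d = M(D)$ this is $M(D)(\bar{x}) \Leftrightarrow M(D)(\bar{x})$. Invoking the reflexivity property $\Gamma(d_0, d_0) = \top$ already recorded for $\Gamma$ (equivalently, $a \Rightarrow a = \top$ because $a \leq a$, together with $\top \otimes \top = \top$), this value is $\top$ for every $\bar{x} \in \prod_i A_i$. Thus $\Gamma(MD, D)$ is the constant map with value $\top$.

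Finally I would feed this into the pullback of Fig.~\ref{lambdamodel} at $\lambda = \top$. The object $\Gamma_\top(d, D)$ is the preimage $\Gamma(d, D)^{-1}([\top,\top])$; since $[\top,\top] = \{\top\}$ is a (degenerate) chain and $\Gamma(d, D)$ is identically $\top$, this preimage is all of $\prod_i A_i$. By Definition \ref{def:lambda model} the equality $\Gamma_\top(MD, D) = \prod_i A_i$ together with the $\preceq$ condition is exactly the assertion $MD \models_\top \forall D$, abbreviated $MD \models \forall D$, which closes the argument.

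I do not expect a genuine obstacle here: the mathematical content is the tautology that a diagram's own interpretation is a $\top$-model of that diagram, and this is forced the moment $\Gamma$ is seen to be reflexive. The only care needed is bookkeeping — confirming that $MD$ and $M(D)$ are literally the same object on the same support so that $\pi = \mathrm{id}$ and both $\preceq$ alternatives are vacuously met, and verifying that the pullback really selects the full product $\prod_i A_i$ precisely when the comparison map $\Gamma(d,D)$ is constantly $\top$.
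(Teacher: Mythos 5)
Your proposal is correct and matches the paper's (implicit) argument: the paper states this proposition as an immediate consequence of Definition \ref{def:lambda model}, and your unwinding — take $d = MD$ so that $\pi = \mathrm{id}$, use reflexivity $\Gamma(MD,MD)=\top$ to see the comparison map is constantly $\top$, hence the pullback $\Gamma_\top(MD,D)$ is all of $\prod_i A_i$ — is exactly that definitional check, carried out carefully.
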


From the introduced notion of similarity, characterized by biimplication, we can identify as $\lambda$-models for $D$ the concepts that are $\lambda$-similar to its interpretation $MD$:

\begin{prop}
If $\Gamma(d,MD)\geq \lambda$, then $d\models_\lambda \forall D$.
\end{prop}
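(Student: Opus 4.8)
The plan is to prove the statement by unwinding the three definitions in play: the similarity $\Gamma$, the pullback object $\Gamma_\lambda(d,D)$ of Fig.~\ref{lambdamodel}, and the relation $\models_\lambda\forall$ of Definition~\ref{def:lambda model}. Since $\Gamma(d,MD)$ is only meaningful when $d$ and $MD$ share the common support $\Pi A_i$, I would work in that setting, so that the projection $\pi$ appearing in the quality measure is the identity and hence $\Gamma(d,D)=\Gamma(d,MD)$ pointwise.

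First I would read the hypothesis $\Gamma(d,MD)\geq\lambda$ as the assertion that the constant map $\lambda$ lies below $\Gamma(d,MD)$ in the complete lattice of concept descriptions with support $\Pi A_i$; by the pointwise order this is equivalent to $\Gamma(d,MD)(\bar{x})=\bigl(d(\bar{x})\Leftrightarrow MD(\bar{x})\bigr)\geq\lambda$ for every $\bar{x}\in\Pi A_i$. Equivalently, using that an infimum dominates $\lambda$ iff every term does, this says $[d=D]=\bigwedge_{\bar{x}}\Gamma(d,D)(\bar{x})\geq\lambda$.

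Next I would verify the two requirements of Definition~\ref{def:lambda model}. The clause ``$d\preceq MD$ or $MD\preceq d$'' holds automatically here, because $d$ and $MD$ have equal support and the identity furnishes the required projection in both directions. For the pullback clause, I would identify $\Gamma_\lambda(d,D)$ with the subobject $\{\bar{x}\in\Pi A_i:\Gamma(d,D)(\bar{x})\in[\lambda,\top]\}$, which is the correct reading of the pullback of $\Gamma(d,D)\colon\Pi A_i\to\Omega$ along the inclusion $[\lambda,\top]\hookrightarrow\Omega$, legitimate since $[\lambda,\top]$ is a sub-chain of $\Omega$. From the hypothesis, every $\bar{x}$ satisfies $\Gamma(d,D)(\bar{x})\geq\lambda$, i.e. $\Gamma(d,D)(\bar{x})\in[\lambda,\top]$, so the superlevel set is all of $\Pi A_i$ and $\Gamma_\lambda(d,D)=\Pi A_i$. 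By Definition~\ref{def:lambda model} this is exactly $d\models_\lambda\forall D$.

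The computation is essentially definitional, so the only real obstacle is notational bookkeeping: one must confirm that the symbol $\geq\lambda$ in the hypothesis is the pointwise (equivalently, meet-wise) inequality rather than a single comparison, and that the pullback along $[\lambda,\top]$ selects precisely the points whose similarity value is at least $\lambda$. If one wished to state the result beyond equal support, the case split on $d\preceq MD$ versus $MD\preceq d$ in the quality measure would have to be tracked, but the argument is otherwise identical once $\Gamma(d,D)$ is replaced by the appropriate projected version.
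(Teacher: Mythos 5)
Your proof is correct and matches the paper's intent: the paper states this proposition without an explicit proof, treating it as an immediate consequence of the definitions, and your argument is precisely that definitional unwinding — the pointwise hypothesis $\Gamma(d,MD)\geq\lambda$ places every $\bar{x}\in\Pi A_i$ in the superlevel set that realizes the pullback $\Gamma_\lambda(d,D)$, forcing $\Gamma_\lambda(d,D)=\Pi A_i$ as required by Definition~\ref{def:lambda model}. Your explicit handling of the $d\preceq M(D)$ clause and of the equivalence between the pointwise and meet-wise readings of the hypothesis is careful bookkeeping the paper leaves implicit, but it is the same argument.
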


Naturally, we utilize the definition of similarity to formalize our understanding of concepts that are consistent with relations.

\begin{defn}[Consistence]
Given a semiotic $(S,M)$, a relation $D$ from $Lang(S)$ is said to be \emph{consistent with} $d\in M(S)$ if $d\models \forall D$. It is $\lambda$-\emph{consistent with} $d$ when $d\models_\lambda \forall D$. Moreover, the relation $D$ is consistent with a part of $d$ if $d\models \exists D$, and it is $\lambda$-\emph{consistent with a part of} $d$ when $d\models_\lambda \exists D$.
\end{defn}

The set of hypotheses consistent with $d$ is denoted by $Hy_{(S,M)}(d)$. For every $\lambda\in\Omega$, the set of hypotheses $\lambda$-consistent with $d$ is represented as $\lambda$-$Hy_{(S,M)}(d)$. For a chain of truth values in $\Omega$ given by
\[
\top\geq \lambda_0\geq \lambda_1\geq \ldots \geq \lambda_n,
\]
we establish the following inclusion relationships:
\[
Hy_{(S,M)}(d)\subseteq \lambda_0\text{-}Hy_{(S,M)}(d)\subseteq  \lambda_1\text{-}Hy_{(S,M)}(d)\subseteq  \ldots \subseteq  \lambda_n\text{-}Hy_{(S,M)}(d).
\]

\begin{exam}[Description Consistent with a Dataset]
Consider a binary semiotic $(S,M)$ with signs $A,B,C,D$. Let \[d:M(A)\times M(B)\times M(C)\times M(D)\rightarrow \Omega\] be a finite crisp concept description. This means that for every entity $\bar{x}$, the value of $d(\bar{x})$ is either $\top$ or $\bot$, and the number of entities $\bar{x}$ for which $d(\bar{x})=\top$ is finite. In this context, there exists a word $D$ in the language associated with the semiotic that is consistent with $d$, known as the dataset used to describe $d$.

To be more specific, suppose the signs $A,B,C,D$ share the same interpretation, such that $M(A)=M(B)=M(C)=M(D)=[0,1]$. Let's also assume the following:
\begin{enumerate}
  \item $d(1.0,0.5,0.2,0.2)=\top$,
  \item $d(1.0,1.0,0.2,0.2)=\top$, and
  \item $d(1.0,1.0,0.0,0.2)=\top$.
\end{enumerate}
These are the only tuples where $d$ evaluates to $\top$. The relation $d$ is consistent with the diagram
\[
_{(A=1.0 \otimes B=0.5 \otimes C=0.2 \otimes D=0.2)\otimes (A=1.0 \otimes B=1.0 \otimes C=0.2 \otimes D=0.2)\otimes (A=1.0 \otimes B=1.0 \otimes C=0.0 \otimes D=0.2)}
\]
or, equivalently, $d$ serves as the answer to the query defined by the diagram. This can be represented using table notation as shown in Figure \ref{dataset}.

\begin{figure}[h]
\begin{center}
\begin{tabular}{|c|c|c|c|}
  \hline
  A & B & C & D \\
  \hline
  1.0 & 0.5 & 0.2 & 0.2 \\
  1.0 & 1.0 & 0.2 & 0.2 \\
  1.0 & 1.0 & 0.0 & 0.2 \\
  \hline
\end{tabular}
\end{center}
\caption{Dataset.}\label{dataset}
\end{figure}
\end{exam}

\chapter{Fuzzy computability}\label{fuzzy computability}
When the interpretation of a diagram is consistent with a multi-morphism, we consider the multi-morphism to be computable within the semiotic. Formally:

\begin{defn}[Computability]
Given a semiotic $(S,M)$, a multi-morphism $f:A\rightharpoonup B$ is said to be computable in $(S,M)$ if there exists a diagram $D$ in $Lang(S)$ satisfying:
\begin{enumerate}
  \item $A$ serves as the input, with $A=i(D)$, and $B$ as the output, with $B=o(D)$, and
  \item $D$ codifies $f$, meaning that $f\models \forall D$.
\end{enumerate}
The multi-morphism $f$ is said to be $\lambda$-computable in $(S,M)$ if $A=i(D)$, $B=o(D)$, and $f\models_\lambda \forall D$. These definitions are quite restrictive, so we relax them by considering a diagram $D$ as a specification to compute part of $f$ if $d\models \exists D$. When the domain of the computable part of $f$ can be described by a diagram $D'$, we write \[f\models \forall_{D'} D.\]
\end{defn}

When $f\models \forall D$, with $A=i(D)$ and $B=o(D)$, we refer to the diagram $D$ as a \emph{program} or \emph{specification} in the language $Lang(S)$. Its image under $M$ serves as an implementation for the multi-morphism $f:A\rightharpoonup B$.

In this context, every interpretation of words from $Lang(S)$ is computable in the semiotic $(S,M)$. Furthermore, since words in $Lang(S)$ are generated from atomic components, we can state the following proposition:

\begin{prop}
If $f$ and $g$ are computable multi-morphisms in the semiotic $(S,M)$, then $f\otimes g$ is also computable in $(S,M)$.
\end{prop}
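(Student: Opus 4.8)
The plan is to exhibit an explicit program whose interpretation is $f \otimes g$, obtained by gluing the two configurations that witness the computability of $f$ and of $g$. First I would unwind the hypothesis. By the definition of computability, since $f$ is computable there is a diagram $D_f \in Lang(S)$ with $i(D_f) = A$, $o(D_f) = B$ and $f \models \forall D_f$; likewise $g$ is witnessed by some $D_g \in Lang(S)$. Recalling that $d \models \forall D$ abbreviates $d \models_\top \forall D$, and that $\Gamma_\top(d,D) = \Pi A_i$ forces $\Gamma(d, M(D)) = d \Leftrightarrow M(D) = \top$ at every point, the residuation property of the ML-algebra (namely $x \le y$ iff $x \Rightarrow y = \top$, applied to both directions of the biimplication) yields $M(D_f) = f$ and $M(D_g) = g$ exactly. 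Thus computability of a multi-morphism is precisely the assertion that it is realized as the limit interpretation $M(D)$ of some configuration in the language.

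The candidate program for $f \otimes g$ is the glued configuration $D := D_f \otimes D_g$. I would first check that $D$ is again a legal word: since $Lang(S) = \G^\ast(L) = (Mgraph \downarrow \G(L))$ is closed under the diagram gluing operation $\otimes$ defined in Chapter \ref{specifying libraries} (which glues vertices of $o(D_f)$ to the matching vertices of $i(D_g)$ and keeps the rest distinct), we have $D \in Lang(S)$. Moreover its input and output vertices are exactly the source and target of the composite $f \otimes g$ as prescribed by Definition \ref{def:composition}, so condition (1) of the computability definition is met.

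The heart of the argument is condition (2), that $M(D) = f \otimes g$. Here I would use that a model preserves component gluing and that this preservation extends from atomic components to configurations: by the preservation-of-decomposition proposition of Chapter \ref{modeling libraries}, in a basic logic
\[
M(D_f \otimes D_g) = [\cdot = \cdot]_{\otimes H} \Rightarrow \bigl( M(D_f) \otimes M(D_g) \bigr), \qquad H = M\bigl(i(D_g) \cap o(D_f)\bigr).
\]
Substituting $M(D_f) = f$ and $M(D_g) = g$, and noting that the matrix-product definition of multi-morphism composition (Definition \ref{def:composition}) already performs the join over the shared $\Omega$-sets $H$, I would argue that the residuated correction factor telescopes and leaves $M(D) = f \otimes g$. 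The cleanest subcase, where $D_f$ and $D_g$ share no glued vertices, has $H = \emptyset$, $[\cdot = \cdot]_{\otimes H} = \top$, and so $M(D) = f \otimes g$ follows at once from the limit definition. From $M(D) = f \otimes g$ we conclude $f \otimes g \models \forall D$, so $f \otimes g$ is computable in $(S,M)$.

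The main obstacle I anticipate is exactly this reconciliation of the two notions of composition: the limit-based interpretation of the glued diagram versus the matrix product of Definition \ref{def:composition}. The residual $[\cdot = \cdot]_{\otimes H} \Rightarrow (-)$ acts as an identity only because the shared extent is counted once inside the glued limit but would be double-counted in a naive tensor of the two separate limits; showing that it cancels relies on the divisibility of $\Omega$ (the basic-logic hypothesis used in Chapter \ref{modeling libraries}), and outside that setting one must restrict attention to configurations whose internal gluing introduces no spurious similarity term.
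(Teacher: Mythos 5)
Your overall route --- glue the witnessing diagrams and offer $D=D_f\otimes D_g$ as the program for $f\otimes g$ --- is exactly the argument the paper has in mind: the paper gives no explicit proof, only the preceding remark that interpretations of words are computable and that words of $Lang(S)$ are generated from atomic components under gluing. Your unwinding of the hypothesis is also sound: in any ML-algebra, $x\Leftrightarrow y=\top$ forces $x\leq y$ and $y\leq x$, so $f\models\forall D_f$ does give $M(D_f)=f$ exactly, and the glued configuration has the right input and output vertices.

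The step that fails is the ``telescoping'' of the correction factor. Divisibility gives $x\otimes(x\Rightarrow y)=x\wedge y$; it does not make the residual $[\cdot=\cdot]_{\otimes H}\Rightarrow(-)$ an identity. Concretely, in product logic $x\Rightarrow y=\min(1,y/x)>y$ whenever $\bot<x<\top$ and $y>\bot$, so $Lim\,M(D_f\otimes D_g)$ strictly exceeds $f\otimes g$ at any point where an interface vertex has non-global extent. The discrepancy is precisely the double counting you yourself identify: since $f=Lim\,MD_f$ and $g=Lim\,MD_g$ each absorb one factor of the interface extent $[b]$, the matrix product of Definition \ref{def:composition} carries $[b]\otimes[b]$, while the glued limit carries $[b]$ only once; divisibility relates the two by $f\otimes g=[\cdot=\cdot]_{\otimes H}\otimes Lim\,M(D_f\otimes D_g)$ (pointwise, before the join over $H$), not by equality. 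So, as written, your argument proves computability of $f\otimes g$ only when $H=\emptyset$ or all interface extents equal $\top$. The way to close the gap inside the paper's framework is to bypass the limit-gluing proposition of Chapter \ref{modeling libraries} altogether and invoke instead property (2) in the definition of a library model from that same chapter, namely $M(r\otimes r')=M(r)\otimes M(r')$: since $D_f\otimes D_g$ is again a word of the language, this axiom yields $M(D_f\otimes D_g)=M(D_f)\otimes M(D_g)=f\otimes g$ by definition of model, with nothing left to cancel, and both conditions of computability follow as you state. (That this model axiom and the limit formula with its correction factor do not literally agree is a tension internal to the paper, but the proposition at hand is the immediate consequence of the axiom, which is the one-line justification the paper intends.)
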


Moreover, since $Lang(S)$ is defined by finite diagrams, every finite diagram $D$ in $Set(\Omega)$, comprising arrows of computable multi-morphisms, will have a limit that is a computable multi-morphism.

The set of interpretations of words from $Lang(S)$ and computable multi-morphisms together define a category, denoted by $Hy_{(S,M)}$. In this category, we write $f:d_1\rightarrow d_2$ if $f$ is a computable multi-morphism and $d_1$ and $d_2$ are consistent descriptions in the semiotic, satisfying $d_1\otimes f=d_2$. It's worth noting that if $D$ is consistent with $d_1$ and $D_f$ is the specification for $f$, then the diagram $D\otimes D_f$ is consistent with $d_1\otimes f$.

More generally, if $(d\Leftrightarrow MD)\geq \lambda$ and $(f\Leftrightarrow MD_f)=\top$, then $(d\otimes f \Leftrightarrow MD\otimes f)\geq \lambda$, i.e., $(d\otimes f \Leftrightarrow MD\otimes MD_f)\geq \lambda$. Formally:

\begin{prop}
Let $d$ be a description $\lambda$-consistent with $D$, and let $f$ be a computable multi-morphism specified by $D_f$. Then, $d\otimes f$ is a description $\lambda$-consistent with the diagram $D\otimes D_f$.
\end{prop}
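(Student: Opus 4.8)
The plan is to reduce the statement to a single inequality in the ML-algebra $\Omega$ and then lift it along the supremum appearing in multi-morphism composition. First I would unpack the hypotheses through the definitions of consistency. Saying that $d$ is $\lambda$-consistent with $D$ means $d\models_\lambda\forall D$, and by the proposition preceding the Consistence definition this follows from $\Gamma(d,MD)=(d\Leftrightarrow MD)\geq\lambda$ pointwise on the common support. Saying that $f$ is computable and specified by $D_f$ means $f\models\forall D_f$, i.e. $f\models_\top\forall D_f$; since $(x\Leftrightarrow y)=\top$ forces $x\leq y$ and $y\leq x$, this gives $f=MD_f$ exactly. Finally, because a model preserves component gluing, $M(D\otimes D_f)=MD\otimes MD_f$ as multi-morphisms. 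Hence the target $\Gamma(d\otimes f,M(D\otimes D_f))\geq\lambda$ becomes $(d\otimes f\Leftrightarrow MD\otimes f)\geq\lambda$, to be derived from $(d\Leftrightarrow MD)\geq\lambda$.

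Next I would isolate two algebraic facts about $\Leftrightarrow$ in the quantale $\Omega$. The first is tensor congruence: from the law of residuation one obtains $(a\Rightarrow b)\otimes(c\Rightarrow d)\leq(a\otimes c)\Rightarrow(b\otimes d)$, since $a\otimes(a\Rightarrow b)\leq b$ and $\otimes$ is monotone; specialising $c=d$, so that $c\Rightarrow c=\top$, and combining both directions yields $a\Leftrightarrow b\leq(a\otimes c)\Leftrightarrow(b\otimes c)$. The second is join congruence: if $a_z\Leftrightarrow b_z\geq\lambda$ for every index $z$, then $\lambda\otimes a_z\leq b_z\leq\bigvee_z b_z$, and because $\otimes$ distributes over arbitrary joins in a complete residuated lattice, $\lambda\otimes\bigvee_z a_z\leq\bigvee_z b_z$, whence $\bigl(\bigvee_z a_z\bigr)\Leftrightarrow\bigl(\bigvee_z b_z\bigr)\geq\lambda$ by residuation and symmetry.

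Then I would combine these with the explicit shape of the composite (Definition \ref{def:composition}). Writing $(d\otimes f)(\bar x,\bar y)=\bigvee_z d(\bar x,\bar z)\otimes f(\bar z,\bar y)$ and the analogous expression for $MD\otimes f$, tensor congruence applied with $c=f(\bar z,\bar y)$ gives, for each $z$, $d(\bar x,\bar z)\otimes f(\bar z,\bar y)\Leftrightarrow MD(\bar x,\bar z)\otimes f(\bar z,\bar y)\geq d(\bar x,\bar z)\Leftrightarrow MD(\bar x,\bar z)\geq\lambda$; join congruence over $z$ then yields $(d\otimes f)(\bar x,\bar y)\Leftrightarrow(MD\otimes f)(\bar x,\bar y)\geq\lambda$ for all $(\bar x,\bar y)$. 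Since the composites $d\otimes f$ and $MD\otimes f=M(D\otimes D_f)$ share the same source and target, their supports coincide and $\Gamma$ is computed on the nose, so $\Gamma(d\otimes f,M(D\otimes D_f))\geq\lambda$, i.e. $d\otimes f\models_\lambda\forall(D\otimes D_f)$, as claimed.

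The main obstacle is the passage through the supremum: biimplication is not automatically compatible with joins, and the join-congruence step genuinely relies on $\Omega$ being a complete lattice on which $\otimes$ preserves arbitrary suprema (the quantale structure recorded earlier). A secondary point requiring care is the bookkeeping of supports, so that $d\otimes f$ and $MD\otimes f$ are compared over the same index set, which is precisely what permits the identity-projection case of $\Gamma$ rather than the projected one.
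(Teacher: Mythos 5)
Your route is the same as the paper's: the paper's entire ``proof'' is the unproved assertion, displayed just before the proposition, that $(d\Leftrightarrow MD)\geq\lambda$ and $(f\Leftrightarrow MD_f)=\top$ imply $(d\otimes f\Leftrightarrow MD\otimes f)\geq\lambda$, and your reduction ($f=MD_f$, $M(D\otimes D_f)=MD\otimes MD_f$ by preservation of gluing) together with your two lemmas is exactly an attempt to justify that assertion. The tensor-congruence lemma is correct. The gap is the final step of the join-congruence lemma. Residuation gives $\lambda\leq\bigl(\bigvee_z a_z\bigr)\Rightarrow\bigl(\bigvee_z b_z\bigr)$, and the symmetric argument gives $\lambda\leq\bigl(\bigvee_z b_z\bigr)\Rightarrow\bigl(\bigvee_z a_z\bigr)$; but the paper defines $x\Leftrightarrow y$ as the \emph{tensor} $(x\Rightarrow y)\otimes(y\Rightarrow x)$, not the meet, so these two bounds only yield $\bigl(\bigvee_z a_z\bigr)\Leftrightarrow\bigl(\bigvee_z b_z\bigr)\geq\lambda\otimes\lambda$, and in an ML-algebra $\lambda\otimes\lambda$ may be strictly below $\lambda$. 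Note the contrast with your tensor-congruence step, where you tensor two inequalities and the tensor bound survives; here the two implications are bounded \emph{separately}, and recombining them loses a factor.

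This is not a presentational defect that a rearrangement can fix: the join-congruence lemma is false in a general ML-algebra. Take $\Omega=\{0<p<a,b<c<\top\}$ with $a,b$ incomparable, $a\wedge b=p$, $a\vee b=c$, and $\otimes$ defined by $a\otimes a=b\otimes b=c\otimes c=a\otimes c=b\otimes c=p$, all other products of elements of $\{p,a,b,c\}$ equal to $0$, and $\top$ the unit; one checks this is a commutative, monotone, associative, join-preserving product, hence an ML-algebra. Then $a\Rightarrow p=c$ and $p\Rightarrow a=\top$, so $a\Leftrightarrow p=c$, and likewise $p\Leftrightarrow b=c$; but $a\Rightarrow b=b\Rightarrow a=c$, so $a\Leftrightarrow b=c\otimes c=p$. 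With $a_1=a$, $b_1=p$, $a_2=p$, $b_2=b$ and $\lambda=c$ the hypothesis of your lemma holds, while $(a_1\vee a_2)\Leftrightarrow(b_1\vee b_2)=a\Leftrightarrow b=p\not\geq c$. Feeding this into the composition formula (two-point middle support, $f\equiv\top$) refutes the proposition itself in this algebra, so the paper's own unproved assertion carries the same defect. Your argument (and the proposition) is correct whenever this phenomenon is excluded: if $\Omega$ is a chain, the two joins are comparable, one of the two implications is $\top$, and the biimplication equals the other one, which is $\geq\bigwedge_z(b_z\Rightarrow a_z)\geq\lambda$; similarly if $\otimes$ is idempotent (G\"odel/Heyting), or more generally if $\lambda\otimes\lambda=\lambda$. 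These cases cover every concrete logic used in the paper ({\L}ukasiewicz, G\"odel, Product on $[0,1]$, and products of chains). In full generality, what your proof actually establishes is that $d\otimes f$ is $\lambda\otimes\lambda$-consistent with $D\otimes D_f$, and that is the honest form of the statement.
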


In this context, a computable multi-morphism is recognized as a pre-processing tool in the data mining community. This leads to the definition of $\lambda$-$Hy_{(S,M)}$, the category of concepts that are $\lambda$-consistent and computable multi-morphisms in the semiotic $(S,M)$. Naturally, both the limit and the colimit, in the conventional sense, of finite diagrams in $\lambda$-$Hy_{(S,M)}$ define computable relations. We refer to this type of finite diagrams as \emph{mining schemas}. Given a mining schema $D$ in semiotic $(S,M)$ and a $\lambda$-consistent concept $d$, the limit $Lim\;D$ defines a computable multi-morphism, and $d\otimes Lim\;D$ is a $\lambda$-consistent concept, interpreted as the output of schema $D$ when applied to concept $d$.

As is customary, we extend the notion of computability by defining:

\begin{defn}[Turing computable]
A concept $d$ is called \emph{Turing computable} in the semiotic $(S,M)$ if there exists a diagram $D$, possibly infinite but enumerable, such that \[Lim\;D=d.\]
\end{defn}

Computability is typically associated with state-based systems. In the context of a semiotic, the interpretation of a state must be time-dependent. Given the static definition of sign interpretation presented earlier, we can only capture this dynamic behavior by employing an ontological hierarchy. We envision the possible interpretation of a sign as a class of structures that could serve as potential instantiations for it during system execution. To achieve this, we utilize a syntactic operator that links together signs within the same class, representing different perspectives of the same entity. It is imperative that the class of related signs, connected by the syntactic operator, share the same generalization sign within the sign ontology. The presence of such a syntactic operator in a semiotic is what we referred to as a syntactic operator in chapter \ref{synopt}.

\begin{defn}[Temporal semiotics]
A \emph{temporal semiotic} is defined as a semiotic $(S,M)$, characterized by a library $L:|L|\rightarrow (Chains\downarrow \Sigma^+)$ and augmented with a syntactic operator \[t:\Sigma^+\rightarrow \Sigma^+\] that adheres to the following conditions:
\begin{enumerate}
  \item It maintains the polarization of signs, specifically, $t(s^+)=t(s)^+$ for every $s\in\Sigma$;
  \item It upholds concatenation, meaning $t(w_0.w_1)=t(w_0).t(w_1)$ holds true for every pair of words $w_0,w_1$;
  \item It ensures the functionality of components; for any function $f:w\rightarrow w'$, there exists a corresponding component \[t(f):t(w)\rightarrow t(w').\]
\end{enumerate}
We further require the existence of a component \[t(r):i(t(w))\rightarrow o(t(w))\] for every component $r:i(w)\rightarrow o(w)$. Additionally, we introduce an ontological hierarchy for signs that remains time-invariant when relating time-dependent signs. That is, if $s_1=t(s_0)$, then there exists a sign $s$ such that $s_1\leq s$, $s_0\leq s$, and $s=t(s)$. Consequently, every sequence of time-dependent signs \[s_0,t(s_0),t(t(s_0)),t(t(t(s_0))),\ldots\] is generalized by the same sign $s$ in the ontology. We refer to $s$ as a \emph{time-invariant sign}.
\end{defn}

In a temporal semiotic $(S,M)$, if $r:i(w)\rightarrow o(s(w))$ is a component within the semiotic, then its interpretation $M(r):M(i(w))\rightharpoonup M(o(s(w)))$ is termed a \emph{coalgebra}. A sign $s\in\Sigma$ is deemed \emph{time-invariant} in the semiotic if $M(s)=M(t(s))$.

A \emph{temporal logic semiotic} is characterized as a semiotic that is both a logic semiotic and a temporal semiotic.

\begin{exam}[Fuzzy Turing machine]
A fuzzy Turing machine, whose tape is defined using signs from $F$, can be conceptualized as a word within the language associated with a temporal logic semiotic $(S,M)$. The interpretation of this word can be viewed as its execution. The structure of the machine can be represented within a sign system $S$ with a library $L$, where the signs correspond to a set of machine states, denoted as $Q$, and the components represent Turing machine instructions labeled within a set $I$.

Each instruction in $I$ takes a conditional form: it dictates an action based on the distribution of symbols currently being scanned on the tape. Specifically, there are three categories of actions that can be executed:
\begin{enumerate}
  \item Print: Modify the symbol distribution in the square currently being scanned;
  \item Move one square to the right;
  \item Move one square to the left.
\end{enumerate}
Thus, depending on the active instruction and the symbol distribution being scanned, the machine or its operator will execute one of these actions.

An instruction defines a link between two states and is encoded as component labels with the following structure:
\begin{enumerate}
  \item $q_0[f] q_1$: If in state $q_0$ the scanned distribution is modified using the interpretation of component $f$, the machine transitions to state $q_1$;
  \item $q_0[d_0:L]_\lambda q_1$: If in state $q_0$ the machine reads a distribution $d$ and $d\otimes M(d_0)\geq\lambda$, then the machine moves left and transitions to state $q_1$;
  \item $q_0[d_0:R]_\lambda q_1$: If in state $q_0$ the machine reads a distribution $d$ and $d\otimes M(d_0)\geq\lambda$, then the machine moves right and transitions to state $q_1$.
\end{enumerate}
An instruction is executed if its condition is satisfied.

In this context, a diagram in $Lang(L)$, defined using time-invariant signs, serves as a \emph{Turing machine specification}, with states in $Q$ and tape symbols from $F$. Every refinement of a Turing machine specification in $Lang(L)$, defined using only time-variant signs, is referred to as a \emph{flow chart}, and it encodes a Turing machine's execution. To ensure the correct interpretation of an instruction, for each state $q_i\in Q$ in the sign system, we have signs 
\[
q_i^{(r)},q_i^{(m)},q_i^{(l)},q_i^{(hr)},q_i^{(tr)},q_i^{(hl)},q_i^{(tl)}
\]
where $q_i^{(r)}$ is interpreted as the tape's right half, $q_i^{(m)}$ denotes the reading square, and $q_i^{(l)}$ represents the tape's left half. Additionally, for each tape half, we select the right half head $q_i^{(hr)}$, the right tail head $q_i^{(tr)}$, the left half head $q_i^{(hl)}$, and the left tail head $q_i^{(tl)}$. The structure of the sign system is defined in such a way that the relationship between these signs and $q_i$ is preserved if a model $M$ satisfies:

\begin{enumerate}
   \item $M(q_i)=M(q_i^{(r)})\otimes_I M(q_i^{(m)})\otimes_I M(q_i^{(l)})$;
   \item $M(q_i^{(r)})=M(q_i^{(hr)})\otimes_I M(q_i^{(tr)})$;
   \item $M(q_i^{(l)})=M(q_i^{(hl)})\otimes_I M(q_i^{(tl)})$;
\end{enumerate}
This interpretation for signs reflects the relations between I-projections (see Example \ref{def:indexproduct}) as expressed in the following diagram:

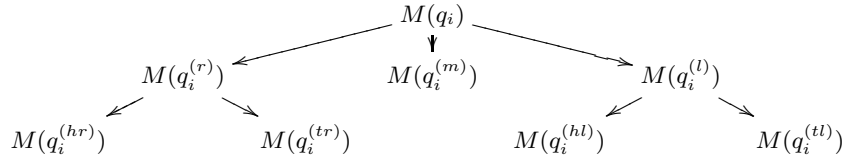
\begin{figure}[h]
\[
\small
\xymatrix @=7pt {
&&&M(q_i)\ar[lld]\ar[rrd]\ar[d]&&&\\
&M(q_i^{(r)})\ar[ld]\ar[rd]&&M(q_i^{(m)})&&M(q_i^{(l)})\ar[ld]\ar[rd]&\\
M(q_i^{(hr)})&&M(q_i^{(tr)})&&M(q_i^{(hl)})&&M(q_i^{(tl)})
}
\]
\caption{Sign interpretation structure.}\label{struct}
\end{figure}
And models of each instruction must satisfy the following conditions:
\begin{enumerate}
  \item For the print instruction $q_0[f] q_1$, we should have:
\[
_{M(q_i [f] t(q_j))^\circ\otimes M(q_i)\otimes M(q_i [f] t(q_j))=M(t(q_j))}
 \Rightarrow
\left\{
  \begin{array}{l}
    _{M(t(q_j)^{(r)})=M(q_i^{(r)})} \\
    _{M(t(q_j)^{(m)})=M(f)^\circ\otimes M(q_i^{(m)})\otimes M(f)}\\
    _{M(t(q_j)^{(l)})=M(q_i^{(l)})}\\
  \end{array}
\right.
\]
  \item For instructions of the type "Move one square to the left" $q_0[d_0:L]_\lambda q_1$, we must have:
\[
_{\left\{
  \begin{array}{l}
    _{M(q_i^m)\otimes M(d_0)\geq\lambda}\\
    _{M(q_i [d_0:L]_\lambda t(q_j))^\circ\otimes M(q_i)\otimes M(q_i [d_0:L]_\lambda t(q_j))=M(t(q_j))} \\
  \end{array}
\right.
\Rightarrow
\left\{
  \begin{array}{l}
    _{M(t(q_j)^{(r)})=M(q_j^{(r)})\otimes_I M(q_i^{(m)})} \\
    _{M(t(q_j)^{(m)})=M(q_i^{(hl)})} \\
    _{M(t(q_j)^{(l)})=M(q_i^{(tl)})}\\
  \end{array}
\right.}
\]

\item For instructions of the type "Move one square to the right" $q_0[d_0:R]_\lambda q_1$, we must have:
\[
_{\left\{
  \begin{array}{l}
    _{M(q_i^m)\otimes M(d_0)\geq\lambda}\\
    _{M(q_i [d_0:R]_\lambda t(q_j))^\circ\otimes M(q_i)\otimes M(q_i [d_0:R]_\lambda t(q_j))=M(t(q_j))} \\
  \end{array}
\right.
\Rightarrow
\left\{
  \begin{array}{l}
    _{M(t(q_j)^{(r)})=M(q_i^{(hr)})} \\
    _{M(t(q_j)^{(m)})=M(q_i^{(tr)})} \\
    _{M(t(q_j)^{(l)})=M(q_i^{(m)}) \otimes_I M(q_i^{(tl)})}\\
  \end{array}
\right.}
\]
\end{enumerate}
Thus, a model $M$ assigns to each state a \emph{fuzzy tape} with signs in $F$, which can be seen as an infinite chain of indexed products (see example \ref{def:indexproduct}):
\[
t=\underbrace{\overbrace{\cdots\otimes_I d_{5}}^{t^{(hr)}}\otimes_I \overbrace{d_{3}}^{t^{(tr)}}}_{t^{(r)}}\otimes_I \underbrace{d_{1}}_{t^{(m)}} \otimes_I \underbrace{\overbrace{d_{2}}^{t^{(hl)}}\otimes_I \overbrace{d_{4}\otimes_I\cdots}^{t^{(tl)}}}_{t^{(l)}}
\]
where we fix a component $t^{(m)}=d_{1}$, and such that each $d_i$ is a concept description $d_i:F\times I\rightarrow \Omega$. Moreover, the model $M$ associates with each possible instruction (component) a relation between fuzzy tapes $t_0$ and $t_1$, satisfying the described properties.

A fuzzy Turing machine begins its execution in an initial state and is a parallel device; at any given moment, it can be in more than one state. It finishes its execution when it is stalled in a state or set of states.

\end{exam}

\chapter{Consequence relation}\label{consequence relation}
In a semiotic $(S,M)$, we define for every relation $D$ in $Lang(S)$ the set of its $\lambda$-answers as:
\[ ans_\lambda(D)=\{ g \in  M(S): g\models_\lambda \forall_{D'} D\}\]
This set can be interpreted as the collection of concepts that are $\lambda$-consistent with $D$, defined on the domain specified by $D'\in Lang(S)$.

\begin{exam}
The examples presented in this chapter utilize a grid semiotic, which has the expressive power to encode structures on a grid, employing a three truth-values logic $\Omega=\{\bot,\frac{1}{2},\top\}$.

Let $D$ be the diagram that defines a relation between pairs of entities in a grid, as illustrated in Fig. \ref{grid1}. In this figure, white points $\bar{x}$ correspond to $M(D)(\bar{x})=\bot$, gray points represent $M(D)(\bar{x})=\frac{1}{2}$, and darker points indicate $M(D)(\bar{x})=\top$.

\begin{figure}[h]
\begin{center}
\includegraphics[width=1.9cm]{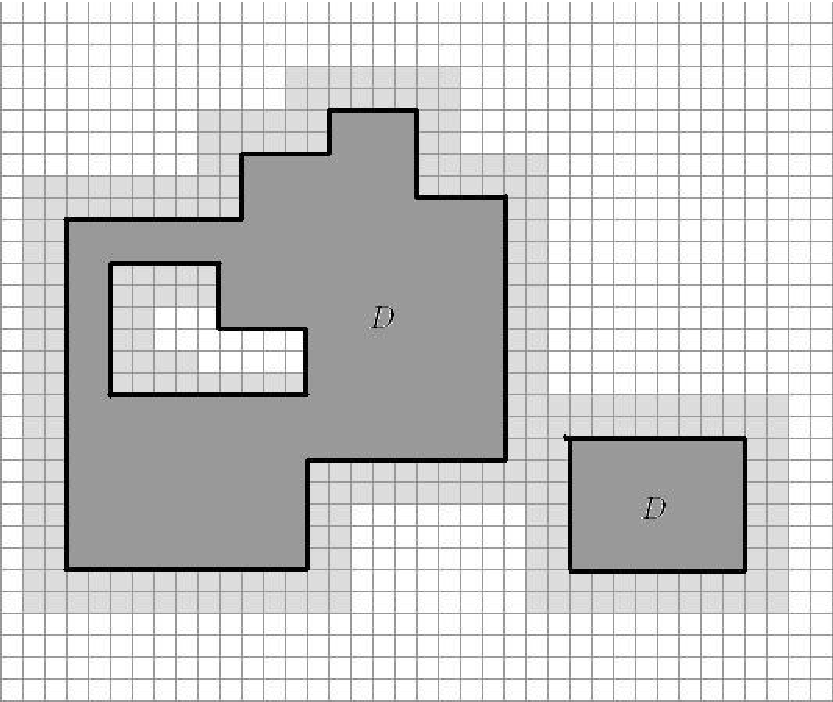} \hspace{1cm}
\includegraphics[width=1.9cm]{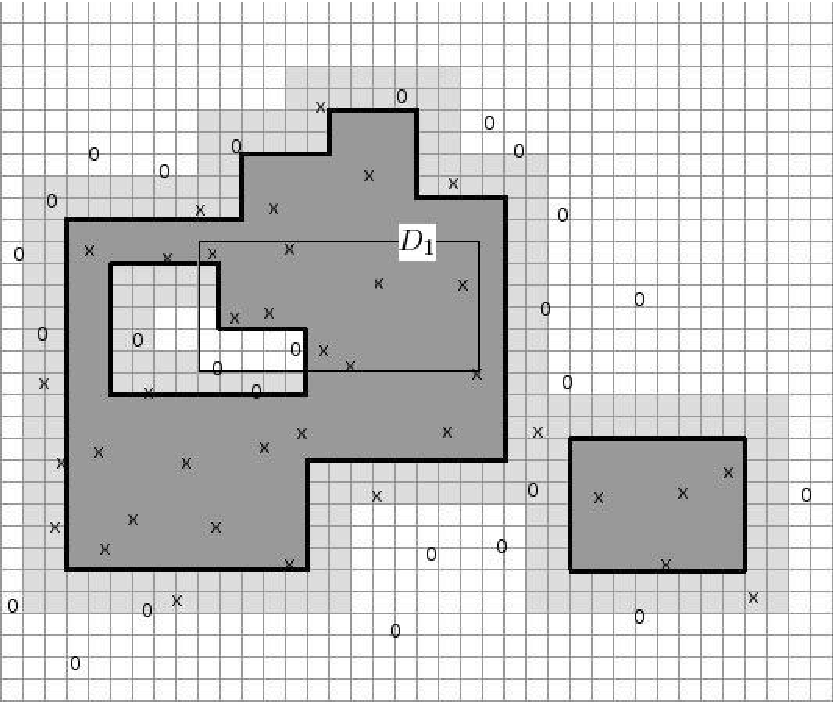}
\end{center}
\caption{Relation defined by interpreting $D$ and finite relations $g_1$. A point $\bar{x}$ marked with an X represents $g_1(\bar{x})=\top$, while a point marked with 0 corresponds to $g_1(\bar{x})=\bot$.}\label{grid1}
\end{figure}

The interior of the box, denoted as $D_1$ in the figure, can be viewed as the set of points described by this diagram. The relation $g_1$ depicted can be considered as an example that satisfies
\[
g_1\models_{\frac{1}{2}} \forall D, \quad g_1\models \forall_{D_1}D,
\]
which can be expressed concisely as
\[
g_1\in ans_{\frac{1}{2}}(D).
\]
\end{exam}

Note that given a relation $D$, the set $ans_\lambda(D)$ contains at least one element, namely $M(D)\in ans_\lambda(D)$. Naturally, we have:
\begin{thm}
If $D$ is a relation in $Lang(S)$ and $\lambda_0\leq\lambda_1$, then
\[ans_{\lambda_1}(D)\subseteq ans_{\lambda_0}(D).\]
\end{thm}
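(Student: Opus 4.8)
The plan is to unwind the two definitions that govern membership in $ans_\lambda(D)$—the $\lambda$-threshold set $\Gamma_\lambda$ and the relation $\models_\lambda$—and reduce the claim to the monotonicity of a threshold preimage. Recall from Definition~\ref{def:lambda model} (and Fig.~\ref{lambdamodel}) that $\Gamma_\lambda(g,D)$ is the pullback of the inclusion $[\lambda,\top]\hookrightarrow\Omega$ along the similarity map $\Gamma(g,D):\prod_i A_i\to\Omega$; concretely,
\[
\Gamma_\lambda(g,D)=\{\bar{x}\in\textstyle\prod_i A_i : \Gamma(g,D)(\bar{x})\geq\lambda\}.
\]
By the explicitly stated extension of the notation, $g\models_\lambda \forall_{C} D$ holds exactly when $C\subseteq\Gamma_\lambda(g,D)$; in particular $g\in ans_\lambda(D)$ means that the domain $B$ cut out by the witnessing diagram $D'$ satisfies $B\subseteq\Gamma_\lambda(g,D)$. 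Thus for a fixed $g$ and $D$, membership at threshold $\lambda$ is entirely controlled by a single containment, and our task is to show this containment is preserved when $\lambda$ is lowered.

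The first step is the core monotonicity observation: for any fixed $g$ and $D$,
\[
\Gamma_{\lambda_1}(g,D)\subseteq\Gamma_{\lambda_0}(g,D)\qquad\text{whenever }\lambda_0\leq\lambda_1.
\]
This is immediate from $\lambda_0\leq\lambda_1$: in the lattice $\Omega$ the interval inclusion $[\lambda_1,\top]\subseteq[\lambda_0,\top]$ holds, and preimages respect inclusions, so any $\bar{x}$ with $\Gamma(g,D)(\bar{x})\geq\lambda_1$ also satisfies $\Gamma(g,D)(\bar{x})\geq\lambda_0$ by transitivity of $\leq$.

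The second step chains the inclusions and concludes. Take any $g\in ans_{\lambda_1}(D)$, witnessed by a domain diagram $D'$ with associated set $B$, so that $B\subseteq\Gamma_{\lambda_1}(g,D)$. Combining with the first step gives $B\subseteq\Gamma_{\lambda_1}(g,D)\subseteq\Gamma_{\lambda_0}(g,D)$, which is precisely the condition $g\models_{\lambda_0}\forall_{D'}D$, i.e.\ $g\in ans_{\lambda_0}(D)$. Since the same $D'$ (hence the same $B$) witnesses both memberships, and $g$ was arbitrary, we obtain $ans_{\lambda_1}(D)\subseteq ans_{\lambda_0}(D)$.

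I expect no serious obstacle, since the whole argument is driven by monotonicity of the threshold set in $\lambda$. Two points deserve a little care. First, the $\preceq$-side alternative ($g\preceq M(D)$ or $M(D)\preceq g$) appearing in Definition~\ref{def:lambda model} is independent of the threshold $\lambda$, so it survives unchanged when we pass from $\lambda_1$ to $\lambda_0$ and never obstructs the inclusion. Second, one must keep the witnessing domain $D'$—and therefore the set $B$ it defines—fixed throughout, so that exactly the same $B$ certifies membership at both thresholds; making this explicit is what turns the single inclusion $B\subseteq\Gamma_{\lambda_0}(g,D)$ into the statement $g\in ans_{\lambda_0}(D)$.
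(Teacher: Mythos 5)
Your proof is correct and follows exactly the reasoning the paper leaves implicit: the theorem is stated there without any proof (introduced by ``Naturally, we have''), as an immediate consequence of the monotonicity of the threshold set $\Gamma_\lambda(g,D)$ in $\lambda$, which is precisely the core of your argument. Your added care---keeping the witnessing diagram $D'$ (and hence the set $B$) fixed across both thresholds, and observing that the $\preceq$-alternative in Definition~\ref{def:lambda model} does not involve $\lambda$---simply makes explicit what the paper takes for granted.
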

If $g\in ans_\lambda(D)$, and $g\models_\lambda \forall_{D'} D$, we denote this relation by writing $g_{D'}\in ans_\lambda(D)$.

Let $D$ be a relation defined in a semiotic by
\[f\leq_D g,\]
this implies that
\[\text{if } M(D)(\bar{x})=\top, \text{ then } f(\bar{x})\leq g(\bar{x}).\]
We utilize this relation and the operator $ans_\lambda$ to introduce two modal operators, $\diamond_\lambda g$ and $\Box_\lambda g$, which represent the weak and strong images, respectively, of the description $g\in M(S)$ along the relation $\models_\lambda$:
\[\diamond_\lambda g=\{D\in Lang_R(S): (\exists f_{D'}\in ans_\lambda(D)) (g\leq_{D'} f) \} \]
\[\Box_\lambda g=\{D\in Lang_R(S): (\forall f_{D'}\in ans_\lambda(D))(f\leq_{D'} g) \} \]
Here, $\diamond_\lambda g$ and $\Box_\lambda g$ can be interpreted, respectively, as the set of models that are $\lambda$-consistent with parts of $g$ and the set of models that are $\lambda$-consistent with $g$ in the language $Lang(S)$.

\begin{exam}
For a grid semiotic with three truth-values, we present in Fig. \ref{grid2} two possible diagrams: $D_1\in \diamond_{\frac{1}{2}} g_1$ and $D_2\in\Box_{\frac{1}{2}} g_2$.

\begin{figure}[h]
\begin{center}
\includegraphics[width=80pt]{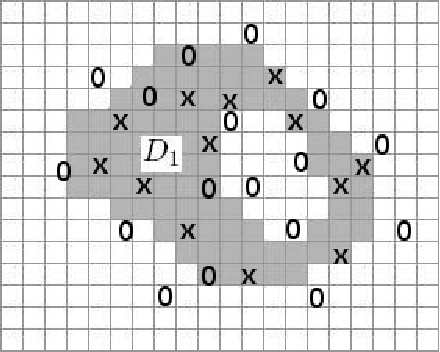} \hspace{1cm}
\includegraphics[width=80pt]{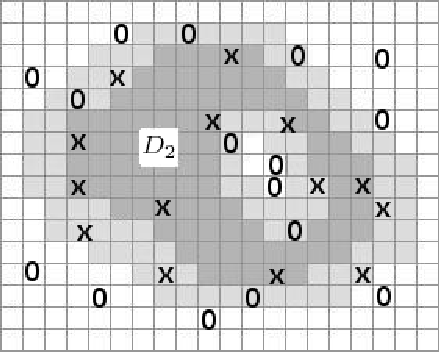}
\end{center}
\caption{Examples of $D_1\in \diamond_{\frac{1}{2}} g_1$ and $D_2\in\Box_{\frac{1}{2}} g_2$.}\label{grid2}
\end{figure}
\end{exam}

We have the following theorem:

\begin{thm}
Given relations $D_0$ and $D_1$ in $Lang_R(S)$ and a description $g$:
\begin{enumerate}
  \item If $\lambda_0\leq\lambda_1$, then $\Box_{\lambda_1} g\subseteq \Box_{\lambda_0} g$.
  \item If $\lambda_0\leq\lambda_1$, then $\diamond_{\lambda_1} g\subseteq \diamond_{\lambda_0} g$.
  \item If $D_0,D_1\in \Box_\lambda g$, then $D_0\vee D_1\in \Box_\lambda g$.
  \item If $D_0,D_1\in \diamond_\lambda g$, then $D_0\wedge D_1\in \diamond_\lambda g$.
\end{enumerate}
\end{thm}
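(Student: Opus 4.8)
The plan is to reduce all four parts to two facts already available: the monotonicity of the answer operator, namely $\lambda_0 \leq \lambda_1 \Rightarrow ans_{\lambda_1}(D) \subseteq ans_{\lambda_0}(D)$, and the consistency-combination rules established in the concept-description chapter, together with the interpretations $M(D_0 \vee D_1) = M(D_0) \vee M(D_1)$ and $M(D_0 \wedge D_1) = M(D_0) \wedge M(D_1)$ coming from the logics chapter. Throughout I read $D \in \Box_\lambda g$ as \emph{every} $\lambda$-answer of $D$ lies below $g$ on its domain, and $D \in \diamond_\lambda g$ as \emph{some} $\lambda$-answer of $D$ dominates $g$ on its domain, keeping careful track of the definability domain $D'$ attached to each answer.

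For parts 1 and 2 (monotonicity in $\lambda$), the diamond is direct: if $D \in \diamond_{\lambda_1} g$ there is a witness $f_{D'} \in ans_{\lambda_1}(D)$ with $g \leq_{D'} f$, and by $ans$-monotonicity this same witness lies in $ans_{\lambda_0}(D)$, so $D \in \diamond_{\lambda_0} g$, giving $\diamond_{\lambda_1} g \subseteq \diamond_{\lambda_0} g$. The box is the delicate case: given $D \in \Box_{\lambda_1} g$ and an arbitrary $\lambda_0$-answer $f$ with domain $B_0 = \Gamma_{\lambda_0}(f,D)$, I would restrict to the sub-domain $B_1 = \Gamma_{\lambda_1}(f,D) \subseteq B_0$ on which $f$ is genuinely a $\lambda_1$-answer, apply the hypothesis there, and then propagate the bound $f \leq g$ from $B_1$ back to $B_0$. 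Making this propagation rigorous — controlling $f$ against $g$ on $B_0 \setminus B_1$ by exploiting the monotone dependence of the domain $\Gamma_\lambda(f,D)$ on $\lambda$ — is the principal obstacle for this part.

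For part 4 (diamond closed under meet) I expect a clean argument from the combination rule. Given witnesses $f^{(0)}_{D'_0} \in ans_\lambda(D_0)$ and $f^{(1)}_{D'_1} \in ans_\lambda(D_1)$ with $g \leq_{D'_i} f^{(i)}$, I set $h = f^{(0)} \wedge f^{(1)}$; since $\lambda \wedge \lambda = \lambda$, the meet analogue of the stated combination rule yields $h \models_\lambda \forall_{B_0 \cap B_1} D_0 \wedge D_1$, so $h \in ans_\lambda(D_0 \wedge D_1)$, and from $g \leq f^{(0)}$ and $g \leq f^{(1)}$ on $B_0 \cap B_1$ one gets $g \leq h$ there, hence $D_0 \wedge D_1 \in \diamond_\lambda g$. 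The required meet-combination rule I would either cite as the dual of the displayed $\vee$-rule or derive from the $\otimes$- and $\Rightarrow$-rules via divisibility, using $x \wedge y = x \otimes (x \Rightarrow y)$.

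Part 3 (box closed under join) is the hard closure, and I expect it to be the main obstacle of the whole theorem, because here the combination rule runs the wrong way: it builds answers of the join rather than decomposing a given one. The plan is to prove a decomposition lemma, namely that every $h \in ans_\lambda(D_0 \vee D_1)$ is bounded by a join $f^{(0)} \vee f^{(1)}$ of $\lambda$-answers $f^{(i)} \in ans_\lambda(D_i)$ on the intersected domain, using $M(D_0 \vee D_1) = M(D_0) \vee M(D_1)$ and the fact that $\Gamma$-similarity to a join is controlled by $\Gamma$-similarity to the joinands. Granting this lemma, the hypotheses $D_0, D_1 \in \Box_\lambda g$ give $f^{(0)} \leq g$ and $f^{(1)} \leq g$, so $f^{(0)} \vee f^{(1)} \leq g$ and therefore $h \leq g$ on the relevant domain, yielding $D_0 \vee D_1 \in \Box_\lambda g$. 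The delicate points are the decomposition lemma itself and the domain bookkeeping around $B_0 \cap B_1$.
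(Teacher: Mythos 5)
The paper states this theorem with no proof at all, so your proposal can only be checked against the definitions of $ans_\lambda$, $\Box_\lambda$, $\diamond_\lambda$ and the paper's monotonicity result $ans_{\lambda_1}(D)\subseteq ans_{\lambda_0}(D)$ for $\lambda_0\leq\lambda_1$. On that basis, part 2 of your plan is essentially right (one caveat below), but the obstacles you flag in parts 1 and 3 are not technical inconveniences to be engineered around, and your part 4 argument breaks exactly at the point you gloss with ``$B_0\cap B_1$''.

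Part 1 cannot be repaired, because under these definitions the stated inclusion is \emph{false}; what is provable is the reverse, $\Box_{\lambda_0}g\subseteq\Box_{\lambda_1}g$. The box quantifies universally over $ans_\lambda(D)$, and that set grows as $\lambda$ decreases, so lowering $\lambda$ strengthens the membership condition. Concretely, take G\"{o}del logic over a one-point universe, $M(D)=0.5$, $g=0.5$, $\lambda_1=0.6$, $\lambda_0=0.4$. Since $\Gamma(v,0.5)=\min(v,0.5)$ for $v\neq 0.5$ and $\Gamma(0.5,0.5)=\top$, we get $ans_{0.6}(D)=\{0.5\}$, hence $D\in\Box_{0.6}g$; but $\Gamma(\top,0.5)=0.5\geq 0.4$ puts the constant $\top$ into $ans_{0.4}(D)$, and $\top\not\leq g$, so $D\notin\Box_{0.4}g$. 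So on $B_0\setminus B_1$ there is genuinely no control over $f$, and no propagation lemma can create it; the correct conclusion of your analysis is that part 1 must be restated with the inclusion reversed, not proved.

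Parts 3 and 4 share a domain-bookkeeping error, and for part 4 it is fatal to the statement itself. The combination rule makes $h=f^{(0)}\wedge f^{(1)}$ an answer of $D_0\wedge D_1$ only on $B_0\cap B_1$; but (i) $B_0\cap B_1$ may be empty, in which case $h$ is not an answer at all (a $\lambda$-answer needs a nonempty domain), and (ii) membership in $\diamond_\lambda g$ requires $g\leq h$ on the \emph{full} domain $\Gamma_\lambda(h,D_0\wedge D_1)$, not merely on $B_0\cap B_1$. Indeed part 4 fails when the two witnesses live on disjoint domains: on a two-point universe $\{x,y\}$ in G\"{o}del logic with $M(D_0)(x)=0.5$, $M(D_0)(y)=0.1$, $M(D_1)(x)=0.1$, $M(D_1)(y)=0.5$, $g\equiv\top$, $\lambda=0.4$, the constant $\top$ witnesses $D_0\in\diamond_{0.4}g$ (domain $\{x\}$) and $D_1\in\diamond_{0.4}g$ (domain $\{y\}$); yet $M(D_0\wedge D_1)\equiv 0.1$, so every $0.4$-answer of $D_0\wedge D_1$ equals $0.1$ on its domain and cannot dominate $g$. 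For part 3 the statement is sound in the chain-valued logics the paper actually uses, but your decomposition lemma is the wrong tool and aims at the wrong domain: even granting it, a bound on an intersection does not verify the $\Box$ condition, which demands $h\leq g$ on all of $B=\Gamma_\lambda(h,D_0\vee D_1)$. What works is a \emph{cover}, with $h$ as its own witness: pointwise in a chain, $M(D_0)\vee M(D_1)$ coincides with one of its arguments, so $B\subseteq\Gamma_\lambda(h,D_0)\cup\Gamma_\lambda(h,D_1)$; on each nonempty piece $h$ is itself a $\lambda$-answer of the corresponding $D_i$, and the two box hypotheses applied to $h$ give $h\leq g$ on each piece, hence on $B$. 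This needs no decomposition into other answers, and it makes explicit the linearity hypothesis hidden in your phrase ``similarity to a join is controlled by similarity to the joinands'' (false in non-linear $\Omega$). The same sensitivity even touches part 2: the witness $f$ survives at level $\lambda_0$ only after you shrink it outside $B_1$ (reset it to values dissimilar from $M(D)$ there), since otherwise its $\lambda_0$-domain outgrows the region where $g\leq f$ is known; with that one-line fix part 2 stands. Throughout, you and the paper both assume silently that all the $\Gamma_\lambda$-domains, and their unions and intersections, are definable by diagrams of the language.
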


In the other direction, we can extend $ans_\lambda$ to a set of relations $U$ in $Lang(S)$ as follows:
\[ ans_\lambda(U) = \bigvee\{g \in M(S): (\exists D \in \Box_\lambda g)(D\in U)\} \]
This provides the greatest description $\lambda$-consistent with a model from $U$. Additionally, let
\[ mod_\lambda(U) = \bigwedge\{g \in M(S): (\forall D \in \Box_\lambda g)(D\in U)\} \]
be a description that is $\lambda$-consistent with every model in $U$.

\begin{thm}
Let $U$ and $V$ be sets of relations in $S$. Then:
\begin{enumerate}
  \item If $\lambda_0\leq\lambda_1$, then $ans_{\lambda_0}(U)\geq ans_{\lambda_1}(U)$.
  \item $ans_\lambda(U\cup V) = ans_\lambda(U)\vee ans_\lambda(V)$.
  \item $mod_\lambda(U\cup V) = mod_\lambda(U)\wedge mod_\lambda(V)$.
\end{enumerate}
Moreover, if $U\subseteq V$, we have:
\begin{enumerate}
  \item $ans_\lambda(U)\leq ans_\lambda(V)$.
  \item $mod_\lambda(V)\leq mod_\lambda(U)$.
\end{enumerate}
\end{thm}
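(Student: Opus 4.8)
The plan is to deduce all five claims from the definitions of $ans_\lambda$ and $mod_\lambda$ as, respectively, a join and a meet taken over a set of descriptions cut out by a membership condition, together with the monotonicity of $\Box_\lambda$ in $\lambda$ established in the preceding theorem. The single guiding principle is elementary: a join over a larger index set can only grow, a meet over a larger index set can only shrink, and a join (resp. meet) over a union of index sets equals the join (resp. meet) of the two partial joins (resp. meets).

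First I would dispatch the two inclusion-monotonicity statements, since they are immediate and already supply one half of statements 2 and 3. For $U\subseteq V$, every $D\in U$ lies in $V$, so $\{g:(\exists D\in\Box_\lambda g)(D\in U)\}\subseteq\{g:(\exists D\in\Box_\lambda g)(D\in V)\}$; passing to joins gives $ans_\lambda(U)\leq ans_\lambda(V)$. Dually, $\Box_\lambda g\subseteq U$ implies $\Box_\lambda g\subseteq V$, so the index set defining $mod_\lambda(U)$ is contained in that defining $mod_\lambda(V)$, whence the meet over the larger set is smaller and $mod_\lambda(V)\leq mod_\lambda(U)$. Instantiating these with $U,V\subseteq U\cup V$ already yields $ans_\lambda(U)\vee ans_\lambda(V)\leq ans_\lambda(U\cup V)$ and $mod_\lambda(U\cup V)\leq mod_\lambda(U)\wedge mod_\lambda(V)$.

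For statement 1, I would invoke the earlier fact that $\lambda_0\leq\lambda_1$ forces $\Box_{\lambda_1}g\subseteq\Box_{\lambda_0}g$. Hence any $D\in\Box_{\lambda_1}g$ witnessing $D\in U$ also lies in $\Box_{\lambda_0}g$, so the $\lambda_1$ index set is contained in the $\lambda_0$ one, and the join over the larger ($\lambda_0$) set dominates, giving $ans_{\lambda_0}(U)\geq ans_{\lambda_1}(U)$. Statement 2 is then completed by the reverse inequality: since $D\in U\cup V$ means $D\in U$ or $D\in V$ and the witness $D$ is a single relation, the existential condition distributes over the union, so the $g$-index set for $U\cup V$ is \emph{exactly} the union of those for $U$ and for $V$; the join over a union being the join of the partial joins yields $ans_\lambda(U\cup V)\leq ans_\lambda(U)\vee ans_\lambda(V)$, and combined with the easy half this is equality.

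The delicate point, and the one I expect to be the main obstacle, is the remaining inequality $mod_\lambda(U)\wedge mod_\lambda(V)\leq mod_\lambda(U\cup V)$ in statement 3. Here the \emph{universal} condition $\Box_\lambda g\subseteq U\cup V$ does not split as $\Box_\lambda g\subseteq U$ or $\Box_\lambda g\subseteq V$: a description $g$ whose box meets both $U$ and $V$ while staying inside $U\cup V$ can enlarge the index set of $mod_\lambda(U\cup V)$ beyond the union of the two separate index sets, so the meet could a priori fall below $mod_\lambda(U)\wedge mod_\lambda(V)$. To close this I would show that any such extra $g$ nonetheless dominates $mod_\lambda(U)\wedge mod_\lambda(V)$, arguing from the complete-lattice structure on $M(S)$ and the closure of $\Box_\lambda g$ under $\vee$ recorded in the preceding theorem, which lets one split $\Box_\lambda g$ into a $U$-part and a $V$-part and bound $g$ below by the meet of a $mod_\lambda(U)$-admissible description with a $mod_\lambda(V)$-admissible one. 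Should this decomposition fail to force the inequality in full generality, the honest conclusion is that only ``$\leq$'' holds in 3, with equality requiring an extra hypothesis (for instance, that the defining membership test be join-prime), and I would state that caveat explicitly.
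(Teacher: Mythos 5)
The paper states this theorem with no proof at all, so there is no official argument to compare against; your index-set reading of the definitions is plainly the intended route, and it is correct everywhere it can be made to work. Writing $A_U=\{g\in M(S):\Box_\lambda g\cap U\neq\emptyset\}$ and $B_U=\{g\in M(S):\Box_\lambda g\subseteq U\}$, your three observations — that $U\subseteq V$ gives $A_U\subseteq A_V$ and $B_U\subseteq B_V$; that $\lambda_0\leq\lambda_1$ gives $A_U^{\lambda_1}\subseteq A_U^{\lambda_0}$ via the containment $\Box_{\lambda_1}g\subseteq\Box_{\lambda_0}g$ from the preceding theorem; and that $A_{U\cup V}=A_U\cup A_V$ exactly, because the existential condition distributes over a union — settle item 1, item 2, and both monotonicity claims completely, since joins over unions of index sets split and joins (resp.\ meets) over larger index sets grow (resp.\ shrink).

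Your worry about item 3 is genuine, and it is a defect of the paper's statement rather than of your proof. The definitions yield only $B_U\cup B_V\subseteq B_{U\cup V}$, hence $mod_\lambda(U\cup V)\leq mod_\lambda(U)\wedge mod_\lambda(V)$, and the containment of index sets can be strict: $\Box_\lambda g\subseteq U\cup V$ does not force $\Box_\lambda g\subseteq U$ or $\Box_\lambda g\subseteq V$. Moreover, the reverse inequality can actually fail at the level of generality the paper works in. If no description has its box inside $U$ alone nor inside $V$ alone (i.e.\ $B_U=B_V=\emptyset$), then $mod_\lambda(U)=mod_\lambda(V)=\top$ by the empty-meet convention, while any description $g<\top$ whose box straddles $U$ and $V$ forces $mod_\lambda(U\cup V)\leq g<\top$; nothing stated in the paper excludes this configuration, since $\Box_\lambda$ is only known to be monotone in $g$, antitone in $\lambda$, and closed under $\vee$ of diagrams. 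Your proposed repair does not close the gap either: closure of $\Box_\lambda g$ under joins of diagrams gives no mechanism for producing $g_U\in B_U$ and $g_V\in B_V$ with $g_U\wedge g_V\leq g$, which is exactly what the missing direction requires. So your fallback conclusion is the right one: only ``$\leq$'' in item 3 follows from the definitions, and the equality the paper asserts (without proof) would need an extra hypothesis that the paper neither states nor uses.
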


The $\lambda$-interior of a concept $g$ within the semiotic system $(S,M)$ is defined as the largest portion of $g$ that is $\lambda$-consistent with a model defined in the associated language. Mathematically, it is represented as:
\[ int_\lambda(g) = \bigvee\{h \in M(S): (\exists D \in \Box_\lambda h)(\forall f_{D'} \in ans_\lambda(D))(f \leq_{D'} g)\}, \]
This can be understood as the maximal fragment of $g$ that has a $\lambda$-consistent model within the language of the semiotic. The operator $int_\lambda$ is considered an \emph{interior operator} because:
\begin{enumerate}
  \item $int_\lambda(g) \leq g$,
  \item if $g \leq f$, then $int_\lambda(g) \leq int_\lambda(f)$, and
  \item $int_\lambda(g) = int_\lambda(int_\lambda(g))$.
\end{enumerate}
Furthermore, given $\lambda_0 \leq \lambda_1$, we have $int_{\lambda_1}(g) \leq int_{\lambda_0}(g)$. A concept description $g$ is termed $\lambda$-\emph{open} in $(S,M)$ if:
\[ int_\lambda(g) = g. \]
Both $ans_\lambda(U)$ and $mod_\lambda(U)$ for any set of relations $U$ serve as examples of $\lambda$-open concepts due to the following proposition:

\begin{prop}
In a semiotic, for any set of relations $U$ and for any $\lambda \in \Omega$:
\begin{enumerate}
  \item $int_\lambda(ans_\lambda(U)) = ans_\lambda(U)$, and
  \item $int_\lambda(mod_\lambda(U)) = mod_\lambda(U)$.
\end{enumerate}
\end{prop}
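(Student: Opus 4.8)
The plan is to exploit that $int_\lambda$ is an interior operator, so property (1) gives $int_\lambda(g)\le g$ unconditionally; for both parts it therefore suffices to establish the reverse inequality $g\le int_\lambda(g)$ for $g=ans_\lambda(U)$ and $g=mod_\lambda(U)$. The computational engine is a single reformulation of $int_\lambda$: unfolding the definition of $\Box_\lambda g$, the inner clause $(\forall f_{D'}\in ans_\lambda(D))(f\le_{D'}g)$ occurring in the definition of $int_\lambda$ is exactly the assertion $D\in\Box_\lambda g$, so
\[
int_\lambda(g)=\bigvee\{\,h\in M(S):\Box_\lambda h\cap\Box_\lambda g\neq\emptyset\,\}.
\]
First I would record the auxiliary monotonicity lemma: if $g\le g'$ then $\Box_\lambda g\subseteq\Box_\lambda g'$. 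This is immediate, since $f\le_{D'}g$ means $f(\bar x)\le g(\bar x)$ at every $\bar x$ with $M(D')(\bar x)=\top$, and transitivity of $\le$ in $\Omega$ upgrades this to $f\le_{D'}g'$; the quantified family $ans_\lambda(D)$ depends only on $D$, not on the target concept, so membership in $\Box_\lambda(\cdot)$ is preserved.

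For part 1, write $ans_\lambda(U)=\bigvee T$ with $T=\{\,g:U\cap\Box_\lambda g\neq\emptyset\,\}$ and set $a=\bigvee T$. If $T=\emptyset$ then $a=\bot$ and $int_\lambda(\bot)=\bot$ by property (1), so assume $T\neq\emptyset$. For any $g\in T$ choose $D_g\in U\cap\Box_\lambda g$; since $g\le a$, the monotonicity lemma gives $D_g\in\Box_\lambda a$, whence $D_g\in U\cap\Box_\lambda a$, so $a\in T$ and in particular $\Box_\lambda a\neq\emptyset$. Taking $h=a$ in the reformulated expression for $int_\lambda$, the intersection $\Box_\lambda a\cap\Box_\lambda a=\Box_\lambda a$ is nonempty, hence $a\le int_\lambda(a)$, and equality follows.

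For part 2, write $mod_\lambda(U)=\bigwedge S$ with $S=\{\,g:\Box_\lambda g\subseteq U\,\}$ and set $m=\bigwedge S$. The key observation is that $m$ is the least element of $S$: for every $g\in S$ monotonicity gives $\Box_\lambda m\subseteq\Box_\lambda g\subseteq U$, so $\Box_\lambda m\subseteq U$ and thus $m\in S$ (if $S=\emptyset$ then $m=\top$, and $\Box_\lambda\top=Lang_R(S)$, which is nonempty, forcing $int_\lambda(\top)=\top$). Now $int_\lambda(m)\le m$ by property (1), so monotonicity yields $\Box_\lambda(int_\lambda(m))\subseteq\Box_\lambda m\subseteq U$; hence $int_\lambda(m)\in S$, and since $m=\min S$ we obtain $m\le int_\lambda(m)$, giving equality.

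The arguments become short once the $\Box_\lambda$-intersection form of $int_\lambda$ is in hand, so I do not anticipate a deep obstacle; the two delicate points are (a) verifying that the monotonicity lemma genuinely respects the domain-restriction subscripts $D'$ carried by $\le_{D'}$ and by the memberships $f_{D'}\in ans_\lambda(D)$, and (b) dispatching the degenerate empty-family cases ($T=\emptyset$ producing $\bot$, $S=\emptyset$ producing $\top$) so that openness of the extreme concepts is not silently assumed. Notably, the extremal formulation ($a=\max T$, $m=\min S$) sidesteps the genuinely hard question of whether $\Box_\lambda g$ is nonempty for an arbitrary fixed $g$, which is exactly why I would organize the proof around maxima and minima of these families rather than around individual witnessing relations.
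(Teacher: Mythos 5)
The paper states this proposition without proof --- it is asserted as the reason $ans_\lambda(U)$ and $mod_\lambda(U)$ are $\lambda$-open --- so there is no argument of the paper's to compare yours against; your proof fills a real gap. On its own terms it is correct, and it is the natural argument: your reformulation $int_\lambda(g)=\bigvee\{h\in M(S):\Box_\lambda h\cap\Box_\lambda g\neq\emptyset\}$ is exactly the identity $int_\lambda=ans_\lambda\Box_\lambda$ that the paper itself records in the following passage, so part 1 becomes the Galois-connection-style identity $ans_\lambda\Box_\lambda ans_\lambda=ans_\lambda$. The monotonicity lemma for $\Box_\lambda$ is sound (a pointwise inequality $g\leq g'$ survives restriction to the set where $M(D')=\top$, and $ans_\lambda(D)$ does not depend on the target concept), the observations that $ans_\lambda(U)$ is the \emph{maximum} of its defining family $T$ and $mod_\lambda(U)$ the \emph{minimum} of its defining family $S$ are precisely the key steps needed to get $g\leq int_\lambda(g)$ in each case, and the empty-family cases are dispatched correctly.

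One caveat should be recorded. Your entire $\leq$ direction is outsourced to the paper's assertion $int_\lambda(g)\leq g$ (``property (1)''), which the paper never proves. Citing it is fair, since the paper states it immediately before the proposition; but note that this assertion does not follow from --- and is in fact inconsistent with --- the literal definition of $int_\lambda$ on which your $\geq$ direction relies. The very move you make in the degenerate case of part 2, namely $\Box_\lambda\top=Lang_R(S)$, shows this: taking $h=\top$ in the defining supremum of $int_\lambda(g)$ gives $\Box_\lambda\top\cap\Box_\lambda g=\Box_\lambda g$, hence $int_\lambda(g)=\top$ whenever $\Box_\lambda g\neq\emptyset$, contradicting $int_\lambda(g)\leq g$ for any $g<\top$ admitting some relation in $\Box_\lambda g$. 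So the two ingredients you combine are jointly consistent only in degenerate situations; the defect is the paper's, not yours, and your derivation is valid as a proof inside the paper's stated (if internally flawed) system. A self-contained proof would first have to repair the definition of $int_\lambda$ --- e.g.\ restrict its supremum to $h\leq g$, or to $h$ that are themselves $\lambda$-answers of the witnessing $D$ --- and then reprove contractivity before running your argument.
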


To delve into the details:

\begin{thm}
In a semiotic system, for any set of relations $U$:
\[ans_\top(U) \models \bigvee U \quad \text{and} \quad mod_\top(U) \models \bigwedge U.\]
\end{thm}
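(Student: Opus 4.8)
The plan is to reduce both assertions to the single fundamental fact that $M(D) \models \forall D$ for every relation $D$ (the Proposition immediately preceding the Consistence definition), combined with the distributivity of $ans_\lambda$ and $mod_\lambda$ over unions recorded in the preceding theorem. First I would pin down how the lifted connectives on $Lang_R(S,M)$ are interpreted: since the component $\lq\vee\lq$ is read by $M$ as the ML-algebra join and the prefix diagram $I$ of diagonals only links inputs of equal meaning, the relation $\bigvee U = I \otimes \bigotimes_{D \in U} D \otimes \lq\vee\lq$ satisfies $M(\bigvee U) = \bigvee_{D \in U} M(D)$ pointwise, and dually $M(\bigwedge U) = \bigwedge_{D \in U} M(D)$. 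For finite $U$ these are the iterated diagram connectives; for general $U$ the joins are taken in the complete lattice $\Omega$, so I would treat the finite case first and then pass to the directed join.

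Next I would invoke $ans_\lambda(U \cup V) = ans_\lambda(U) \vee ans_\lambda(V)$ and $mod_\lambda(U \cup V) = mod_\lambda(U) \wedge mod_\lambda(V)$ from the previous theorem, writing $U = \bigcup_{D \in U} \{D\}$ to reduce everything to the single-generator identities
\[
ans_\top(\{D\}) = M(D) \qquad \text{and} \qquad mod_\top(\{D\}) = M(D).
\]
These are the core of the argument. For $ans_\top$ I would establish the two inequalities separately. The lower bound $M(D) \leq ans_\top(\{D\})$ is clean: $D \in \Box_\top(M(D))$, because $M(D) \models \forall D$ forces every $f_{D'} \in ans_\top(D)$ to agree with $M(D)$ on the domain described by $D'$, hence $f \leq_{D'} M(D)$, so $M(D)$ is among the descriptions joined in the definition of $ans_\top$. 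The upper bound is the delicate direction and the main obstacle: since $ans_\top(\{D\})$ is defined as a supremum, I must show it does not overshoot $M(D)$ once evaluated through $\models_\top \forall$. The tool is the tightness of $\top$-consistency — the ML-algebra identity $x \Leftrightarrow y = \top \iff x = y$ (from $x \leq y \iff (x \Rightarrow y) = \top$) gives $d \models_\top \forall D \iff d = M(D)$ — so that the "greatest description $\top$-consistent with $D$'' collapses exactly to $M(D)$. I expect most of the work to lie in the domain-restriction bookkeeping carried by the subscript $D'$ in $\forall_{D'}$, ensuring that partial answers on proper subdomains cannot inflate the supremum beyond $M(D)$.

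Finally, assembling the single-generator identities with the union laws yields $ans_\top(U) = \bigvee_{D \in U} M(D) = M(\bigvee U)$ and, dually, $mod_\top(U) = \bigwedge_{D \in U} M(D) = M(\bigwedge U)$. Applying the fundamental Proposition $M(E) \models \forall E$ to $E = \bigvee U$ and to $E = \bigwedge U$ then gives $ans_\top(U) \models \bigvee U$ and $mod_\top(U) \models \bigwedge U$, as required. The second statement is the lattice-dual of the first under the exchange $\vee \leftrightarrow \wedge$, $\bigvee \leftrightarrow \bigwedge$, $\leq \leftrightarrow \geq$, and $ans \leftrightarrow mod$, so I would carry out the $ans$ case in full and then dualize, checking only that the monotonicity clauses of the previous theorem (e.g. $\lambda_0 \leq \lambda_1 \Rightarrow ans_{\lambda_0}(U) \geq ans_{\lambda_1}(U)$) survive the dualization intact.
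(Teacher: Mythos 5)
First, a point of reference: the paper states this theorem with no proof at all, so your proposal has to stand on its own against the paper's formal definitions. Your architecture --- reduce to singletons via $ans_\lambda(U\cup V)=ans_\lambda(U)\vee ans_\lambda(V)$, prove $ans_\top(\{D\})=M(D)$, commute $M$ with the lifted join, and finish by applying $M(E)\models\forall E$ to $E=\bigvee U$ --- is a faithful reconstruction of what the theorem is evidently meant to say. But the step you yourself flag as ``the delicate direction'' is not delicate: under the paper's stated definitions it is false, and the tool you offer cannot repair it. The paper defines
\[
ans_\lambda(U)=\bigvee\{g\in M(S):(\exists D\in\Box_\lambda g)(D\in U)\},
\qquad
\Box_\lambda g=\{D\in Lang_R(S):(\forall f_{D'}\in ans_\lambda(D))(f\leq_{D'}g)\}.
\]
The membership condition $D\in\Box_\top g$ is upward closed in $g$: if every $\top$-answer $f$ of $D$ satisfies $f\leq_{D'}g$ and $g\leq h$ pointwise, then $f\leq_{D'}h$ as well. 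In particular the top description $\top:\Pi A_i\rightarrow\Omega$, which the paper explicitly admits as the top of the lattice of concept descriptions, satisfies $D\in\Box_\top\top$ for every relation $D$, so it always lies in the comprehension set, and hence $ans_\top(\{D\})=\top$, not $M(D)$. Your tightness identity ($d\models_\top\forall D$ iff $d=M(D)$) is a correct ML-algebra fact, but it governs the condition ``$g$ is a $\top$-answer of $D$,'' whereas the supremum in $ans_\top$ ranges over the strictly larger, upward-closed class of descriptions that merely \emph{dominate} all $\top$-answers of $D$. No bookkeeping on the domain subscripts $D'$ can close this gap; the overshoot comes from upward closure, not from partial answers on subdomains. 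What your argument actually needs is to replace the paper's formula by its verbal gloss (``the greatest description $\top$-consistent with a model from $U$,'' i.e.\ $\bigvee_{D\in U}M(D)$), and that substitution is nowhere justified in the proposal --- it is precisely the missing content.

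Two secondary gaps. (i) The identity $M(\bigvee U)=\bigvee_{D\in U}M(D)$ is asserted, not proved: the lifted connective glues the diagrams through the diagonal prefix $I$ and a $\vee$-labelled component and is then interpreted as a limit, which inserts similarity factors, so pointwise commutation of $M$ with the join requires a computation; worse, for infinite $U$ the word $\bigvee U$ need not exist at all, since words of $Lang(S)$ are finite configurations, so your ``pass to the directed join'' has no target in the language. (ii) The two halves are not lattice duals in this paper: both $ans_\lambda$ and $mod_\lambda$ are defined through the \emph{same} operator $\Box_\lambda$, one as a join over $\{g:\Box_\lambda g\cap U\neq\emptyset\}$ and the other as a meet over $\{g:\Box_\lambda g\subseteq U\}$, so the exchange $ans\leftrightarrow mod$, $\vee\leftrightarrow\wedge$ is not a symmetry of the definitions; the $mod_\top(U)\models\bigwedge U$ half needs its own argument (for a singleton, the set $\{g:\Box_\top g\subseteq\{D\}\}$ may even be empty, making the meet the top description rather than $M(D)$).
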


The closure of a concept $g$ within the semiotic system $(S,M)$ is defined as the minimal cover of $g$ represented in the language $L(S)$. Mathematically, it is expressed as:
\[ cl_\lambda(g) = \bigwedge\{h \in M(S): (\forall D \in \Box_\lambda h)(\exists f_{D'} \in ans_\lambda(D))(g \leq_{D'} f)\}, \]
This can be interpreted as the most concise cover that includes $g$, represented in the language associated with the semiotic system. The operator $cl_\lambda$ is termed a \emph{closure operator} because:
\begin{enumerate}
  \item $g \leq cl_\lambda(g)$,
  \item if $g \leq f$, then $cl_\lambda(g) \leq cl_\lambda(f)$, and
  \item $cl_\lambda(cl_\lambda(g)) = cl_\lambda(g)$.
\end{enumerate}
Furthermore, for $\lambda_0 \leq \lambda_1$, we have $cl_{\lambda_1}(g) \leq cl_{\lambda_0}(g)$. It is also evident that:

\begin{prop}
For a given semiotic system $(S,M)$, for every $g \in M(S)$:
\[int_\lambda(g) \leq g \leq cl_\lambda(g).\]
\end{prop}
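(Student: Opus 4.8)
The plan is to split the claimed chain into its two halves, $int_\lambda(g) \leq g$ and $g \leq cl_\lambda(g)$, and to observe that these are precisely the first defining properties already recorded for the interior operator $int_\lambda$ and the closure operator $cl_\lambda$ respectively; the proposition is simply their conjunction. Thus the real content lies in confirming those two one-sided bounds directly from the definitions of $int_\lambda$, $cl_\lambda$, $\Box_\lambda$ and $ans_\lambda$. Since the two bounds are formally dual --- interchanging $\bigvee$ with $\bigwedge$, $\leq$ with $\geq$, and the quantifier patterns $\exists D\,\forall f$ versus $\forall D\,\exists f$ in the definitions of $int_\lambda$ and $cl_\lambda$ --- I would prove one of them carefully and obtain the other by the symmetric argument.

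For $int_\lambda(g) \leq g$ I would use that $int_\lambda(g)$ is a join in the complete lattice of concept descriptions with support $\prod_i A_i$, where joins are computed pointwise. Hence it suffices to show that every $h \in M(S)$ in the indexing family --- every $h$ admitting a witness $D \in \Box_\lambda h$ with $f \leq_{D'} g$ for all $f_{D'} \in ans_\lambda(D)$ --- already satisfies $h \leq g$. The key lever is the reflexivity noted just after the definition of $ans_\lambda$, namely that $M(D) \in ans_\lambda(D)$ for every relation $D$: this makes the family $ans_\lambda(D)$ nonempty and forces $D$ to "touch" its own interpretation, playing the role of the reflexive step $x \in [x]_R$ in Pawlak's lower approximation. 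Feeding $M(D)$, and more generally each $f_{D'}$, through the witnessing condition $D \in \Box_\lambda h$ (which yields $f \leq_{D'} h$) together with the selection condition $f \leq_{D'} g$, I would pin $h$ against $g$ on the domain carved out by $D'$ and conclude $h \leq g$. Dually, for $g \leq cl_\lambda(g)$ I would show $g \leq h$ for every $h$ in the family defining the meet $cl_\lambda(g)$, again using $M(D) \in ans_\lambda(D)$ to realise the existential witness $f_{D'}$ demanded by the closure condition.

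The main obstacle I anticipate is bookkeeping the restricted domain encoded in the relation $\leq_{D'}$: the bound $f \leq_{D'} g$ compares $f$ and $g$ only on the points where $M(D')(\bar{x}) = \top$, so passing from the local comparisons supplied by $\Box_\lambda$ and $ans_\lambda$ to the global pointwise inequality $h \leq g$ requires either that the domains $D'$ attached to the various witnesses collectively cover the support, or that the reflexive witness $M(D)$ already suffices on its own. Checking this coverage --- and verifying that the two clauses packaged into the definition of $int_\lambda$ (the membership $D \in \Box_\lambda h$ versus the consistency clause referring to $g$) interlock so as to force $h$ \emph{below} $g$, rather than merely making $h$ and $g$ two separate upper bounds of $ans_\lambda(D)$ --- is where care is needed; the Pawlak interior/closure picture is the guide that keeps the quantifier juggling honest. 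Once the domain issue is settled, monotonicity of joins and meets in the pointwise order closes both inequalities and hence the chain $int_\lambda(g) \leq g \leq cl_\lambda(g)$.
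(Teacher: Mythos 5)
Your first paragraph is, in substance, the paper's entire proof: the paper records $int_\lambda(g) \leq g$ as property (1) when it introduces the interior operator, records $g \leq cl_\lambda(g)$ as property (1) when it introduces the closure operator, and then states this proposition prefaced by ``It is also evident that'' --- i.e.\ the chain is nothing more than the conjunction of those two already-listed facts. Had you stopped after the reduction, your proposal and the paper's argument would coincide.

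The problem is with what you call the ``real content'' --- confirming the two bounds directly from the definitions --- because the step you sketch there would fail, and it fails for exactly the reason you flag in your last paragraph and then defer as bookkeeping. For $h$ in the family defining $int_\lambda(g)$, the witness $D \in \Box_\lambda h$ gives $f \leq_{D'} h$ for every $f_{D'} \in ans_\lambda(D)$, and the selection clause gives $f \leq_{D'} g$ for the same $f$'s. Both statements bound the \emph{answers} of $D$ from above; they exhibit $h$ and $g$ as two separate upper bounds of the same family and yield no comparison between $h$ and $g$ whatsoever. The reflexive witness $M(D) \in ans_\lambda(D)$ only adds $M(D) \leq_{D'} h$ and $M(D) \leq_{D'} g$, which is symmetric in $h$ and $g$, so it cannot ``pin $h$ against $g$.'' Concretely, take $h = \top$ (the top description): $\Box_\lambda \top$ contains \emph{every} relation $D$, since $f \leq_{D'} \top$ always holds; hence as soon as one relation $D$ has all of its $\lambda$-answers below $g$, the description $\top$ belongs to the family and the join satisfies $int_\lambda(g) = \top \not\leq g$ unless $g$ itself is $\top$. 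So no amount of care about the domains $D'$ rescues the second paragraph: the definitions, read literally, do not yield $int_\lambda(g) \leq g$ (and dually for $cl_\lambda$), which is presumably why the paper asserts these properties without derivation rather than proving them. The safe and correct proof of the proposition is the one you opened with: cite the two recorded properties and stop.
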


A concept description $g$ is termed $\lambda$-\emph{close} in the semiotic system $(S,M)$ if it satisfies:
\[cl_\lambda(g) = g.\]
Both the descriptions $ans_\lambda(U)$ and $mod_\lambda(U)$ also qualify as $\lambda$-closed concepts. This concept can be generalized to encompass every $\lambda$-open description:

\begin{prop}
For a given semiotic system $(S,M)$, every $g \in M(S)$ is $\lambda$-closed if and only if it is $\lambda$-open.
\end{prop}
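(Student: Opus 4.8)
The plan is to reduce the biconditional to two operator identities — that every interior is $\lambda$-closed and every closure is $\lambda$-open — and then to obtain each of these from the clopen-ness of the descriptions $ans_\lambda(U)$ and $mod_\lambda(U)$ recorded just above the statement. The overall shape of the argument is purely an abstract fixpoint computation for an interior operator $int_\lambda$ and a closure operator $cl_\lambda$; the work is in identifying $int_\lambda$ and $cl_\lambda$ with the answer and model operators.

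First I would observe that $int_\lambda$ and $cl_\lambda$ are themselves special instances of $ans_\lambda$ and $mod_\lambda$. Fixing $g\in M(S)$, set $U_g=\{D\in Lang_R(S):(\forall f_{D'}\in ans_\lambda(D))(f\le_{D'}g)\}$ and $V_g=\{D\in Lang_R(S):(\exists f_{D'}\in ans_\lambda(D))(g\le_{D'}f)\}$. Comparing the defining join for $int_\lambda(g)$ with that of $ans_\lambda(U_g)$ term by term, the inner clause selecting the diagrams is precisely the membership test $D\in U_g$, so $int_\lambda(g)=ans_\lambda(U_g)$; dually, matching the defining meet for $cl_\lambda(g)$ against $mod_\lambda(V_g)$ yields $cl_\lambda(g)=mod_\lambda(V_g)$. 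This bookkeeping — checking that the quantifier block over $D\in\Box_\lambda h$ coincides exactly with $D\in U_g$ (resp. $D\in V_g$), and that the ambient constraint $\Box_\lambda h\subseteq Lang_R(S)$ causes no loss — is the one place demanding care, and I regard it as the main obstacle; once the representations hold, everything else is formal.

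Next I would invoke the earlier results that $ans_\lambda(U)$ and $mod_\lambda(U)$ are simultaneously $\lambda$-open and $\lambda$-closed for every set of relations $U$. Applying $\lambda$-closedness of $ans_\lambda(U_g)$ gives $cl_\lambda(int_\lambda(g))=cl_\lambda(ans_\lambda(U_g))=ans_\lambda(U_g)=int_\lambda(g)$, so every interior is $\lambda$-closed; applying $\lambda$-openness of $mod_\lambda(V_g)$ gives $int_\lambda(cl_\lambda(g))=int_\lambda(mod_\lambda(V_g))=mod_\lambda(V_g)=cl_\lambda(g)$, so every closure is $\lambda$-open.

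Finally I would close the biconditional using these two identities. If $g$ is $\lambda$-open then $g=int_\lambda(g)$, whence $cl_\lambda(g)=cl_\lambda(int_\lambda(g))=int_\lambda(g)=g$, so $g$ is $\lambda$-closed; symmetrically, if $g$ is $\lambda$-closed then $g=cl_\lambda(g)$ and $int_\lambda(g)=int_\lambda(cl_\lambda(g))=cl_\lambda(g)=g$, so $g$ is $\lambda$-open. I note that this derivation uses only the idempotence of the two operators as mediated through the clopen lemmas, together with the representations above, and in particular requires no finiteness assumption nor any additional hypothesis on $\Omega$ beyond those already standing.
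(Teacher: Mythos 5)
Your proposal is correct and takes essentially the route the paper intends: the paper states this proposition without an explicit proof, but records precisely your two ingredients — the clopen-ness of $ans_\lambda(U)$ and $mod_\lambda(U)$ for arbitrary $U$, and the identities $int_\lambda = ans_\lambda\Box_\lambda$ and $cl_\lambda = mod_\lambda\diamond_\lambda$ (your $U_g$ and $V_g$ are exactly $\Box_\lambda g$ and $\diamond_\lambda g$ as defined earlier in the chapter). Your fixpoint computation from those facts matches the intended argument, so there is nothing substantive to add.
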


In this context, when a description is either $\lambda$\emph{-open} or $\lambda$\emph{-close}, we refer to it as a description that is $\lambda$-representable within the semiotic system. This leads us to the following proposition:

\begin{prop}
Let $g$ be a description within the semiotic system $(S,M)$. There exists a relation $D$ such that $g \models_\lambda D$ if and only if $g$ is $\lambda$-open or $\lambda$-close.
\end{prop}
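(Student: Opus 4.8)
The plan is first to collapse the disjunction on the right-hand side. By the proposition immediately preceding, $g$ is $\lambda$-open if and only if it is $\lambda$-close, so the statement ``$g$ is $\lambda$-open or $\lambda$-close'' is equivalent to the single assertion that $g$ is a fixed point, $int_\lambda(g)=g$ (equivalently $cl_\lambda(g)=g$); I will call such a $g$ \emph{$\lambda$-representable}. It therefore suffices to prove that some relation $D\in Lang_R(S)$ satisfies $g\models_\lambda\forall D$ exactly when $int_\lambda(g)=g$. Throughout I would lean on the fact that $int_\lambda$ is a genuine interior operator, so $int_\lambda(g)\le g$ holds unconditionally, together with the two earlier facts that $\Gamma(d,MD)\ge\lambda$ implies $d\models_\lambda\forall D$ and that $MD\models\forall D$ for every relation $D$.

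For the direction from $\lambda$-representability to the existence of $D$, I would unwind $int_\lambda(g)=\bigvee\{h:(\exists D_0\in\Box_\lambda h)(\forall f_{D'}\in ans_\lambda(D_0))(f\le_{D'}g)\}$: since $g$ equals this join, each contributing $h$ carries a relation $D_0\in\Box_\lambda h$ whose $\lambda$-answers stay below $g$ on their domains, and I would amalgamate these witnesses into one relation using the closure of $\Box_\lambda g$ under $\vee$ (from the theorem on $\Box_\lambda,\diamond_\lambda$) and the lifting of the $\Omega$-structure to $Lang_R(S)$ established in Chapter \ref{logics}, realizing the join as an interpretable relation via the existence of colimits in $Set(\Omega)$; the fixed-point identity $int_\lambda(ans_\lambda(U))=ans_\lambda(U)$ then pins down that the assembled $D$ indeed gives $g\models_\lambda\forall D$. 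For the converse, assuming $g\models_\lambda\forall D$ yields $\Gamma(g,MD)\ge\lambda$ everywhere and hence $g\in ans_\lambda(D)$; to obtain $\lambda$-openness it is enough to prove $g\le int_\lambda(g)$, which by the defining join reduces (taking $h=g$) to exhibiting a single relation in $\Box_\lambda g$. I would construct one by conditioning $D$ to the region where its interpretation stays below $g$, using the domain-relative order $\le_{D'}$ and the representability $MD\models\forall D$, and then transfer the bound from $MD$ to $g$ through the transitivity of the similarity $\Gamma$ (Proposition \ref{prop:implic}(2)).

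The hard part will be manufacturing a single witnessing relation in each direction out of what are, a priori, suprema and infima over families of concepts and relations: in the forward direction this is the nonemptiness of $\Box_\lambda g$ (a relation all of whose $\lambda$-answers are dominated by $g$ on their respective domains), and in the backward direction it is realizing $\bigvee D_0$ as one interpretable relation with mutually compatible domains $D'$. Both steps rest on the closure of $\Box_\lambda g$ and $\diamond_\lambda g$ under the lattice operations and on the fixed-point identities for $ans_\lambda$ and $mod_\lambda$; I expect the genuinely delicate bookkeeping to be reconciling the \emph{global} similarity threshold $\lambda$ with the \emph{local}, domain-relative comparisons $\le_{D'}$, showing these are consistent rather than attempting to reconstruct $D$ by explicit pointwise computation.
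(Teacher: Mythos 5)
The paper states this proposition without proof, so the only benchmark is whether your argument closes it from the chapter's definitions and stated results; it does not, and the backward direction is where it genuinely breaks. You correctly reduce $\lambda$-openness of $g$ to exhibiting a single relation in $\Box_\lambda g$ (taking $h=g$ in the join defining $int_\lambda(g)$), but the construction you propose --- restricting $D$ to the region where $MD$ stays below $g$ --- cannot produce such a relation. Membership in $\Box_\lambda g$ is a condition on \emph{all} $\lambda$-answers of the candidate relation, $(\forall f_{D'}\in ans_\lambda(D_0))\,(f\leq_{D'}g)$, and $\lambda$-consistency is the two-sided similarity condition $\Gamma(f,M(D_0))\geq\lambda$: the answer set in general contains concepts lying \emph{above} $g$ on whatever domain you restrict to, because shrinking the domain changes where the comparison $\leq_{D'}$ is evaluated, not its direction. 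Transitivity of $\Gamma$ (Proposition \ref{prop:implic}) propagates similarity and can never convert the symmetric hypothesis $\Gamma(g,MD)\geq\lambda$ into the one-sided domination that $\Box_\lambda g$ demands, so the required nonemptiness of $\Box_\lambda g$ is never established by your argument.

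The forward direction also has two unresolved points. First, the amalgamation: the join defining $int_\lambda(g)$ ranges over an arbitrary, possibly infinite, family of concepts $h$ with their witnesses, while the closure of $\Box_\lambda g$ under $\vee$ is stated only for pairs and $Lang(S)$ consists of finite diagrams; a colimit in $Set(\Omega)$ is a concept, not automatically the interpretation of a diagram, and that distinction between $M(S)$ and $Lang(S)$ is precisely what the proposition is about. Second, even granting one amalgamated $D$, your witnesses only bound the answers of $D$ from above by $g$, whereas $g\models_\lambda\forall D$ needs the two-sided estimate $\Gamma(g,MD)\geq\lambda$; the identity $int_\lambda(ans_\lambda(U))=ans_\lambda(U)$ you invoke only says $ans_\lambda(U)$ is $\lambda$-open and supplies no such estimate. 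The route the paper itself sets up is shorter and avoids both problems: from the definitional identity $int_\lambda=ans_\lambda\Box_\lambda$, openness gives $g=ans_\lambda(\Box_\lambda g)$, and the theorem $ans_\top(U)\models\bigvee U$ (read at level $\lambda$) then yields $g\models_\lambda\bigvee\Box_\lambda g$, so $D=\bigvee\Box_\lambda g$ is the witness, with the infinite join treated exactly as the paper treats ``total'' diagrams; dually, reading ``$g\models_\lambda D$'' as ``$g$ is the $\lambda$-answer of the query $D$,'' i.e.\ $g=ans_\lambda(\{D\})$, makes the backward direction an instance of $int_\lambda(ans_\lambda(U))=ans_\lambda(U)$. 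Your hand-rolled reconstruction of that machinery is where control of both the syntax (finiteness) and the estimate is lost.
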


Due to the symmetry between the left and right sides of $d \models D$, the definitions yield:
\[int_\lambda = ans_\lambda \Box_\lambda \quad \text{and} \quad cl_\lambda = mod_\lambda \diamond_\lambda\]
These definitions also possess symmetric counterparts, which can be derived by replacing each operator with its symmetric equivalent:
\[\A_\lambda = \Box_\lambda ans_\lambda \quad \text{and} \quad \C_\lambda = \diamond_\lambda mod_\lambda.\]
By this symmetry, it is immediate to conclude that $\C_\lambda$ is an interior operator and $\A_\lambda$ is a closure operator.

Elaborating on the definition of $\A_\lambda$, for every set of relations $U$:
\[\A_\lambda (U) = \{D \in Lang(S): (\forall f_{D'} \in ans_\lambda(D))(f \leq_{D'} ans_\lambda (U))\},\]
This means that all $\lambda$-answers for $D$ are $\lambda$-codified using relations from $U$. This leads us to:

\begin{thm}\label{soundness}
For every pair of relations $U$ and $V$ in the semiotic system $(S,M)$, the following properties hold:
\begin{enumerate}
  \item $\A_\lambda(U \cup V) \supseteq \A_\lambda(U) \cup \A_\lambda(V)$,
  \item if $U \subseteq V$, then $\A_\lambda(U) \subseteq \A_\lambda(V)$,
  \item if $\lambda_0 \leq \lambda_1$, then $\A_{\lambda_1}(U) \subseteq \A_{\lambda_0}(U)$,
  \item if $D \in \A_\lambda(U)$, then $\bigvee ans_\lambda(D) \leq ans_\lambda(U)$, and
  \item if $D \in \A_\lambda(U)$, then $ans_\lambda(U) \models_\lambda D$.
\end{enumerate}
\end{thm}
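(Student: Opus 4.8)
The plan is to reduce everything to the factorisation $\A_\lambda = \Box_\lambda \circ ans_\lambda$, together with three monotonicity facts that are either immediate or already available. First, $\Box_\lambda g$ is monotone in its concept argument: if $g\leq h$ then $\Box_\lambda g\subseteq \Box_\lambda h$, since $f\leq_{D'}g$ and $g\leq h$ give $f\leq_{D'}h$ (here $\leq_{D'}$ is a pointwise inequality restricted to the support on which $M(D')=\top$, so it is preserved under enlarging the bound — I will call this the \emph{widening rule}). Second, $ans_\lambda(\cdot)$ is monotone in its set argument and sends unions to joins, as in the preceding theorem: $ans_\lambda(U\cup V)=ans_\lambda(U)\vee ans_\lambda(V)$ and $U\subseteq V\Rightarrow ans_\lambda(U)\leq ans_\lambda(V)$. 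Third, both $\Box_\lambda$ and $ans_\lambda(U)$ behave antitonically in the index $\lambda$, as established earlier.

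For (1) and (2) I would argue directly from the explicit description $\A_\lambda(U)=\{D:(\forall f_{D'}\in ans_\lambda(D))\; f\leq_{D'}ans_\lambda(U)\}$. If $D\in\A_\lambda(U)$ then every answer $f_{D'}$ of $D$ satisfies $f\leq_{D'}ans_\lambda(U)$; since $ans_\lambda(U)\leq ans_\lambda(U)\vee ans_\lambda(V)=ans_\lambda(U\cup V)$, the widening rule gives $f\leq_{D'}ans_\lambda(U\cup V)$, so $D\in\A_\lambda(U\cup V)$, and symmetrically from $\A_\lambda(V)$, proving (1). Statement (2) is the same computation with $ans_\lambda(U)\leq ans_\lambda(V)$ in place of the join bound. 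For (3) the factorisation is the clean route: from $\lambda_0\leq\lambda_1$ and antitonicity of $ans_\lambda(U)$ in $\lambda$ we have $ans_{\lambda_1}(U)\leq ans_{\lambda_0}(U)$, so chaining antitonicity of $\Box_\lambda$ in the index with monotonicity in the argument yields $\A_{\lambda_1}(U)=\Box_{\lambda_1}(ans_{\lambda_1}(U))\subseteq\Box_{\lambda_0}(ans_{\lambda_1}(U))\subseteq\Box_{\lambda_0}(ans_{\lambda_0}(U))=\A_{\lambda_0}(U)$.

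For (4) I unfold $D\in\A_\lambda(U)$ to the clause that $f\leq_{D'}ans_\lambda(U)$ holds for every $f_{D'}\in ans_\lambda(D)$. Joining over all answers and using that each answer is recorded together with the domain $D'$ on which $\Gamma_\lambda(f,D)$ is full, the domain-restricted pieces reassemble to $\bigvee ans_\lambda(D)\leq ans_\lambda(U)$; the care needed is precisely to verify that the join of the domain-restricted answers coincides with $\bigvee ans_\lambda(D)$, which follows from how $ans_\lambda(D)$ pairs each answer with its validity domain. In particular, since $M(D)\in ans_\lambda(D)$, this already delivers the pointwise lower bound $M(D)\leq ans_\lambda(U)$.

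Statement (5) is the substantive one and I expect it to be the main obstacle. Informally $D\in\A_\lambda(U)=\Box_\lambda(ans_\lambda(U))$ just \emph{says} that $D$ is $\lambda$-consistent with $ans_\lambda(U)$, i.e. $ans_\lambda(U)\models_\lambda D$; the real work is to show that the set-theoretic defining clause of $\Box_\lambda$ entails $\lambda$-consistency in the sense of Definition \ref{def:lambda model}. I would combine the lower bound $M(D)\leq ans_\lambda(U)$ from (4) with a matching $\lambda$-scaled upper bound $\lambda\otimes ans_\lambda(U)\leq M(D)$, and then compute the similarity directly: because $M(D)\leq ans_\lambda(U)$ the biimplication collapses to $\Gamma(ans_\lambda(U),M(D))=ans_\lambda(U)\Rightarrow M(D)$, and residuation converts the upper bound into $ans_\lambda(U)\Rightarrow M(D)\geq\lambda$ pointwise, that is $\Gamma_\lambda(ans_\lambda(U),D)=\prod A_i$, which is exactly $ans_\lambda(U)\models_\lambda\forall D$. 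The delicate point is producing $\lambda\otimes ans_\lambda(U)\leq M(D)$: here I would exploit that $ans_\lambda(U)$ is assembled from descriptions each $\lambda$-consistent with some witness in $U$, that $D\in\Box_\lambda(ans_\lambda(U))$ forces those witnesses' answers below $M(D)$ on their domains, and the divisibility identities of Proposition \ref{prop:implic} — in particular $(x\Rightarrow y)\otimes(y\Rightarrow z)\leq x\Rightarrow z$ — to transport the $\lambda$-threshold along this chain. The $\lambda$-openness of $ans_\lambda(U)$ (the earlier proposition $int_\lambda(ans_\lambda(U))=ans_\lambda(U)$) guarantees that a representing relation exists, so what remains is purely the residuated algebra of the ML-algebra.
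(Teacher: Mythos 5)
Your handling of items (1)--(4) is correct and is essentially the argument the paper intends: the paper states this theorem without any explicit proof, presenting it as an immediate consequence of the unfolded description $\A_\lambda(U)=\{D:(\forall f_{D'}\in ans_\lambda(D))\,f\leq_{D'}ans_\lambda(U)\}$ together with the earlier theorems on $ans_\lambda$ (union-to-join, monotonicity in the set argument, antitonicity in $\lambda$) and on $\Box_\lambda$ (antitonicity in $\lambda$). Your factorisation $\A_\lambda=\Box_\lambda\circ ans_\lambda$, the "widening rule" for $\leq_{D'}$, and the chain $\A_{\lambda_1}(U)=\Box_{\lambda_1}(ans_{\lambda_1}(U))\subseteq\Box_{\lambda_0}(ans_{\lambda_1}(U))\subseteq\Box_{\lambda_0}(ans_{\lambda_0}(U))=\A_{\lambda_0}(U)$ are exactly that route. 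In (4) you are in fact more careful than the paper: the observation that the hypothesis only yields inequalities on the validity domains, and that the unrestricted bound $M(D)\leq ans_\lambda(U)$ survives because $M(D)$ is an answer with full domain, is a genuine point the paper silently skips.

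The gap is in (5), and while you have located the difficulty exactly, your proposal does not close it --- and with the paper's formal definitions it cannot be closed. Your plan needs the upper bound $\lambda\otimes ans_\lambda(U)\leq M(D)$, but the hypothesis $D\in\A_\lambda(U)=\Box_\lambda(ans_\lambda(U))$ only bounds the $\lambda$-answers of $D$ from above by $ans_\lambda(U)$; it gives no control of $ans_\lambda(U)$ from above in terms of $M(D)$, and neither the $\lambda$-openness of $ans_\lambda(U)$ nor the residuation identities of Proposition \ref{prop:implic} can manufacture such control. Concretely: take $U=\{D_1\}$ with $M(D_1)=\top$, so that $ans_\lambda(U)=\top$, and take any relation $D$ with $M(D)=\bot$ (both exist in any non-trivial logic semiotic). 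Every $\lambda$-answer $f_{D'}$ of $D$ trivially satisfies $f\leq_{D'}\top$, hence $D\in\A_\lambda(U)$; yet
\[
\Gamma\bigl(ans_\lambda(U),M(D)\bigr)=(\top\Rightarrow\bot)\otimes(\bot\Rightarrow\top)=\bot,
\]
so $ans_\lambda(U)\not\models_\lambda D$ (not even $\models_\lambda\exists D$) for any $\lambda>\bot$. Thus the inequality you flag as "the delicate point" is false in general, not merely hard. Item (5) holds only under the paper's prose reading of $D\in\Box_\lambda g$ as "$D$ is $\lambda$-consistent with $g$", which is precisely the identification you (rightly) refused to take for granted; making (5) a theorem would require strengthening the definition of $\Box_\lambda$ with a matching lower-bound clause (e.g.\ $ans_\lambda(U)\leq_{D'} f$ for some answer, or a direct bound by $M(D)$), not more residuated algebra. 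So: (1)--(4) stand, (5) is a genuine, and in this framework unrepairable, gap.
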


Expanding on the definition of the operator $\C_\lambda$, we have:
\[\C_\lambda (U) = \{D \in Lang(S): (\exists f_{D'} \in ans_\lambda(D)) (mod_\lambda(U) \leq_{D'} f)\},\]
Where $D \in \C_\lambda (U)$ implies that $D$ has a $\lambda$-answer, and every $\lambda$-codification for it is in $U$. Based on this, we can establish:

\begin{thm}
For every pair of relations $U$ and $V$ in the semiotic system $(S,M)$:
\[\C_\lambda(U \cup V) \subseteq \C_\lambda(U) \cap \C_\lambda(V)\]
and when $U \subseteq V$, we have $\C_\lambda(U) \subseteq \C_\lambda(V)$.
\end{thm}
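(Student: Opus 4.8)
The plan is to reduce both inclusions to the order-theoretic behaviour of the operator $\C_\lambda$, exploiting its decomposition $\C_\lambda = \diamond_\lambda\, mod_\lambda$ together with the identity $mod_\lambda(U\cup V)=mod_\lambda(U)\wedge mod_\lambda(V)$ established in the preceding theorem. The whole argument is meant to run parallel to (indeed, dual to) the proof of Theorem~\ref{soundness} for $\A_\lambda$, with the monotone pair $(ans_\lambda,\Box_\lambda,\vee)$ replaced throughout by the order-reversing pair $(mod_\lambda,\diamond_\lambda,\wedge)$.

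First I would settle the monotonicity clause, which is the clean part. Assume $U\subseteq V$. By the earlier theorem $mod_\lambda$ is order-reversing, so $mod_\lambda(V)\leq mod_\lambda(U)$ as concept descriptions, i.e.\ pointwise on the common support. Now take any $D\in\C_\lambda(U)$ and unfold the definition: there is a witness $f_{D'}\in ans_\lambda(D)$ with $mod_\lambda(U)\leq_{D'} f$. Since the relation $\leq_{D'}$ only compares values at the points $\bar{x}$ where $M(D')(\bar{x})=\top$, and at every such point $mod_\lambda(V)(\bar{x})\leq mod_\lambda(U)(\bar{x})\leq f(\bar{x})$, the \emph{same} witness $f_{D'}$ certifies $mod_\lambda(V)\leq_{D'} f$. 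Hence $D\in\C_\lambda(V)$, giving $\C_\lambda(U)\subseteq\C_\lambda(V)$.

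For the first inclusion I would pass through $\diamond_\lambda$. Using the union identity, membership $D\in\C_\lambda(U\cup V)$ unfolds to $D\in\diamond_\lambda\bigl(mod_\lambda(U)\wedge mod_\lambda(V)\bigr)$, i.e.\ there exists $f_{D'}\in ans_\lambda(D)$ with $mod_\lambda(U)\wedge mod_\lambda(V)\leq_{D'} f$. The task is to transport this single existential witness into witnesses that certify $D$ separately for $\C_\lambda(U)$ and $\C_\lambda(V)$, which amounts to analysing how $\diamond_\lambda$ behaves on a meet of descriptions. The key auxiliary fact to isolate is that $\diamond_\lambda$ is order-reversing in its description argument (a pointwise-smaller description is easier to dominate on the domain $D'$, hence is admitted by more relations), and then to specialise to $g=mod_\lambda(U)$ and $h=mod_\lambda(V)$ and compare $\diamond_\lambda(g\wedge h)$ with $\diamond_\lambda(g)$ and $\diamond_\lambda(h)$.

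The main obstacle is precisely pinning down the exact direction of this set inclusion. Anti-monotonicity of $\diamond_\lambda$ shows that replacing $g$ or $h$ by the meet $g\wedge h$ can only enlarge the $\diamond_\lambda$-image, and this must be reconciled with the monotonicity clause just proved, which already forces $\C_\lambda(U),\C_\lambda(V)\subseteq\C_\lambda(U\cup V)$. I would resolve the direction by matching the manipulation step for step against the dual computation used for $\A_\lambda(U\cup V)$ in Theorem~\ref{soundness}: there the monotone operators $\Box_\lambda$ and $ans_\lambda$ produce an inclusion over the join, and the corresponding meet computation for $\C_\lambda$ yields the stated inclusion once the existential witness and its domain $D'$ are transported correctly. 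The remaining verifications — that the transported $f$ is still a legitimate element of $ans_\lambda(D)$ and that the domain $D'$ is unchanged — are routine unfoldings of the definitions of $ans_\lambda$ and of $\leq_{D'}$.
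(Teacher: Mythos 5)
Your proof of the second clause is correct, and it is essentially the only argument the definitions support: $U\subseteq V$ gives $mod_\lambda(V)\leq mod_\lambda(U)$, and the same witness $f_{D'}$ then certifies $mod_\lambda(V)\leq_{D'}f$; in other words you are composing the antitone operator $mod_\lambda$ with the antitone operator $\diamond_\lambda$ to obtain a monotone $\C_\lambda$. (Note that the paper itself supplies no argument for this theorem --- it is asserted immediately after the expansion of $\C_\lambda(U)$ --- so there is no hidden step in the paper's text to compare against beyond this unfolding.)

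The first clause is where the genuine gap lies, and it cannot be closed. The transport step you defer is exactly the step that fails: from $mod_\lambda(U)\wedge mod_\lambda(V)\leq_{D'}f$ one cannot conclude $mod_\lambda(U)\leq_{D'}f$, because $mod_\lambda(U)\wedge mod_\lambda(V)\leq mod_\lambda(U)$, so the inequality points the wrong way at every point of the domain $D'$. The anti-monotonicity of $\diamond_\lambda$ that you yourself isolate produces precisely the opposite inclusion: from $mod_\lambda(U\cup V)=mod_\lambda(U)\wedge mod_\lambda(V)\leq mod_\lambda(U)$ one gets $\C_\lambda(U)\subseteq\C_\lambda(U\cup V)$, hence $\C_\lambda(U)\cup\C_\lambda(V)\subseteq\C_\lambda(U\cup V)$; this is the correct dual of the first item of Theorem~\ref{soundness}, since for $\A_\lambda$ the pair $(ans_\lambda,\Box_\lambda)$ is monotone while here both operators are antitone, so the analogy reverses the inclusion rather than preserving it. Worse, the stated inclusion is jointly inconsistent with the monotonicity clause you just proved: monotonicity gives $\C_\lambda(U)\subseteq\C_\lambda(U\cup V)$, and chaining this with the claimed $\C_\lambda(U\cup V)\subseteq\C_\lambda(U)\cap\C_\lambda(V)\subseteq\C_\lambda(V)$ yields $\C_\lambda(U)\subseteq\C_\lambda(V)$ for \emph{every} pair $U,V$, which by symmetry forces $\C_\lambda$ to be a constant operator. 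So no amount of ``matching step for step'' against the $\A_\lambda$ computation can produce both clauses as stated; a repaired statement must either reverse the first inclusion or replace $U\cup V$ by $U\cap V$ on its left-hand side, and your argument, carried to completion, proves exactly those repaired versions.
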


Let's define:
\[ U \vdash_\lambda D \text{ iff } D \in \A_\lambda(U), \]
Given that $\A_\lambda$ is a closure operator, the following properties hold:

\begin{thm}
In a semiotic system $(S,M)$, for every $\lambda$, we have:
\begin{enumerate}
    \item if $D \in U$, then $U \vdash_\lambda D$ (Inclusion),
    \item if $U \vdash_\lambda D$, then $U \cup V \vdash_\lambda D$ (Monotony), and
    \item if $V \vdash_\lambda D$ and $U \cup \{D\} \vdash_\lambda D'$, then $U \cup V \vdash_\lambda D'$ (Cut).
\end{enumerate}
\end{thm}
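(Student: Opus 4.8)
The plan is to derive all three rules purely from the fact, established just above, that $\A_\lambda$ is a closure operator on the subsets of $Lang_R(S)$. Concretely, I would use its three defining properties: extensivity ($U\subseteq\A_\lambda(U)$), monotonicity ($U\subseteq V\Rightarrow\A_\lambda(U)\subseteq\A_\lambda(V)$, which is also Theorem~\ref{soundness}.2), and idempotence ($\A_\lambda(\A_\lambda(U))=\A_\lambda(U)$). Since by definition $U\vdash_\lambda D$ means exactly $D\in\A_\lambda(U)$, each rule translates into an elementary set-theoretic statement about $\A_\lambda$, so no new analysis of $ans_\lambda$, $\Box_\lambda$, or $\models_\lambda$ is needed.

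The first two rules are immediate. For Inclusion, if $D\in U$ then extensivity gives $D\in U\subseteq\A_\lambda(U)$, i.e. $U\vdash_\lambda D$. For Monotony, from $U\subseteq U\cup V$ and monotonicity I get $\A_\lambda(U)\subseteq\A_\lambda(U\cup V)$; hence $D\in\A_\lambda(U)$ forces $D\in\A_\lambda(U\cup V)$, that is $U\cup V\vdash_\lambda D$.

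The Cut rule is the substantive step, and I would handle it by the standard Tarski argument. Assume $D\in\A_\lambda(V)$ and $D'\in\A_\lambda(U\cup\{D\})$. First I would show $U\cup\{D\}\subseteq\A_\lambda(U\cup V)$: indeed $U\subseteq U\cup V\subseteq\A_\lambda(U\cup V)$ by extensivity, while $D\in\A_\lambda(V)\subseteq\A_\lambda(U\cup V)$ by monotonicity applied to $V\subseteq U\cup V$. Applying $\A_\lambda$ to this inclusion and invoking monotonicity followed by idempotence yields $\A_\lambda(U\cup\{D\})\subseteq\A_\lambda(\A_\lambda(U\cup V))=\A_\lambda(U\cup V)$. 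Since $D'\in\A_\lambda(U\cup\{D\})$, I conclude $D'\in\A_\lambda(U\cup V)$, i.e. $U\cup V\vdash_\lambda D'$.

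The only delicate point is idempotence of $\A_\lambda$, which is precisely what its closure-operator status supplies and on which Cut genuinely depends; the other two rules need only extensivity and monotonicity. If one wished to avoid appealing to the abstract closure axioms, idempotence could be re-derived by unfolding $\A_\lambda=\Box_\lambda\,ans_\lambda$ together with the identity $int_\lambda=ans_\lambda\,\Box_\lambda$ and the fixed-point property $int_\lambda(ans_\lambda(U))=ans_\lambda(U)$ from the preceding proposition, but the closure-operator observation renders this unnecessary. This is the main (indeed essentially the only) obstacle in the argument.
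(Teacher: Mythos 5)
Your proposal is correct and follows essentially the same route as the paper: the paper introduces $U \vdash_\lambda D$ as shorthand for $D \in \A_\lambda(U)$ and derives all three rules directly from the fact, established just beforehand, that $\A_\lambda$ is a closure operator. Your spelled-out Tarski argument for Cut (extensivity and monotonicity to get $U \cup \{D\} \subseteq \A_\lambda(U \cup V)$, then monotonicity plus idempotence to collapse $\A_\lambda(\A_\lambda(U \cup V))$) is exactly the detail the paper leaves implicit.
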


This implies that $(Lang_R(S),\vdash_\lambda)$ serves as an \emph{inference system} \cite{MaibaumAbramskyGabbay92} for every $\lambda$.

\begin{exam}
Consider the interpretations, depicted in Fig. \ref{grid4}, for three diagrams $D_0$, $D_1$, and $D_2$ within the grid semiotic employing three-valued logic:

\begin{figure}[h]
\begin{center}
\includegraphics[width=150pt]{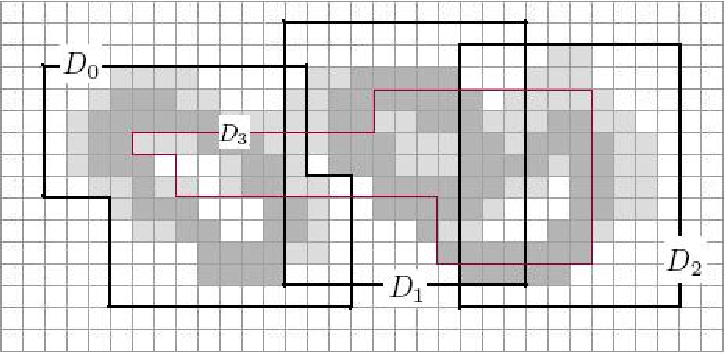}
\end{center}
\caption{Interpretations for diagrams $D_0$, $D_1$, $D_2$, and $D_3$.}\label{grid4}
\end{figure}

The diagram $D_3$, with its represented interpretation, can be viewed as the result of applying inference to the set of diagrams $\{D_0,D_1,D_2\}$. Symbolically, we express this as:
\[D_0,D_1,D_2 \vdash D_3.\]
\end{exam}

Given that $\A_{\lambda_0} \leq \A_{\lambda_1}$ whenever $\lambda_0 \geq \lambda_1$, we can represent this relationship as:

\[
\infer{U \vdash_{\lambda_1} D_0}{U \vdash_{\lambda_0} D_0 & \lambda_0 \geq \lambda_1}
\]

Utilizing the definition of $\lambda$-consistency, we can state the following theorem:

\begin{thm} 
Let $g \models_{\lambda_0} \forall_{D'_0} D_0$ and $f \models_{\lambda_1} \forall_{D'_1} D_1$:
\begin{enumerate}
  \item $g \wedge f \models_{\lambda_0 \wedge \lambda_1} \forall_{D'_0 \otimes D'_1} (D_0 \wedge D_1)$,
  \item $g \vee f \models_{\lambda_0 \vee \lambda_1} \forall_{D'_0 \otimes D'_1} (D_0 \vee D_1)$,
  \item $g \otimes f \models_{\lambda_0 \otimes \lambda_1} \forall_{D'_0 \otimes D'_1} (D_0 \otimes D_1)$, and
  \item $g \Rightarrow f \models_{\lambda_0 \Rightarrow \lambda_1} \forall_{D'_0 \otimes D'_1} (D_0 \Rightarrow D_1)$.
\end{enumerate}
\end{thm}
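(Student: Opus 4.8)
The plan is to reduce each of the four assertions to a single pointwise inequality between truth values in $\Omega$, and then to feed that inequality through congruence properties of the biimplication $\Leftrightarrow$. First I would unfold the notation. By Definition \ref{def:lambda model}, the hypothesis $g\models_{\lambda_0}\forall_{D'_0}D_0$ means that the domain $B_0=\Gamma_{\lambda_0}(g,D_0)$ is the subset codified by $D'_0$ and that $\Gamma(g,MD_0)(\bar x)=\bigl(g(\bar x)\Leftrightarrow MD_0(\bar x)\bigr)\geq\lambda_0$ for every $\bar x\in B_0$; symmetrically $f\models_{\lambda_1}\forall_{D'_1}D_1$ gives $f(\bar x)\Leftrightarrow MD_1(\bar x)\geq\lambda_1$ on $B_1$. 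I would also recall from chapter \ref{logics} that the composite relations are interpreted operation-wise, i.e. $M(D_0\star D_1)=MD_0\star MD_1$ for $\star\in\{\wedge,\vee,\otimes,\Rightarrow\}$, since $D_0\star D_1$ is by construction the configuration $I\otimes D_0\otimes D_1\otimes{}`\star{}`$ and $I$ merely links inputs of common meaning.

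Next I would dispatch the domain bookkeeping, which is uniform across all four cases. The gluing $D'_0\otimes D'_1$ identifies the inputs shared by $D'_0$ and $D'_1$, so — exactly as in the derivation recorded immediately after Definition \ref{def:lambda model} (the three displayed $\forall_{B_0\cap B_1}$ relations) — its interpretation codifies the intersection $B_0\cap B_1$. Hence it suffices to prove, for every $\bar x\in B_0\cap B_1$, the pointwise estimate
\[
\bigl((g\star f)(\bar x)\bigr)\Leftrightarrow\bigl((MD_0\star MD_1)(\bar x)\bigr)\;\geq\;\lambda_0\star\lambda_1.
\]
Abbreviating $a=g(\bar x)$, $a'=MD_0(\bar x)$, $b=f(\bar x)$, $b'=MD_1(\bar x)$, and setting $p=a\Leftrightarrow a'\geq\lambda_0$, $q=b\Leftrightarrow b'\geq\lambda_1$, the whole theorem collapses to the four algebraic congruences $(a\star b)\Leftrightarrow(a'\star b')\geq p\star q$ in $\Omega$.

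The heart of the argument is these congruences, which I would derive from the residuation law and Proposition \ref{prop:implic}. The workhorse is the tensor case: residuation applied to $p\otimes a\leq a'$, $q\otimes b\leq b'$ and their symmetric partners yields $(a\Leftrightarrow a')\otimes(b\Leftrightarrow b')\leq(a\otimes b)\Leftrightarrow(a'\otimes b')$, giving the level $\lambda_0\otimes\lambda_1$ directly. For the meet I would use the identity $x\Rightarrow(y\wedge z)=(x\Rightarrow y)\wedge(x\Rightarrow z)$ together with the antitone behaviour of $\Rightarrow$ in its first argument to show that each of the two residua composing $(a\wedge b)\Leftrightarrow(a'\wedge b')$ dominates $p\wedge q$; the decisive structural input is then the hypothesis of Definition \ref{def:lambda model} that $[\lambda,\top]$ is a chain, because on a chain any two truth values are comparable, so one residuum in each biimplication equals $\top$ and the tensor of the two residua collapses to their meet, $X\otimes Y=X\wedge Y$. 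That collapse is precisely what upgrades the generic $\otimes$-level to the meet-level on the confidences. The join and implication cases are handled by the analogous congruence inequalities, combined recombining over $\bar x$ and the operation-wise interpretation to produce the four stated $\models_{\lambda_0\star\lambda_1}$ relations.

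The main obstacle I anticipate is exactly the matching of the per-operation exponents. Over a general (non-chain) ML-algebra the biimplication $X\otimes Y$ is strictly below $X\wedge Y$, so only the uniform bound $\lambda_0\otimes\lambda_1$ survives unconditionally; the sharper operation-matched levels $\lambda_0\wedge\lambda_1$, $\lambda_0\vee\lambda_1$, $\lambda_0\Rightarrow\lambda_1$ genuinely consume the total comparability of $[\lambda,\top]$ baked into the definition of $\models_\lambda$. I would therefore be careful to isolate the single step where comparability is used, and to check the join and implication refinements most scrupulously, since these are the cases where the collapse $X\otimes Y=X\wedge Y$ is the only thing standing between the stated level and the weaker but always-valid level $\lambda_0\otimes\lambda_1$.
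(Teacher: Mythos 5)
The paper itself states this theorem without proof (it simply re-asserts, with $\wedge$ added, the unproved displays that follow Definition~\ref{def:lambda model}), so your proposal has to stand on its own merits --- and it does not, because it claims to establish precisely the two clauses that are false. Your reduction to pointwise congruence inequalities in $\Omega$ is the right move, and clause~3 is sound: residuation does give $(a\Leftrightarrow a')\otimes(b\Leftrightarrow b')\leq (a\otimes b)\Leftrightarrow(a'\otimes b')$, hence the level $\lambda_0\otimes\lambda_1$. Clause~1 is also salvageable along your lines when $\Omega$ is totally ordered. The gap is the sentence deferring the join and implication cases to ``analogous congruence inequalities'': no such inequalities exist. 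For every one of the four connectives, residuation yields only the uniform bound $(a\Leftrightarrow a')\otimes(b\Leftrightarrow b')$, and your comparability argument can raise this at best to $(a\Leftrightarrow a')\wedge(b\Leftrightarrow b')$, i.e.\ to the level $\lambda_0\wedge\lambda_1$; it can never reach $\lambda_0\vee\lambda_1$ or $\lambda_0\Rightarrow\lambda_1$, since both dominate $\lambda_0\wedge\lambda_1$. Indeed clauses~2 and~4 are false. For clause~4, take $\lambda_0=\lambda_1=\bot$: both hypotheses become vacuous while the claimed level is $\bot\Rightarrow\bot=\top$, which would force $g\Rightarrow f$ to coincide exactly with $M(D_0)\Rightarrow M(D_1)$; at a point with $g=\top$, $f=\bot$, $M(D_0)=M(D_1)=\bot$ one has $g\Rightarrow f=\bot$ but $M(D_0)\Rightarrow M(D_1)=\top$, and $\bot\Leftrightarrow\top=\bot\not\geq\top$ in any non-trivial $\Omega$. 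For clause~2, in {\L}ukasiewicz logic take $\lambda_1=\bot$, $\lambda_0=\top$, and a point of the common domain with $g=M(D_0)=0.5$, $f=1$, $M(D_1)=0$: then $(g\vee f)\Leftrightarrow(M(D_0)\vee M(D_1))=1\Leftrightarrow 0.5=0.5$, strictly below the claimed level $\lambda_0\vee\lambda_1=1$. So no argument can close these cases; the honest output of your method is the level $\lambda_0\otimes\lambda_1$ for all four connectives, improvable to $\lambda_0\wedge\lambda_1$ for the two lattice connectives over a chain, and the correct response is to flag clauses~2 and~4 as unprovable as stated rather than to sketch a proof of them.

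Two smaller repairs to the parts that do work. First, your chain-collapse step appeals to the wrong hypothesis: Definition~\ref{def:lambda model} only makes the interval $[\lambda,\top]$ a chain, while the elements whose comparability you need, $a\wedge b$ and $a'\wedge b'$, are arbitrary elements of $\Omega$ with no reason to lie in that interval; what you actually need is that $\Omega$ itself is linearly ordered (true in all the paper's $[0,1]$ examples, but not granted by the definition). Second, $d\models_\lambda\forall_B D$ is defined by the equality $B=\Gamma_\lambda(d,D)$, whereas your pointwise estimates only deliver the inclusion $B_0\cap B_1\subseteq\Gamma_{\lambda_0\star\lambda_1}(g\star f,\,D_0\star D_1)$; you should phrase the conclusion using the paper's weaker convention $d\models_\lambda\forall_C D$ for $C\subseteq\Gamma_\lambda(d,D)$, since the reverse inclusion can fail.
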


Applying these properties to the definition of $\lambda$-answer, we obtain:

\begin{thm} 
Within a semiotic system, we have:
\begin{enumerate}
  \item For every diagram $D \in Lang_R(S)$:
   \begin{enumerate}
     \item $ans_{\lambda_0 \vee \lambda_1}(D) = ans_{\lambda_0}(D) \vee ans_{\lambda_1}(D)$,
     \item $ans_{\lambda_0 \wedge \lambda_1}(D) = ans_{\lambda_0}(D) \wedge ans_{\lambda_1}(D)$, and
     \item $ans_{\lambda_0 \otimes \lambda_1}(D) = ans_{\lambda_0}(D) \otimes ans_{\lambda_1}(D)$;
   \end{enumerate}

  \item For every concept description $g \in M(S)$:
  \begin{enumerate}
    \item $\Box_{\lambda_0 \vee \lambda_1}(g) = \Box_{\lambda_0}(g) \vee \Box_{\lambda_1}(g)$,
    \item $\Box_{\lambda_0 \wedge \lambda_1}(g) = \Box_{\lambda_0}(g) \wedge \Box_{\lambda_1}(g)$, and
    \item $\Box_{\lambda_0 \otimes \lambda_1}(g) = \Box_{\lambda_0}(g) \otimes \Box_{\lambda_1}(g)$;
  \end{enumerate}

  \item For every set of diagrams $U \subset Lang_R(S)$:
  \begin{enumerate}
    \item $ans_{\lambda_0 \vee \lambda_1}(U) = ans_{\lambda_0}(U) \vee ans_{\lambda_1}(U)$,
    \item $ans_{\lambda_0 \wedge \lambda_1}(U) = ans_{\lambda_0}(U) \wedge ans_{\lambda_1}(U)$, and
    \item $ans_{\lambda_0 \otimes \lambda_1}(U) = ans_{\lambda_0}(U) \otimes ans_{\lambda_1}(U)$.
  \end{enumerate}
\end{enumerate}
\end{thm}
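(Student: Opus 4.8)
The plan is to reduce all three groups of identities to a single mechanism: the behaviour of the biimplication similarity $\Gamma(\cdot,MD)$ under the connectives, together with the monotonicity of $ans_\lambda$ in $\lambda$. By the proposition relating $\Gamma$ to $\models_\lambda$, membership $g\in ans_\lambda(D)$ is governed by the condition $\Gamma(g,MD)\geq\lambda$ on the relevant domain, so each claimed equality of answer-collections is, at the level of individual witnesses, an assertion about how the pointwise inequalities $\Gamma(\cdot,MD)\geq\lambda_0$ and $\Gamma(\cdot,MD)\geq\lambda_1$ combine into $\Gamma(\cdot,MD)\geq(\lambda_0\star\lambda_1)$ for $\star\in\{\vee,\wedge,\otimes\}$. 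For each identity I would prove two inclusions. The \emph{combining} inclusion $ans_{\lambda_0}(D)\star ans_{\lambda_1}(D)\subseteq ans_{\lambda_0\star\lambda_1}(D)$ is exactly the connective-compatibility theorem established immediately above, specialised to $D_0=D_1=D$ and using the idempotence $D\vee D=D$ and $D\wedge D=D$ of diagram gluing for the lattice connectives: if $g\models_{\lambda_0}\forall_{D'_0}D$ and $f\models_{\lambda_1}\forall_{D'_1}D$, that theorem yields $g\star f\models_{\lambda_0\star\lambda_1}\forall_{D'_0\otimes D'_1}D$, whence $g\star f\in ans_{\lambda_0\star\lambda_1}(D)$.

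The reverse inclusion is where the cases diverge. For the join I would use the lattice identity that $x\geq\lambda_0\vee\lambda_1$ holds iff $x\geq\lambda_0$ and $x\geq\lambda_1$: this shows $ans_{\lambda_0\vee\lambda_1}(D)=ans_{\lambda_0}(D)\cap ans_{\lambda_1}(D)$, and since any $h$ in this intersection satisfies $h=h\vee h$ with both copies admissible, $h$ lies in $ans_{\lambda_0}(D)\vee ans_{\lambda_1}(D)$; combined with the combining inclusion this sandwiches the three collections into equality. For the meet and the tensor I would argue differently, first invoking monotonicity (the earlier theorem $ans_{\lambda_1}(D)\subseteq ans_{\lambda_0}(D)$ whenever $\lambda_0\leq\lambda_1$, so both $ans_{\lambda_0}(D)$ and $ans_{\lambda_1}(D)$ sit inside $ans_{\lambda_0\wedge\lambda_1}(D)$) and then exhibiting, for a given $h\in ans_{\lambda_0\wedge\lambda_1}(D)$ (respectively $ans_{\lambda_0\otimes\lambda_1}(D)$), explicit witnesses $g,f$ with $g\wedge f=h$ (respectively $g\otimes f=h$). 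The natural candidates are built from $MD$ and the residua, using $\lambda\Rightarrow MD$ as the extremal $\lambda$-answer and the divisibility law $x\otimes(x\Rightarrow y)=x\wedge y$ to realise the required factorisation over the region cut out by the witnessing domain.

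Groups 2 and 3 I would then obtain by pushing group 1 through the definitions. For the $\Box_\lambda$ identities, unfolding $\Box_\lambda g=\{D\in Lang_R(S):(\forall f_{D'}\in ans_\lambda(D))(f\leq_{D'}g)\}$ turns each statement into a quantified condition over $ans_\lambda(D)$, and the group-1 identities for $ans_{\lambda_0\star\lambda_1}(D)$ let me rewrite the quantifier over the combined degree as the corresponding combination of the quantifiers over $ans_{\lambda_0}(D)$ and $ans_{\lambda_1}(D)$. For the $ans_\lambda(U)$ identities I would use $ans_\lambda(U)=\bigvee\{g:(\exists D\in\Box_\lambda g)(D\in U)\}$ together with the group-2 behaviour of $\Box_\lambda$ and the already-proved distribution of $ans_\lambda$ over unions; the join, meet and tensor then pass through the supremum because $\otimes$ and $\wedge$ distribute over the joins defining $ans_\lambda(U)$ in a complete residuated lattice.

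The hard part will be the reverse inclusions for $\wedge$ and $\otimes$ in group 1: unlike the join case they do not reduce to a clean lattice identity, and a naive choice of witnesses (such as $g=h$, $f=MD$) can fail, since a $(\lambda_0\wedge\lambda_1)$-answer may sit strictly outside both $ans_{\lambda_0}(D)$ and $ans_{\lambda_1}(D)$ once a domain restriction is fixed. Making the factorisations $h=g\wedge f$ and $h=g\otimes f$ go through in general will require the divisibility of $\Omega$ and careful bookkeeping of the witnessing domains $D'_0,D'_1$, so that $D'_0\otimes D'_1$ recovers the domain on which $h$ is a combined-degree answer. I expect this to be the technical crux, and the place where the basic/divisible-logic hypothesis is genuinely used.
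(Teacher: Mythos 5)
Your proposal is a plan whose decisive steps are left open, and those steps are exactly where the statement is in trouble. For context: the paper itself offers no proof of this theorem---after establishing the compatibility theorem for $\models_\lambda$ under the connectives, it writes only ``Applying these properties to the definition of $\lambda$-answer, we obtain'' the present statement. So the paper's entire implicit argument is your \emph{combining} inclusion $ans_{\lambda_0}(D)\star ans_{\lambda_1}(D)\subseteq ans_{\lambda_0\star\lambda_1}(D)$, and nothing more; your recognition that the claimed \emph{equalities} also demand the reverse inclusions, and that these are where the work lies, is a genuine observation the paper glosses over. But a proposal that defers precisely those inclusions to ``divisibility and careful bookkeeping'' and labels them ``the technical crux'' has not proved them, so what you have is an honest diagnosis, not a proof.

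Two concrete gaps remain. First, your combining inclusion fails for the tensor case as written: specializing the compatibility theorem to $D_0=D_1=D$ yields $g\otimes f\models_{\lambda_0\otimes\lambda_1}\forall_{D'_0\otimes D'_1}(D\otimes D)$, and unlike $\vee$ and $\wedge$, the connective $\otimes$ is not idempotent, so $M(D\otimes D)=M(D)\otimes M(D)\neq M(D)$ in general. You invoked idempotence only for the lattice connectives yet claimed the inclusion for all three; the $\otimes$ case is exactly where it breaks. Concretely, in product logic with $M(D)$ constantly $0.8$, one has $M(D)\in ans_\top(D)$, so $M(D)\otimes M(D)$ (constantly $0.64$) lies in $ans_\top(D)\otimes ans_\top(D)$, yet $\Gamma(M(D)\otimes M(D),M(D))=0.8<\top$, so it is not a $\top$-answer for $D$ on any nonempty domain---which threatens identity 1(c) itself under the element-wise reading. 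Second, for the reverse $\wedge$-inclusion your residuum-based witnesses need not exist: taking $\lambda_1<\lambda_0$ in $[0,1]$ with product logic, a $(\lambda_0\wedge\lambda_1)$-answer $h$ may satisfy $h(\bar{x})>M(D)(\bar{x})/\lambda_0$ at some point of its witnessing domain, and then any factorization $h=g\wedge f$ forces $g(\bar{x})\geq h(\bar{x})$, hence $\Gamma(g,MD)(\bar{x})<\lambda_0$, so no admissible $g$ exists on that domain. The only escape is to shrink the witnessing domain of $g$ to the points where $\Gamma(h,MD)\geq\lambda_0$, but membership in $ans_{\lambda_0}(D)$ requires that shrunken domain to be definable by some diagram $D'$ in the language, a condition the paper never guarantees. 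Both your plan and the theorem itself therefore stand or fall with unstated assumptions about which domains are language-definable; as written, the proposal cannot be completed.
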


This provides support for the introduction rules' definition:

\[
\infer{U \vdash_{\lambda_0 \vee \lambda_1} (D_0 \vee D_1), \quad
       U \vdash_{\lambda_0 \wedge \lambda_1} (D_0 \wedge D_1), \quad
       U \vdash_{\lambda_0 \otimes \lambda_1} (D_0 \otimes D_1)}
      {U \vdash_{\lambda_0} D_0 \quad \text{and} \quad U \vdash_{\lambda_1} D_1}
\]

The assertion that if $D_0 \wedge D_1 \in \A_\lambda(U)$ then both $D_0 \in \A_\lambda(U)$ and $D_1 \in \A_\lambda(U)$ can be captured by the elimination rule:

\[
\infer{U \vdash_\lambda D_0, \quad U \vdash_\lambda D_1}
      {U \vdash_\lambda (D_0 \wedge D_1)}
\]

In a divisible logic setting, we observe:

\[
\infer{U \vdash_{\lambda_0 \otimes \lambda_1} (D_0 \wedge D_1)}
      {U \vdash_{\lambda_0} D_0 \quad \text{and} \quad U \vdash_{\lambda_1} (D_0 \Rightarrow D_1)}
\]

This is because, if $g \models_{\lambda_0} \exists D_0$ and $g \models_{\lambda_1} \exists (D_0 \Rightarrow D_1)$, then $g \models_{\lambda_0 \otimes \lambda_1} \exists D_1$. Specifically, if $D_0 \in \A_{\lambda_0}(U)$ and $D_0 \Rightarrow D_1 \in \A_{\lambda_1}(U)$, it follows that $D_0 \wedge D_1 \in \A_{\lambda_0 \otimes \lambda_1}(U)$. This implies that for every $f \in ans_{\lambda_0}(D_0)$, we have $f \leq ans_{\lambda_0}(U)$, and for every $h \in ans_{\lambda_0}(D_1)$, $f \Rightarrow h \in ans_{\lambda_1}(D_0 \Rightarrow D_1)$. Consequently, $f \otimes (f \Rightarrow h) \in ans_{\lambda_0 \otimes \lambda_1}(U)$. It's worth noting that in a divisible ML-algebra, $f \otimes (f \Rightarrow h) \leq f \wedge h$. Hence, $f \wedge h \in ans_{\lambda_0 \otimes \lambda_1}(U)$.

A diagram \(D\) encapsulates all the information present in a concept \(d\), employing the syntax associated with the semiotic \((S,M)\). This occurs if, for every diagram \(D_1\) such that \(d \models_\lambda \forall D_1\), we have \(M(D \vee D_1) = M(D)\). Such diagrams are termed \emph{total} and can be defined as:

\[
D = \bigvee_{d \models_\lambda \forall D_i} D_i.
\]

In the category \(Lang(S)\), whose objects are diagrams encoding relations, and where a diagram \(D\) serves as a morphism from relation \(D_0\) to relation \(D_1\) if \(D_0 \otimes D = D_1\), we conceptualize composition as the operation of diagram gluing. The consequence operator \(\vdash_\lambda\) can be viewed as a functor:

\[
\vdash_\lambda: Lang(S) \rightarrow Lang(S).
\]

A diagram \(D\) is denoted as a \emph{theory} in the \(\lambda\)-semiotic \((S,M)\) if it stands as a fixed-point for the consequence operator:

\[
\vdash_\lambda(D) = D.
\]

The semiotic model \(M\) can be interpreted as a functor:

\[
M: Lang(S) \rightarrow \lambda\text{-}Hy_{(S,M)},
\]

within the category of \(\lambda\)-consistent concepts and computable multi-morphisms. Conversely, a functor in the opposite direction can be formulated using the consistency operator:

\[
\models_\lambda: \lambda\text{-}Hy_{(S,M)} \rightarrow Lang(S),
\]

which assigns to each \(\lambda\)-consistent description a total diagram along with its codification in the semiotic.

Given \(M(D) \models_\lambda \forall D\), it follows that:

\[
M \circ \models_\lambda = \text{id},
\]

and, by the definition of the consequence relation:

\[
\models_\lambda \circ M = \vdash_\lambda.
\]

If \(D\) represents a \(\lambda\)-theory in the semiotic, then \(M(D)\) serves as the \(\lambda\)-consistent model associated with this theory.
.

\chapter{Integration}\label{integration}
Our objective is to construct an integrated semiotic foundation derived from multiple distinct semiotics. This necessity may arise, for instance, when knowledge bases are independently acquired through interactions with various domain experts. Similarly, this challenge can emerge when separate knowledge bases are produced by distinct learning algorithms. The ultimate aim of integration is to create a unified system that leverages all available knowledge, ensuring high performance, i.e., a substantial degree of consistency with the data.

It is crucial to differentiate between two forms of integration: semiotic integration and model integration within a semiotic framework. Semiotic integration seeks to define a semiotic that amalgamates the syntax and semantics of a given set of semiotics. Conversely, model integration within a semiotic refers to the enhancement of concept descriptions by integrating models derived from diverse data sources or differing perspectives on the same data. The integration of models is governed by an integration schema that delineates the relationships between various models within the same semiotic framework. In semiotic integration, we amalgamate different logics within a single semiotic, which are associated with distinct languages employed by domain experts or structural specification languages. In both contexts, knowledge integration, combined with inference, can significantly facilitate the knowledge acquisition process.

We impose a critical constraint on semiotic integration: Given a family of semiotics \((S_i, M_i)_I\), its integration is defined if and only if equal signs and components bearing identical labels across different semiotics share the same interpretation. The only potential exception to this rule pertains to the interpretations of the sign \(\Omega\), which is associated with the logic of the semiotics and its operators and may vary.

The integration of semiotics \((S_i,M_i)_I\) is denoted by \((\bigcup_I S_i, \bigcup_I M_i)\) and is represented by the sign system:
\[
\bigcup_I S_i = (\bigcup_I L_i, \bigcup_I \E_i, \bigcup_I \U_i, \bigcup_I co\U_i),
\]
where each semiotic \(S_i\) is defined by the structure \((L_i, \E_i, \U_i, co\U_i)\) for \(i \in I\). Here, \(\bigcup_I L_i\) represents the library defined as the union of libraries \((L_i: |L_i| \rightarrow (Chains \downarrow \Sigma_i^+))_I\) associated with each semiotic. This library is given by:
\[
\bigcup_I L_i: \bigcup_I |L_i| \rightarrow (Chains \downarrow \bigcup_I \Sigma_i^+),
\]
where the signs consist of the union of ontologies \(\bigcup_I \Sigma_i^+\), defined by the signs present in each library, and the component labels comprise the union of labels existing in both libraries. It's worth noting that the integration of libraries must maintain the functionalities of components. Hence, the union of libraries is only defined if components in different libraries, with identical labels, exhibit the same functionalities. The graphical language associated with \(\bigcup_I L_i\) is denoted as \(Lang(\bigcup_I L_i)\), and we have \(\bigcup_I \E_i \subset Lang(\bigcup_I L_i)\).

Recalling the description for the language associated with \(\bigcup_I L_i\), given two graphs \(G_0\) and \(G_1\), we define \(G_0 \bigcup G_1\) as the graph having vertices from both \(G_0\) and \(G_1\) and arrows from both \(G_0\) and \(G_1\). If each library \(L_i\) is associated with multi-graphs \(\G(L_i)\), then:
\[
\G(\bigcup_I L_i) = \bigcup_I \G(L_i).
\]
If we have models \((M_i: \G(L_i^\ast) \rightarrow Set(\Omega))_I\) for different libraries, the homomorphism \(\bigcup_I M_i\) serves as a model for the sign system \(\bigcup_I S_i\):
\[
\bigcup_I M_i: \G((\bigcup_I L_i)^\ast) \rightarrow Set(\Omega)
\]
constructed using the union of models \((M_i: \G(L_i^\ast) \rightarrow Set(\Omega))_I\), assigning to nodes \(\bigcup_I M_i(v) = M_i(v)\) if \(v \in \G(L_i^\ast)\) and \(v \neq \Omega\) for some \(i \in I\), and \(\bigcup_I M_i(v) = M_i(v)\) for multi-arrows \(f: w \rightarrow w'\) in \(\G(L_i^\ast)\) where \(w' \neq \Omega\) for some \(i \in I\). This implies that logical signs and multi-arrows not representing relations are interpreted as if they were in their respective libraries. This definition is meaningful only when identical signs and labels are interpreted consistently across different libraries.

For the family of logical signs \((\Omega_i)_I\) associated with the family of logical semiotics \((S_i)_I\), we define the sign \(\prod_I \Omega_i\) interpreted by \(\bigcup_I M_i\) as the ML-algebra product \(\prod_I M_i(\Omega_i)\). The interpretation of the sign \(\prod_I \Omega_i\) yields an ML-algebra, and for every relation \(r: w \rightarrow \Omega_i\) present in each semiotic \(S_i\), its interpretation by \(\bigcup_I M_i\) is given by:
\[
\bigcup_I M_i(r: w \rightarrow \Omega_i) = M_i(r) \otimes \pi^\top_{\Omega_i}
\]
where \(\pi^\top_{\Omega_i}: \Omega_i \rightarrow \prod_I \Omega_i\) is the mapping defined as:
\[
\pi^\top_{\Omega_i}(\alpha) = (\top, \ldots, \top, \alpha, \top, \ldots, \top)
\]
with \(\alpha\) appearing in the component of order \(i\) and differing from \(\top\). Formally,

\begin{prop}
If
\[
S_0 = (L_0, \E_0, \U_0, co\U_0), S_1 = (L_1, \E_1, \U_1, co\U_1), \ldots, S_n = (L_n, \E_n, \U_n, co\U_n)
\]
are sign systems with models \(M_0, M_1, \ldots, M_n\), then the mapping
\[
\bigcup_I M_i: \G((\bigcup_I L_i)^\ast) \rightarrow Set(\Omega)
\]
defined as above serves as a model for the sign system
\[
\bigcup_I S_i = (\bigcup_I L_i, \bigcup_I \E_i, \bigcup_I \U_i, \bigcup_I co\U_i).
\]
\end{prop}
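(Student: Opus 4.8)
The plan is to verify, clause by clause, that $\bigcup_I M_i$ satisfies the six conditions defining a library model for $(\bigcup_I L_i)^\ast$ together with the three conditions of Definition \ref{Modelspec}, and to observe that every clause is \emph{local} to a single sector $\G(L_i^\ast)$ except for those involving the shared logic sign. First I would record that $\bigcup_I M_i$ is well defined: by the standing integration constraint, any sign or component label occurring in two of the $L_i$ (other than the logic sign) receives the same interpretation, so the sector interpretations agree on overlaps and glue to a single multi-graph homomorphism on $\G((\bigcup_I L_i)^\ast)=\bigcup_I\G(L_i^\ast)$. Since each $M_i$ is already a model of $S_i$, the clauses concerning equivalence of components, preservation of gluing as composition, word products, and equivalence of words hold on each sector verbatim, hence on the union; the only arrows whose interpretation is \emph{not} copied from some $M_i$ are the multi-arrows representing relations, i.e. those with target $\Omega$, which are reinterpreted through the embedding $\pi^\top_{\Omega_i}$ into the product ML-algebra $\prod_I M_i(\Omega_i)$.

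The technical core is a lemma about this embedding. I would show that $\pi^\top_{\Omega_i}\colon\Omega_i\to\prod_I\Omega_j$, $\alpha\mapsto(\top,\dots,\alpha,\dots,\top)$, is an injective homomorphism for all four connectives and for the unit: because $\top$ is the $\otimes$-unit and the top of each lattice factor, the coordinatewise definitions of $\otimes,\vee,\wedge,\Rightarrow$ on the product yield $\pi^\top_{\Omega_i}(\alpha)\ast\pi^\top_{\Omega_i}(\beta)=\pi^\top_{\Omega_i}(\alpha\ast\beta)$ for $\ast\in\{\otimes,\vee,\wedge,\Rightarrow\}$. Consequently, for any $\Omega_i$-valued multi-morphism $h$, the composite $h\otimes\pi^\top_{\Omega_i}$ has the same column-suprema as $h$, since $\bigvee_{\bar t}(h\otimes\pi^\top_{\Omega_i})(a,\bar t)=\bigvee_{s\in\Omega_i}h(a,s)$. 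I would also note that the extent of every element of $\Omega_i$ is $\top$, as is the extent of every tuple of $\prod_I\Omega_j$, so the extent factor $[\bar x]$ appearing in the limit and colimit formulas of Definitions \ref{lim} and \ref{colim} contributes $\top$ at each logic vertex in both readings.

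With the lemma in hand the three sketch clauses follow. For a diagram $D\in\bigcup_I\E_i$ pick $i$ with $D\in\E_i$: if $o(D)\neq\Omega$ the interpretation is copied from $M_i$ and totality is inherited; if $o(D)=\Omega$, the limit is built from $\otimes$ of the arrow interpretations and $\top$ extents, so because $\pi^\top_{\Omega_i}$ commutes with $\otimes$ one gets $(\bigcup_I M_i)(D)=M_i(D)\otimes\pi^\top_{\Omega_i}$, whence the column-suprema identity gives $\bigvee_{\bar t}(\bigcup_I M_i)(D)(a,\bar t)=\bigvee_{s}M_i(D)(a,s)=[a]$, i.e. totality. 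The same transport argument, using that $\pi^\top_{\Omega_i}$ preserves $\otimes$ (respectively $\vee$), shows that each tuple in $\bigcup_I\U_i$ is interpreted as the prescribed limit and each tuple in $\bigcup_I co\U_i$ as the prescribed colimit, since the defining formulas of $Lim\;MD$ and $coLim\;MD$ are preserved coordinatewise by the embedding.

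The step I expect to be the main obstacle is the epi/requirement clause $M(r)^\circ\otimes M(w)\otimes M(r)=M(o(r))$ for a \emph{relation} component $r\colon w\to\Omega_i$, since conjugating by $\pi^\top_{\Omega_i}$ produces the push-forward of $[\cdot=\cdot]_{\Omega_i}$ supported on the $\top$-filled slice rather than the full similarity $[\cdot=\cdot]_{\prod_I\Omega_j}$. I would resolve this by showing it is exactly the intended behaviour of the integration convention: the logic sign is the one place where interpretations are permitted to differ, and the faithful slice carried by $\pi^\top_{\Omega_i}$ is precisely the subobject onto which $r$ is required to be an epimorphism, so the clause is to be read relative to the reindexed truth object $\prod_I M_i(\Omega_i)$. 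Making this reading precise—confirming that the reinterpreted relation components still satisfy the epimorphism condition onto the relevant subobject, and that the lifted logical operators of Chapter \ref{logics} remain coherent under $\pi^\top_{\Omega_i}$—is where the argument needs the most care; everything else reduces to the locality observation and the homomorphism lemma above.
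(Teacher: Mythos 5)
Your proposal is correct, and its skeleton---gluing the \(M_i\) along the shared-interpretation constraint and verifying the clauses of Definition \ref{Modelspec} sector by sector---is exactly the paper's argument; indeed the paper's entire proof is the three-item observation that for \(D\in\E_j\), \((s,D,i(D),o(D))\in\U_j\) or \((s,D,i(D),o(D))\in co\U_j\) the union model restricts to \(M_j\), which is already a model. What you add is genuine: the paper's items assert the literal equality \(\bigcup_I M_i(D)=M_j(D)\), which fails precisely for the arrows whose target is the logic sign, since those are reinterpreted as \(M_j(r)\otimes\pi^\top_{\Omega_j}\); your lemma that \(\pi^\top_{\Omega_i}\) preserves \(\otimes,\vee,\wedge,\Rightarrow\) and \(\top\) (because \(\top\ast\top=\top\) for all four operations), together with the column-suprema identity \(\bigvee_{\bar t}(h\otimes\pi^\top_{\Omega_i})(a,\bar t)=\bigvee_{s}h(a,s)\), is exactly what is needed to transport totality and the limit/colimit clauses across the embedding, a step the paper skips silently. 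Two caveats. First, your equality \(\G((\bigcup_I L_i)^\ast)=\bigcup_I\G(L_i^\ast)\) is not literal---cross-library gluings live only on the left---but this is repaired by the universal property of chapter \ref{modeling libraries}: fixing interpretations on the sectors determines the model on the whole star-closure, which is also the move the paper makes implicitly. Second, the obstacle you flag at the end (whether the requirement \(M(r)^\circ\otimes M(w)\otimes M(r)=M(o(r))\) survives conjugation by \(\pi^\top_{\Omega_j}\), given that the image of \([\cdot=\cdot]_{\Omega_j}\) is only a slice of \([\cdot=\cdot]_{\prod_I\Omega_i}\)) is a real issue, but the paper never addresses it either, so your treatment is, if anything, strictly more complete than the printed proof.
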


Given models \(M_0, M_1, \ldots, M_n\) for sign systems \(S_0, S_1, \ldots, S_n\), by Definition \ref{Modelspec}, for every \(j = 1, \ldots, n\):

\begin{enumerate}
    \item If \(D \in \E_j\), then \(\bigcup_I M_i(D) = M_j(D)\) is a total multi-morphism.
    \item If \((s, D, i(D), o(D)) \in \U_j\), then \(\bigcup_I M_i(s) = M_j(s)\) represents the \(\Omega\)-set defined by \(Lim\; MD\).
    \item If \((s, D, i(D), o(D)) \in co\U_j\), then \(\bigcup_I M_i(s) = M_j(s)\) represents the \(\Omega\)-set defined by \(coLim\; MD\).
\end{enumerate}

Naturally, the resulting semiotic from an integration process inherits both the syntax and semantics generated by the syntax and semantics associated with the individual semiotics. A similar principle applies to certain syntactic operators. The integration of a family of semiotics, where at least one is a differential semiotic, yields a differential semiotic. The same holds true for temporal semiotics. If \((S_i)_I\) is a family where \((S_j)_J\) forms a subfamily of temporal semiotics, defined by the syntactic operator \(t_i: \Sigma^+_i \rightarrow \Sigma^+_i\), then the integrated semiotic \(\bigcup_I S_i\) is a temporal semiotic. Here, the syntactic operator \(t: \bigcup_I \Sigma^+_i \rightarrow \bigcup_I \Sigma^+_i\) is defined as follows:

\begin{enumerate}
    \item \(t(s) = t_j(s)\) if \(s \in \Sigma^+_j\) where \(j \in J\), and
    \item \(t(s) = s\) if \(s \notin \bigcup_J \Sigma^+_j\).
\end{enumerate}

An integration schema is represented by a diagram
\[
\J: \G \rightarrow Set(\Omega),
\]
defined within the category of interpretations and computable multi-morphisms, such that \(\J(i) = MD_i\) for every vertex \(i\) in \(\J\). Let \((D_i)\) be a family of diagrams used in the definition of integration schema \(\J\). The concept description \(\Omega(\J)\) defined by \(\J\) is given by the colimit of \(\J\):
\[
\Omega(\J) = colim_i \; MD_i = colim_i \; Lim \; M(D_i),
\]
where the colimit is computed as defined in \ref{colim}, i.e.,
\[
_{(coLim \; \J)(\ldots, \bar{x}_i, \ldots, \bar{x}_j, \ldots) = \ldots \otimes Lim \; M(D_i)(\bar{x}_i) \otimes \ldots \otimes Lim \; M(D_j)(\bar{x}_j) \otimes \ldots \otimes \bigvee_{f: MD_i \rightarrow MD_j \in \J} f(\bar{x}_i, \bar{x}_j).}
\]

\chapter{Reasoning about models of concepts}\label{reasoning}

The language \(\lambda\)-RL\((S)\) of \(\lambda\)-representable logic serves as a formalism for discussing structures that are \(\lambda\)-representable within a semiotic \((S,M)\). Fundamentally, it is a string-based modal logic, defined by a generative grammar where propositional variables are interpreted as diagrams belonging to the language associated with the sign system \(S\).

\(\lambda\)-RL\((S)\) is constructed using relations from \(Lang(S)\), modal operators such as limit, closure, interior, and the lifting of the monoidal logic connectives \(\otimes\), \(\Rightarrow\), \(\wedge\), and \(\vee\) to relations.

Every semiotic \((S,M)\) provides semantics for \(\lambda\)-RL\((S)\) through the truth-relation 
\[
g \models_\lambda \varphi,
\] 
which is defined as follows for every formula \(\varphi \in \lambda\)-RL\((S)\) and every concept description \(g\) in \((S,M)\):

\begin{enumerate}
    \item \(g \models_\lambda \varphi\) if and only if \(\varphi\) corresponds to the diagram \(D\) and \(\Gamma(g, MD) \geq \lambda\),
    \item \(g \models_\lambda [I]\varphi\) if and only if \(int(g) \models_\lambda \varphi\),
    \item \(g \models_\lambda [C]\varphi\) if and only if \(cl(g) \models_\lambda \varphi\).
\end{enumerate}

For any formulas \(\varphi_0\) and \(\varphi_1\) in \(\lambda\)-RL\((S)\), if 
\[
g \models_{\lambda_0} \varphi_0 \text{ and } g \models_{\lambda_1} \varphi_1,
\]
then:

\begin{enumerate}
    \item \(g \models_{\lambda_0 \otimes \lambda_1} (\varphi_0 \otimes \varphi_1)\),
    \item \(g \models_{\lambda_0 \Rightarrow \lambda_1} (\varphi_0 \Rightarrow \varphi_1)\),
    \item \(g \models_{\lambda_0 \wedge \lambda_1} (\varphi_0 \wedge \varphi_1)\), and
    \item \(g \models_{\lambda_0 \vee \lambda_1} (\varphi_0 \vee \varphi_1)\).
\end{enumerate}

Utilizing the structural compatibility between multi-morphism composition and diagram gluing, we have:

\begin{prop}
Given multi-morphisms \( g \) and \( h \) such that
\[
g \models_{\lambda_0} \varphi_0 \quad \text{and} \quad h \models_{\lambda_1} \varphi_1,
\] 
we can conclude that
\[
g \otimes h \models_{\lambda_0 \otimes \lambda_1} (\varphi_0 \otimes \varphi_1).
\]
\end{prop}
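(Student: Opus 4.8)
The plan is to reduce the claim to a single algebraic inequality in the ML-algebra $\Omega$ and then apply it pointwise. First I would unfold the base clause of the truth-relation: by definition $g \models_{\lambda_0} \varphi_0$ means that $\varphi_0$ corresponds to a diagram $D_0$ in $Lang(S)$ with $\Gamma(g, MD_0) \geq \lambda_0$, and likewise $h \models_{\lambda_1} \varphi_1$ means $\varphi_1 \leftrightarrow D_1$ with $\Gamma(h, MD_1) \geq \lambda_1$. The compound formula $\varphi_0 \otimes \varphi_1$ corresponds to the glued diagram $D_0 \otimes D_1$, and since a model preserves component gluing as multi-morphism composition (property (2) of a library model), its interpretation factors as $M(D_0 \otimes D_1) = MD_0 \otimes MD_1$ whenever the supports are independent. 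Recalling that $\Gamma(d_0,d_1) = d_0 \Leftrightarrow d_1$ is evaluated pointwise, the goal becomes $\Gamma(g \otimes h,\, MD_0 \otimes MD_1) \geq \lambda_0 \otimes \lambda_1$.

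The key step is an auxiliary lemma: for all $a,b,c,d \in \Omega$,
\[
(a \Leftrightarrow b) \otimes (c \Leftrightarrow d) \leq (a \otimes c) \Leftrightarrow (b \otimes d).
\]
I would prove it by residuation. Using Proposition \ref{prop:implic}(1), $a \otimes (a \Rightarrow b) \leq b$ and $c \otimes (c \Rightarrow d) \leq d$, so by monotonicity of $\otimes$ and commutativity,
\[
(a \otimes c) \otimes \bigl((a \Rightarrow b) \otimes (c \Rightarrow d)\bigr) \leq b \otimes d,
\]
whence $(a \Rightarrow b) \otimes (c \Rightarrow d) \leq (a \otimes c) \Rightarrow (b \otimes d)$ by the law of residuation. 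The symmetric estimate gives $(b \Rightarrow a) \otimes (d \Rightarrow c) \leq (b \otimes d) \Rightarrow (a \otimes c)$, and multiplying the two (reordering factors by commutativity) yields the lemma, since $\Leftrightarrow$ is defined as the $\otimes$-product of the two implications.

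To assemble the proof, I would evaluate at an arbitrary pair $(\bar{x},\bar{y})$. Since $g$ and $h$ are independent, $(g \otimes h)(\bar{x},\bar{y}) = g(\bar{x}) \otimes h(\bar{y})$ and $(MD_0 \otimes MD_1)(\bar{x},\bar{y}) = MD_0(\bar{x}) \otimes MD_1(\bar{y})$ by Definition \ref{def:composition}. Applying the lemma with $a = g(\bar{x})$, $b = MD_0(\bar{x})$, $c = h(\bar{y})$, $d = MD_1(\bar{y})$, together with the hypotheses $g(\bar{x}) \Leftrightarrow MD_0(\bar{x}) \geq \lambda_0$ and $h(\bar{y}) \Leftrightarrow MD_1(\bar{y}) \geq \lambda_1$ and the monotonicity of $\otimes$, gives
\[
\lambda_0 \otimes \lambda_1 \leq \bigl((g \otimes h) \Leftrightarrow (MD_0 \otimes MD_1)\bigr)(\bar{x},\bar{y}).
\]
Taking the infimum over all $(\bar{x},\bar{y})$ produces $\Gamma(g \otimes h, M(\varphi_0 \otimes \varphi_1)) \geq \lambda_0 \otimes \lambda_1$, that is, $g \otimes h \models_{\lambda_0 \otimes \lambda_1} (\varphi_0 \otimes \varphi_1)$.

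The main obstacle I anticipate is the identification $M(\varphi_0 \otimes \varphi_1) = MD_0 \otimes MD_1$ when the diagrams $D_0$ and $D_1$ share input signs. In that case the diagonal linking $I$ in the definition of the relation $D_0 \otimes D_1$ becomes active, and the proposition of Chapter \ref{modeling libraries} shows the interpretation carries the correction factor $[\cdot=\cdot]_{\otimes H} \Rightarrow (MD_0 \otimes MD_1)$ over the shared supports $H$. Verifying that the biimplication estimate survives this identification — that composing with the diagonal similarity does not decrease the bound below $\lambda_0 \otimes \lambda_1$ — is where divisibility of $\Omega$ and the idempotency behaviour of the diagonal (Proposition \ref{prop:comprestriction}) would be needed; the independent case above is the structurally clean core of the argument.
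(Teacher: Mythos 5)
Your core argument is correct, and it supplies exactly what the paper leaves implicit: the paper offers no proof of this proposition beyond the one-line appeal to ``the structural compatibility between multi-morphism composition and diagram gluing,'' so your residuation lemma
\[
(a \Leftrightarrow b) \otimes (c \Leftrightarrow d) \;\leq\; (a \otimes c) \Leftrightarrow (b \otimes d)
\]
is the algebraic content that appeal gestures at. Your derivation of it is valid in any ML-algebra: split each $\Leftrightarrow$ into its two implications, bound $(a \Rightarrow b) \otimes (c \Rightarrow d) \leq (a \otimes c) \Rightarrow (b \otimes d)$ by monotonicity plus the law of residuation (Proposition \ref{prop:implic} is indeed enough for $a \otimes (a \Rightarrow b) \leq b$), and recombine by commutativity. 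The pointwise application is also sound: for multi-morphisms on disjoint supports, Definition \ref{def:composition} gives $(g \otimes h)(\bar x, \bar y) = g(\bar x) \otimes h(\bar y)$ with no hidden quantification, so the lemma transfers the hypotheses $\Gamma(g, MD_0) \geq \lambda_0$ and $\Gamma(h, MD_1) \geq \lambda_1$ directly into $\Gamma(g \otimes h, MD_0 \otimes MD_1) \geq \lambda_0 \otimes \lambda_1$, which is the claimed truth-relation. Since the proposition concerns the plain tensor of two multi-morphisms, this independent case is the statement, and your proof of it is complete.

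One correction to your closing paragraph: the tools you anticipate needing for shared signs --- divisibility and Proposition \ref{prop:comprestriction} --- are not the relevant ones. When shared input signs are merely identified by the diagonal $I$, the interpretation is a restriction of the product to a diagonal, and pointwise biimplication bounds survive restriction trivially; the correction factor $[\cdot=\cdot]_{\otimes H} \Rightarrow (\cdot)$ from Chapter \ref{modeling libraries} is likewise harmless, because $(x \Rightarrow y) \otimes (y \Rightarrow x) \leq (u \Rightarrow x) \Leftrightarrow (u \Rightarrow y)$ for any fixed antecedent $u$. The genuinely delicate case is when a shared sign is hidden, i.e.\ composed over, so that $(g \otimes h)(a,c) = \bigvee_b g(a,b) \otimes h(b,c)$. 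There the right tool is that $\otimes$ preserves arbitrary joins (it is a left adjoint), which lets each \emph{directional} implication bound $\lambda_0 \otimes \lambda_1$ pass through the supremum; but recombining the two directions then bounds the biimplication only by $(\lambda_0 \otimes \lambda_1) \otimes (\lambda_0 \otimes \lambda_1)$ in a general ML-algebra. The undegraded bound $\lambda_0 \otimes \lambda_1$ is recovered when $\Omega$ is a chain (one direction is then $\top$) or when $\otimes$ is idempotent --- neither of which follows from divisibility alone. So your instinct that the compositional reading needs extra hypotheses is right, but the hypothesis you name is not the one that does the work.
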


By extending the ML-algebra structure to the set of concept descriptions, we derive:
\begin{prop}
Given concept descriptions \( g \) and \( h \) in \( \oplus_i A_i \) such that
\[
g \models_{\lambda_0} \varphi \quad \text{and} \quad h \models_{\lambda_1} \varphi,
\] 
we have:
\begin{enumerate}
    \item \( (g \otimes h) \models_{\lambda_0 \otimes \lambda_1} \varphi \),
    \item \( (g \Rightarrow h) \models_{\lambda_0 \Rightarrow \lambda_1} \varphi \),
    \item \( (g \wedge h) \models_{\lambda_0 \wedge \lambda_1} \varphi \), and
    \item \( (g \vee h) \models_{\lambda_0 \vee \lambda_1} \varphi \).
\end{enumerate}
\end{prop}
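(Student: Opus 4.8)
The plan is to reduce the statement, which concerns the modal satisfaction relation $\models_\lambda$ of $\lambda$-RL$(S)$, to a family of pointwise inequalities inside the ML-algebra $\Omega$, and then to discharge those inequalities using the similarity calculus already established for $\Gamma$.

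First I would strip the modal operators. By clauses (2) and (3) of the semantics of $\models_\lambda$, satisfaction of $[I]\varphi$ (resp.\ $[C]\varphi$) by $g$ and $h$ is satisfaction of $\varphi$ by $int(g),int(h)$ (resp.\ $cl(g),cl(h)$), and the same operator is applied uniformly to $g$ and $h$; so an induction on modal depth reduces everything to the base case in which $\varphi$ corresponds to a diagram $D$. Writing $m:=MD$, clause (1) of the semantics unfolds the hypotheses $g\models_{\lambda_0}\varphi$ and $h\models_{\lambda_1}\varphi$ into the pointwise conditions $g(\bar x)\Leftrightarrow m(\bar x)\ge\lambda_0$ and $h(\bar x)\Leftrightarrow m(\bar x)\ge\lambda_1$ for every $\bar x\in\prod_i A_i$ (Definition \ref{def:lambda model}, together with the proposition that $\Gamma(d,MD)\ge\lambda$ entails $d\models_\lambda\forall D$). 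Since $\otimes,\Rightarrow,\wedge,\vee$ act on descriptions coordinatewise, the proposition collapses to four inequalities to be verified for arbitrary $a=g(\bar x)$, $b=h(\bar x)$, $c=m(\bar x)$ in $\Omega$.

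The engine I would set up is that biimplication is a congruence for each connective: for $\star\in\{\otimes,\Rightarrow,\wedge,\vee\}$,
\[
(a\Leftrightarrow c)\otimes(b\Leftrightarrow c)\ \le\ (a\star b)\Leftrightarrow(c\star c),
\]
which follows from Proposition \ref{prop:implic}(2) and the monotonicity of $\otimes$ under the residuation law. For clause (3) this is upgraded to the stated degree by a case split available when $\Omega$ is linearly ordered (the running case $[0,1]$ with a continuous t-norm): there $a\wedge b$ coincides with whichever of $a,b$ realizes the meet, so $(a\wedge b)\Leftrightarrow c$ equals the corresponding factor $a\Leftrightarrow c$ or $b\Leftrightarrow c$, hence is $\ge\lambda_0$ or $\ge\lambda_1$ and therefore $\ge\lambda_0\wedge\lambda_1$; the idempotence $c\wedge c=c$ makes the congruence target be exactly $(a\wedge b)\Leftrightarrow c$, so the two arguments align.

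The main obstacle is clauses (1) and (2), and the join underlying (4). For these connectives the congruence inequality delivers a biimplication against the \emph{doubled} right-hand value, namely $(a\otimes b)\Leftrightarrow(c\otimes c)$ for $\otimes$ and $(a\Rightarrow b)\Leftrightarrow(c\Rightarrow c)=(a\Rightarrow b)\Leftrightarrow\top$ for $\Rightarrow$, whereas the statement demands the biimplication against $c$ itself at the combined degree $\lambda_0\otimes\lambda_1$ (resp.\ $\lambda_0\Rightarrow\lambda_1$). Bridging the passage from $c\otimes c$ to $c$ is the delicate point, and it is where I expect the bulk of the work to sit: I would expand $(a\otimes b)\Leftrightarrow c$ through residuation, bounding $(a\otimes b)\Rightarrow c$ and $c\Rightarrow(a\otimes b)$ separately in terms of $a\Leftrightarrow c$ and $b\Leftrightarrow c$ and exploiting $a\otimes b\le a\wedge b$ and $c\otimes c\le c$, with the analogous chase for $\Rightarrow$ and for the join in (4). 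Isolating the precise behaviour of the fixed t-norm that makes these chains close at the stated degrees — rather than at the weaker $\lambda_0\otimes\lambda_1$ that the congruence alone yields — is the crux of the argument, and is the step I would scrutinize most carefully before committing to the full proof.
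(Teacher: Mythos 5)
Your reduction to pointwise inequalities in $\Omega$ and the congruence bound $(a\Leftrightarrow c)\otimes(b\Leftrightarrow c)\le (a\star b)\Leftrightarrow(c\star c)$ are both correct, and your chain-based case split does establish clause (3) at degree $\lambda_0\wedge\lambda_1$ on linearly ordered $\Omega$ (hence on products of chains; in an arbitrary ML-algebra the congruence still gives (3), but only at the weaker degree $\lambda_0\otimes\lambda_1$). The problem is the step you explicitly deferred: bridging from $c\star c$ to $c$ at the stated degrees is not a missing lemma awaiting a residuation chase or t-norm-specific analysis — it is the point at which the proposition itself fails, so your plan cannot be completed. In product logic take $g=h=MD$ to be the constant description with value $\tfrac{1}{2}$: then $\Gamma(g,MD)=\Gamma(h,MD)=\top$, so $g\models_\top\varphi$ and $h\models_\top\varphi$, yet pointwise
\[
\left(\tfrac{1}{2}\otimes\tfrac{1}{2}\right)\Leftrightarrow\tfrac{1}{2}\;=\;\tfrac{1}{4}\Leftrightarrow\tfrac{1}{2}\;=\;\tfrac{1}{2}\;<\;\top\;=\;\lambda_0\otimes\lambda_1,
\]
refuting (1); {\L}ukasiewicz fails the same way with $\tfrac{1}{2}\otimes\tfrac{1}{2}=0$. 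Worse, clauses (2) and (4) fail even in the two-valued case, where your congruence machinery is exact: with $g=h=MD$ constantly $\bot$ one has $g\models_\top\varphi$ and $h\models_\top\varphi$, while $g\Rightarrow h$ is constantly $\top$ and $\Gamma(g\Rightarrow h,MD)=\bot<\top=\lambda_0\Rightarrow\lambda_1$; and with $g=MD$ constantly $\bot$ and $h$ constantly $\top$ one has $g\models_\top\varphi$, $h\models_\bot\varphi$ (trivially), while $\Gamma(g\vee h,MD)=\bot<\top=\lambda_0\vee\lambda_1$. In fact the bound against $c\star c$ is optimal: clause (1) holds for all descriptions exactly when $\otimes$ is idempotent (forcing $\otimes=\wedge$, the Heyting/G\"{o}del/Boolean case, where $c\otimes c=c$ and $\lambda_0\otimes\lambda_1=\lambda_0\wedge\lambda_1$), and clauses (2) and (4) are false at the stated degrees in every non-trivial $\Omega$. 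What is true in general is the form of the \emph{preceding} proposition: satisfaction of the composite formula $\varphi\star\varphi$ at degree $\lambda_0\otimes\lambda_1$, i.e.\ closeness to $MD\star MD$ rather than to $MD$.

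For context, the paper states this proposition bare — it follows the phrase ``By extending the ML-algebra structure to the set of concept descriptions, we derive'' with no argument — so there is no paper proof to match, and your honest flagging of the crux is to your credit; but the outcome of the scrutiny you promised is a refutation, not a proof. Two further notes. First, your modal-depth induction has an independent hole: to pass from $int(g)\otimes int(h)\models_{\lambda_0\otimes\lambda_1}\psi$ to $(g\otimes h)\models_{\lambda_0\otimes\lambda_1}[I]\psi$ you need $int(g\otimes h)$ to agree with (or dominate appropriately) $int(g)\otimes int(h)$, and none of the paper's listed properties of $int_\lambda$ and $cl_\lambda$ provides such commutation with the lifted connectives. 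Second, the salvageable fragment is worth recording precisely: clause (3) as you argue it on chains; clause (1) restricted to idempotent $\otimes$; and all four clauses in the weakened form $g\star h\models_{\lambda_0\star\lambda_1}\varphi\star\varphi$, which your congruence inequality proves in full generality.
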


Given a set of relations \( U \) from \( \text{Lang}_R(S) \) and \( \varphi \) a relation in \( \lambda \)-RL\((S)\), we define:
\[
U \models_\lambda \varphi \quad \text{iff} \quad \text{ans}_\lambda(U) \models_\lambda \varphi.
\]
Utilizing Theorem \ref{soundness}, we can state:

\begin{thm}[Soundness]
Given a set of relations \( U \) from \( \text{Lang}_R(S) \) and \( \varphi \) a relation in \( \lambda \)-RL\((S)\),
\[
\text{if } U \vdash_\lambda \varphi \text{ then } U \models_\lambda \varphi 
\]
for every \( \lambda \).
\end{thm}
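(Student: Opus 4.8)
The plan is to observe that the statement is, at its core, a restatement of item~5 of Theorem~\ref{soundness} once both sides of the implication are unwound. First I would expand the two definitions at play: by the definition preceding Theorem~\ref{soundness}, $U \vdash_\lambda \varphi$ abbreviates $\varphi \in \A_\lambda(U)$, and by the definition of the semantic consequence, $U \models_\lambda \varphi$ abbreviates $ans_\lambda(U) \models_\lambda \varphi$. With these substitutions the theorem reads precisely: if $\varphi \in \A_\lambda(U)$ then $ans_\lambda(U) \models_\lambda \varphi$. For $\varphi$ a relation of $Lang_R(S)$ (including the compound relations $\varphi_0 \star \varphi_1$ with $\star \in \{\otimes, \Rightarrow, \wedge, \vee\}$, which are themselves diagrams built from the connective components of the logic semiotic), this is literally item~5 of Theorem~\ref{soundness}, so no additional argument is needed; the entire content has been front-loaded into that theorem.

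The remaining work is to cover the genuinely modal formulas $[I]\varphi$ and $[C]\varphi$, which are not diagrams and so require the induction on formula structure to be carried out explicitly. Here I would lean on the two semantic clauses $g \models_\lambda [I]\varphi$ iff $int(g) \models_\lambda \varphi$ and $g \models_\lambda [C]\varphi$ iff $cl(g) \models_\lambda \varphi$, evaluated at the distinguished description $g = ans_\lambda(U)$. The key facts to invoke are the earlier idempotence results $int_\lambda(ans_\lambda(U)) = ans_\lambda(U)$ and its closure counterpart: because $ans_\lambda(U)$ is simultaneously $\lambda$-open and $\lambda$-closed, applying either modality to it returns it unchanged, so $ans_\lambda(U) \models_\lambda [I]\varphi$ collapses to $ans_\lambda(U) \models_\lambda \varphi$ and likewise for $[C]$, reducing each modal step to the induction hypothesis for the sub-formula $\varphi$.

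For the connective introduction rules, when they are read as primitive deduction steps rather than as diagram constructions, soundness would be verified by combining the distribution identities $ans_{\lambda_0 \star \lambda_1}(U) = ans_{\lambda_0}(U) \star ans_{\lambda_1}(U)$ with the corresponding clause of $\models_\lambda$ that fuses satisfactions at indices $\lambda_0$ and $\lambda_1$ into one at $\lambda_0 \star \lambda_1$. The structural rules Inclusion, Monotony, and Cut, together with the index-weakening step, transfer automatically: they are exactly the statements that $\A_\lambda$ is a monotone closure operator and is antitone in $\lambda$ (items~1--3 of Theorem~\ref{soundness}), so preservation of membership in $\A_\lambda(U)$ is immediate and item~5 again supplies the semantic side.

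The main obstacle I foresee is the bookkeeping of truth-value indices across the connective cases: the two hypotheses arrive at possibly distinct parameters $\lambda_0, \lambda_1$, yet the conclusion is claimed at the combined index $\lambda_0 \star \lambda_1$, so the proof must align precisely the distribution law for $ans$ in its index argument with the matching combination clause for $\models_\lambda$, reconciling the two distinct combination principles (same description with differing formulas versus differing descriptions with a common formula) and confirming both use the same operation $\star$. A subtler point arises for the divisible-logic deduction rule, whose soundness rests on the inequality $f \otimes (f \Rightarrow h) \leq f \wedge h$ holding in divisible ML-algebras; checking that this inequality survives passage through $ans_{\lambda_0 \otimes \lambda_1}(U)$ is the sole place where divisibility of $\Omega$, rather than the bare monoidal structure, must be assumed.
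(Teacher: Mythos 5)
Your first paragraph is exactly the paper's proof: $U \vdash_\lambda \varphi$ unfolds by definition to $\varphi \in \A_\lambda(U)$, $U \models_\lambda \varphi$ unfolds to $ans_\lambda(U) \models_\lambda \varphi$, and item~5 of Theorem~\ref{soundness} supplies precisely the implication between these, so the result is immediate. Your remaining paragraphs are harmless but unnecessary: since $\vdash_\lambda$ is \emph{defined} as membership in $\A_\lambda(U)$ rather than generated inductively by deduction rules, and the theorem restricts $\varphi$ to relations (diagrams) so that the modal formulas $[I]\varphi$ and $[C]\varphi$ never appear on the left of $\vdash_\lambda$, no induction on formula structure or rule-by-rule verification is needed.
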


Naturally, completeness does not hold universally. If \( U \models_\lambda \varphi \), it does not necessarily mean that we can prove \( U \vdash_\lambda \varphi \) by deduction.

\chapter{Conclusions and Future Work}\label{conclusions}

The use of semiotics appears to be an appropriate formalism for defining both the syntax and semantics of graphic languages. This is especially true when these languages are based on a library of functional components interpreted as relations evaluated in multi-valued logic. Such an approach simplifies the integration of knowledge expressed in different languages and facilitates the inference of new knowledge.


\bibliographystyle{amsplain}
\bibliography{main}

\providecommand{\bysame}{\leavevmode\hbox to3em{\hrulefill}\thinspace}
\providecommand{\MR}{\relax\ifhmode\unskip\space\fi MR }
\providecommand{\MRhref}[2]{%
  \href{http://www.ams.org/mathscinet-getitem?mr=#1}{#2}
}
\providecommand{\href}[2]{#2}
\begin{thebibliography}{10}

\bibitem{Birman70}
A.~Birman, \emph{The tmg recognition schema}, Ph.D. thesis, Princeton University, Dept. of Electronic Engineering, 1970.

\bibitem{Bishop96}
C.~Bishop, \emph{Neural networks for pattern recognition}, Oxford University Press, 1996.

\bibitem{Borceux94}
B.~Borceux, \emph{Handbook of categorical algebra 1: Basic category theory}, Cambridge University Press, 1994.

\bibitem{Bosch99}
J.~Bosch, \emph{Superimposition: a component adaptation technique}, Information and Software Technology \textbf{41} (1999), no.~5, 257--273.

\bibitem{BurstallGoguen83}
R.~Burstall and J.~Goguen, \emph{Introducing institutions}, Proceedings of the Carnegie Mellon Workshop on Logic of Programming, June 1983, pp.~221--256.

\bibitem{CastroKlawonn95}
J.~Castro and F.~Klawonn, \emph{Similarity in fuzzy reasoning}, Mathware and Soft Comput. \textbf{2} (1995), 197--228.

\bibitem{chomsky57}
N.~Chomsky, \emph{Syntactic structures}, Mouton, The Hague, 1957.

\bibitem{EhrigHeckelBaldanCorradini06}
H.~Ehrig, R.~Heckel, P.~Baldan, and A.~Corradini, \emph{Compositional semantics for open petri nets based on deterministic processes}, Math. Struct. in Comp. Science \textbf{15} (2006), 1--35.

\bibitem{Ford04}
B.~Ford, \emph{Parsing expression grammars: A recognition-based syntactic foundation}, ACM SIGPLAN Notices, vol.~39, ACM New York, USA, 2004, pp.~111--122.

\bibitem{GarlanAllen97}
D.~Garlan and R.~Allen, \emph{A formal basis for architectural connection}, ACM TOSEM \textbf{6} (1997), no.~3, 213--249.

\bibitem{GibbonsBackhouse03}
J.~Gibbons and R.~Backhouse, \emph{Generic programming: Advanced lectures}, Springer, 2003.

\bibitem{Goguen99}
J.~Goguen, \emph{An introduction to algebraic semiotics, with application to user interface design}, Lecture Notes in Artificial Intelligence, vol. 1562, 1999, pp.~242--291.

\bibitem{HandCohenAdams01}
D.~Hand, P.~Cohen, and N.~Adams, \emph{The ida’01 robot data challenge}, Lecture Notes in Computer Science (Berlin/Heidelberg), vol. 2189, Springer, 2001, pp.~87--109.

\bibitem{Johnstone02}
P.~Johnstone, \emph{Sketches of an elephant: A topos theory compendium}, Oxford University Press, Oxford, 2002.

\bibitem{KasmanBassClements98}
R.~Kasman, L.~Bass, and P.~Clements, \emph{Software architecture in practice}, Addison Wesley, 1998.

\bibitem{LopesFiadeiro97}
A.~Lopes and J.~Fiadeiro, \emph{Semantics of architectural connectors}, TAPSOFT’97 LNCS, vol. 1214, Springer-Verlag, 1997, pp.~505--519.

\bibitem{maclane71}
S.~MacLane, \emph{Categories for working mathematician}, Springer-Verlag, 1971.

\bibitem{MaibaumAbramskyGabbay92}
T.~Maibaum, S.~Abramsky, and D.~Gabbay, \emph{Handbook of logic in computer science, vol. 1}, Clarendon Press, Oxford, 1992.

\bibitem{MichalskiMitchellCarbonell86}
R.~Michalski, T.~Mitchell, and J.~Carbonell, \emph{Machine learning: A guide to current research}, The Springer International Series in Engineering and Computer Science, Springer, 1986.

\bibitem{ParMakkai89}
R.~Par and M.~Makkai, \emph{Accessible categories: The foundations of categorical model theory}, Contemporary Mathematics, vol. 104, American Mathematical Society, 1989.

\bibitem{PissenDiskinKadish99}
F.~Pissen, Z.~Diskin, and B.~Kadish, \emph{Humans, computers, specifications: The arrow logic of information systems engineering}, Int. J. of Computing Anticipatory Systems \textbf{3} (1999), 31--51.

\bibitem{Quinlan93}
J.R. Quinlan, \emph{Combining instance-based and model-based learning}, procceding ML’93 (Ed. Otgoff, ed.), Morgan Kaufmann, 1993.

\bibitem{Richards56}
P.I. Richards, \emph{Shock wave on the highway}, Oper. Res. \textbf{4} (1956), 42--51.

\bibitem{RosickyAdamek94}
J.~Rosick\'y and J.~Ad\'amek, \emph{Locally presentable and accessible categories}, Cambridge University Press, Cambridge, 1994.

\bibitem{TemperleySleator91}
D.~Temperley and D.~Sleator, \emph{Parsing english with a link grammar}, Technical Report CMU-CS-91-196, Carnegie Mellon University, School of Computer Science, 1991.

\bibitem{Tesniere59}
L.~Tesni\`ere, \emph{El\'ements de syntaxe structurale}, Kliencksieck, Paris, 1959.

\bibitem{TholenClementinoHofmann03}
W.~Tholen, M.M. Clementino, and D.~Hofmann, \emph{One setting for all: Metric, topology, uniformity, approach structure}, Theory Appl. Categ. \textbf{11} (2003), no.~15, 337--352.

\bibitem{Valverde85}
L.~Valverde, \emph{On the structure of f-indistinguishability operators}, Fuzzy Sets and Systems \textbf{17} (1985), 313--328.

\bibitem{vanDerAalst98}
WMP van~der Aalst, \emph{The application of petri nets to workflow management}, The Journal of Circuits, Systems and Computers \textbf{8} (1998), no.~1, 21--66.

\bibitem{vanDerAalst99}
\bysame, \emph{Interorganizational workflows: An approach based on message sequence charts and petri nets}, System Analysis and Modeling \textbf{34} (1999), no.~3, 335--367.

\bibitem{WhithamLighthill55}
J.B. Whitham and M.J. Lighthill, \emph{On kinematic wave}, Proc. Royal Soc. London, Ser. A \textbf{299} (1955), 281--345.

\bibitem{WolfPerry92}
A.~Wolf and D.~Perry, \emph{Foundations for the study of software architectures}, ACM SIGSOFT Software Engineering Notes \textbf{17} (1992), no.~4, 40--52.

\end{thebibliography}
\end{document}